\documentclass[letterpaper,10pt]{article}
\usepackage{parskip}
\usepackage[english]{babel}
\usepackage[nottoc]{tocbibind}

\usepackage[square]{natbib}
\usepackage{comment}
\usepackage{booktabs}
\usepackage{tabularx}
\usepackage{adjustbox}
\usepackage{array}
\usepackage{diagbox}
\usepackage{makecell}
\usepackage{tikz}
\usepackage{amsmath}
\usepackage{amsfonts}
\usepackage{amssymb}
\usepackage{amsthm}
\usepackage{url,ifthen}
\usepackage{enumitem}
\usepackage{srcltx}
\usepackage{multirow}
\usepackage{boxedminipage}
\usepackage[margin=1.1in]{geometry}
\usepackage{nicefrac}
\usepackage{xspace}
\usepackage{graphicx}
\usepackage{color}
\usepackage{colortbl}
\usepackage{setspace}
\usepackage{soul}

\usepackage{hyperref}
\usepackage{cleveref}
\usepackage{mathptmx}
\usepackage{algpseudocode}

\definecolor{DarkGreen}{rgb}{0.1,0.5,0.1}
\definecolor{DarkRed}{rgb}{0.5,0.1,0.1}
\definecolor{DarkBlue}{rgb}{0.1,0.1,0.5}
\definecolor{Gray}{rgb}{0.2,0.2,0.2}

\usepackage{listings}
\lstdefinestyle{mystyle}{
    commentstyle=\color{DarkBlue},
    keywordstyle=\color{DarkRed},
    numberstyle=\tiny\color{Gray},
    stringstyle=\color{DarkGreen},
    basicstyle=\footnotesize,
    breakatwhitespace=false,         
    breaklines=true,                 
    captionpos=b,                    
    keepspaces=true,                 
    numbers=left,                    
    numbersep=5pt,                  
    showspaces=false,                
    showstringspaces=false,
    showtabs=false,                  
    tabsize=2
}
\usepackage[small]{caption}
\usepackage{hyperref}
\hypersetup{
    unicode=false,          
    pdftoolbar=true,        
    pdfmenubar=true,        
    pdffitwindow=false,      
    pdfnewwindow=true,      
    colorlinks=true,       
    linkcolor=DarkGreen,          
    citecolor=DarkGreen,        
    filecolor=DarkRed,      
    urlcolor=DarkBlue,          
    pdftitle={},
    pdfauthor={},
}

\def\draft{1}

\def\submit{0}

\ifnum\draft=1 
    \def\ShowAuthNotes{1}
\else
    \def\ShowAuthNotes{0}
\fi

\ifnum\submit=1
\newcommand{\forsubmit}[1]{#1}
\newcommand{\forreals}[1]{}
\else
\newcommand{\forreals}[1]{#1}
\newcommand{\forsubmit}[1]{}
\fi

\ifnum\ShowAuthNotes=1
\newcommand{\authnote}[2]{{ \footnotesize \bf{\color{DarkRed}[#1's Note:
{\color{DarkBlue}#2}]}}}
\else
\newcommand{\authnote}[2]{}
\fi

\newtheorem{theorem}{Theorem}[section]
\newtheorem{remark}[theorem]{Remark}
\newtheorem{lemma}[theorem]{Lemma}
\newtheorem{corollary}[theorem]{Corollary}

\theoremstyle{definition}

\newtheoremstyle{example_contd}
{\topsep} {\topsep}%
{}
{}
{\bfseries}
{.}
{1em}
{\thmname{#1} \thmnumber{ #2}\thmnote{#3} (continued)}

\theoremstyle{example_contd}

\newcommand{\chapterref}[1]{\hyperref[ch:#1]{Chapter~\ref{ch:#1}}}

\newcommand{\claimref}[1]{\hyperref[claim:#1]{Claim~\ref{claim:#1}}}

\newcommand{\corollaryref}[1]{\hyperref[cor:#1]{Corollary~\ref{cor:#1}}}

\newcommand{\definitionref}[1]{\hyperref[def:#1]{Definition~\ref{def:#1}}}

\newcommand{\equationref}[1]{\hyperref[eq:#1]{Equation~\ref{eq:#1}}}

\newcommand{\factref}[1]{\hyperref[fact:#1]{Fact~\ref{fact:#1}}}

\newcommand{\figureref}[1]{\hyperref[fig:#1]{Figure~\ref{fig:#1}}}

\newcommand{\tableref}[1]{\hyperref[tab:#1]{Table~\ref{tab:#1}}}

\newcommand{\itemref}[1]{\hyperref[item:#1]{Item~(\ref{item:#1})}}

\newcommand{\lemmaref}[1]{\hyperref[lem:#1]{Lemma~\ref{lem:#1}}}

\newcommand{\propref}[1]{\hyperref[prop:#1]{Proposition~\ref{prop:#1}}}

\newcommand{\propositionref}[1]{\hyperref[prop:#1]{Proposition~\ref{prop:#1}}}

\newcommand{\remarkref}[1]{\hyperref[rem:#1]{Remark~\ref{rem:#1}}}

\newcommand{\sectionref}[1]{\hyperref[sec:#1]{Section~\ref{sec:#1}}}

\newcommand{\theoremref}[1]{\hyperref[thm:#1]{Theorem~\ref{thm:#1}}}

\usepackage[utf8]{inputenc}
\usepackage[T1]{fontenc}
\usepackage{kpfonts}
\usepackage{microtype}

\newcommand{\Psymb}{\mathbb{P}}

\DeclareMathOperator*{\ProbOp}{\Psymb r}
\renewcommand{\Pr}{\ProbOp}

\renewcommand{\le}{\leqslant}

\renewcommand{\ge}{\geqslant}

\usepackage{bm}

\newcommand{\ignore}[1]{}

\newcommand{\KL}{\mathrm{KL}}

\renewcommand{\epsilon}{\varepsilon}

\newcommand{\remove}[1]{}

\usepackage{color-edits}
\addauthor{hui}{cyan}

\usepackage{bm}
\usepackage{amsfonts}
\usepackage[utf8]{inputenc} 
\usepackage[T1]{fontenc}    
\usepackage{hyperref}       
\usepackage{url}            
\usepackage{booktabs}       
\usepackage{amsfonts}       
\usepackage{nicefrac}       
\usepackage{microtype}      
\usepackage{xcolor}         

\usepackage{algorithm}
\usepackage{algpseudocode}
\usepackage{diagbox}
\usepackage{makecell}
\usepackage{booktabs}
\usepackage{tikz}
\usepackage{amsmath}
\allowdisplaybreaks
\usepackage{amssymb}
\usepackage{amsthm}
\usepackage{url,ifthen}
\usepackage{enumitem}
\usepackage{srcltx}
\usepackage{multirow}
\usepackage{boxedminipage}
\usepackage{nicefrac}
\usepackage{xspace}
\usepackage{graphicx}
\usepackage{color}
\usepackage{colortbl}
\usepackage{setspace}
\usepackage{soul}
\usepackage{graphicx}
\usepackage{wrapfig}
\usepackage{subcaption}
\usepackage{enumitem}
\usepackage{cleveref}
\definecolor{DarkGreen}{rgb}{0.1,0.5,0.1}
\definecolor{DarkRed}{rgb}{0.5,0.1,0.1}
\definecolor{DarkBlue}{rgb}{0.1,0.1,0.5}
\definecolor{Gray}{rgb}{0.2,0.2,0.2}

\title{Ask to Learn: Query-Based Collaborative Reinforcement Learning}

\usepackage{authblk}
\usepackage{placeins}
\usepackage{capt-of}
\usepackage{longtable}
\usepackage{needspace}

\author[1]{Kaicheng Zhang}
\affil[1]{University of North Carolina at Chapel Hill}
\author[1]{Hongqian Niu}
\author[2]{Laixi Shi}
\affil[2]{Johns Hopkins University}
\author[1]{Didong Li}
\author[3]{Guannan Qu}
\affil[3]{Carnegie Mellon University}
\author[1]{Zhun~Deng}
\date{}
\begin{document}

\maketitle

 \begin{abstract}
Offline reinforcement learning is fundamentally limited by the coverage of the available data, while collecting additional trajectories through online exploration may be costly, unsafe, or infeasible. We study whether this limitation can be overcome without directly interacting with the underlying environment. Motivated by practical settings such as collaborator-mediated simulation, where a collaborator can provide information complementary to the offline data, we introduce \emph{query-based collaborative reinforcement learning} to study how to acquire this information efficiently. The learner starts with a fixed offline dataset and adaptively queries state-time pairs; the collaborator samples an action according to a fixed, possibly suboptimal policy, and the simulator returns the corresponding reward and next state. For finite-horizon tabular problems, we derive an instance-dependent information-theoretic lower bound on the additional queries required to identify an $\epsilon$-optimal policy and develop a local quadratic approximation. Guided by this characterization, we develop an adaptive query-allocation algorithm whose query complexity nearly matches the local benchmark under suitable conditions. Experiments on multiple synthetic and semi-synthetic environments demonstrate the empirical efficiency of the proposed method.
\end{abstract}

\newpage
\setcounter{tocdepth}{2}
\tableofcontents
\newpage

\section{Introduction}

Collecting new trajectories for reinforcement learning (RL) can be costly,
slow, or unsafe. In healthcare, for example, it is generally infeasible to
deploy exploratory treatment policies on patients solely for data collection.
Offline RL avoids such interaction by learning from historical records, but
its guarantees depend critically on the coverage of the available data
\citep{jin2021pessimism,gen2024settling,xie2021bellman,yin2021towards}. The absence of feasible online
exploration, however, does not preclude access to additional information
through simulation. We study how a learner can acquire this information
efficiently through collaboration.

We introduce \emph{query-based collaborative reinforcement learning}, in
which a learner augments a fixed offline dataset with observations from a
collaborator-mediated simulator. At each query, the learner selects a
state-time pair; the collaborator samples an action according to a fixed,
possibly suboptimal policy, and the simulator returns the corresponding
reward and next state. The learner thus controls where information is
collected, but not which action generates it. Our central question is:

\emph{Given the available offline data, when is it possible to identify
a near-optimal policy through collaboration, and how should queries
be allocated to do so efficiently?}

This setting is motivated by systems that couple decision models with
simulators. In head-and-neck cancer treatment,
\citet{wentzel2025ditto} combine a model of physician treatment decisions
with a patient simulator that predicts treatment responses and clinical
outcomes. In our abstraction of such a system, a query specifies the
patient state and treatment stage, while the physician model selects the
treatment and the simulator supplies its consequences. Related structures
arise in power-system dispatch, where a rule-based decision strategy
interacts with a simulated grid \citep{xu2024real}, and in process control,
where a human digital twin emulates an operator's corrective actions in a
simulated plant \citep{balaji2023act}.

The collaborator need not be uniformly informative. For example, a
hospital's records may contain treatments that an externally developed
clinical digital twin rarely or never selects, while the twin may provide
information about patient states poorly represented in those records.
The two sources should therefore be viewed as complementary rather than
interchangeable. Unlike imitation learning, our objective is not to recover
the collaborator's policy, but to use both information sources to identify
a near-optimal policy for the underlying decision process.

This division of control also distinguishes our setting from standard
trajectory-based hybrid RL, which supplements offline data by executing
exploration policies \citep{ball2023efficient,huang2025augmenting,song2022hybrid}. Here, the learner can request
observations at a chosen state-time pair without first reaching that state
through a trajectory. Conversely, unlike unrestricted generative-model
access \citep{agarwal2020model,azar2012sample,sidford2018near,zanette2019almost}, it cannot directly select the sampled
action. The query strategy must therefore account for both the information
already available offline and the collaborator's sampling restrictions.

Uniformly querying all state-time pairs is generally inefficient.
The offline data may already estimate some parts of the decision process
accurately, while leaving others uncertain. More importantly, not every
uncertainty is equally relevant: identifying a good policy requires resolving
comparisons between plausible alternatives, rather than estimating every
reward and transition equally accurately. Poor coverage alone therefore
need not make a state the best query location. An efficient strategy
should direct queries toward the policy comparisons that remain
statistically difficult after accounting for the offline data, subject to
which observations the collaborator can supply.

The sampling restriction also means that some uncertainty may persist
indefinitely. If the collaborator never selects an action at a given
state-time pair, no amount of querying can provide additional observations
of that action there. Its estimation remains limited by the fixed offline
data, potentially imposing an irreducible \emph{certification limit}.
A learning procedure must therefore account not only for how to allocate
queries, but also for when additional queries cannot sufficiently reduce
the uncertainty needed to certify the target accuracy.
\begin{figure}[!t]
    \centering
     \includegraphics[width=\linewidth]{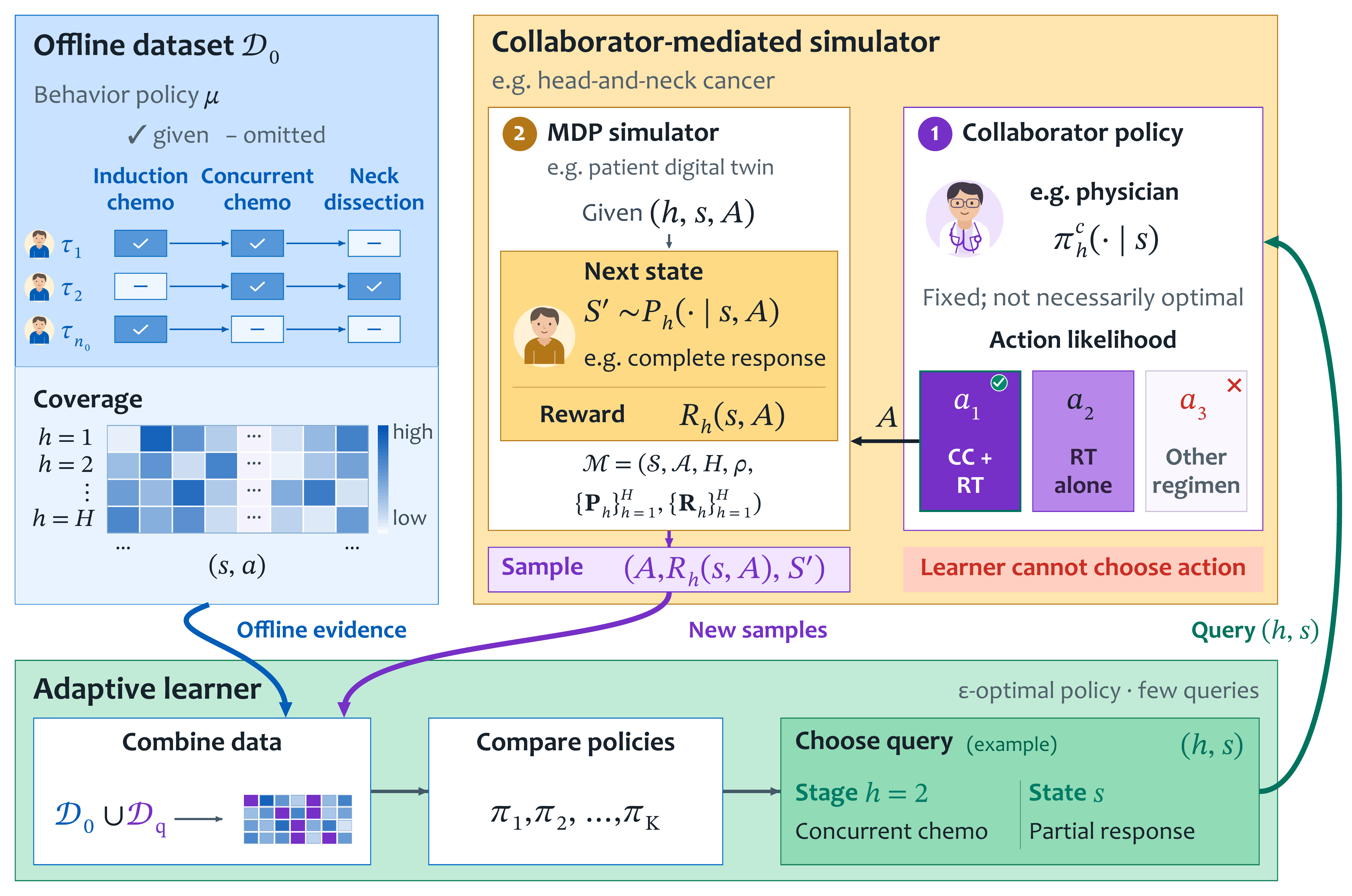}
    \caption{Query-based collaborative RL. The learner augments offline data with simulated experience, choosing state-time queries but not the actions sampled by the fixed-policy collaborator.}
    \label{fig:result}
\end{figure}

\noindent\textbf{Contributions.}
We initiate the framework of query-based collaborative reinforcement learning, where an instance-dependent analysis is natural because the value of additional queries depends strongly on the offline coverage, collaborator support and the structure of the underlying MDP, which can vary substantially across instances. Hence, a worst-case analysis obscures this heterogeneity and cannot capture how the offline data and collaborator jointly reduce the remaining uncertainty. 
For a target suboptimality $\epsilon>0$ and failure probability
$\delta\in(0,1)$, our main results are as follows.
\begin{itemize}[left=0pt]
   \item \textbf{Information-theoretic lower bound and local characterization.}
We derive an exact KL-based lower-bound benchmark for the additional
collaborator queries required to identify an $\epsilon$-optimal policy with
probability at least $1-\delta$. We then derive a tractable local quadratic
surrogate that makes explicit the roles of policy-comparison margins, local
reward-transition variability, offline coverage, and collaborator support.

    \item \textbf{Lower-bound-guided adaptive querying.}
    We show that the same local variance structure governs the lower-bound
    approximation and the leading finite-sample uncertainty of policy-value
    comparisons. Guided by this connection, we develop an epoch-based
    algorithm that progressively narrows the candidate policy class and
    allocates queries through a local minimax design problem. The design
    jointly selects a reference policy and state--time query locations,
    concentrating observations on comparisons that remain difficult after
    exploiting the offline data.

    \item \textbf{Finite-sample guarantees and near-matching complexity.}
    We establish finite-query termination and high-probability suboptimality
    guarantees for the returned policy, covering both near-optimality
    certification and stopping at the certification limit. In the certifiable 
    regime, under the stated assumptions, the leading query-complexity term is governed by the same local quadratic
characteristic, evaluated at a polylogarithmically reduced effective
offline sample size. We additionally establish robustness to row-wise simulator
misspecification measured in total variation.
\end{itemize}
Empirically, we evaluate the method on three synthetic environments (contextual bandit, multi-stage MDP, and an MDP with long-term consequences), and a semi-synthetic drug-response application. Compared with uniform and coverage-based allocation, our adaptive allocation certifies an $\epsilon$-optimal policy with $3$-$22$ times fewer collaborator queries in an efficient implementation and returns better policies at equal budgets.

\section{Related Work}\label{sec:related work}

\paragraph{Worst-case sample complexity in tabular RL.}
A large body of work studies the worst-case sample complexity of learning a
near-optimal policy in tabular MDPs.
In offline RL, where the learner must rely entirely on a fixed dataset,
 guarantees of related work necessarily depend on how well the behavior
policy covers the state--action pairs relevant to an optimal policy.
A sequence of works has established sample complexity guarantees
under various notions of coverage, including 
single-policy concentrability 
\citep{shi2022pessimistic,gen2024settling}.

At the other extreme, the generative-model setting allows the learner to
request samples from arbitrary state--action pairs.
Classical and more recent results characterize the minimax sample complexity
of uniformly querying under such access
\citep{gheshlaghi2013minimax,agarwal2020model}.

\paragraph{Instance-dependent policy identification.}
Our work is closely related to a growing literature on instance-dependent
reinforcement learning, which seeks to exploit the specific structure of the
underlying MDP to obtain substantially sharper sample-complexity guarantees
than worst-case bounds.  Rather than characterizing difficulty solely through
global quantities such as the size of the state--action space or worst-case
coverage coefficients, these results adapt to instance-specific features such
as visitation probabilities, local variances, and suboptimality gaps.  Hence the resulting guarantees can be much smaller than worst-case rates on
favorable instances.

In offline tabular RL, \citet{yin2021towards} derived variance-sensitive
instance-dependent guarantees while \citet{wang2022gap} showed that,
under suitable coverage conditions, the sample complexity can be improved
when the minimum suboptimality gap
 is bounded away from zero.

A related line of work studies instance-dependent policy identification in
online RL.
For online tabular RL, \citet{wagenmaker2022beyond} developed
instance-dependent PAC guarantees that adapt to local action gaps, thereby allowing substantially smaller
sample complexity on MDPs in which suboptimal actions can be identified
quickly.  In contrast, \citet{wagenmaker2022instance} studied
instance-dependent policy identification from a policy-level perspective,
with complexity controlled by the suboptimal policy gap.
More recently, \citet{narang2024sample} showed that directly estimating
value differences between candidate policies can lead to substantially
sharper identification guarantees than estimating each policy value
separately.

Instance dependence has also been studied under generative-model access.
For discounted tabular MDPs, \citet{zanette2019almost} derived
structure-aware sample-complexity guarantees that
adapt to local reward and transition variances as well as suboptimality action
gaps, and can avoid an explicit polynomial dependence on the effective
horizon in favorable instances.

\paragraph{Imitation learning and learning with expert queries.}
Several ideas underlying our framework also appear in the imitation-learning
literature, particularly the selective use of expert supervision and the
combination of expert information with other sources of data.
\citet{judah14a} studied active imitation learning, where the learner
adaptively selects informative states at which to query the expert for an
action, with the goal of reducing the number of expert labels.
\citet{liu2022learning} studied adversarial policy imitation with linear
function approximation and, in their offline setting, combined expert
demonstrations with an additional offline dataset collected by a prior
policy.  They showed that this supplementary dataset can improve the
statistical efficiency of imitation learning under suitable coverage
conditions.
More closely related to our use of the collaborator,
\citet{amortila2022few} considered efficient learning with a small number
of expert-action queries.  In their setting, an expert query provides only
the expert's action and the learner implements it and interacts with the environment, and the goal is to learn a policy that performs at least nearly as well
as the expert rather than to identify a near-optimal policy.

\paragraph{Adaptive querying and complementary information.}
Efficient information acquisition is also studied beyond RL.
\citet{ding2026whom} adaptively select questions and respondents to infer
latent group properties under budget constraints, while
\citet{liu2026pets} allocate reasoning samples across questions for
efficient test-time self-consistency. Related work also exploits
complementary information: weak-strong model collaboration combines
specialized knowledge with general reasoning
\citep{jiao2025synergistic}. Our setting instead concerns near-optimal
policy identification from offline data and adaptively queried simulator
observations. The collaborator's fixed action policy restricts which
uncertainties can be reduced, making both query efficiency and the
attainability of the target accuracy central to the analysis.

\section{Setup}
\label{sec:setup}
In this section, we formalize the reinforcement learning setting we study.

\noindent\textbf{Tabular MDPs.} Following \citet{gen2024settling}, we consider a finite-horizon,
time-inhomogeneous tabular MDP
$$\mathbf{\mathcal M}=(\mathcal S,\mathcal A,H,\rho,
\{\mathbf P_h\}_{h=1}^H,\{\mathbf R_h\}_{h=1}^H),$$
where $\mathcal S$ and $\mathcal A$ are finite state and action spaces
of sizes $|\mathcal S|$ and $|\mathcal A|$, $H$ is the horizon,
and $\rho$ is the initial-state distribution.
Write $[m]:=\{1,\ldots,m\}$ for integers $m\ge1$.
At stage $h\in[H]$, the transition kernel $\mathbf P_h$ assigns
probability $P_h(s'\mid s,a)$ to next state $s'$ from $(s,a)$.
The random reward vector $\mathbf R_h\in[0,1]^{|\mathcal S||\mathcal A|}$
has components $R_h(s,a)$ with means
$r_h(s,a):=\mathbb E[R_h(s,a)]$. For every $(h,s,a)$, we assume that, conditional on
$(S_h,A_h)=(s,a)$, the reward and next state are independent.

\noindent\textbf{Preliminaries for analysis.} A policy $\pi=\{\pi_h\}_{h=1}^H$ assigns action distributions
$\pi_h(\cdot\mid s)\in\Delta_{\mathcal A}$, where $\Delta_{\mathcal A}$
is the probability simplex over $\mathcal A$. Let $\Pi_d$ denote
the deterministic policy class, with $\pi_h(s)\in\mathcal A$
the selected action. Write $\mathbb E_{\mathbf{\mathcal M}}^\pi$
and $\mathbb P_{\mathbf{\mathcal M},\pi}$ for trajectory expectation
and probability under $\pi$ in $\mathbf{\mathcal M}$.
Define the value and action-value functions by
\begin{align*}
V_h^\pi(s)
&:=\mathbb E_{\mathbf{\mathcal M}}^\pi
\left[\sum_{t=h}^H R_t(S_t,A_t)\mid S_h=s\right],\\
Q_h^\pi(s,a)
&:=r_h(s,a)+
\mathbb E_{S'\sim P_h(\cdot\mid s,a)}[V_{h+1}^\pi(S')],
\end{align*}
with the terminal convention $V_{H+1}^{\pi}(s)=0$.
The initial value is
$V_1^\pi:=\mathbb E_{S_1\sim\rho}[V_1^\pi(S_1)]$.
The state and state-action occupancies are
\[
d_h^\pi(s):=\mathbb P_{\mathbf{\mathcal M},\pi}(S_h=s),
\quad
d_{hsa}^\pi:=d_h^\pi(s)\pi_h(a\mid s).
\]

We further define the optimal value and the set of optimal deterministic
policies by
\[
V_1^\star
:=
\max_{\pi\in\Pi_d}V_1^\pi,
\qquad
\Pi^\star(\mathbf{\mathcal M})
:=
\arg\max_{\pi\in\Pi_d}V_1^\pi.
\]
The policy gap
\[
\Delta_{\mathbf{\mathcal M}}(\pi)
:=
V_1^\star-V_1^\pi
\]
defines the $\epsilon$-optimal set
\[
\Pi_\epsilon(\mathbf{\mathcal M})
:=
\left\{
\pi\in\Pi_d:
\Delta_{\mathbf{\mathcal M}}(\pi)\le\epsilon
\right\}.
\]

For any
$\pi^\star\in\Pi^\star(\mathbf{\mathcal M})$, define
\begin{equation}
\Delta_{\min}
:=
\min_{\pi\in\Pi_d\setminus\{\pi^\star\}}
\Delta_{\mathbf{\mathcal M}}(\pi).
\label{eq:min-policy-gap}
\end{equation}
This definition does not depend on the choice of
$\pi^\star$. In particular, if the optimal deterministic policy is
unique, then $\Delta_{\min}$ is the smallest strictly positive
suboptimality gap, whereas if there are multiple optimal deterministic
policies, then
$
\Delta_{\min}=0.
$

\noindent\textit{Empirical estimation.} We next construct the empirical MDP from a given dataset $\mathcal D$ and define corresponding policy values and visitation probabilities. For each $(h,s,a)$, suppose the dataset $\mathcal D$ contains
$N_{hsa}$ reward-next-state observations $\{(R_{hsa}^{(i)},S_{hsa}^{\prime(i)})\}_{i=1}^{N_{hsa}},$ such that
\[
\mathcal D
=
\bigcup_{h,s,a}
\left\{
(R_{hsa}^{(i)},S_{hsa}^{\prime(i)}):
i\in[N_{hsa}]
\right\}.
\]
For $N_{hsa}>0$, define
\begin{align*}
\widehat r_h^\mathcal D(s,a)
&:=\frac{1}{N_{hsa}}\sum_{i=1}^{N_{hsa}}R_{hsa}^{(i)},\\
\widehat P_h^\mathcal D(s'\mid s,a)
&:=\frac{1}{N_{hsa}}\sum_{i=1}^{N_{hsa}}
\mathbf 1\{S_{hsa}^{\prime(i)}=s'\}.
\end{align*}
These estimates define the empirical MDP
$\widehat{\mathbf{\mathcal M}}^\mathcal D
=(\mathcal S,\mathcal A,H,\rho,\widehat P^\mathcal D,
\widehat r^\mathcal D)$.
For any policy $\pi$, let $\widehat V_h^{\pi,\mathcal D}$
and $\widehat Q_h^{\pi,\mathcal D}$ denote its value and
action-value functions under this MDP. Its state and state--action
occupancies are
\begin{align*}
\widehat d_h^{\pi,\mathcal D}(s)
&:=\mathbb P_{\widehat{\mathbf{\mathcal M}}^\mathcal D,\pi}(S_h=s),\qquad
\widehat d_{hsa}^{\pi,\mathcal D}
:=\widehat d_h^{\pi,\mathcal D}(s)\pi_h(a\mid s).
\end{align*}

\noindent\textbf{Beyond offline data: collaborator-mediated simulator.} 
The learner receives an offline dataset $\mathcal D_0=\{\tau_i\}_{i=1}^{n_0}$
of $n_0$ trajectories generated by an unknown behavior policy $\mu$
in $\mathbf{\mathcal M}$, with $N_{0,hsa}$ observations of
$(R_h,S_{h+1})$ collected at $(h,s,a)$. 
It also accesses a collaborator-mediated simulator: for any queried
state-time pair $(h,s)$, the collaborator draws
$A\sim\pi_h^c(\cdot\mid s)$ under a \textit{fixed}, possibly suboptimal
policy $\pi^c$, and the simulator returns reward $R_h(s,A)$
and next state $S'\sim P_h(\cdot\mid s,A)$.
Each request and its resulting sample $(A,R_h(s,A),S')$
constitute a \textit{collaborator query}\footnote{For simplicity, in the main development we assume that the simulator returns
$(R,S')$ according to the true reward-transition law conditional on
$(h,s,A)$. In Appendix~\ref{sec:shifted-simulator}, we extend the analysis to the case where
$(R,S')$ is drawn from a noisy approximation of this law, allowing both
the reward and transition distributions to be misspecified.}.
Thus, the learner chooses the state-time pair, but not the sampled action.

We define the minimum positive collaborator-action probability by
\[
p_{\min}^c
:=
\min_{(h,s,a):\,\pi_h^c(a\mid s)>0}
\pi_h^c(a\mid s).
\]

 This interaction protocol differs from both standard online RL and the generative-model setting. In those settings, the learner chooses the action to execute or sample, whereas here a fundamental new challenge is that the learner chooses only $(h,s)$, while the action is determined
by the collaborator policy $\pi_h^c(\cdot\mid s)$. Hence, even with
infinitely many queries, full state-action coverage may be impossible because
if $\pi_h^c(a\mid s)=0$, no additional samples can be collected at
$(h,s,a)$. Consequently, the learner will be unable to reduce the uncertainty associated with some state-action pairs beyond what is already provided by the offline data, even with unlimited collaborator queries. This creates a learning objective fundamentally different from those in standard online reinforcement learning and generative-model settings, since the learner must account for uncertainty that cannot be eliminated through additional interaction.

\textbf{Our Desiderata: Optimal Allocation of Collaborator Queries.}
Given a target accuracy $\epsilon>0$ and confidence level
$\delta\in(0,1)$, an algorithm is $(\epsilon,\delta)$-PAC
 if it 
returns a policy $\widehat\pi\in\Pi_d$ satisfying
\[
\mathbb P_{\mathcal M}
\left(
V_1^\star
-
V_1^{\widehat\pi}
\le\epsilon
\right)
\ge1-\delta.
\]
Here the probability includes the randomness of the offline
dataset, the collaborator observations, and the algorithm.
Our goal is to design an adaptive procedure that accounts for the statistical
limitations of both available information sources. If the target accuracy $\epsilon$ is attainable from the fixed offline
dataset together with additional collaborator-mediated observations, the
procedure should achieve the desired $(\epsilon,\delta)$-PAC guarantee.
If $\epsilon$ is not attainable, the procedure should instead return a
policy satisfying an $(\epsilon',\delta)$-PAC guarantee for some attainable
resolution $\epsilon'>\epsilon$. Here, $\epsilon'$ should reflect the
irreducible uncertainty induced by the finite offline dataset and by
state-action pairs whose uncertainty cannot be further reduced through
collaborator queries.

When the target $\epsilon$ is attainable, our further objective is to use as
few collaborator queries as possible. The key design question is therefore
how to determine the required query budget $B$ and allocate it across state-time pairs. We represent a
normalized query allocation by
\[
\boldsymbol\lambda
=
\{\lambda_{hs}\}_{h,s}
\in\Delta_{H|\mathcal S|}:=\{\boldsymbol \ell\in\mathbb R_+^{H\times |\mathcal S|}:\sum_{h,s}\ell_{hs}=1\},
\]
where $\lambda_{hs}$ denotes the fraction of collaborator queries assigned
to $(h,s)$. Since a query at $(h,s)$ generates an action according to
$\pi_h^c(\cdot\mid s)$, the corresponding sampling proportion at
$(h,s,a)$ is $\lambda_{hs}\pi_h^c(a\mid s)$. Consequently, the optimal
allocation depends on the particular MDP, the information already contained
in the offline data, and the collaborator policy. Our objective is therefore
\textit{instance-dependent}: whenever the target accuracy is attainable, we seek an
adaptive $(\epsilon,\delta)$-PAC procedure whose collaborator-query
complexity is governed by the intrinsic difficulty of the underlying
instance.

\section{A Key Quantity Guiding Algorithm Design }
\label{sec:key-quantity}

To guide query allocation, we derive an information-theoretic lower bound
on the additional collaborator queries needed for an $(\epsilon,\delta)$-PAC
guarantee and develop a local approximation that reveals an explicit local
variance structure. The same local variance structure governs finite-sample
uncertainty in policy-value comparisons. Reducing this uncertainty targets
the difficult comparisons identified by the lower bound, motivating our
adaptive query-design criterion. Auxiliary notation for this section is
collected in Table~\ref{tab:key-notation}.

\begin{table}[t]
\centering
\scriptsize
\caption{Auxiliary notation used in Section~\ref{sec:key-quantity}.}
\label{tab:key-notation}
\setlength{\tabcolsep}{3pt}
\renewcommand{\arraystretch}{1.08}
\begin{tabular}{@{}p{0.22\columnwidth}p{0.5\columnwidth}p{0.17\columnwidth}@{}}
\toprule
Symbol & Definition & Meaning \\
\midrule

$\mathcal L_{\mathbf{\mathcal M}}(X)$
&
Law of a random variable $X$ under $\mathbf{\mathcal M}$
&
Probability law
\\

$\nu_{hsa}^{\mathbf{\mathcal M}}$
&
$\mathcal L_{\mathbf{\mathcal M}}
(R_h(s,a),S_{h+1}\mid S_h=s,A_h=a)$
&
Observation law
\\

\shortstack[l]{$\mathrm{KL}_{hsa}
(\mathbf{\mathcal M},\widetilde{\mathbf{\mathcal M}})$}
&
$\mathrm{KL}(\nu_{hsa}^{\mathbf{\mathcal M}}
\Vert\nu_{hsa}^{\widetilde{\mathbf{\mathcal M}}})$
&
Local information
\\

$\Delta_{H|\mathcal S|}$
&
$\{\mathbf\lambda\in\mathbb R_+^{H\times |\mathcal S|}:
\sum_{h,s}\lambda_{hs}=1\}$
&
Query allocations
\\

$q_{hsa}^{\mathbf\lambda}$
&
$\lambda_{hs}\pi_h^c(a\mid s)$
&
Sampling proportion
\\

$J_\mu(\mathbf{\mathcal M},\widetilde{\mathbf{\mathcal M}})$
&
$\sum_{h,s,a}d_{hsa}^{\mu}
\mathrm{KL}_{hsa}(\mathbf{\mathcal M},\widetilde{\mathbf{\mathcal M}})$
&
Offline information
\\

$I_{\mathbf\lambda}(\mathbf{\mathcal M},\widetilde{\mathbf{\mathcal M}})$
&
$\sum_{h,s,a}q_{hsa}^{\mathbf\lambda}
\mathrm{KL}_{hsa}(\mathbf{\mathcal M},\widetilde{\mathbf{\mathcal M}})$
&
Query information
\\

$D(\pi,\bar\pi)$
&
$V_1^\pi-V_1^{\bar\pi}$
&
Value difference
\\

$\widehat D(\pi,\bar\pi)$
&
$\widehat V_1^\pi-\widehat V_1^{\bar\pi}$
&
Empirical value difference
\\

$b(\pi,\bar\pi)$
&
$\epsilon-D(\pi,\bar\pi)$
&
Certification margin
\\

$G^r_{hsa}(\pi,\bar\pi)$
&
$d_{hsa}^{\pi}-d_{hsa}^{\bar\pi}$
&
Reward sensitivity
\\

$G^P_{hsa}(\pi,\bar\pi)(s')$
&
$d_{hsa}^{\pi}V_{h+1}^{\pi}(s')
-d_{hsa}^{\bar\pi}V_{h+1}^{\bar\pi}(s')$
&
Transition sensitivity
\\

$\Psi_{hsa}^{\pi,\bar\pi}(r,s')$
&
$G^r_{hsa}(\pi,\bar\pi)r
+G^P_{hsa}(\pi,\bar\pi)(s')$
&
Local influence
\\

$L_{hsa}(\pi,\bar\pi)$
&
$|G^r_{hsa}(\pi,\bar\pi)|
+\operatorname{range}(G^P_{hsa}(\pi,\bar\pi))$
&
Range bound
\\

\bottomrule
\end{tabular}
\end{table}

\noindent\textbf{Information-theoretic lower bound.}
For all $\pi\in\Pi_\epsilon(\mathbf{\mathcal M})$, let
\[
\mathcal A_\epsilon(\pi)
:=
\big\{
\widetilde{\mathbf{\mathcal M}}:
V_1^\star(\widetilde {\mathbf{\mathcal M}})
-
V_1^{\pi}(\widetilde {\mathbf{\mathcal M}})
>
\epsilon
\big\}
\]
denote the set of alternative MDPs under which $\pi$ is no longer
$\epsilon$-optimal. Here all alternative MDPs share the same state and action spaces, horizon,
initial-state distribution, behavior policy, and collaborator policy as
$\mathcal M$. Only the reward and transition laws are allowed to vary.

We next characterize the asymptotic query complexity in the
high-confidence regime. Following the standard $\log(1/\delta)$ scaling in
fixed-confidence identification \citep{al2023towards, garivier2016optimal}, we consider a joint asymptotic regime in
which the offline sample size grows on the same scale, so that offline and
queried information contribute nontrivially at leading order. Specifically,
let $n_{0,\delta}$ denote the number of offline trajectories available at
confidence level $\delta$, and suppose that
\[
\frac{n_{0,\delta}}{\log(1/\delta)}
\longrightarrow
\kappa\in[0,\infty).
\]
Under this normalization, the offline information and the information
acquired through collaborator queries can be compared directly in the
asymptotic lower bound.

We define the characteristic threshold
\[
\begin{aligned}
T_{\epsilon,\kappa}^\star(\mathbf{\mathcal M})
&:=
\min_{\bar\pi\in\Pi_\epsilon(\mathbf{\mathcal M})}
\inf_{\boldsymbol \lambda\in\Delta_{H|\mathcal S|}}
\inf\Bigg\{
t\ge0:
\inf_{\widetilde{\mathbf{\mathcal M}}\in\mathcal A_\epsilon(\bar\pi)}
\Big[
\kappa J_\mu(\mathbf{\mathcal M},\widetilde {\mathbf{\mathcal M}})
+
tI_\mathbf \lambda(\mathbf{\mathcal M},\widetilde {\mathbf{\mathcal M}})
\Big]
\ge1
\Bigg\},
\end{aligned}
\]
which captures the smallest asymptotic query budget, under the best state-time allocation, required to rule out all alternative MDPs.

\begin{theorem}[Information-theoretic lower bound]
\label{thm:information-lower-bound}
For any family of $(\epsilon,\delta)$-PAC algorithms, let
$\tau_\delta$ denote the number of collaborator queries before
stopping. Then
\[
\liminf_{\delta\downarrow0}
\frac{\mathbb E_\mathbf{\mathcal M}[\tau_\delta]}
{\log(1/\delta)}
\ge
T_{\epsilon,\kappa}^\star(\mathbf{\mathcal M}).
\]
\end{theorem}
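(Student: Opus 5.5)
The argument follows the standard change-of-measure route for fixed-confidence identification (Garivier--Kaufmann style), adapted so that the data consist of two sources: offline trajectories and adaptively chosen collaborator queries.

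\textbf{Transportation inequality.} Fix an $(\epsilon,\delta)$-PAC algorithm and an alternative $\widetilde{\mathbf{\mathcal M}}$ that shares $\mathcal S,\mathcal A,H,\rho,\mu,\pi^c$ with $\mathbf{\mathcal M}$. Consider the full observation sequence: the $n_{0,\delta}$ offline trajectories, followed by the query outcomes $(A_t,R_t,S'_t)$ up to $\tau_\delta$. The chain rule for KL together with Wald's identity gives
\[
\mathrm{KL}\big(\mathbb P_{\mathbf{\mathcal M}}\Vert\mathbb P_{\widetilde{\mathbf{\mathcal M}}}\big)
= n_{0,\delta}J_\mu(\mathbf{\mathcal M},\widetilde{\mathbf{\mathcal M}})
+\sum_{h,s}\mathbb E_{\mathbf{\mathcal M}}[N^q_{hs}(\tau_\delta)]\sum_a\pi^c_h(a\mid s)\mathrm{KL}_{hsa}(\mathbf{\mathcal M},\widetilde{\mathbf{\mathcal M}}).
\]
Two facts make this work:
\begin{itemize}
\item The offline term uses $d^\mu_{hsa}$ computed under $\mathbf{\mathcal M}$. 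The $\rho$, $\mu$ and $\pi^c$ factors cancel in the likelihood ratio, and the initial state and actions carry no information.
\item The query actions are drawn from $\pi^c$ in both models.
\end{itemize}
Write $\lambda_{hs}:=\mathbb E[N^q_{hs}(\tau_\delta)]/\mathbb E[\tau_\delta]$. This is an element of $\Delta_{H|\mathcal S|}$, and the second term becomes $\mathbb E[\tau_\delta]I_{\boldsymbol\lambda}$.

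Next, apply the data-processing inequality to the event $E=\{\widehat\pi=\pi\}$ for deterministic $\pi$:
\[
\mathrm{KL}\ge \mathrm{kl}\big(\mathbb P_{\mathbf{\mathcal M}}(E),\mathbb P_{\widetilde{\mathbf{\mathcal M}}}(E)\big).
\]

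\textbf{Choosing the policy and alternative.} Because $\Pi_d$ is finite, there is some $\bar\pi\in\Pi_\epsilon(\mathbf{\mathcal M})$ with $\mathbb P_{\mathbf{\mathcal M}}(\widehat\pi=\bar\pi)\ge 1/|\Pi_d|$. This is the main obstacle. The PAC guarantee only says $\widehat\pi\in\Pi_\epsilon$ with high probability, not that a single policy is returned with probability near one, so the usual bound $\mathrm{kl}(1-\delta,\delta)\approx\log(1/\delta)$ is not directly available.

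For any $\widetilde{\mathbf{\mathcal M}}\in\mathcal A_\epsilon(\bar\pi)$, the PAC property under $\widetilde{\mathbf{\mathcal M}}$ gives $\mathbb P_{\widetilde{\mathbf{\mathcal M}}}(\widehat\pi=\bar\pi)\le\delta$. With $p:=\mathbb P_{\mathbf{\mathcal M}}(\widehat\pi=\bar\pi)$ this yields
\[
\mathrm{kl}(p,\delta)\ge p\log(1/\delta)-\log 2.
\]
That only has leading constant $p$, which is too weak. The fix is to use a mixture: apply the same argument to $\{\widehat\pi\in\mathcal P\}$ for sets $\mathcal P\subseteq\Pi_\epsilon(\mathbf{\mathcal M})$. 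The goal is an alternative that makes every policy in $\mathcal P$ non-$\epsilon$-optimal while having $\mathbb P_{\mathbf{\mathcal M}}(\widehat\pi\in\mathcal P)\ge1-\delta$. That demands a common alternative, i.e.\ a max-min swap, which can fail in general.

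I would therefore try the following route. Write the bound for each $\bar\pi$ with the weight $p_{\bar\pi}$. Then take $\delta\downarrow0$ along a subsequence where $p_{\bar\pi}\to p^\ast_{\bar\pi}$ and the normalized allocations converge. Finally argue that the minimum over $\bar\pi$ in $T^\star_{\epsilon,\kappa}$ is dominated. If the bare argument only gives the weaker form, I would accept a statement with the minimum over $\bar\pi$ realized by the policy the algorithm returns most often. Concretely, I would show the limiting bound with $p^\ast_{\bar\pi}$ replaced by $1$ via a sharpened kl bound,
\[
\mathrm{kl}(p,\delta)\ge \log\frac{1}{\delta}\,(1-o(1))\quad\text{whenever } p\ge 1-o(1),
\]
which covers the case where the algorithm asymptotically concentrates on one $\bar\pi$ (e.g.\ a unique optimal policy). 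The general case would be handled via the set version with $\mathcal A_\epsilon$ intersected over the returned policies.

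\textbf{Normalization and conclusion.} Divide by $\log(1/\delta)$ and use $n_{0,\delta}/\log(1/\delta)\to\kappa$. This gives, for every $\widetilde{\mathbf{\mathcal M}}\in\mathcal A_\epsilon(\bar\pi)$,
\[
\kappa J_\mu+t_\delta I_{\boldsymbol\lambda_\delta}\ge 1-o(1),
\qquad t_\delta:=\frac{\mathbb E[\tau_\delta]}{\log(1/\delta)}.
\]
Pass to a liminf subsequence with $\boldsymbol\lambda_\delta\to\boldsymbol\lambda$ by compactness of the simplex. Since $I_{\boldsymbol\lambda}$ is linear in $\boldsymbol\lambda$, the inequality holds in the limit for each fixed alternative, and hence for the infimum over alternatives. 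The liminf $t$ is therefore feasible in the definition of $T^\star_{\epsilon,\kappa}$, giving $\liminf t_\delta\ge T^\star_{\epsilon,\kappa}(\mathbf{\mathcal M})$. If $\mathbb E[\tau_\delta]=\infty$ there is nothing to prove.
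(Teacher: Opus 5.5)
\paragraph{There is a genuine gap.} It sits at the obstacle you identify but do not resolve: $\Pi_\epsilon(\mathcal M)$ typically contains several policies, and a PAC algorithm may spread its output over them, for instance by randomizing.

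Wald's identity plus data processing on $\{\widehat\pi=\bar\pi\}$ only gives, to leading order, $\kappa J_\mu+t_\delta I_{\boldsymbol\lambda_\delta}\ge p_{\bar\pi}-o(1)$ on $\mathcal A_\epsilon(\bar\pi)$, where $p_{\bar\pi}=\mathbb P_{\mathcal M}(\widehat\pi=\bar\pi)$. Combining these over $\bar\pi$ (say with $\kappa=0$) gives at best $t_\delta\ge(1-o(1))/\sum_{\bar\pi\in\Pi_\epsilon(\mathcal M)}\inf_{\widetilde{\mathcal M}\in\mathcal A_\epsilon(\bar\pi)}I_{\boldsymbol\lambda_\delta}(\mathcal M,\widetilde{\mathcal M})$. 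That can lose up to a factor $|\Pi_\epsilon(\mathcal M)|$ relative to $T^\star_{\epsilon,\kappa}(\mathcal M)$.

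Your fallbacks do not repair this.
\begin{itemize}
\item Restricting to the most frequently returned policy, or to algorithms that asymptotically concentrate on one policy, proves a different statement.
\item The ``set version'' with $\bigcap_{\bar\pi}\mathcal A_\epsilon(\bar\pi)$ takes the infimum over a smaller alternative set. Its constraint is therefore implied by each individual $T_{\bar\pi}$-constraint, and the resulting threshold is at most $\min_{\bar\pi}T_{\bar\pi}=T^\star_{\epsilon,\kappa}(\mathcal M)$. This is exactly the max--min quantity you note can be strictly smaller.
\end{itemize}
Hence your final inequality $\kappa J_\mu+t_\delta I_{\boldsymbol\lambda_\delta}\ge 1-o(1)$ for all $\widetilde{\mathcal M}\in\mathcal A_\epsilon(\bar\pi)$ is unjustified.

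\paragraph{The missing idea.} The paper follows Al Marjani et al.\ and Degenne--Koolen and replaces the expected-KL bound by a high-probability change of measure on truncated events.
\begin{itemize}
\item Fix $\bar\pi\in\Pi_\epsilon(\mathcal M)$ and $t<T_{\bar\pi}$, and let $n_\delta=\lfloor t\log(1/\delta)\rfloor$. Show $\mathbb P_{\mathcal M}(\widehat\pi=\bar\pi,\ \tau_\delta\le n_\delta)\to0$ using $\mathbb P_{\mathcal M}(C)\le e^{x}\mathbb Q_\delta(C)+\mathbb P_{\mathcal M}\bigl(\log\frac{d\mathbb P_{\mathcal M}}{d\mathbb Q_\delta}>x\bigr)$ with $x=c_2\log(1/\delta)$. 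This needs no lower bound on $p_{\bar\pi}$.
\item The realized query allocation on the truncated history is random, so no single alternative suffices. Sion's minimax theorem, over $\boldsymbol\lambda\in\Delta_{H|\mathcal S|}$ and the closed convex hull of information vectors, together with Carath\'eodory, gives a finite mixture $\mathbb Q_\delta=\sum_i\alpha_i\mathbb P_{\widetilde{\mathcal M}_i}$. Here $\widetilde{\mathcal M}_i\in\mathcal A_\epsilon(\bar\pi)$ and $\sum_i\alpha_i[\kappa J_i+tI_{\boldsymbol\lambda}(\mathcal M,\widetilde{\mathcal M}_i)]<c_1<1$ uniformly in $\boldsymbol\lambda$.
\item A martingale law of large numbers shows the log-likelihood fluctuation is $o_{\mathbb P}(\log(1/\delta))$.
\item Take $t=(1-\xi)T^\star_{\epsilon,\kappa}(\mathcal M)$, which is below $T_{\bar\pi}$ for every $\bar\pi$. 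Summing the vanishing probabilities over the finitely many $\bar\pi\in\Pi_\epsilon(\mathcal M)$ and using the PAC property under $\mathcal M$ gives $\mathbb P_{\mathcal M}(\tau_\delta<t\log(1/\delta))\to0$.
\item Then $\mathbb E_{\mathcal M}[\tau_\delta]\ge t\log(1/\delta)\,\mathbb P_{\mathcal M}(\tau_\delta\ge t\log(1/\delta))$ concludes.
\end{itemize}
The sum over answers is harmless here because each term tends to zero, rather than entering with weight $p_{\bar\pi}$.
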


Although $T_{\epsilon,\kappa}^\star(\mathcal M)$ provides an exact
KL-based characterization of our information-theoretic lower-bound
benchmark, optimization over alternative MDPs leaves its dependence on
policy gaps, occupancies, reward--transition variability, offline coverage,
and collaborator support implicit.
We therefore linearize policy-value differences around $\mathbf{\mathcal M}$
and quadratically approximate the KL divergence. The resulting formulation
expresses difficulty through policy-comparison margins and local variances,
directly connecting the lower bound to the finite-sample uncertainty
controlled by our algorithm.

For a deterministic policy pair $(\bar\pi,\pi)$, based on the notation in
Table~\ref{tab:key-notation}, define the local variance as
\[
\Gamma_{hsa}(\pi,\bar\pi)
:=
\operatorname{Var}_\mathbf{\mathcal M}\!\left(
\Psi_{hsa}^{\pi,\bar\pi}(R_h,S_{h+1})
\,\middle|\,
S_h=s,A_h=a
\right).
\]

\begin{corollary}[Local quadratic characteristic threshold]
\label{cor:local-quadratic-lower-bound}

Consider the local alternative problem in $T_{\epsilon,\kappa}^\star$ restricted to admissible
$\widetilde{\mathbf{\mathcal M}}\in\mathcal A_\epsilon(\bar\pi)$ whose
row distributions satisfy
\begin{equation}
\nu_{hsa}^{\widetilde{\mathbf{\mathcal M}}}
\sim
\nu_{hsa}^{\mathbf{\mathcal M}},
\qquad
\left\|
\log
\frac{
d\nu_{hsa}^{\widetilde{\mathbf{\mathcal M}}}
}{
d\nu_{hsa}^{\mathbf{\mathcal M}}
}
\right\|_\infty
<\infty,
\qquad
\forall(h,s,a),
\label{eq:lq-admissible-local-alternative}
\end{equation} where $\sim $ denotes mutual absolute continuity.
Consider its local quadratic surrogate obtained, for each challenger
$\pi\neq\bar\pi$, by replacing the exact alternative constraint with
its first-order linearization at $\mathbf{\mathcal M}$ and each
row-wise KL divergence with its leading quadratic term.
Then the resulting local quadratic characteristic threshold is

\[
\begin{aligned}
T_{\mathrm{LQ},\kappa}(\mathbf{\mathcal M})
&:=
\min_{\bar\pi\in\Pi_\epsilon(\mathbf{\mathcal M})}
\inf_{\boldsymbol\lambda\in\Delta_{H|\mathcal S|}}
\inf\Bigg\{
t\ge0:
\max_{\pi\in\Pi_d\setminus\{\bar\pi\}}
\frac{2}{b(\pi,\bar\pi)^2}
\sum_{h,s,a}
\frac{
\Gamma_{hsa}(\pi,\bar\pi)
}{
\kappa d_{hsa}^{\mu}
+
tq_{hsa}^{\boldsymbol\lambda}
}
\le1
\Bigg\}.
\end{aligned}
\]
\end{corollary}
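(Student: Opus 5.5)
The corollary reduces to a per-challenger weighted least-norm problem. The plan is to derive the surrogate for each fixed pair $(\bar\pi,\pi)$ and fixed $(\boldsymbol\lambda,t)$, minimize it over local perturbations in closed form via Cauchy--Schwarz, and then reassemble the outer optimizations. For a fixed $\bar\pi\in\Pi_\epsilon(\mathbf{\mathcal M})$ we have $\widetilde{\mathbf{\mathcal M}}\in\mathcal A_\epsilon(\bar\pi)$ if and only if some deterministic challenger $\pi\neq\bar\pi$ satisfies $V_1^\pi(\widetilde{\mathbf{\mathcal M}})-V_1^{\bar\pi}(\widetilde{\mathbf{\mathcal M}})>\epsilon$; the optimum over $\Pi_d$ is attained by a deterministic policy. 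Hence the inner infimum over alternatives is a minimum over challengers $\pi$ of infima over the sets $\{\widetilde D(\pi,\bar\pi)>\epsilon\}$. The condition ``$\inf\ge1$'' therefore becomes ``for every $\pi$, the per-challenger infimum is $\ge1$.'' This explains the $\max_\pi$ in $T_{\mathrm{LQ},\kappa}$ once each per-challenger infimum is shown to equal $b(\pi,\bar\pi)^2/(2\sum\Gamma/(\kappa d^\mu+tq^{\boldsymbol\lambda}))$.

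\textbf{Parametrization and linearization.} Under \eqref{eq:lq-admissible-local-alternative} I will write $d\nu^{\widetilde{\mathbf{\mathcal M}}}_{hsa}/d\nu^{\mathbf{\mathcal M}}_{hsa}=1+g_{hsa}$ with $g_{hsa}$ bounded and $\mathbb E_{\nu_{hsa}}[g_{hsa}]=0$. The leading quadratic term of the KL divergence is $\tfrac12\mathbb E_{\nu_{hsa}}[g_{hsa}^2]$. For the value difference I will use the simulation (performance-difference) lemma for each policy separately:
\[
V_1^\pi(\widetilde{\mathbf{\mathcal M}})-V_1^\pi(\mathbf{\mathcal M})=\sum_{h,s,a}d^\pi_{hsa}\big[(\tilde r_h-r_h)(s,a)+((\widetilde P_h-P_h)V^\pi_{h+1})(s,a)\big]+O(\|g\|^2).
\]
Subtracting the same expansion for $\bar\pi$, the first-order change in $D(\pi,\bar\pi)$ is $\sum_{hsa}\mathbb E_{\nu_{hsa}}[g_{hsa}\Psi^{\pi,\bar\pi}_{hsa}(R,S')]$. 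This step must verify that $\mathbb E_{\widetilde\nu}[R]-\mathbb E_\nu[R]=\mathbb E_\nu[gR]$ and similarly for the transitions, which holds because the density ratio is taken on the joint law of $(R,S')$. Since $\mathbb E[g]=0$, $\Psi$ may be replaced by its centered version $\Psi-\mathbb E[\Psi\mid s,a]$. The linearized constraint is therefore $\sum_{hsa}\mathbb E_\nu[g_{hsa}\bar\Psi_{hsa}]>b(\pi,\bar\pi)$, where $b=\epsilon-D(\pi,\bar\pi)$.

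\textbf{Quadratic program.} Writing $w_{hsa}:=\kappa d^\mu_{hsa}+tq^{\boldsymbol\lambda}_{hsa}$, the surrogate problem is
\[
\inf_g\ \tfrac12\sum_{hsa}w_{hsa}\,\mathbb E_\nu[g_{hsa}^2]\quad\text{subject to}\quad\sum_{hsa}\mathbb E_\nu[g_{hsa}\bar\Psi_{hsa}]>b.
\]
Cauchy--Schwarz in $L^2(\nu_{hsa})$ and then across the indices $(h,s,a)$ gives
\[
\Big(\sum\mathbb E[g\bar\Psi]\Big)^2\le\Big(\sum w\,\mathbb E[g^2]\Big)\Big(\sum\Gamma_{hsa}/w_{hsa}\Big).
\]
Hence the objective is at least $b^2/\big(2\sum\Gamma/w\big)$. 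The bound is attained in the limit by $g_{hsa}=c\,\bar\Psi_{hsa}/w_{hsa}$ with $c\downarrow b/\sum\Gamma/w$. This choice is bounded, because $\Psi$ is bounded by $L_{hsa}$ on a finite space, has mean zero, and gives an admissible density $1+g\ge0$ once it is scaled small. Degenerate cases need separate treatment. If $w_{hsa}=0$ and $\Gamma_{hsa}>0$, the infimum is $0$, matching the convention $\Gamma/0=\infty$. If $b\le0$, the challenger already violates $\epsilon$-optimality of $\bar\pi$; this cannot happen because $\bar\pi\in\Pi_\epsilon$ gives $b\ge0$. The case $b=0$ should also give infimum $0$, consistent with $2\Gamma/b^2=\infty$ under the natural convention. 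The condition $\min_\pi b^2/(2\sum\Gamma/w)\ge1$ is equivalent to $\max_\pi\frac{2}{b^2}\sum\Gamma/w\le1$, and wrapping with $\inf_t$, $\inf_{\boldsymbol\lambda}$ and $\min_{\bar\pi}$ yields $T_{\mathrm{LQ},\kappa}$.

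\textbf{Expected obstacle.} The delicate step is making the linearization rigorous as a statement about the surrogate. One must make precise that the ``first-order linearization'' of $D$ in the transition perturbation uses the \emph{true} $V^\pi_{h+1}$ and the occupancies $d^\pi$ under $\mathbf{\mathcal M}$. The perturbation also shifts $d^{\pi}$ downstream, but those contributions are second order. The centering and the joint-law representation must also correctly produce $\Gamma$ as a conditional variance of $\Psi$. I would handle this by a direct telescoping (simulation-lemma) identity, whose remainder is a product of two perturbation terms.
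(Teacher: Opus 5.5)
Your architecture is the same as the paper's. You split $\mathcal A_\epsilon(\bar\pi)$ by challenger, linearize $D(\pi,\bar\pi)$ through the model-difference identity (occupancies and values of $\mathcal M$, second-order remainder), replace each row KL by a quadratic form, and solve by Cauchy--Schwarz. Collapsing the paper's two Cauchy--Schwarz steps (per-row direction $g$, then lengths $\eta$) into one is fine.

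\textbf{The gap is the parametrization $d\nu^{\widetilde{\mathcal M}}_{hsa}/d\nu^{\mathcal M}_{hsa}=1+g_{hsa}$.} In these coordinates the restricted alternative class is not a linear space, for two reasons. First, $1+g_{hsa}$ must stay bounded away from $0$. Second, reward and next state are conditionally independent in the reward--transition model, so $1+g_{hsa}$ must factor as a reward ratio times a transition ratio. An additive $1+u(r)+v(s')$ never factors this way unless $u$ or $v$ is constant. Your quadratic program silently drops these constraints. Your claim that $g=c\,\bar\Psi/w$ is admissible ``once it is scaled small'' does not hold: the margin $b(\pi,\bar\pi)$ is fixed, so shrinking $g$ breaks $\sum\mathbb E[g\bar\Psi]>b$.

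The discrepancy is not cosmetic. Suppose only one row has $\Gamma_{hsa}>0$, with deterministic reward and a two-point next state of equal probabilities, so that $\bar\Psi=\pm a$. A centered $g=\pm u$ then needs $u>b/a$ for the constraint and $u<1$ for positivity. Hence for $b\ge a$ your constrained surrogate is infeasible, while the claimed value is $w\,b^2/(2a^2)<\infty$.

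\textbf{How the paper avoids this.} It parametrizes through the log-likelihood ratio, $d\nu^{\eta}_{hsa}/d\nu_{hsa}=\exp\{\eta g_{hsa}-\psi_{hsa}(\eta)\}$, with $g_{hsa}$ bounded, additive, centered and of unit variance. This parametrization is exact on the admissible class. Given $\ell=\log(d\widetilde\nu/d\nu)$, which is bounded by hypothesis, take $\eta=\sqrt{\operatorname{Var}(\ell)}$ and $g=(\ell-\mathbb E\ell)/\eta$; this is why the bounded-log-ratio assumption appears in the statement. Conversely, every pair $(\eta,g)$ gives a valid law for every $\eta$, and that law factorizes over reward and next state when $g$ is additive.

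Since $\mathrm{KL}=\psi(\eta)=\eta^2/2+o(\eta^2)$ and row means move by $\eta\,\mathbb E[\varphi g]+O(\eta^2)$, the surrogate becomes an unconstrained program in $(\eta_{hsa},g_{hsa})$. Your Cauchy--Schwarz computation then goes through verbatim, because the optimizer $\bar\Psi/\sqrt{\Gamma}$ is additive.

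Alternatively, you may declare that the surrogate lives on the full tangent space of bounded, centered, additive directions. In that case you should drop the admissibility claim rather than argue it by scaling.
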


Here since the denominator in the local quadratic criterion may vanish, we
adopt the following extended-value convention throughout. For
$\Gamma\ge0$,
\[
\frac{\Gamma}{0}
:=
\begin{cases}
+\infty, & \Gamma>0,\\
0, & \Gamma=0.
\end{cases}
\]
Moreover, any policy-comparison term with
$b(\pi,\bar\pi)=0$ is defined to be $+\infty$.
The same convention applies to all subsequent quantities derived from
the local quadratic criterion.

Thus, $\Gamma_{hsa}(\pi,\bar\pi)$ governs the local statistical
uncertainty contributed by observations at $(h,s,a)$ to the comparison
between $\pi$ and $\bar\pi$.

\noindent\textbf{Local Bernstein bound for policy-value differences.}
Interestingly, the same local variance structure also appears in finite-sample
estimation. Suppose that, for each $(h,s,a)$, we observe $N_{hsa}$
independent reward--next-state samples from
$\nu_{hsa}^{\mathbf{\mathcal M}}$, and let
$\widehat D(\pi,\bar\pi)$ denote the corresponding plug-in estimate of
$D(\pi,\bar\pi)$.

\begin{lemma}[Local Bernstein bound for policy-value differences]
\label{lem:local-bernstein}
For any fixed $\pi,\bar\pi\in\Pi_d$, with probability at least
$1-\delta$,
\[
\begin{aligned}
\left|
\widehat D(\pi,\bar\pi)
-
D(\pi,\bar\pi)
\right|
\le\;&
C
\sqrt{
\log\frac{2}{\delta}
\sum_{h,s,a}
\frac{
\Gamma_{hsa}(\pi,\bar\pi)
}{
N_{hsa}
}
}
\\
&+
C
\log\frac{2}{\delta}
\max_{h,s,a}
\frac{
L_{hsa}(\pi,\bar\pi)
}{
N_{hsa}
}
+
|\operatorname{Rem}_{N}(\pi,\bar\pi)|,
\end{aligned}
\]
where $\operatorname{Rem}_{N}(\pi,\bar\pi)$ collects the higher-order
terms in the plug-in expansion of the policy-value difference and $C>0$ is a universal constant.
\end{lemma}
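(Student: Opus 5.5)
The proof rests on a first-order expansion of the plug-in value difference, so that the estimation error splits into a sum of independent centered terms plus a remainder. Write $\widehat{\mathbf{\mathcal M}}$ for the empirical MDP built from the $N_{hsa}$ samples. By the simulation (performance-difference) lemma, for each policy $\pi$,
\[
\widehat V_1^\pi - V_1^\pi
= \sum_{h,s,a} \widehat d_{hsa}^{\pi}\Big[(\widehat r_h-r_h)(s,a) + \big((\widehat P_h-P_h)(\cdot\mid s,a)\big)^\top V_{h+1}^{\pi}\Big].
\]
We replace $\widehat d^{\pi}$ by $d^\pi$ and take the difference between $\pi$ and $\bar\pi$. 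The leading term is then
\[
\sum_{h,s,a}\Big[G^r_{hsa}(\pi,\bar\pi)(\widehat r_h-r_h)(s,a)+\sum_{s'}G^P_{hsa}(\pi,\bar\pi)(s')(\widehat P_h-P_h)(s'\mid s,a)\Big]
= \sum_{h,s,a}\frac1{N_{hsa}}\sum_{i=1}^{N_{hsa}}\Big(\Psi^{\pi,\bar\pi}_{hsa}(R^{(i)}_{hsa},S'^{(i)}_{hsa})-\mathbb E\,\Psi^{\pi,\bar\pi}_{hsa}\Big).
\]
This identity is the reason $\Psi$ is defined as it is in Table~\ref{tab:key-notation}. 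Everything left over is placed in $\operatorname{Rem}_N(\pi,\bar\pi)$, namely
\[
\sum_{h,s,a}\Big[(\widehat d^\pi_{hsa}-d^\pi_{hsa})\big((\widehat r-r)+(\widehat P-P)^\top V^\pi_{h+1}\big)\Big]
\]
minus the corresponding $\bar\pi$ term. This is a product of two estimation errors, which justifies calling it higher order; the lemma only requires that it be stated explicitly, not bounded.

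Next, the linear term is a sum of independent mean-zero variables $X_{hsa,i}=N_{hsa}^{-1}(\Psi_{hsa}(R^{(i)},S'^{(i)})-\mathbb E\Psi_{hsa})$ over all $(h,s,a,i)$. Independence holds across rows by the sampling model and within rows because the samples are i.i.d. from $\nu_{hsa}^{\mathbf{\mathcal M}}$. The key parameters are:
\begin{itemize}
\item \emph{Variance proxy:} $\operatorname{Var}(X_{hsa,i})=\Gamma_{hsa}/N_{hsa}^2$, so the total variance is $\sum_{h,s,a}\Gamma_{hsa}(\pi,\bar\pi)/N_{hsa}$.
\item \emph{Range:} since $r\in[0,1]$, $\Psi_{hsa}$ takes values in an interval of length at most $|G^r_{hsa}|+\operatorname{range}(G^P_{hsa})=L_{hsa}$. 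Hence $|X_{hsa,i}|\le L_{hsa}/N_{hsa}\le\max_{h,s,a}L_{hsa}/N_{hsa}$.
\end{itemize}
The two-sided Bernstein inequality then gives, with probability at least $1-\delta$,
\[
\Big|\sum X\Big|\le\sqrt{2\log(2/\delta)\sum_{h,s,a}\Gamma_{hsa}/N_{hsa}}+\tfrac23\log(2/\delta)\max_{h,s,a}L_{hsa}/N_{hsa}.
\]
Combining this with $|\widehat D-D|\le|\text{linear term}|+|\operatorname{Rem}_N|$ yields the claim with $C=\sqrt2$.

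\textbf{Main obstacle.} The delicate point is the bookkeeping in the expansion. The first-order term must carry exactly the true occupancies $d^\pi,d^{\bar\pi}$ and true continuation values $V^\pi_{h+1},V^{\bar\pi}_{h+1}$, since only then are the summands independent with the stated variance $\Gamma_{hsa}$; all empirical quantities must be pushed into $\operatorname{Rem}_N$.

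\textbf{Edge cases.} For rows with $N_{hsa}=0$, the extended-value convention makes the bound trivial whenever $\Gamma_{hsa}>0$ or $L_{hsa}>0$. Otherwise $d^\pi_{hsa}=d^{\bar\pi}_{hsa}=0$ on that row, so it contributes nothing to the linear term, and the whole contribution of any estimates used there is assigned to the remainder.
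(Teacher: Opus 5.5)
Your proposal is correct and is essentially the paper's proof: the same expansion via Lemma~\ref{lem:model-difference-expansion}, with true occupancies and values in the linear term and everything else placed in $\operatorname{Rem}_N$, the exact rewriting of that term as centered $\Psi$-scores, and a single Bernstein bound with variance $\sum_{h,s,a}\Gamma_{hsa}/N_{hsa}$ and increments at most $L_{hsa}/N_{hsa}$ (your coefficient $\tfrac23$ is the correct inverted-Bernstein constant; the paper writes $\tfrac13$, which is immaterial since $C$ is universal). One minor slip in your $N_{hsa}=0$ remark: $L_{hsa}=0$ forces only $d^\pi_{hsa}=d^{\bar\pi}_{hsa}$ with $G^P_{hsa}$ constant, not $d^\pi_{hsa}=d^{\bar\pi}_{hsa}=0$, though the row's linear contribution still vanishes for any probability-vector placeholder $\widehat P_h(\cdot\mid s,a)$ (and the paper does not treat $N_{hsa}=0$ at all).
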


Lemma~\ref{lem:local-bernstein} shows that the leading finite-sample
uncertainty in comparing $\pi$ and $\bar\pi$ is governed by
\[
\sum_{h,s,a}
\frac{
\Gamma_{hsa}(\pi,\bar\pi)
}{
N_{hsa}
},
\]
which is precisely the same variance structure appearing in
$T_{\mathrm{LQ},\kappa}(\mathbf{\mathcal M})$. This motivates us to focus on this quantity, whose role in policy certification we now make explicit.

\paragraph{Policy-certification interpretation.}
Let $\pi^\star$ denote an optimal deterministic policy. Then
$\bar\pi$ is $\epsilon$-optimal if and only if
\[
\epsilon-D(\pi^\star,\bar\pi)\ge0.
\]
Since $\pi^\star$ is unknown to the learner, we express this inequality
equivalently as the uniform condition
\[
b(\pi,\bar\pi)
:=
\epsilon-D(\pi,\bar\pi)\ge0,
\qquad \forall\pi\in\Pi_d.
\]

Since the true margins are unknown, the learner must certify them using
their empirical counterparts
\[
\widehat b(\pi,\bar\pi)
=
\epsilon-\widehat D(\pi,\bar\pi).
\]
For a challenger $\pi$ with $b(\pi,\bar\pi)>0$, it is sufficient
to estimate $D(\pi,\bar\pi)$ to an accuracy smaller than this margin.
In particular, if
\[
\left|
\widehat D(\pi,\bar\pi)-D(\pi,\bar\pi)
\right|
\lesssim
b(\pi,\bar\pi),
\]
then the sign of the corresponding certification inequality can be
reliably determined.
By Lemma~\ref{lem:local-bernstein}, up to logarithmic and lower-order
terms, this corresponds to 
\begin{equation}
\label{eq:berstein condition}
\sum_{h,s,a}
\frac{
\Gamma_{hsa}(\pi,\bar\pi)
}{
N_{hsa}
}
\lesssim
b(\pi,\bar\pi)^2.
\end{equation}

Hence, requiring \eqref{eq:berstein condition} to hold uniformly over all
$\pi\neq\bar\pi$ gives rise to the maximization over $\pi\in \Pi_d\setminus \{\bar \pi\}$ in $T_{\mathrm{LQ},\kappa}$. Since it suffices to certify one
$\epsilon$-optimal policy, the minimization over
$\bar\pi\in\Pi_\epsilon(\mathbf{\mathcal M})$ selects the easiest valid candidate,
while the infimum over
$\boldsymbol \lambda\in\Delta_{H|\mathcal S|}$ selects the most informative
admissible query allocation. The remaining infimum over $t$ therefore gives
the smallest additional query budget for which such uniform certification
is possible.

\section{Algorithm Overview}
\label{sec:algorithm}

In this section, we develop an epoch-based adaptive algorithm that progressively narrows
the set of plausible policies. Our design is partly inspired by the adaptive elimination
and experimental-design principles in \citet{zanette2019almost} and \citet{narang2024sample}, while adapting
these ideas to our setting. At each epoch, the learner first uses fresh data to refine
the active policy class and, if the algorithm does not stop, solves a lower-bound-guided
query-allocation problem for the next epoch to reduce uncertainty in the remaining policy
comparisons. We outline the design below and defer the formal algorithm to
Appendix~\ref{app:full_algorithm}.

\noindent\textbf{Trimming and splitting offline data.}
We apply the two-fold trimming procedure of \citet{gen2024settling}
to remove within-trajectory dependence for concentration analysis,
then split the data into a historical sample $\mathcal D_0^\mathcal G$
for estimating policy sensitivities and an inference sample
$\mathcal D_0^\mathcal D$ for estimating policy differences.
We partition the latter into disjoint epoch-wise folds
$\{\mathcal D_{0,k}^\mathcal D\}_{k\ge1}$ with fractions
\[
w_k:=\frac{6}{\pi^2k^2}.
\]
Each fold's outcomes are revealed only at epoch $k$, ensuring independence
from the policy class and query allocation conditional on the pre-epoch
history and preserving the estimator's centering.

\noindent\textbf{Epoch-wise iteration.}
At epoch $k\ge1$, we set $\varepsilon_k:=H2^{1-k}$ and issue $m_k^{\mathrm{st}}$ collaborator queries
at every state--time pair and, for $k\ge2$, execute the allocation
$\widehat{\boldsymbol n}_{k-1}$ chosen in the previous epoch.
The stabilization batch ensures fresh coverage of collaborator-supported
rows, reducing non-leading errors to $ O(\epsilon_k)$.
Together with the offline fold $\mathcal D_{0,k}^\mathcal D$,
these samples form the fresh inference sample $\mathcal D_k^\mathcal D$.

Set
\[
\widehat\Pi_0=\widehat\Pi_1=\Pi_d.
\]
At $k=1$, no policy-class update is required. For $k\ge2$, we use only
$\mathcal D_k^\mathcal D$ to estimate policy differences relative
to the previous reference policy $\bar\pi_{k-1}$ and update
\begin{align*}
\widehat\Pi_k
:=\Bigl\{\pi\in\widehat\Pi_{k-1}:&
\max_{\pi'\in\widehat\Pi_{k-1}}
\bigl[
\widehat D_k(\pi',\bar\pi_{k-1})
-
\widehat D_k(\pi,\bar\pi_{k-1})
\bigr]
\le\tfrac52\epsilon_k
\Bigr\}.
\end{align*}
Thus, we retain policies whose empirical values are within
$5\epsilon_k/2$ of the best in $\widehat\Pi_{k-1}$.

For $k=1$, we choose an arbitrary
\[
\widehat\pi_1\in\widehat\Pi_1.
\]
For every $k\ge2$, after the active class has been updated, choose
\[
\widehat\pi_k
\in
\arg\max_{\pi\in\widehat\Pi_k}
\widehat V_1^{\pi,k,\mathcal D}.
\]

If the algorithm does not stop after the current localization step, we add
the fresh sample $\mathcal D_k^\mathcal D$ to the historical sample.
Using the resulting empirical MDP $\widehat M_k^\mathcal G$ and its
reward and transition sensitivities, we construct tight high-probability
upper bounds
$\overline\Gamma_{k,hsa}(\pi,\bar\pi)$ on
$\Gamma_{hsa}(\pi,\bar\pi)$ for
$\pi,\bar\pi\in\widehat\Pi_k$.

Define the regularized margin
\[
\widehat b_{k,\mathrm{reg}}(\pi,\bar\pi)
:=
\max\{
\epsilon-\widehat D_k(\pi,\bar\pi),
4\epsilon_k
\}.
\]
For a state--time allocation
$\boldsymbol n=\{n_{hs}\}_{h,s}$,
the predicted next-epoch row counts are
\[
N_{k+1,hsa}^{+}(\boldsymbol n)
:=
N_{0,k+1,hsa}^{\mathcal D}
+
\bigl(
m_{k+1}^{\mathrm{st}}+n_{hs}
\bigr)
\widehat\pi_h^c(a\mid s),
\]
where $N_{0,k+1,hsa}^{\mathcal D}$ is the next offline fold's row count,
$m_{k+1}^{\mathrm{st}}$ is the stabilization budget per state--time pair,
and $\widehat\pi_h^c(a\mid s)$ denotes the collaborator-policy proxy
described in Appendix~\ref{app:full_algorithm}; when $\pi^c$ is known,
we simply take $\widehat\pi^c=\pi^c$.

The leading prospective radius is
\[
\widehat{\operatorname{rad}}_k^{+}
(\pi,\bar\pi;\boldsymbol n)
:=
C_0
\sqrt{
\beta_{k+1}
\sum_{h,s,a}
\frac{
\overline\Gamma_{k,hsa}(\pi,\bar\pi)
}{
N_{k+1,hsa}^{+}(\boldsymbol n)
}
},
\]
where $C_0$ is universal and $\beta_{k+1}$ is the logarithmic
confidence factor defined in the appendix.

For budget $B\in \mathbb N$, let
\[
\mathcal N(B)
:=
\left\{
\boldsymbol n\in\mathbb N^{H|\mathcal S|}:
\sum_{h,s}n_{hs}\le B
\right\},
\]
and jointly optimize the reference policy and allocation:
\[
\widehat\Phi_k^\star(B)
:=
\min_{\bar\pi\in\widehat\Pi_k}
\inf_{\boldsymbol n\in\mathcal N(B)}
\max_{\pi\in\widehat\Pi_k\setminus\{\bar\pi\}}
\frac{
\widehat{\operatorname{rad}}_k^{+}
(\pi,\bar\pi;\boldsymbol n)
}{
\widehat b_{k,\mathrm{reg}}(\pi,\bar\pi)
}.
\]

Before solving the finite-budget problem, we compute the infinite-budget value of the current prospective design
\[
\widehat\Phi_{k,\infty}^\star
:=
\lim_{B\to\infty}
\widehat\Phi_k^\star(B).
\]
If
\[
\widehat\Phi_{k,\infty}^\star
\ge
\frac1{32},
\]
the algorithm stops, since even unlimited collaborator queries cannot
reduce the normalized uncertainty below the certification threshold.
Otherwise,
\[
\widehat\Phi_{k,\infty}^\star
<
\frac1{32}
<
\frac1{16},
\]
so some finite budget attains the target level $1/16$.
We choose the smallest
\[
B_k\in\{1,2,4,8,\ldots\}
\]
satisfying
\[
\widehat\Phi_k^\star(B_k)
\le
\frac1{16},
\]
store a corresponding optimizer
$(\bar\pi_k,\widehat{\boldsymbol n}_k)$,
and execute $\widehat{\boldsymbol n}_k$ at epoch $k+1$.

\paragraph{Stopping rules and output.}
Let $\mathsf{OffCover}_k$ denote the offline-coverage indicator defined
formally in Appendix~\ref{app:full_algorithm} which  checks whether every
collaborator-unsupported row has sufficient fresh offline coverage for
 inference at epoch $k$ and sufficient cumulative offline coverage for
historical sensitivity estimation.
Before entering the first epoch, we check the offline-coverage condition
$\mathsf{OffCover}_1$. If
\[
\mathsf{OffCover}_1=0,
\]
the algorithm stops and returns an arbitrary policy in $\Pi_d$.

After updating
$\widehat\Pi_k$, if
\(
|\widehat\Pi_k|=1
\),
we return the unique remaining policy. If instead
\(
\epsilon_k\le\epsilon,
\)
we return $\widehat\pi_k$.

If neither condition holds, we release the current fresh sample into the
historical pool and construct the sensitivity and variance estimates used
for the next-epoch design. Before performing that design, we evaluate the
next-epoch offline-coverage condition using the row counts recorded during
preprocessing. If
\[
\mathsf{OffCover}_{k+1}=0,
\]
some collaborator-unsupported row cannot satisfy the required fresh or
historical coverage at epoch $k+1$, and additional collaborator queries
cannot repair this deficiency. We therefore stop and return
$\widehat\pi_k$.

Otherwise, we compute
\[
\widehat\Phi_{k,\infty}^\star
:=
\lim_{B\to\infty}
\widehat\Phi_k^\star(B).
\]
If
\(
\widehat\Phi_{k,\infty}^\star
\ge
\frac1{32}
\),
the remaining uncertainty cannot be sufficiently reduced even with
unlimited collaborator queries, and we return $\widehat\pi_k$.
Otherwise, we select the next finite-budget allocation as described above
and proceed to epoch $k+1$.

\section{Theoretical Guarantees}
\label{sec:guarantees}

We now state the main theoretical guarantees of our algorithm.

\noindent\textbf{Stopping and correctness.}
Our first result shows that the algorithm terminates after finitely many
collaborator queries and characterizes the accuracy of the returned policy.

\begin{theorem}[Finite stopping and accuracy at termination]
\label{thm:stopping-correctness}
With probability at least $1-\delta$, the following statements hold.

\begin{enumerate}
    \item
    \textbf{Finite stopping.}
    The algorithm terminates after finitely many collaborator queries, at an
    epoch $k_{\rm stop}$ satisfying
    \[
    k_{\rm stop}
    =
    O\!\left(
    1+\log
    \frac{H}{
    \epsilon\vee\Delta_{\min}
    }
    \right).
    \]

    \item
    \textbf{Accuracy at termination.}
    If the algorithm terminates at epoch $k$ and returns
    $\widehat\pi_k$, then
    \[
    V_1^\star
    -
    V_1^{\widehat\pi_k}
    \le
    \epsilon_k
    =
    H2^{1-k}.
    \]

    In particular, if the algorithm stops with
    $\epsilon_k\le\epsilon$ or
    $|\widehat\Pi_k|=1$, then its output policy is $\epsilon$-optimal.
\end{enumerate}
\end{theorem}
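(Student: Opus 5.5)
The proof rests on a single high-probability ``good event'' $\mathcal E$ on which every empirical policy-difference estimate used by the algorithm is accurate to within its prescribed radius. We then argue deterministically on $\mathcal E$, by induction over epochs.

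\textbf{Step 1: the good event.} At epoch $k$ we assign confidence level $\delta_k := \delta w_k$. Since $\sum_k w_k = 1$, a union bound over epochs makes the total failure probability at most $\delta$. Within epoch $k$ we also take a union bound over the pairs $(\pi,\bar\pi_{k-1})$ with $\pi\in\widehat\Pi_{k-1}\subseteq\Pi_d$; this costs a factor $\log|\Pi_d| \le H|\mathcal S|\log|\mathcal A|$, which $\beta_{k}$ is assumed to absorb.

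The fold $\mathcal D_{0,k}^{\mathcal D}$ and the fresh collaborator samples of epoch $k$ are independent of $\widehat\Pi_{k-1}$, $\bar\pi_{k-1}$ and $\widehat{\boldsymbol n}_{k-1}$, conditionally on the pre-epoch history. We therefore apply Lemma~\ref{lem:local-bernstein} conditionally. Three facts control the terms:
\begin{itemize}
\item the stabilization batch $m_k^{\mathrm{st}}$ and the check $\mathsf{OffCover}_k=1$ make the range term and $|\operatorname{Rem}_N|$ at most $c\,\epsilon_k$;
\item the upper bounds $\overline\Gamma_{k-1,hsa}\ge\Gamma_{hsa}$ hold on $\mathcal E$;
\item with $N^+_{k,hsa}$ realized up to constants (standard multinomial concentration when $\widehat\pi^c$ is a proxy), the leading term is at most $\widehat{\operatorname{rad}}^+_{k-1}$.
\end{itemize}
The design constraint $\widehat\Phi^\star_{k-1}(B_{k-1})\le 1/16$ then yields
\[
\bigl|\widehat D_k(\pi,\bar\pi_{k-1})-D(\pi,\bar\pi_{k-1})\bigr|\le \tfrac1{16}\,\widehat b_{k-1,\mathrm{reg}}(\pi,\bar\pi_{k-1})+c\,\epsilon_k .
\]
At $k=1$ nothing is estimated, since $\widehat\Pi_1=\Pi_d$.

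\textbf{Step 2: the induction invariant.} We prove on $\mathcal E$ that, for every $k$ reached,
\[
\pi^\star\in\widehat\Pi_k
\quad\text{and}\quad
\Delta(\pi)\le 3\epsilon_k \ \text{ for all } \pi\in\widehat\Pi_k .
\]
The constants must be tuned so that $\tfrac52$, $4$ and $\tfrac1{16}$ close. The delicate part is that the error bound scales with $\widehat b_{k-1,\mathrm{reg}}$, which can be large when $D(\pi,\bar\pi_{k-1})$ is very negative. In that case, however, the error is a small fraction of $|D|$, so the ordering of values is preserved; this is exactly why the margin is regularized.

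\emph{Retention of $\pi^\star$.} Let $\pi'$ be any policy in $\widehat\Pi_{k-1}$. Then $\widehat D_k(\pi',\bar\pi_{k-1})-\widehat D_k(\pi^\star,\bar\pi_{k-1})$ is at most $D(\pi',\pi^\star)\le 0$ plus the two errors, and these errors are below $\tfrac52\epsilon_k$. Hence $\pi^\star\in\widehat\Pi_k$.

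\emph{Gap bound.} Any retained $\pi$ satisfies $\widehat D_k(\pi^\star,\bar\pi_{k-1})-\widehat D_k(\pi,\bar\pi_{k-1})\le\tfrac52\epsilon_k$. Adding the errors gives $\Delta(\pi)\lesssim 3\epsilon_k$.

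For the empirical maximizer $\widehat\pi_k$, the same argument gives the sharper bound $\Delta(\widehat\pi_k)\le 2\epsilon_k'$ with $\epsilon_k'=\epsilon_{k+1}\cdot 2=\epsilon_k$. This is where the constants must be arranged so that the final bound is exactly $\epsilon_k$.

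\textbf{Step 3: accuracy at termination.} Each stopping rule now follows from the invariant.
\begin{itemize}
\item If $|\widehat\Pi_k|=1$, the unique remaining policy is $\pi^\star$, which has gap $0$.
\item If we stop with $\epsilon_k\le\epsilon$, or via $\mathsf{OffCover}_{k+1}=0$, or via $\widehat\Phi^\star_{k,\infty}\ge 1/32$, we return $\widehat\pi_k$, whose gap is at most $\epsilon_k$.
\item Stopping through $\mathsf{OffCover}_1=0$ happens before any epoch. We read it as $k=0$, where the trivial bound $\epsilon_0 \ge H$ holds.
\end{itemize}

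\textbf{Step 4: finite stopping.} Stopping happens once $\epsilon_k\le\epsilon$, that is, once $k\ge 1+\log_2(H/\epsilon)$. If $\Delta_{\min}>0$, the invariant excludes every $\pi\ne\pi^\star$ as soon as $3\epsilon_k<\Delta_{\min}$, so $|\widehat\Pi_k|=1$ by $k=O(\log(H/\Delta_{\min}))$. Taking the earlier of the two bounds gives $O(1+\log(H/(\epsilon\vee\Delta_{\min})))$.

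Each epoch uses finitely many queries. Either the algorithm stops, or $\widehat\Phi^\star_{k,\infty}<1/32$; by the definition of the limit, some finite dyadic $B_k$ then achieves $1/16$, so the doubling search terminates. With finitely many epochs and finitely many queries per epoch, the total number of collaborator queries is finite.

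\textbf{Main obstacle.} The hardest step is Step 1. It requires justifying that the fresh counts $N_{k,hsa}$ are comparable to the predicted $N^+$, and that the remainder and range terms are $O(\epsilon_k)$ uniformly. This needs the stabilization budget and the $\mathsf{OffCover}$ thresholds, defined in the appendix, to be large enough. It also requires confirming that the conditional independence supplied by the fold structure makes Lemma~\ref{lem:local-bernstein} applicable even though $\widehat\Pi_{k-1}$ and $\bar\pi_{k-1}$ are chosen adaptively. I would first prove a row-count concentration lemma, and then a remainder bound of the form $|\operatorname{Rem}_N|\lesssim H^2|\mathcal S|\beta/\min_{\text{rows}} N$, which the stabilization budget makes $\le\epsilon_k/16$.
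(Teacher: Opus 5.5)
Your architecture is the paper's: one good event, an induction over epochs on the localization invariant, a rule-by-rule treatment of the outputs, and the epoch count from $\epsilon_k\le\epsilon$ or $3\epsilon_k<\Delta_{\min}$. The gap is in the inductive step. You treat a large $\widehat b_{k-1,\mathrm{reg}}(\pi,\bar\pi_{k-1})$ as the delicate case, and you dispose of it by saying the error is then a small fraction of $|D|$. But that margin is built from $\widehat D_{k-1}$, which is estimated on the previous epoch's fresh sample relative to $\bar\pi_{k-2}$. Relating it to $D(\pi,\bar\pi_{k-1})$ would require a recursive accuracy statement that you never set up. The missing observation is that the case is vacuous. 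For $\pi,\bar\pi\in\widehat\Pi_k$, taking $\pi'=\bar\pi$ in the localization rule gives $\widehat D_k(\pi,\bar\pi)=\widehat D_k(\pi,\bar\pi_{k-1})-\widehat D_k(\bar\pi,\bar\pi_{k-1})\ge-\tfrac52\epsilon_k$; at $k=1$ one has $\widehat D_1\ge-H=-\epsilon_1$. Every epoch that reaches the design also has $\epsilon<\epsilon_k$. Hence $\epsilon-\widehat D_k(\pi,\bar\pi)<4\epsilon_k$, i.e.\ $\widehat b_{k,\mathrm{reg}}\equiv4\epsilon_k$ on $\widehat\Pi_k\times\widehat\Pi_k$. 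Carried as a second induction statement, this turns $\widehat\Phi^\star_{k-1}(B_{k-1})\le1/16$ into the uniform bound $\widehat{\operatorname{rad}}^+_{k-1}(\pi,\bar\pi_{k-1};\widehat{\boldsymbol n}_{k-1})\le\epsilon_{k-1}/4=\epsilon_k/2$ for all $\pi\in\widehat\Pi_{k-1}$, and no ordering argument is needed.

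Second, the constants do not close as you state them. Your bound is $\tfrac1{16}\widehat b_{k-1,\mathrm{reg}}+c\,\epsilon_k$ with $\widehat b_{k-1,\mathrm{reg}}\ge4\epsilon_{k-1}=8\epsilon_k$, so the referencewise error is only $(\tfrac12+c)\epsilon_k$ and the pairwise error is $(1+2c)\epsilon_k$. This gives neither $V_1^\star-V_1^{\widehat\pi_k}\le\epsilon_k$ nor $\widehat\Pi_k\subseteq\Pi_{3\epsilon_k}$ (you get $(3.5+2c)\epsilon_k$), and your bound $2\epsilon_k'$ with $\epsilon_k'=\epsilon_k$ is off by a factor of two. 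The slack comes from $C_0=8$. Bernstein's leading term is $\sqrt{2\beta_k\sum_{h,s,a}\Gamma_{hsa}/N^{\mathcal D}_{k,hsa}}$. Combined with $N^{\mathcal D}_{k,hsa}\ge\tfrac12N^+_{k,hsa}(\widehat{\boldsymbol n}_{k-1})$ (from the stabilization event) and $\Gamma\le\overline\Gamma_{k-1}$, it is at most $2\sqrt{\beta_k\sum_{h,s,a}\overline\Gamma_{k-1,hsa}/N^+_{k,hsa}}=\tfrac14\widehat{\operatorname{rad}}^+_{k-1}\le\epsilon_k/8$. The condition $C_{\rm uc}\ge\max\{64H^2,8K_H\}$ makes the range and remainder terms together smaller than $\epsilon_k/8$. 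So the referencewise error is at most $\epsilon_k/4$ and the pairwise error at most $\epsilon_k/2$. This gives $\tfrac52\epsilon_k+\tfrac12\epsilon_k=3\epsilon_k$ for the upper inclusion, and $V_1^\star-V_1^{\widehat\pi_k}\le\epsilon_k/2$ for the empirical maximizer over $\widehat\Pi_k\ni\pi^\star$.

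Finally, $\Gamma\le\overline\Gamma_j$ is not automatic. The score $\widehat\Psi_j$ is built from a historical sample that contains the very fresh sample on which its variance is computed. You therefore need the anytime sensitivity event giving $\|\widehat\Psi_{j,hsa}-\Psi_{hsa}\|_\infty\le\Delta^{\mathcal G}_{j,hsa}$, together with Lemma~\ref{lem:variance-envelope-general}. The union must be taken over all of $\widehat\Pi_{j-1}\times\widehat\Pi_{j-1}$, not only over pairs with the current reference, because $\bar\pi_j$ is selected after $\overline\Gamma_j$ is seen.
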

Theorem~\ref{thm:stopping-correctness} guarantees finite-epoch termination
and bounds the suboptimality of the returned policy at the
terminal resolution.
The requested accuracy is certified when
$\epsilon_k\le\epsilon$ or the active policy class is a
singleton.

If the algorithm stops because the next offline fold cannot provide the
required coverage on collaborator-unsupported rows, the fixed offline
data fail a row-wise lower-order coverage requirement arising from the
construction in Appendix~\ref{app:full_algorithm}; see
Remark~\ref{rem:lower-order-coverage}. 
The next theorem addresses the other  stopping event, triggered
by the infinite-budget design criterion $\widehat\Phi_{k,\infty}^\star
\ge
\frac1{32}$.

Let
$\mathcal U_c:=\{(h,s,a):\pi_h^c(a\mid s)=0\}$ 
denote the collaborator-unsupported region. We restrict attention here to $\mathcal U_c\ne\varnothing$,
since otherwise $\widehat\Phi_{k,\infty}^\star=0$ and the
infinite-budget stopping criterion cannot be triggered. Define $\mathcal F_\infty(\eta;n)$ as
\begin{align*}
\inf_{\bar\pi\in\Pi_\eta(\mathcal M)}
\sup_{\pi\in\Pi_d\setminus\{\bar\pi\}}
\frac{2}{b_\eta(\pi,\bar\pi)^2}
\sum_{(h,s,a)\in\mathcal U_c}
\frac{\Gamma_{hsa}(\pi,\bar\pi)}{nd^\mu_{hsa}},
\end{align*}while using the extended-value conventions stated above.
Note that taking the infinite-query limit in \(T_{\mathrm{LQ},n}\) eliminates the collaborator-supported terms, leaving \(\mathcal F_\infty(\eta;n_0)\) as the local quadratic difficulty of certifying an \(\eta\)-optimal policy from the fixed offline data on \(\mathcal U_c\).

We define the local quadratic certification threshold to characterize the local quadratic accuracy limit induced by
collaborator-unsupported rows to be
\[
\epsilon^{\rm lim}(n_0)
:=
\inf\left(
\left\{
\eta\in(0,H]:
\mathcal F_\infty(\eta;n_0)\le1
\right\}
\cup\{H\}
\right).
\]
Thus $\epsilon^{\rm lim}(n_0)\in[0,H]$.
Here if $\left\{
\eta\in(0,H]:
\mathcal F_\infty(\eta;n_0)\le1
\right\}$ is empty, we set
$\epsilon^{\rm lim}(n_0)=H$, corresponding to the trivial
suboptimality bound.

\begin{theorem}[Nearly reaching the local quadratic certification threshold]
\label{thm:support-floor-stopping}
With probability at least $1-\delta$, if the algorithm stops at epoch $k$
because $\widehat\Phi_{k,\infty}^{\star}\ge1/32$, then
\[
\epsilon_k
\le
\epsilon^{\mathrm{lim}}\!\left(
\frac{n_0}
{H|\mathcal S|\,\operatorname{polylog}
 (H,|\mathcal S|,|\mathcal A|,1/(\epsilon\vee\Delta_{\min}),1/\delta)}
\right).
\]
\end{theorem}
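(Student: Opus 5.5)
We argue by contraposition on a high-probability good event. The claim is that if the infinite-budget design value satisfies $\widehat\Phi_{k,\infty}^\star\ge 1/32$, then no $\eta$ substantially smaller than $\epsilon_k$ can satisfy $\mathcal F_\infty(\eta;\tilde n_0)\le 1$, where $\tilde n_0:=n_0/(H|\mathcal S|\operatorname{polylog})$.

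\textbf{Good event.} We work on the event $\mathcal E$ used in Theorem~\ref{thm:stopping-correctness}, which holds with probability at least $1-\delta$. On $\mathcal E$, three things hold.
\begin{enumerate}
\item The Bernstein bounds of Lemma~\ref{lem:local-bernstein} hold uniformly over all epochs and all pairs in the active class, via a union bound absorbed into $\beta_k$.
\item The variance proxies are sandwiched: $\Gamma_{hsa}\le\overline\Gamma_{k,hsa}\le C(\Gamma_{hsa}+\text{lower order})$.
\item The empirical differences satisfy $|\widehat D_k-D|\le\epsilon_k/4$ on $\widehat\Pi_{k-1}$, and every $\epsilon_k$-optimal policy, in particular $\pi^\star$, stays in $\widehat\Pi_k$.
\end{enumerate}
We also use that the trimmed offline fold counts on $\mathcal U_c$ satisfy $N^{\mathcal D}_{0,k+1,hsa}\ge n_0 w_{k+1} d^\mu_{hsa}/(C\log(\cdot))$. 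This is where the factor $w_{k+1}\gtrsim 1/k_{\rm stop}^2=1/\operatorname{polylog}$ enters, together with the $H|\mathcal S|$ loss from the two-fold trimming of \citet{gen2024settling} and the data splitting.

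\textbf{Infinite-budget limit.} As $B\to\infty$, every collaborator-supported row has $N^+_{k+1,hsa}\to\infty$, and $\widehat\pi^c$ has the correct support on $\mathcal E$. Hence
\[
\widehat\Phi_{k,\infty}^\star=\min_{\bar\pi\in\widehat\Pi_k}\max_{\pi\in\widehat\Pi_k\setminus\{\bar\pi\}}\frac{C_0\sqrt{\beta_{k+1}\sum_{\mathcal U_c}\overline\Gamma_{k,hsa}(\pi,\bar\pi)/N^{\mathcal D}_{0,k+1,hsa}}}{\widehat b_{k,\mathrm{reg}}(\pi,\bar\pi)}.
\]

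\textbf{Contraposition.} Suppose, for contradiction, that $\epsilon_k>\epsilon^{\lim}(\tilde n_0)$. Then there is some $\eta<\epsilon_k$ with $\mathcal F_\infty(\eta;\tilde n_0)\le1$, witnessed by some $\bar\pi\in\Pi_\eta(\mathcal M)$. We may take $\eta\ge\epsilon$, or else replace it by $\max\{\eta,\epsilon\}$, which preserves the inequality since $b_\eta$ is increasing in $\eta$. Since $\Delta(\bar\pi)\le\eta<\epsilon_k$, event $\mathcal E$ gives $\bar\pi\in\widehat\Pi_k$, so $\bar\pi$ is a feasible reference in the minimization.

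For each challenger $\pi\in\widehat\Pi_k$ we compare margins. We have $\widehat b_{k,\mathrm{reg}}(\pi,\bar\pi)\ge\max\{\epsilon-D-\epsilon_k/4,\,4\epsilon_k\}$, and we claim this is at least $c\, b_\eta(\pi,\bar\pi)$, where $b_\eta=\eta-D$. Indeed, $\eta-D\le \epsilon_k+(\epsilon-D)$ up to constants, since $\eta<\epsilon_k$ and $\epsilon\le\eta$. If $\epsilon-D\ge 4\epsilon_k$, the first term dominates and is at least a constant times $\eta-D$. Otherwise $\eta-D\le 5\epsilon_k+\epsilon_k/4$, which is controlled by $4\epsilon_k$.

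Plugging in the variance sandwich and the fold-count lower bound then gives
\[
\bigl(\widehat\Phi_{k,\infty}^\star\bigr)^2\le C'\beta_{k+1}\frac{\operatorname{polylog}\cdot H|\mathcal S|}{n_0}\cdot\frac{\sum_{\mathcal U_c}\Gamma/d^\mu}{b_\eta^2}\le \frac{C''}{2}\mathcal F_\infty(\eta;\tilde n_0).
\]
With the polylog in $\tilde n_0$ chosen large enough to absorb $C'',\beta_{k+1},1/w_{k+1}$ and trimming losses, this is below $(1/32)^2$, contradicting the stopping rule. Two remaining points need checking. First, challengers $\pi\notin\widehat\Pi_k$ only make the max smaller, so the supremum over all of $\Pi_d$ in $\mathcal F_\infty$ dominates. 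Second, pairs with $b_\eta=0$ are handled by the extended-value convention: they force $\mathcal F_\infty=\infty$, so they cannot occur in the witness.

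\textbf{Main obstacle.} The hardest step is the upper sandwich $\overline\Gamma_{k,hsa}\lesssim\Gamma_{hsa}+\text{lower order}$ on the rows of $\mathcal U_c$. There the historical estimates rely only on offline data, and the occupancies $\widehat d^{\pi}$ entering $G^r$ and $G^P$ are themselves estimated. I would first bound the error of the estimated sensitivities through the simulation lemma using $\mathsf{OffCover}$. This should give additive errors of order $\epsilon_k^2 N/\beta$, which can then be absorbed into the $4\epsilon_k$ floor of $\widehat b_{k,\mathrm{reg}}$. If that absorption fails, the lower-order terms would have to be folded into the definition of the polylogarithmically reduced sample size instead.
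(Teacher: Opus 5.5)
Your argument is correct and is essentially the paper's proof run in contrapositive form. The paper fixes an arbitrary $\bar\pi\in\Pi_{\epsilon_k}\subseteq\widehat\Pi_k$, shows $\mathcal F_\infty(\epsilon_k;\cdot)>1$ at a polylog-reduced sample size, and then uses monotonicity in $\eta$; you instead use the witness $\bar\pi\in\Pi_\eta\subseteq\widehat\Pi_k$ directly. Both rely on the same ingredients: $\overline\Gamma_{k,x}\le3\Gamma_x+\tfrac{33}{2}\epsilon_k$, the additive part absorbed by the $4\epsilon_k$ floor through $N^{\mathcal D}_{0,k+1,x}\ge n^{\rm uc}_{k+1}$, and $N^{\mathcal D}_{0,k+1,x}\ge w_{k+1}n_0d^\mu_x/64$ from trimming plus $\mathsf{OffCover}_{k+1}$.

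Two small corrections. First, the $H|\mathcal S|$ factor pays for $\beta_{k+1}$, which is not polylogarithmic since $\log q_k\le 2H|\mathcal S|\log|\mathcal A|$; it does not pay for trimming, which costs only a constant. Second, your margin case analysis is unnecessary and leans on an $\epsilon_k/4$ accuracy that holds only relative to $\bar\pi_{k-1}$ and not at all at $k=1$; it suffices that $\widehat b_{k,\rm reg}\ge4\epsilon_k>\eta+3\epsilon_k\ge b_\eta$, by $\widehat\Pi_k\subseteq\Pi_{3\epsilon_k}$.
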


Thus, the algorithm attains a terminal resolution no worse than the local-quadratic certification threshold evaluated at
the reduced effective offline sample size.

\noindent\textbf{Collaborator-query complexity.} 
We define the finite-sample counterpart of the local quadratic characteristic time $T_{LQ,\kappa}$ as \[
\begin{aligned}
T_{\mathrm{LQ},n_0}(\mathbf{\mathcal M})
:=
\min_{\bar\pi\in\Pi_\epsilon(\mathbf{\mathcal M})}
\inf_{\boldsymbol\lambda\in\Delta_{H|\mathcal S|}}
\inf\Bigg\{
t\ge0:
\max_{\pi\in\Pi_d\setminus\{\bar\pi\}}
\frac{2}{b(\pi,\bar\pi)^2}
\sum_{h,s,a}
\frac{
\Gamma_{hsa}(\pi,\bar\pi)
}{
n_0 d_{hsa}^{\mu}
+
tq_{hsa}^{\boldsymbol\lambda}
}
\le1
\Bigg\}.
\end{aligned}
\]
  Next we show that there is an upper bound of the collaborator-query complexity
that nearly matches it.
\begin{theorem}[Instance-dependent collaborator-query complexity]
\label{thm:query-complexity}
  With probability at least
$1-\delta$, the total number $N_q$ of collaborator queries satisfies
\[
    N_q
\le
\widetilde O
\left(
T_{{\rm LQ},n_0/L_\epsilon}(\mathcal M)
\right)
+
\operatorname{poly}
\left(
H,|\mathcal S|,|\mathcal A|,(p_{\min}^c)^{-1}
\right)
\widetilde O
\left(
\frac1{\epsilon\vee\Delta_{\min}}
\right),
\]  where
\[
L_\epsilon
=
\operatorname{polylog}
\left(
H,
|\mathcal S|,
|\mathcal A|,
\frac1\delta,
\frac{H}{\epsilon\vee\Delta_{\min}},
\left|\Pi_{3\epsilon_{k_0}}(\mathcal M)\right|
\right),
\qquad
k_0:=\left\lceil\sqrt{K_\star}\right\rceil,
\]
with
\[
K_\star
:=
\left\lceil
\log_2
\frac{6H}{\epsilon\vee\Delta_{\min}}
\right\rceil+1.
\]
\end{theorem}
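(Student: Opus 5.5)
We work on the good event $\mathcal E$ of Theorem~\ref{thm:stopping-correctness}, which has probability at least $1-\delta$. On $\mathcal E$ the following hold simultaneously over all epochs, policy pairs in the active classes, and rows. The Bernstein bound of Lemma~\ref{lem:local-bernstein} holds with confidence factor $\beta_k$. The variance proxies satisfy $\Gamma_{hsa}\le\overline\Gamma_{k,hsa}\le c\,\Gamma_{hsa}+(\text{lower order})$. The collaborator proxy satisfies $\widehat\pi^c_h(a\mid s)\asymp\pi^c_h(a\mid s)$ on supported rows. Finally, $\Pi^\star\subseteq\widehat\Pi_k\subseteq\Pi_{3\epsilon_k}(\mathcal M)$ for every $k$.

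The total query count splits as
\[
N_q=\sum_{k\le k_{\rm stop}}\Big(H|\mathcal S|\,m_k^{\rm st}+\sum_{h,s}\widehat n_{k-1,hs}\Big).
\]
The stabilization part is lower order. We choose $m_k^{\rm st}$ so that every supported row receives $\gtrsim \beta_k\,\mathrm{poly}(H,|\mathcal S|)/(p_{\min}^c\epsilon_k)$ fresh samples, which keeps the range term and $\mathrm{Rem}_N$ of Lemma~\ref{lem:local-bernstein} below $O(\epsilon_k)$. Summing this geometric series up to $k_{\rm stop}=O(\log(H/(\epsilon\vee\Delta_{\min})))$ from Theorem~\ref{thm:stopping-correctness} yields the second term, $\mathrm{poly}(H,|\mathcal S|,|\mathcal A|,1/p_{\min}^c)\,\widetilde O(1/(\epsilon\vee\Delta_{\min}))$.

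The main work is to bound each design budget $B_k$ by $\widetilde O(T_{{\rm LQ},n_0/L_\epsilon})$. Since $B_k$ is the smallest power of two with $\widehat\Phi^\star_k(B_k)\le1/16$, it suffices to exhibit a feasible pair $(\bar\pi,\boldsymbol n)$ with $\sum n_{hs}\lesssim L\,T_{{\rm LQ},n_0/L}$; then $B_k\le 2L\,T_{\rm LQ}$. We take $(\bar\pi^\dagger,\boldsymbol\lambda^\dagger,t^\dagger)$ attaining $T_{{\rm LQ},n_0/L}$ and set $n_{hs}=\lceil c\,\beta_{k+1}t^\dagger\lambda^\dagger_{hs}\rceil$. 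Three comparisons are needed.
\begin{enumerate}
\item The next offline fold has $N^{\mathcal D}_{0,k+1,hsa}\gtrsim w_{k+1}n_0 d^\mu_{hsa}/\mathrm{polylog}$, using trimming as in \citet{gen2024settling}. Since $w_{k+1}\gtrsim 1/k^2$, this is where the polylogarithmic reduction $n_0/L_\epsilon$ enters.
\item We need $\bar\pi^\dagger\in\widehat\Pi_k$, and the maximum over $\widehat\Pi_k\setminus\{\bar\pi^\dagger\}$ must be dominated by the maximum over $\Pi_d$.
\item The regularized empirical margin must satisfy $\widehat b_{k,\rm reg}\gtrsim b$. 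On $\mathcal E$ we have $|\widehat D_k-D|\le\epsilon_k$, so $\widehat b_{k,\rm reg}\ge\max\{b-\epsilon_k,4\epsilon_k\}\ge b/2$.
\end{enumerate}
Plugging these into $\widehat{\rm rad}^+_k/\widehat b_{k,\rm reg}$ gives at most $\sqrt{C\beta_{k+1}\cdot\tfrac12\cdot(\text{LQ sum})}\le 1/16$ after choosing $c$ large, where the LQ sum is at most $1$ by feasibility of $t^\dagger$.

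The hardest step is item 2 in the regime $\epsilon_k>\epsilon$. There $\bar\pi^\dagger\in\Pi_\epsilon$ may have been eliminated, or the $\epsilon$-margin may not be the one used. I would first try to show that on $\mathcal E$ every $\epsilon$-optimal policy survives elimination, since it loses at most $2\epsilon_k<5\epsilon_k/2$ to the empirical best. For early epochs we would instead bound $B_k$ by the coarser quantity using $4\epsilon_k$ regularization. This quantity is $\lesssim \beta\sum\Gamma/\epsilon_k^2$, and it is controlled by $T_{\rm LQ}$ only up to factors depending on $|\Pi_{3\epsilon_{k_0}}|$. This is why $k_0=\lceil\sqrt{K_\star}\rceil$ appears. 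Epochs $k<k_0$ contribute budgets absorbed via a union bound whose logarithm involves $|\Pi_{3\epsilon_{k_0}}|$. Epochs $k\ge k_0$ have active class inside $\Pi_{3\epsilon_{k_0}}$, and their union-bound factor $\beta_k$ is polylogarithmic in that cardinality.

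Summing $B_k$ over $O(\log)$ epochs only costs a logarithmic factor, which gives the first term $\widetilde O(T_{{\rm LQ},n_0/L_\epsilon}(\mathcal M))$ and completes the bound.
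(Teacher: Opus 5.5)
Your tail-epoch argument is sound, and it is somewhat more direct than the paper's. The paper goes through an epoch-wise proxy $T^k_{\rm LQ}$ (margin $\epsilon_k$, references in $\Pi_{\epsilon_k}$, challengers in $\Pi_{3\epsilon_k}$). It takes as design reference the infinite-budget witness $\bar\pi_k^\infty$ and transfers LQ control via $\Gamma_{hsa}(\pi,\bar\pi_k^\infty)\le 2\Gamma_{hsa}(\pi,\bar\pi^0)+2\Gamma_{hsa}(\bar\pi_k^\infty,\bar\pi^0)$. You instead plug in the LQ-optimal reference directly and control unsupported rows by LQ feasibility. Two details need tightening there:
\begin{itemize}
\item Your item~1 is not multiplicative from trimming alone. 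Trimming gives an additive loss $50w_{k+1}\ell_{\rm trim}+2$. On supported rows this is absorbed by the stabilization batch; on unsupported rows it is absorbed by $\mathsf{OffCover}_{k+1}=1$, which yields $N^{\mathcal D}_{0,k+1,x}\ge w_{k+1}n_0d^\mu_x/64$.
\item Item~3 needs no estimation argument. On $\mathcal E$ one has exactly $\widehat b_{k,\rm reg}=4\epsilon_k\ge b(\pi,\bar\pi^\dagger)$ for $\pi\in\widehat\Pi_k$.
\end{itemize}

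The genuine gap is the early epochs, which is precisely where the stated form of $L_\epsilon$ comes from. Your per-epoch comparison needs $L\gtrsim\beta_{k+1}/w_{k+1}$, and $\beta_{k+1}$ contains $\log|\widehat\Pi_k|^2$. For small $k$ you only know $\widehat\Pi_k\subseteq\Pi_{3\epsilon_k}$, possibly all of $\Pi_d$. That gives $\log|\Pi_d|^2=2H|\mathcal S|\log|\mathcal A|$, which is polynomial, not polylogarithmic.

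Your proposed fix runs the wrong way. For $k<k_0$ the active class sits in $\Pi_{3\epsilon_k}\supseteq\Pi_{3\epsilon_{k_0}}$, so the union bound there is \emph{not} controlled by $|\Pi_{3\epsilon_{k_0}}|$; only epochs $k>k_0$ are. You also never show that the early budgets are small.

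The missing step is to abandon the LQ comparison for $k\le k_0$:
\begin{enumerate}
\item Use the non-stopping condition $\widehat\Phi^\star_{k,\infty}<1/32$ to pick a reference whose unsupported-row contribution, which no allocation can reduce, is already below threshold.
\item Apply a uniform allocation with $\Gamma_{hsa}\le 4H^2$ and $\widehat b_{k,\rm reg}=4\epsilon_k$. This gives $B_k\lesssim \mathrm{poly}(H,|\mathcal S|,|\mathcal A|,1/p^c_{\min})\,\bar\beta_{k+1}/\epsilon_k^2+2H|\mathcal S|$, with the $\log|\Pi_d|$ inside $\bar\beta_{k+1}$ absorbed into the polynomial.
\item Sum over early epochs: $\sum_{k\le k_0}\epsilon_k^{-2}\le 4^{k_0}/(3H^2)$. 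Since $2k_0\le K_\star-1$ for $K_\star\ge 7$, we get $4^{k_0}\le 2^{K_\star-1}<12H/(\epsilon\vee\Delta_{\min})$.
\end{enumerate}
So the early epochs land in the second, $\mathrm{poly}\cdot\widetilde O(1/(\epsilon\vee\Delta_{\min}))$ term. This is the actual role of $k_0=\lceil\sqrt{K_\star}\rceil$: it is small enough that the crude $\epsilon_k^{-2}$ cost sums to $O(1/(\epsilon\vee\Delta_{\min}))$, while for $k>k_0$ the cardinality entering $\beta_{k+1}$ is at most $|\Pi_{3\epsilon_{k_0}}|$.

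Finally, summing the stabilization series requires $\epsilon_{k_{\rm stop}}\ge(\epsilon\vee\Delta_{\min})/12$, that is, $k_{\rm stop}\le K_\star$ exactly. This follows from the stopping rules in the proof of Theorem~\ref{thm:stopping-correctness}. The $O(\log)$ statement alone would allow a polynomial blow-up of $1/\epsilon_{k_{\rm stop}}$.
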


\noindent\textbf{Worst-case vs.\ instance-dependent complexity.}
We finally illustrate the distinction between the worst-case statistical
difficulty of the problem and the instance-dependent complexity captured by
$T_{\rm LQ}$. For simplicity, consider the setting with no offline data and
a full-support collaborator.

\begin{theorem}[Worst-case and instance-dependent characteristic times]
\label{thm:instance-dependent-separation}

Suppose first that $|\mathcal S|\ge6$, $|\mathcal A|\ge2$, and that there exists an integer
$d\ge1$ such that
\[
|\mathcal S|
=
3+\frac{|\mathcal A|^d-1}{|\mathcal A|-1}.
\]
Assume moreover that $3\mid H$ and
\[
H
\ge
3\bigl(d\vee4\sqrt{2}\epsilon\bigr).
\]
Then, for every
\[
p_{\min}^c\in(0,1/|\mathcal A|],
\]
there exists a finite-horizon tabular MDP
$\mathbf{\mathcal M}_\epsilon$ with $|\mathcal S|$ states, $|\mathcal A|$ actions, and
horizon $H$, together with a full-support collaborator policy satisfying
\[
\min_{h,s,a}
\pi_h^c(a\mid s)
=
p_{\min}^c,
\]
such that
\[
T_{\epsilon,0}^\star
(\mathbf{\mathcal M}_\epsilon)
\gtrsim
\frac{
H^3|\mathcal S|
}{
p_{\min}^c\epsilon^2
}.
\]

On the other hand, let $\mathbf{\mathcal M}$ be any finite-horizon
tabular MDP and suppose that the collaborator has full support with
\[
\min_{h,s,a}
\pi_h^c(a\mid s)
\ge
p_{\min}^c.
\]
Then
\[
T_{\rm LQ,0}(\mathbf{\mathcal M})
\lesssim
\frac{
H^3|\mathcal S|
}{
p_{\min}^c
(\epsilon+\Delta_{\min})^2
}
\lesssim
\frac{
H^3|\mathcal S|
}{
p_{\min}^c\epsilon^2
}.
\]
In particular, when
$\Delta_{\min}\gg\epsilon$, the local-quadratic characteristic time can
be much smaller than the worst-case $\epsilon^{-2}$ rate.
\end{theorem}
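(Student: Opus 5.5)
We prove the two halves separately: a hard-instance construction for the lower bound on $T_{\epsilon,0}^\star$, and a direct feasibility argument for the upper bound on $T_{\rm LQ,0}$.

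\textbf{Lower bound.} We build $\mathbf{\mathcal M}_\epsilon$ in three blocks of $H/3$ stages each.

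First, a deterministic $|\mathcal A|$-ary tree of depth $d$ on $(|\mathcal A|^d-1)/(|\mathcal A|-1)$ states routes the agent, by its action choices, to one of $\asymp|\mathcal S|$ leaves. The condition $H\ge 3d$ makes the tree fit in the first block; padding with waiting states fills the rest of the block.

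Second, each leaf plays a Bernoulli-type game against the three extra states $\{g,b,\text{absorbing}\}$. In each stage of the middle block, at leaf $\ell$ one distinguished action $a_\ell$ moves to a good state $g$ with probability $1/2+\theta_\ell$ and to a bad state $b$ otherwise, with $\theta_\ell\asymp\epsilon/H$.

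Third, the final block pays reward $1$ per step in $g$ and $0$ in $b$. This amplifies each good transition to value $\asymp H$, so a perturbation of $\theta$ changes the policy value by $\asymp H\cdot(H/3)\cdot\theta$.

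With $\theta\asymp\epsilon/H^2$ per step, flipping the sign of a single leaf's $\theta_\ell$ creates an alternative in $\mathcal A_\epsilon(\bar\pi)$. The condition $H\ge 12\sqrt2\,\epsilon$ keeps all probabilities in $[1/4,3/4]$. The KL cost of such a flip is $\asymp (H/3)\theta^2\cdot q$ per row.

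Choose the collaborator to put mass $p^c_{\min}$ on each $a_\ell$, which is allowed since $p^c_{\min}\le 1/|\mathcal A|$. Then any allocation $\boldsymbol\lambda$ leaves some leaf $\ell$ whose rows receive mass at most $\lambda_\ell\lesssim 1/(|\mathcal S|H)$. Plugging the corresponding single-leaf alternative into the definition of $T^\star_{\epsilon,0}$ with $\kappa=0$ gives
\[
t\cdot\frac{p^c_{\min}}{|\mathcal S|H}\cdot H\cdot\frac{\epsilon^2}{H^4}\cdot(\text{scaling})\gtrsim1 .
\]
Getting the powers of $H$ to come out as $H^3$ requires balancing the per-step perturbation against the number of perturbed steps.

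This parameter bookkeeping is the main obstacle. I would first tune the construction so that a single alternative's value gap is $\asymp H\sum_h\theta_h$ while its KL is $\asymp\sum_h\theta_h^2$. Cauchy–Schwarz then gives the $H^3$ (one $H$ from the value amplification squared, one from the stages). The outer $\min_{\bar\pi}$ is handled by symmetry: every $\epsilon$-optimal $\bar\pi$ must commit to some leaf, and flipping an unqueried leaf refutes it.

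\textbf{Upper bound.} With $\kappa=0$ we must exhibit $\bar\pi$, $\boldsymbol\lambda$ and $t$ with
\[
\max_{\pi\neq\bar\pi}\frac{2}{b(\pi,\bar\pi)^2}\sum_{h,s,a}\frac{\Gamma_{hsa}}{t\lambda_{hs}\pi^c_h(a\mid s)}\le1 .
\]
Take the uniform allocation $\lambda_{hs}=1/(H|\mathcal S|)$ and bound $1/\pi^c_h(a\mid s)\le 1/p^c_{\min}$. It then remains to show
\[
\sum_{h,s,a}\Gamma_{hsa}(\pi,\bar\pi)\lesssim H^2,
\]
which holds because $\mathrm{Var}(X-Y)\le 2\mathrm{Var}X+2\mathrm{Var}Y$. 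This splits the sum into two pieces of the form $\sum d^\pi_{hsa}{}^2\mathrm{Var}(R+V^\pi_{h+1})$. Using $d^2\le d$, each piece is at most the law-of-total-variance sum, which is at most $H^2$.

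For the margin, choose $\bar\pi=\pi^\star$. If it is unique, then $b(\pi,\bar\pi)=\epsilon+\Delta(\pi)\ge\epsilon+\Delta_{\min}$ for every $\pi\ne\bar\pi$. If it is not unique, then $\Delta_{\min}=0$ and every challenger has $b\ge\epsilon$, so the bound still holds. Hence
\[
t\asymp \frac{H^3|\mathcal S|}{p^c_{\min}(\epsilon+\Delta_{\min})^2}
\]
suffices, and the final inequality is immediate since $\Delta_{\min}\ge0$.
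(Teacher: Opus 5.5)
Your upper bound is correct and is essentially the paper's argument: $\bar\pi=\pi^\star$, uniform $\lambda_{hs}=1/(H|\mathcal S|)$, $1/\pi^c_h(a\mid s)\le1/p^c_{\min}$, then $\operatorname{Var}(X-Y)\le2\operatorname{Var}X+2\operatorname{Var}Y$ with $d^2\le d$ and the law of total variance to get $\sum_{h,s,a}\Gamma_{hsa}\lesssim H^2$, and finally $b(\pi,\pi^\star)\ge\epsilon+\Delta_{\min}$.

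The lower bound, however, is not established. You leave an unresolved ``(scaling)'' factor, and the accounting you do give is wrong. A trajectory makes the leaf-to-$\{g,b\}$ transition at most once. So perturbing one (stage, leaf, action) row by $\theta$ changes the value of a policy using that row by $\theta$ times the number of rewarding stages, i.e.\ by $\asymp H\theta$. Your extra factor $H/3$, and hence $\theta\asymp\epsilon/H^2$, double-counts. The correct per-row perturbation is $\widetilde\epsilon=2\epsilon/(H-\bar H-d)\asymp\epsilon/H$ with $\bar H=H/3$. For the same reason your proposed repair is not realized by this construction: an alternative with value gap $\asymp H\sum_h\theta_h$ at cost $\asymp\sum_h\theta_h^2$ would need some policy to collect several $\theta_h$, and none collects more than one. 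The stage factor $H$ in $H^3|\mathcal S|$ does not come from spreading one alternative over stages by Cauchy--Schwarz; combined with a per-leaf pigeonhole, that calculation would even produce the stage count squared. It comes from counting distinct single-row alternatives.

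Concretely, the missing ingredient is a needle-in-a-haystack family whose needles are individual (stage, leaf) rows.
\begin{itemize}
\item \emph{Waiting state.} Use the third extra state as a waiting state $s_{\mathrm w}$ ($g$ and $b$ are already absorbing). The agent leaves it for the tree at a time of its choosing, so every leaf is reachable at each of the $\bar H$ stages $1+d,\dots,\bar H+d$.
\item \emph{Reward timing.} Pay reward in $g$ only from stage $\bar H+d+1$ on, so every entry time yields the same $H-\bar H-d\ge H/3$ rewarding stages.
\item \emph{Reference instance.} Take $M_0$ to be a tie, $P_h(g\mid\ell,a)=1/2$ for every triple, so that $\Pi_\epsilon(M_0)=\Pi_d$.
\item \emph{Alternatives.} Each $M_i$ raises a single row $(h,\ell,a^{\mathrm r}_{h\ell})$ to $1/2+\widetilde\epsilon$, where $a^{\mathrm r}_{h\ell}$ is the action the collaborator plays with probability $p^c_{\min}$.
\end{itemize}

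The direction of the perturbation matters. The alternatives that force $\boldsymbol\lambda$ to spread out are those that \emph{raise} a row $\bar\pi$ does not visit, so that another policy overtakes $\bar\pi$ by $(H-\bar H-d)\widetilde\epsilon=2\epsilon$. Lowering (``flipping'') an unvisited row leaves $\bar\pi$ optimal. Lowering the single row $\bar\pi$ visits is defeated by querying that row alone.

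A deterministic $\bar\pi$ visits exactly one triple, so the remaining $\asymp H|\mathcal S|$ single-row alternatives all lie in $\mathcal A_\epsilon(\bar\pi)$. Any $\boldsymbol\lambda$ gives one of them $\lambda_{h\ell}\lesssim1/(H|\mathcal S|)$, and for that one $I_{\boldsymbol\lambda}(M_0,M_i)=\lambda_{h\ell}p^c_{\min}\operatorname{kl}(1/2,1/2+\widetilde\epsilon)\le4\lambda_{h\ell}p^c_{\min}\widetilde\epsilon^2$. The hypothesis $H\ge12\sqrt2\,\epsilon$ is exactly what ensures $\widetilde\epsilon\le1/(2\sqrt2)$ for this kl bound. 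Hence $T^\star_{\epsilon,0}(M_0)\gtrsim H|\mathcal S|/(p^c_{\min}\widetilde\epsilon^{2})\asymp H^3|\mathcal S|/(p^c_{\min}\epsilon^2)$.
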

Theorem~\ref{thm:instance-dependent-separation} shows that the
local-quadratic characteristic time is controlled by the worst-case
$H^3|\mathcal S|/(p_{\min}^c\epsilon^2)$ scaling, 
while automatically adapting to instances in which the optimal policy
is well separated in value from other policies. 
\section{Experiments}\label{sec:simulations}

First, we compare lower-bound-guided querying (\emph{adaptive}) with \emph{uniform} and \emph{coverage} allocation in three synthetic simulations. All three run the same algorithm on the same offline sample and stabilization batches, differing only in the query allocation. Each environment (Figure~\ref{fig:sim-environments}) has a few hard decisions: Simulation~1 is an eight-context bandit whose smallest gap lies in its most frequent context; Simulation~2 is an $H=5$ MDP with two close calls at different stages, $y$ and the noisier $p$; and Simulation~3 is an absorbing MDP whose decision at the root has a value gap that grows with the horizon $H$. The behavior policy rarely takes the optimal action at these decisions, while the collaborator takes it with probability $0.6$. We set $\varepsilon=0.01$ ($0.02$ in Simulation~2); every suboptimal policy is more than $\varepsilon$ from optimal. We present $n_0=500$ offline trajectories, $\delta=0.05$ and 300 paired trials (full simulation details and other $n_0$ in Appendix~\ref{app:simulations}).

\begin{figure}[!htbp]
    \centering
    \includegraphics[
    width=0.8\linewidth,
    height=0.4\textheight,
    keepaspectratio
]{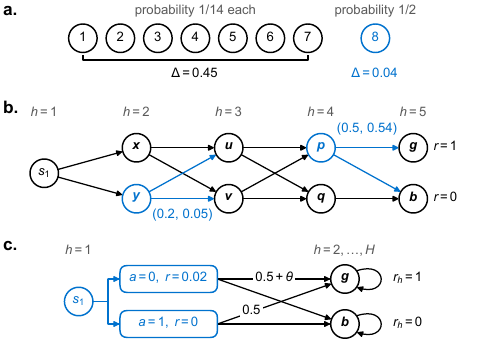}
    \caption{Simulation setups. \textbf{a.}~Contextual bandit with reward gap $\Delta$. \textbf{b.}~Sequential decisions ($H=5$) with transition probabilities under $(a=0,a=1)$. \textbf{c.}~Long-term consequences (varying $H$) where the value gap $0.02+(H-1)\theta$ grows with the horizon. Blue marks the hard decisions.}
    \label{fig:sim-environments}
\end{figure}

We run two versions of the algorithm (full details in Appendix~\ref{app:simulations}). The \emph{theory-aligned} version follows \Cref{sec:algorithm}. The \emph{efficient} version pools all data instead of using fresh folds, eliminates any candidate beaten by more than the design's confidence radius, and only tops up the stabilization batch each epoch, since earlier batches remain in use.

\noindent\textbf{Simulation Results.} In all three simulations, adaptive allocation does what the lower bound suggests and spends most of its queries on the difficult decisions (Figure~\ref{fig:sim-results}c). Between $59\%$ and $80\%$ of the theory-aligned version's queries go to Context~8, the close calls $p$ and $y$, and the root of Simulation~3, against only $3$-$22\%$ under uniform allocation. It therefore certifies far sooner (Figure~\ref{fig:sim-results}b). The efficient version stops after about $10^{4}$ queries, while uniform and coverage allocation need $5$-$22$ times as many. The gap is widest in Simulation~3, where uniform allocation spends nearly all its queries on absorbing states that carry no decision. Adaptive allocation also recovers the optimal policy earlier (Figure~\ref{fig:sim-results}a): after a few thousand queries the efficient version returns it in $96$-$98\%$ of trials, against $77$-$86\%$ for the baselines. The theory-aligned version shows the same pattern, with peak recovery gains of $10$-$21$ points, but at $16$-$44$ times the efficient version's cost. No stopping run returned a policy that is not $\varepsilon$-optimal.

\noindent\textbf{Drug-response application.} We next apply the method to the PRISM repurposing screen \citep{corsello2020discovering}. Contexts are six cancer lineages, actions are dasatinib and selumetinib ($2.5\,\mu$M), and the reward is 1 when a cell line's viability falls to at most half of control. A query chooses a lineage; the collaborator picks either drug with probability $1/2$ and reveals the response of a random line from a 116-line benchmark held out from the lines used to choose the drugs. The initial log has 116 records that all reveal one drug (dasatinib in half the trials, selumetinib in the rest), so it cannot compare the drugs. A policy assigns a drug to each lineage (64 policies), and its value is the lineage-weighted response rate. The weights act as the context distribution and are either equal or each lineage's share of new cases of these six cancers worldwide in 2022 \citep{bray2024global} (lung $42\%$, colorectal $33\%$, the other four $5$-$9\%$, with melanoma for skin). Dasatinib is better in four lineages, and selumetinib is narrowly better in the two close calls, colorectal ($7/14$ vs.\ $8/14$ responders) and skin ($10/18$ vs.\ $12/18$). With $\varepsilon=0.01$, only colorectal matters under incidence weights. We use the simulation constants, $\delta=0.05$ and 400 paired trials (Appendix~\ref{app:prism}).

Under both weightings (Figure~\ref{fig:prism-main}), the efficient version of adaptive allocation certifies an $\varepsilon$-optimal policy after about $6\times10^{3}$ queries, roughly a third of what uniform or coverage allocation needs, and no stopping run returned a policy that is not $\varepsilon$-optimal. At fixed budgets, after about $800$ queries under incidence weights it returns an $\varepsilon$-optimal policy in $89\%$ of trials, against about $80\%$ for both baselines, and under equal weights it pulls ahead of both by about $3\times10^{3}$ queries. Rather than spreading queries evenly, adaptive allocation concentrates them on the close calls that decide the policy (Figure~\ref{fig:prism-main}c). Over half go to colorectal under incidence weights, and most go to skin, colorectal and pancreas under equal weights. Lineages where one drug clearly wins receive few queries, even lung, which carries $42\%$ of the incidence weight. The theory-aligned version shows the same ordering at about $25$ times the cost; its recovery gains are smaller, though clear under incidence weights, and it still needs far fewer queries than the baselines.

\noindent
\begin{minipage}[t]{0.49\textwidth}
    \vspace{0pt}
    \centering
    \includegraphics[
        width=\linewidth,
        keepaspectratio
    ]{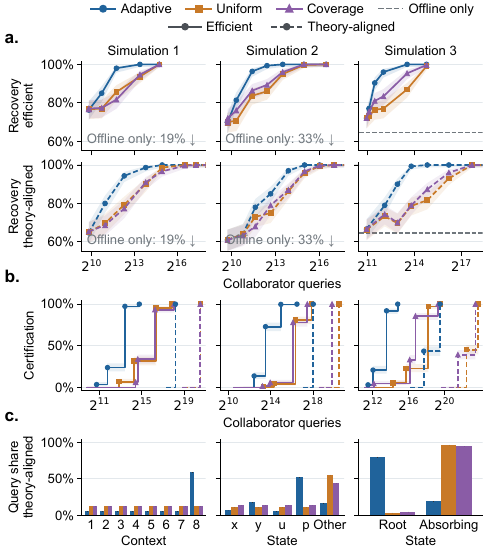}
    \captionof{figure}{
        Simulations~1--3 at $n_0=500$, 300 trials, and $H=15$
        for Simulation~3 for the efficient implementation (solid)
        and the theory-aligned algorithm of
        \Cref{sec:algorithm} (dashed).
        \textbf{a.}~Fraction of trials returning the optimal
        policy under fixed budget collaborator queries.
        \textbf{b.}~Fraction of trials whose stopping rule
        has returned, each method under its own allocation.
        \textbf{c.}~Allocation of the theory-aligned queries.
    }
    \label{fig:sim-results}
\end{minipage}
\hfill
\begin{minipage}[t]{0.49\textwidth}
    \vspace{0pt}
    \centering
    \includegraphics[
        width=\linewidth,
        keepaspectratio
    ]{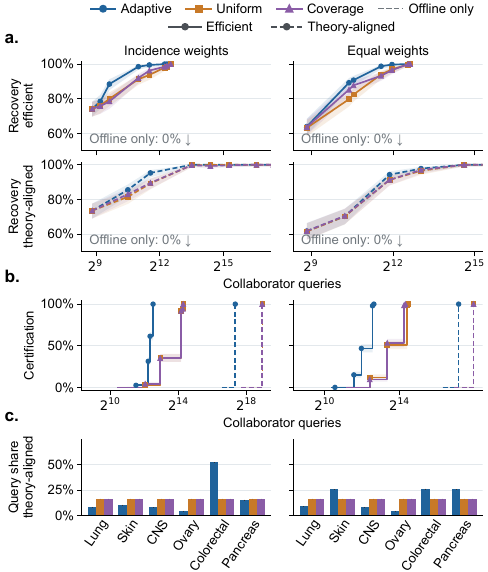}
    \captionof{figure}{
        PRISM application over 400 trials under incidence
        (left) and equal (right) lineage weights.
        \textbf{a.}~Fraction of trials returning an
        $\varepsilon$-optimal policy ($\varepsilon=0.01$)
        under fixed budgets.
        \textbf{b.}~Fraction of trials whose stopping
        rule has returned.
        \textbf{c.}~Each method's share of the
        theory-aligned version's queries by lineage.
    }
    \label{fig:prism-main}
\end{minipage}

\par\vspace{0.15in}

\section{Summary}
\label{sec:summary}

We introduced query-based collaborative reinforcement learning, a novel setting in
which a learner augments a fixed offline dataset by choosing state--time
queries to a collaborator-mediated simulator, while the sampled action is
determined by a fixed and possibly suboptimal collaborator policy. We derived
an instance-dependent information-theoretic lower bound and a local quadratic
characterization. It motivates an
adaptive algorithm that progressively localizes the policy class and allocates
queries through a lower-bound-guided minimax design. Our analysis establishes
finite stopping and high-probability accuracy guarantees and shows that, in
the certifiable regime, the leading query-complexity term nearly tracks the local
quadratic benchmark.

Our experiments illustrate the resulting instance dependence. Across three
synthetic environments and a semi-synthetic drug-response application,
adaptive allocation concentrates queries on the policy comparisons that
remain difficult after incorporating the offline data, rather than simply on
the least-covered state--action pairs. This substantially reduces the number
of collaborator queries needed for certification relative to uniform and
coverage-based allocation. 

The current theory focuses on finite tabular MDPs, a fixed collaborator,
and a theory-aligned implementation with fresh epoch-wise data splitting.
Natural directions for future work include establishing sharper guarantees
for data-reusing and efficient implementations, developing more computationally scalable
algorithms, extending the framework to function approximation, and studying
fixed-budget settings with heterogeneous query costs or state-time-specific
budget constraints.

\bibliographystyle{plainnat}
\bibliography{refs2}
\appendix
\newpage
\section{Full Algorithm and Auxiliary Definitions}
\label{app:full_algorithm}

This appendix additionally provides the complete version of our algorithm described in Section~\ref{sec:algorithm} and defines several auxiliary quantities and steps that were left implicit in the main text.

\subsection{Unknown collaborator policy}
\label{sec:unknown-collaborator}

For clarity, our main theoretical development assumes that the collaborator
policy $\pi^c$ is known, in which case we simply set
$\widehat{\pi}^c=\pi^c$ throughout the section. Importantly, this assumption
is not essential. In practice, $\pi^c$ may be unknown to the learner, but
its support and the quantities needed by our allocation rule can be learned
with a lightweight uniform-query initialization phase with negligible query
overhead.

Suppose that a known constant $p_{\min}>0$ satisfies
\[
\min_{(h,s,a):\,\pi_h^c(a\mid s)>0}
\pi_h^c(a\mid s)
\ge p_{\min}>0.
\]

Fix $\zeta\in(0,1)$ and $\delta_0\in(0,1)$. Before running the main
algorithm, query every state--time pair $(h,s)$ an equal number
$m_0$ of times, and let
\[
\widetilde\pi_h^c(a\mid s)
:=
\frac{1}{m_0}
\sum_{i=1}^{m_0}
\mathbf 1\{A_i^{h,s}=a\}
\]
be the resulting empirical collaborator distribution, where
$A_1^{h,s},\ldots,A_{m_0}^{h,s}$ are the actions returned by the
collaborator from the $m_0$ independent queries at $(h,s)$. Define
\[
\eta:=\frac{\zeta}{2+\zeta}.
\]

By a multiplicative Chernoff bound and a union bound over
$(h,s,a)$, it suffices to take
\[
m_0
=
\left\lceil
\frac{C}{\zeta^2p_{\min}}
\log\frac{2H|\mathcal S||\mathcal A|}{\delta_0}
\right\rceil,
\]
for a sufficiently large universal  constant $C>0$,
to ensure, with probability at least $1-\delta_0$, that simultaneously
for every $(h,s,a)$ with $\pi_h^c(a\mid s)>0$,
\[
(1-\eta)\pi_h^c(a\mid s)
\le
\widetilde\pi_h^c(a\mid s)
\le
(1+\eta)\pi_h^c(a\mid s).
\]

Since unsupported actions are never sampled, on the same event
the empirical unsupported set exactly recovers the true one:
\[
\widehat{\mathcal U}_c
:=
\{(h,s,a):\widetilde\pi_h^c(a\mid s)=0\}
=
\mathcal U_c
:=
\{(h,s,a):\pi_h^c(a\mid s)=0\}.
\]

For the subsequent algorithm, define the multiplicative upper estimate
\[
\widehat\pi_h^{\,\mathrm{up}}(a\mid s)
:=
\frac{\widetilde\pi_h^c(a\mid s)}{1-\eta},
\]
and the conservative proxy
\[
\widehat\pi_h^c(a\mid s)
:=
\frac{1}{1+\zeta}
\widehat\pi_h^{\,\mathrm{up}}(a\mid s).
\]
Our choice of $\eta$ gives
\[
\frac{1+\eta}{1-\eta}=1+\zeta,
\]
and therefore, simultaneously over all supported rows,
\[
\frac{1}{1+\zeta}\pi_h^c(a\mid s)
\le
\widehat\pi_h^c(a\mid s)
\le
\pi_h^c(a\mid s).
\]
In general, $\widehat\pi_h^c(\cdot\mid s)$ need not sum to one and should
be viewed as a conservative, generally sub-probability proxy for the
collaborator action probabilities, rather than as a policy distribution.
This causes no difficulty because it is used only in the query-allocation
and effective-count calculations, where a coordinatewise lower proxy is
sufficient.

The cost of this initialization is only
\[
N_{\mathrm{init}}
=
H|\mathcal S|m_0
=
O\left(
\frac{H|\mathcal S|}{\zeta^2p_{\min}}
\log\frac{H|\mathcal S||\mathcal A|}{\delta_0}
\right)
\]
collaborator queries, independent of the target accuracy $\epsilon$.
Moreover, replacing $\pi^c$ by $\widehat\pi^c$ incurs at most a
multiplicative $1+\zeta$ overhead in the subsequent query complexity.
Indeed,
\[
(1+\zeta)
\widehat\pi_h^{c}(a\mid s)
\ge
\pi_h^c(a\mid s), 
\]
so multiplying any allocation designed with the true collaborator policy
by $1+\zeta$ provides at least the same effective number of samples on
every $(h,s,a)$.

Consequently, if the main algorithm is run with confidence
parameter $\delta$, then the corresponding guarantees
derived under known $\pi^c$ continue to hold under unknown
$\pi^c$ with probability at least $1-\delta-\delta_0$,
with the following modifications to the query-complexity
bound: an additive initialization cost $N_{\mathrm{init}}$,
a multiplicative $1+\zeta$ factor on the leading
adaptive-query term, and replacement of $p_{\min}^c$ by
the known lower bound $p_{\min}$ in the stabilization
and other lower-order query terms.
In particular, to retain an overall confidence level
of $1-\delta$, one may run the initialization with
failure probability $\delta/2$ and the main algorithm
with confidence parameter $\delta/2$.

Thus exact knowledge of the collaborator action
probabilities is not required, provided that a known
positive lower bound $p_{\min}\le p_{\min}^c$ is available.
In the algorithmic description in this section, we retain
$\widehat{\pi}^c$ and $p_{\min}$.
For the known-policy theoretical analysis, we set
$\widehat{\pi}^c=\pi^c$ and $p_{\min}=p_{\min}^c$;
the unknown-policy guarantees follow with the
initialization cost and parameter substitutions
described above.
\subsection{Complete procedure}
\label{app:complete-procedure}

The procedure begins with a one-time preprocessing step applied to the offline
data. It then enters the epoch-wise iteration $k=1,2,\ldots$, where each epoch
consists of four stages: fresh-data acquisition, policy localization,
historical sensitivity estimation, and lower-bound-guided prospective query
design. All notation is summarized in
Table~\ref{tab:algorithm-notation}.

\paragraph{Offline preprocessing.}

We first preprocess the offline trajectories before any adaptive querying
takes place. If $n_0=0$, skip the  step and set all offline
subsamples and their row counts to be empty and zero,
respectively. Otherwise, apply the two-fold trimming procedure
\citep[Step~2 of Algorithm~3]{gen2024settling} with
\[
\delta_{\rm trim}
=
\frac{\delta}{48}.
\]
The retained observations are equally split into two disjoint parts,
$\mathcal D_0^\mathcal G$ and $\mathcal D_0^\mathcal D$.

We further partition
$\mathcal D_0^\mathcal D$ into disjoint fresh folds,
\[
\mathcal D_0^\mathcal D
=
\bigsqcup_{k\ge1}\mathcal D_{0,k}^{\mathcal D},
\qquad
w_k
=
\frac{6}{\pi^2k^2},
\]
where $w_k$ specifies the proportion of $\mathcal D_0^\mathcal D$ assigned to
the $k$th fresh fold. The row counts
\[
N_{0,k,hsa}^{\mathcal D}
:=
\#\left\{
\text{row-$(h,s,a)$ observations in }
\mathcal D_{0,k}^{\mathcal D}
\right\}
\]
are recorded at preprocessing time, but the corresponding reward and
next-state observations are not revealed to the inference procedure until
epoch $k$. Thus the prospective design at epoch $k-1$ may use the row counts
of the next offline fold without reusing its reward--transition observations
for estimation.

Similarly, let
\[
N_{0,hsa}^{\mathcal G}
:=
\#\left\{
\text{row-$(h,s,a)$ observations in }\mathcal D_0^\mathcal G
\right\}.
\]

The active policy class and the initial adaptive allocation are initialized as
\[
\widehat\Pi_0
:=
\Pi_d,
\qquad
\widehat{\mathbf n}_0
:=
\mathbf 0.
\]

\paragraph{Stage 1: Fresh data acquisition and stabilization.}

At epoch $k$, set
\[
\epsilon_k
:=
H2^{-(k-1)}
\]
and let
\[
q_{k-1}
:=
|\widehat\Pi_{k-1}|^2,
\]
which determines the epoch-dependent confidence parameter $\beta_k$ through
Table~\ref{tab:algorithm-notation}.

To guarantee sufficient accuracy of the historical sensitivity estimates,
for $\eta>0$ define
\begin{equation}
n^{\rm sen}(\eta)
:=
\min
\left\{
n\ge2:
\frac{
C_{\rm sen}^2\beta^\mathcal G(m)
}{
2m
}
\le
\eta
\quad
\text{for all }m\ge n
\right\},
\label{eq:app-sensitivity-threshold}
\end{equation}
and set
\begin{equation}
n_{k+1}^{\rm sen}
:=
n^{\rm sen}(\epsilon_{k+1})
=
n^{\rm sen}\!\left(
\frac{\epsilon_k}{2}
\right).
\label{eq:app-epoch-sensitivity-threshold}
\end{equation}
Together with the uniform-concentration threshold
$n_k^{\rm uc}$ defined in
Table~\ref{tab:algorithm-notation}, this determines the stabilization batch
size
\begin{equation}
m_k^{\rm st}
:=
\left\lceil
\frac{1}{p_{\min}}
\max
\left\{
2n_k^{\rm uc},
\,
2n_{k+1}^{\rm sen},
\,
8\beta_k,
\,
50w_k\ell_{\rm trim}+2
\right\}
\right\rceil.
\label{eq:app-stabilization-size}
\end{equation}

The stabilization sample can increase only rows that are supported by the
collaborator. Let $\mathcal U_c$ denote the set of collaborator-unsupported
rows. Such rows must therefore be supplied entirely by the offline data.
For each row define the cumulative historical offline count
\begin{equation}
N_{k,hsa}^{\mathcal G,\rm off}
:=
N_{0,hsa}^{\mathcal G}
+
\sum_{j=1}^k
N_{0,j,hsa}^{\mathcal D}.
\label{eq:app-cumulative-offline-historical-count}
\end{equation}
All counts on the right-hand side are recorded during preprocessing and are
therefore known before the corresponding observations are used for inference.

We define the offline-coverage indicator
\begin{equation}
\mathsf{OffCover}_k
:=
\mathbf 1
\left\{
\begin{array}{l}
N_{0,k,hsa}^{\mathcal D}
\ge
n_k^{\text{uc}}
\vee
\left(
50w_k\ell_{\text{trim}}+2
\right),
\\[1mm]
N_{k,hsa}^{\mathcal G,\text{off}}
\ge
n_{k+1}^{\text{sen}},
\end{array}
\quad
\text{for every }(h,s,a)\in\mathcal U_c
\right\}.
\label{eq:app-offline-coverage}
\end{equation}
When $\mathcal U_c=\varnothing$, we set
$\mathsf{OffCover}_k=1$.

\begin{remark}\label{rem:lower-order-coverage}
By construction, the coverage thresholds entering
$\mathsf{OffCover}_k$ are lower order relative to the traditional leading
 scale $O(1/\epsilon^2)$ \citep{gen2024settling}. Indeed,
\[
n_k^{\rm uc}
=
\widetilde O(\epsilon_k^{-1}),
\qquad
n_{k+1}^{\rm sen}
=
\widetilde O(\epsilon_k^{-1}).
\]
Thus the condition $\mathsf{OffCover}_k=1$ enforces only the auxiliary
coverage needed for the fresh-sample and sensitivity-control steps of
our construction.
\end{remark}

Before entering the first epoch, the algorithm evaluates
$\mathsf{OffCover}_1$ using the row counts recorded during preprocessing.
If
\[
\mathsf{OffCover}_1=0,
\]
the algorithm terminates and returns an arbitrary policy in $\Pi_d$.
For every $k\ge2$, the condition $\mathsf{OffCover}_k=1$ has already
been verified at the end of epoch $k-1$, before any epoch-$k$
observations are used for inference. Consequently, every epoch that is
actually entered satisfies
\begin{equation}
\mathsf{OffCover}_k=1.
\label{eq:app-current-offline-coverage}
\end{equation}

The algorithm then acquires three sources of fresh data. First, reveal the
offline observations in $\mathcal D_{0,k}^{\mathcal D}$. Second, for $k\ge2$,
execute the adaptive allocation
\[
\widehat{\mathbf n}_{k-1}
=
\{
\widehat n_{k-1,hs}
\}_{h,s}
\]
selected at the end of epoch $k-1$: for every $(h,s)$, the collaborator is
queried exactly $\widehat n_{k-1,hs}$ times, producing
$\mathcal Q_k^{\rm ad}$. At the first epoch there is no previous adaptive
allocation, so
\[
\mathcal Q_1^{\rm ad}
:=
\varnothing.
\]
Third, independently of the adaptive allocation, query every state--time
pair $(h,s)$ exactly $m_k^{\rm st}$ times and denote the resulting
stabilization observations by $\mathcal Q_k^{\rm st}$.

The fresh inference sample at epoch $k$ is therefore
\begin{equation}
\mathcal D_k^{\mathcal D}
:=
\mathcal D_{0,k}^{\mathcal D}
\cup
\mathcal Q_k^{\rm ad}
\cup
\mathcal Q_k^{\rm st},
\label{eq:app-fresh-inference-sample}
\end{equation}
with row counts
\begin{equation}
N_{k,hsa}^{\mathcal D}
:=
\#\left\{
\text{row-$(h,s,a)$ observations in }\mathcal D_k^{\mathcal D}
\right\}.
\label{eq:app-fresh-row-count}
\end{equation}

\paragraph{Stage 2: Fresh inference and localization of the policy class.}

Construct the fresh empirical MDP
\[
\widehat M_k^\mathcal D
=
(\widehat r_k^\mathcal D,\widehat P_k^\mathcal D)
\]
from $\mathcal D_k^{\mathcal D}$. For every deterministic policy $\pi$, let
$\widehat V_h^{\pi,k,\mathcal D}$ denote its value function in
$\widehat M_k^\mathcal D$. For a policy pair $(\pi,\bar\pi)$ define the
empirical value difference
\begin{equation}
\widehat D_k(\pi,\bar\pi)
:=
\widehat V_1^{\pi,k,\mathcal D}
-
\widehat V_1^{\bar\pi,k,\mathcal D}.
\label{eq:app-value-difference}
\end{equation}

For $k=1$, define
\begin{equation}
\widehat\Pi_1
:=
\Pi_d.
\label{eq:app-initial-active-class}
\end{equation}
For every $k\ge2$, the reference policy
$\bar \pi_{k-1}$ was selected by the minimax design at the end of
epoch $k-1$. The new active class is obtained by retaining policies whose
empirical value is within the prescribed localization radius of the best
empirical response to this reference:
\begin{equation}
\widehat\Pi_k
:=
\left\{
\pi\in\widehat\Pi_{k-1}:
\max_{\pi'\in\widehat\Pi_{k-1}}
\left[
\widehat D_k(
\pi',
\bar \pi_{k-1}
)
-
\widehat D_k(
\pi,
\bar \pi_{k-1}
)
\right]
\le
\frac52\epsilon_k
\right\}.
\label{eq:app-localized-class}
\end{equation}

After the active class has been updated, define the reported policy as
follows. For $k=1$, let $\widehat\pi_1$ be an arbitrary policy in
$\widehat\Pi_1=\Pi_d$. For every $k\ge2$, define
\begin{equation}
\widehat\pi_k
\in
\arg\max_{\pi\in\widehat\Pi_k}
\widehat V_1^{\pi,k,\mathcal D}.
\label{eq:app-empirical-best-policy}
\end{equation}

If
\[
|\widehat\Pi_k|=1,
\]
the unique remaining policy is returned and the algorithm terminates.
Likewise, if
\[
\epsilon_k\le\epsilon,
\]
the active class has reached the target localization scale and the algorithm
returns $\widehat\pi_k$. Otherwise, proceed to Stage 3.

\paragraph{Stage 3: Historical sensitivity and local-variance estimation.}

If the algorithm has not terminated after Stage~2, release the current fresh
observations into the historical sample and define
\begin{equation}
\mathcal Z_k^\mathcal G
:=
\mathcal D_0^\mathcal G
\cup
\bigcup_{j=1}^k
\mathcal D_j^{\mathcal D}.
\label{eq:app-historical-sample}
\end{equation}
For each row let
\begin{equation}
N_{k,hsa}^\mathcal G
:=
\#\left\{
\text{row-$(h,s,a)$ observations in }\mathcal Z_k^\mathcal G
\right\},
\label{eq:app-historical-row-count}
\end{equation}
and construct the historical empirical MDP
\[
\widehat M_k^\mathcal G
=
(\widehat r_k^\mathcal G,\widehat P_k^\mathcal G).
\]
For a deterministic policy $\pi$, let
$\widehat V_h^{\pi,k,\mathcal G}$ and
$\widehat d_{hsa}^{\pi,k,\mathcal G}$ denote its value function and
state--action occupancy under $\widehat M_k^\mathcal G$.

For every pair
$\pi,\bar\pi\in\widehat\Pi_k$, define the empirical reward sensitivity
\begin{equation}
\widehat G_{k,hsa}^{r}(\pi,\bar\pi)
:=
\widehat d_{hsa}^{\pi,k,\mathcal G}
-
\widehat d_{hsa}^{\bar\pi,k,\mathcal G},
\label{eq:app-empirical-sensitivities}
\end{equation}
and the empirical transition sensitivity
\begin{equation}
\begin{aligned}
\widehat G_{k,hsa}^{P}(\pi,\bar\pi)(s')
:={}&
\widehat d_{hsa}^{\pi,k,\mathcal G}
\widehat V_{h+1}^{\pi,k,\mathcal G}(s')
\\
&-
\widehat d_{hsa}^{\bar\pi,k,\mathcal G}
\widehat V_{h+1}^{\bar\pi,k,\mathcal G}(s').
\end{aligned}
\label{eq:app-transition-sensitivity}
\end{equation}
The associated plug-in influence score is
\begin{equation}
\widehat\Psi_{k,hsa}^{\pi,\bar\pi}(r,s')
:=
\widehat G_{k,hsa}^{r}(\pi,\bar\pi)\,r
+
\widehat G_{k,hsa}^{P}(\pi,\bar\pi)(s').
\label{eq:app-empirical-influence}
\end{equation}

To control the discrepancy between the empirical and population influence
scores, for $N_{k,hsa}^\mathcal G>0$ define
\begin{equation}
\alpha_{k,hsa}^{\mathcal G,r}
=
\alpha_{k,hsa}^{\mathcal G,P}
:=
\min
\left\{
1,\,
\sqrt{
\frac{
\beta^\mathcal G(N_{k,hsa}^\mathcal G)
}{
2N_{k,hsa}^\mathcal G
}
}
\right\}.
\label{eq:app-historical-row-radius}
\end{equation}
Let
\[
\alpha_k^{\mathcal G,r}
:=
\max_{h,s,a}\alpha_{k,hsa}^{\mathcal G,r},
\qquad
\alpha_k^{\mathcal G,P}
:=
\max_{h,s,a}\alpha_{k,hsa}^{\mathcal G,P}.
\]
Set $E_{k,H+1}^{V,\mathcal G}:=0$ and, for $h\le H$, define
\begin{equation}
E_{k,h}^{V,\mathcal G}
:=
(H-h+1)\alpha_k^{\mathcal G,r}
+
\frac{(H-h)(H-h+1)}{2}
|\mathcal S|\alpha_k^{\mathcal G,P},
\label{eq:app-value-sensitivity-envelope}
\end{equation}
and
\begin{equation}
\Delta_{k,hsa}^{\mathcal G}
:=
2(h-1)(H-h+1)|\mathcal S|\alpha_k^{\mathcal G,P}
+
2E_{k,h+1}^{V,\mathcal G}.
\label{eq:app-sensitivity-envelope}
\end{equation}

For unsupported rows,
\[
N_{k,hsa}^\mathcal G
=
N_{k,hsa}^{\mathcal G,\rm off},
\]
since collaborator querying can never generate observations from such rows.
By \eqref{eq:app-current-offline-coverage},
\[
N_{k,hsa}^\mathcal G
\ge
n_{k+1}^{\rm sen},
\qquad
\forall (h,s,a)\in\mathcal U_c.
\]

The sensitivity envelope above is used to construct an upper confidence
envelope for the local variance
$\Gamma_{hsa}(\pi,\bar\pi)$.
For $N_{k,hsa}^{\mathcal D}\ge2$, first define the fresh-sample mean of the
plug-in score,
\begin{equation}
\overline{\widehat\Psi}_{k,hsa}^{\pi,\bar\pi}
:=
\frac{1}{N_{k,hsa}^{\mathcal D}}
\sum_{i=1}^{N_{k,hsa}^{\mathcal D}}
\widehat\Psi_{k,hsa}^{\pi,\bar\pi}
\left(
R_{k,hsa}^{(i)},
S_{k,hsa}^{\prime(i)}
\right),
\label{eq:app-plugin-score-mean}
\end{equation}
and its empirical variance
\begin{equation}
\begin{aligned}
\widehat\Gamma_{k,hsa}^{\rm raw}(\pi,\bar\pi)
:=
\frac{1}{N_{k,hsa}^{\mathcal D}-1}
\sum_{i=1}^{N_{k,hsa}^{\mathcal D}}
\Bigl(
&
\widehat\Psi_{k,hsa}^{\pi,\bar\pi}
(
R_{k,hsa}^{(i)},
S_{k,hsa}^{\prime(i)}
)
-
\overline{\widehat\Psi}_{k,hsa}^{\pi,\bar\pi}
\Bigr)^2.
\end{aligned}
\label{eq:app-raw-variance}
\end{equation}
The local-variance envelope is then
\begin{equation}
\overline\Gamma_{k,hsa}(\pi,\bar\pi)
:=
\left[
\sqrt{
\widehat\Gamma_{k,hsa}^{\rm raw}(\pi,\bar\pi)
}
+
2H
\sqrt{
\frac{
2\beta_k
}{
N_{k,hsa}^{\mathcal D}-1
}
}
+
\sqrt{2}\Delta_{k,hsa}^{\mathcal G}
\right]^2
\label{eq:app-variance-envelope}
\end{equation}
whenever $N_{k,hsa}^{\mathcal D}\ge2$. If
$N_{k,hsa}^{\mathcal D}\le1$, set instead
\begin{equation}
\overline\Gamma_{k,hsa}(\pi,\bar\pi)
:=
H^2.
\label{eq:app-small-sample-variance-envelope}
\end{equation}

\paragraph{Stage 4: Lower-bound-guided prospective minimax design.}

The final stage uses the localized policy class
$\widehat\Pi_k$ and the local-variance envelopes
$\overline\Gamma_{k,hsa}$ to determine both a reference policy and an
adaptive collaborator-query allocation for epoch $k+1$.

For every
$\pi,\bar\pi\in\widehat\Pi_k$, define the regularized empirical margin
\begin{equation}
\widehat b_{k,\rm reg}(\pi,\bar\pi)
:=
\max
\left\{
\epsilon-\widehat D_k(\pi,\bar\pi),
\,
4\epsilon_k
\right\}.
\label{eq:app-regularized-margin}
\end{equation}

Set
\[
\epsilon_{k+1}
=
\frac{\epsilon_k}{2},
\qquad
q_k
=
|\widehat\Pi_k|^2,
\]
and compute the next-epoch quantities
\[
\beta_{k+1},
\qquad
n_{k+1}^{\rm uc},
\qquad
n_{k+2}^{\rm sen},
\qquad
m_{k+1}^{\rm st}
\]
from Table~\ref{tab:algorithm-notation},
\eqref{eq:app-epoch-sensitivity-threshold}, and
\eqref{eq:app-stabilization-size}.

Because all offline row counts are known from preprocessing,
$\mathsf{OffCover}_{k+1}$ can be evaluated before any epoch-$(k+1)$
observation is revealed. If
\[
\mathsf{OffCover}_{k+1}=0,
\]
then some collaborator-unsupported row cannot meet the fresh or historical
coverage requirement at the next epoch, and collaborator querying cannot
repair this deficiency. The algorithm therefore terminates and returns
$\widehat\pi_k$.

Hence every epoch that reaches the prospective minimax design satisfies
\begin{equation}
\mathsf{OffCover}_{k+1}=1.
\label{eq:app-next-offline-coverage}
\end{equation}

For a prospective adaptive allocation
\[
\mathbf n
=
\{n_{hs}\}_{h,s}
\in
\mathbb N^{H|\mathcal S|},
\]
the expected row count available for fresh inference at epoch $k+1$ is
\begin{equation}
N_{k+1,hsa}^{+}(\mathbf n)
:=
N_{0,k+1,hsa}^{\mathcal D}
+
\left(
m_{k+1}^{\rm st}
+
n_{hs}
\right)
\widehat\pi_h^c(a\mid s).
\label{eq:app-prospective-count}
\end{equation}
Here $N_{0,k+1,hsa}^{\mathcal D}$ is known from offline preprocessing even
though the corresponding observations are not revealed until epoch $k+1$.

Define the prospective pairwise radius
\begin{equation}
\widehat{\operatorname{rad}}_k^{+}
(\pi,\bar\pi;\mathbf n)
:=
C_0
\sqrt{
\beta_{k+1}
\sum_{h,s,a}
\frac{
\overline\Gamma_{k,hsa}(\pi,\bar\pi)
}{
N_{k+1,hsa}^{+}(\mathbf n)
}
},
\label{eq:app-prospective-radius}
\end{equation}
where we set $C_0=8$.

For a total adaptive-query budget $B\ge0$, define the feasible allocation set
\begin{equation}
\mathcal N(B)
:=
\left\{
\mathbf n
=
\{n_{hs}\}_{h,s}
\in
\mathbb N^{H|\mathcal S|}:
\sum_{h,s}n_{hs}
\le
B
\right\}.
\label{eq:app-budget-set}
\end{equation}
The lower-bound-guided design jointly selects a reference policy and an
allocation by solving
\begin{equation}
\widehat\Phi_k^\star(B)
:=
\min_{\bar\pi\in\widehat\Pi_k}
\inf_{\mathbf n\in\mathcal N(B)}
\max_{\pi\in\widehat\Pi_k\setminus\{\bar\pi\}}
\frac{
\widehat{\operatorname{rad}}_k^{+}
(\pi,\bar\pi;\mathbf n)
}{
\widehat b_{k,\rm reg}(\pi,\bar\pi)
}.
\label{eq:app-minimax-design}
\end{equation}
Thus the allocation is chosen to minimize the largest normalized uncertainty
over policies that remain statistically plausible after localization.

Because $\mathcal N(B_1)\subseteq\mathcal N(B_2)$ whenever
$B_1\le B_2$, the function
$\widehat\Phi_k^\star(B)$ is nonincreasing in $B$. Its infinite-budget
limit is
\begin{equation}
\widehat\Phi_{k,\infty}^\star
:=
\lim_{B\to\infty}
\widehat\Phi_k^\star(B)
=
\inf_{B<\infty}
\widehat\Phi_k^\star(B).
\label{eq:app-infinite-budget-limit}
\end{equation}
If
\[
\widehat\Phi_{k,\infty}^\star
\ge
\frac1{32},
\]
then the infinite-budget normalized uncertainty remains above the prescribed
certification-limit threshold, and the algorithm terminates and returns
$\widehat\pi_k$.

Otherwise,
\[
\widehat\Phi_{k,\infty}^\star
<
\frac1{32}
<
\frac1{16},
\]
so, by the definition of the infimum and monotonicity of
$\widehat\Phi_k^\star(B)$, some finite query budget attains the design
threshold $1/16$. Search over the dyadic grid
\[
\mathcal B_{\rm dyad}
:=
\{1,2,4,8,\ldots\}
\]
and choose
\begin{equation}
B_k
:=
\min
\left\{
B\in\mathcal B_{\rm dyad}:
\widehat\Phi_k^\star(B)
\le
\frac1{16}
\right\}.
\label{eq:app-dyadic-budget}
\end{equation}
Finally, select a corresponding minimax optimizer
\begin{equation}
(
\bar \pi_k,
\widehat{\mathbf n}_k
)
\in
\arg\min_{\substack{
\bar\pi\in\widehat\Pi_k\\
\mathbf n\in\mathcal N(B_k)
}}
\max_{\pi\in\widehat\Pi_k\setminus\{\bar\pi\}}
\frac{
\widehat{\operatorname{rad}}_k^{+}
(\pi,\bar\pi;\mathbf n)
}{
\widehat b_{k,\rm reg}(\pi,\bar\pi)
}.
\label{eq:app-design-optimizer}
\end{equation}
The reference policy $\bar \pi_k$ is used in the next localization
step, while the allocation
$\widehat{\mathbf n}_k$ is stored and executed only at epoch $k+1$.

\FloatBarrier

\begingroup
\footnotesize
\setlength{\tabcolsep}{4pt}
\renewcommand{\arraystretch}{1.10}
\setlength{\abovedisplayskip}{3pt}
\setlength{\belowdisplayskip}{3pt}
\setlength{\abovedisplayshortskip}{2pt}
\setlength{\belowdisplayshortskip}{2pt}
\setlength{\LTcapwidth}{\textwidth}
\setlength{\LTpre}{6pt}
\setlength{\LTpost}{6pt}

\begin{longtable}{
    @{}
    >{\raggedright\arraybackslash}p{0.22\textwidth}
    >{\raggedright\arraybackslash}
        p{\dimexpr0.78\textwidth-2\tabcolsep\relax}
    @{}
}

\caption{Notation and constants used in
Algorithm~\ref{alg:adaptive-collaborator-query-full}.}
\label{tab:algorithm-notation}\\

\toprule
Quantity & Definition and role \\
\midrule
\endfirsthead

\caption[]{Notation and constants used in
Algorithm~\ref{alg:adaptive-collaborator-query-full} (continued).}\\

\toprule
Quantity & Definition and role \\
\midrule
\endhead

\midrule
\multicolumn{2}{r}{\footnotesize Continued on next page.}\\
\endfoot

\bottomrule
\endlastfoot

$\delta_{\rm trim}$
&
Failure probability used in the two-fold trimming step:
$\delta_{\rm trim}:=\delta/48$.
\\

$\ell_{\rm trim}$
&
Logarithmic factor used in the trimming bound:
$\ell_{\rm trim}:=\log\!\left(
\frac{(n_0\vee1)H}{\delta_{\rm trim}}
\right)$.
\\

$w_k$
&
Fraction of the inference-side offline sample assigned to fold $k$:
$w_k:=6/(\pi^2k^2)$, with $\sum_{k\ge1}w_k=1$.
\\

$\mathcal D_0^\mathcal G,\mathcal D_{0,k}^{\mathcal D}$
&
Historical part of the trimmed offline sample and the $k$th fresh inference
fold. The complete table of row counts $N_{0,k,hsa}^{\mathcal D}$ is recorded
during preprocessing, while the corresponding reward--transition
observations are not used for inference until epoch $k$.
\\

$\mathcal U_c$
&
Collaborator-unsupported rows:
$\mathcal U_c:=\{(h,s,a):\pi_h^c(a\mid s)=0\}$.
\\

$p_{\min}$
&
A known lower bound on the minimum positive collaborator-action probability:
$0<p_{\min}\le
\min_{(h,s,a)\notin\mathcal U_c}\pi_h^c(a\mid s)$.
\\

$\epsilon_k$
&
Epoch resolution: $\epsilon_k:=H2^{-(k-1)}$.
\\

$\mathcal P_k$
&
Predictable parent pair class used for epoch-$k$ concentration:
$\mathcal P_k:=\widehat\Pi_{k-1}\times\widehat\Pi_{k-1}$.
It is measurable before the fresh epoch-$k$ observations are revealed.
\\

$q_k$
&
Pair-count bound: $q_k:=|\widehat\Pi_k|^2$.
Thus $|\mathcal P_k|\le q_{k-1}$.
\\

$\beta_k$
&
Epoch-wise logarithmic factor:
\[
\beta_k:=
\log\!\left(
\frac{2\pi^2k^2H|\mathcal S||\mathcal A|(|\mathcal S|+1)q_{k-1}}{\delta}
\right).
\]
\\

$\beta^\mathcal G(n)$
&
Anytime logarithmic factor for historical sensitivity estimation:
\[
\beta^\mathcal G(n):=
\log\!\left(
\frac{2\pi^2n^2H|\mathcal S||\mathcal A|(|\mathcal S|+1)}{\delta}
\right).
\]
\\

$C_{\rm sen}$
&
Sensitivity-envelope constant:
$C_{\rm sen}:=|\mathcal S|H^2\vee4|\mathcal S|$.
\\

$K_H$
&
Coefficient appearing in the second-order model remainder:
\[
K_H:=
H(H-1)|\mathcal S|
+
\frac{H(H-1)(H-2)}{3}|\mathcal S|^2.
\]
\\

$C_{\rm uc}$
&
Uniform-coverage constant:
\[
C_{\rm uc}:=
\max\left\{
64H^2,\,
8K_H,\,
67584H|\mathcal S||\mathcal A|
\right\}.
\]
\\

$C_0$
&
Numerical constant in the leading prospective radius:
$C_0:=8$.
\\

$n^{\rm sen}(\eta)$
&
Historical row count sufficient to guarantee that the squared sensitivity
envelope is at most $\eta$; defined in
\eqref{eq:app-sensitivity-threshold}.
\\

$n_k^{\rm uc}$
&
Fresh row-wise coverage threshold:
$n_k^{\rm uc}:=
\left\lceil
C_{\rm uc}\frac{\beta_k}{\epsilon_k}
\right\rceil$.
\\

$m_k^{\rm st}$
&
Number of uniform stabilization queries made at every state--time pair
during epoch $k$; defined in \eqref{eq:app-stabilization-size}.
\\

$\mathcal Q_k^{\rm ad},\mathcal Q_k^{\rm st}$
&
Adaptive and uniform-stabilization query batches collected at epoch $k$.
For $k=1$, $\mathcal Q_1^{\rm ad}:=\varnothing$.
\\

$\mathcal D_k^{\mathcal D}$
&
Fresh inference sample at epoch $k$:
$\mathcal D_k^{\mathcal D}:=
\mathcal D_{0,k}^{\mathcal D}
\cup\mathcal Q_k^{\rm ad}
\cup\mathcal Q_k^{\rm st}$.
\\

$N_{k,hsa}^{\mathcal D}$
&
Number of row-$(h,s,a)$ observations in $\mathcal D_k^{\mathcal D}$.
\\

$\widehat M_k^\mathcal D$
&
Empirical MDP constructed only from the fresh inference sample
$\mathcal D_k^{\mathcal D}$.
\\

$\widehat D_k(\pi,\bar\pi)$
&
Fresh empirical value difference:
$\widehat D_k(\pi,\bar\pi):=
\widehat V_1^{\pi,k,\mathcal D}
-
\widehat V_1^{\bar\pi,k,\mathcal D}$.
\\

$\widehat\Pi_k$
&
Localized active policy class. We initialize
$\widehat\Pi_0=\Pi_d$ and set $\widehat\Pi_1=\Pi_d$; for $k\ge2$ it is
defined in \eqref{eq:app-localized-class}.
\\

$\widehat\pi_k$
&
Reported policy: $\widehat\pi_1$ is an arbitrary policy in $\Pi_d$;
for $k\ge2$,
\[
\widehat\pi_k
\in
\arg\max_{\pi\in\widehat\Pi_k}
\widehat V_1^{\pi,k,\mathcal D}.
\]
For $k\ge2$, this is equivalently the empirical best response
to any fixed reference policy.
\\

$\mathcal Z_k^\mathcal G$
&
Cumulative historical sample available after epoch-$k$ inference:
$\mathcal Z_k^\mathcal G:=
\mathcal D_0^\mathcal G
\cup
\bigcup_{j=1}^k\mathcal D_j^{\mathcal D}$.
It is used for sensitivity estimation and subsequent query design.
\\

$N_{k,hsa}^\mathcal G$
&
Number of row-$(h,s,a)$ observations in $\mathcal Z_k^\mathcal G$.
\\

$\widehat M_k^\mathcal G$
&
Historical empirical MDP constructed from $\mathcal Z_k^\mathcal G$.
\\

$\widehat G_k^r,\widehat G_k^P,\widehat\Psi_k$
&
Empirical reward sensitivity, transition sensitivity, and plug-in influence
score defined in
\eqref{eq:app-empirical-sensitivities}--%
\eqref{eq:app-empirical-influence}.
\\

$\Delta_{k,hsa}^\mathcal G$
&
Uniform sensitivity error envelope satisfying
\[
\sup_{r,s'}
\left|
\widehat\Psi_{k,hsa}^{\pi,\bar\pi}(r,s')
-
\Psi_{hsa}^{\pi,\bar\pi}(r,s')
\right|
\le
\Delta_{k,hsa}^\mathcal G
\]
on the historical sensitivity event.
\\

$\widehat\Gamma_{k,hsa}^{\rm raw}$
&
Sample variance of the plug-in influence score $\widehat\Psi_k$
evaluated on the current fresh sample $\mathcal D_k^{\mathcal D}$.
The score and the sample need not be independent.
\\

$\overline\Gamma_{k,hsa}$
&
Reuse-safe upper confidence envelope for the population local variance
$\Gamma_{hsa}$; see \eqref{eq:app-variance-envelope}.
\\

$\mathsf{OffCover}_k$
&
Indicator that every collaborator-unsupported row has sufficient fresh
epoch-$k$ offline coverage for inference and sufficient cumulative offline
coverage for historical sensitivity estimation; see
\eqref{eq:app-offline-coverage}.
\\

$\widehat b_{k,\mathrm{reg}}$
&
Regularized empirical certification margin:
\[
\widehat b_{k,\mathrm{reg}}(\pi,\bar\pi):=
\max\left\{
\epsilon-\widehat D_k(\pi,\bar\pi),
4\epsilon_k
\right\}.
\]
\\

$N_{k+1,hsa}^{+}(\boldsymbol n)$
&
Predicted row count at epoch $k+1$.
\\

$\widehat{\operatorname{rad}}_k^+$
&
Leading prospective radius defined in
\eqref{eq:app-prospective-radius}.
\\

$\mathcal N(B)$
&
Integer state--time allocations using at most $B$ adaptive queries:
\[
\mathcal N(B):=
\left\{
\boldsymbol n=\{n_{hs}\}_{h,s}\in\mathbb N^{H|\mathcal S|}:
\sum_{h,s}n_{hs}\le B
\right\}.
\]
\\

$\widehat\Phi_k^\star(B)$
&
Joint reference-policy/allocation design objective defined in
\eqref{eq:app-minimax-design}.
\\

$\widehat\Phi_{k,\infty}^\star$
&
Infinite-budget value of the current prospective design:
$\widehat\Phi_{k,\infty}^\star:=
\lim_{B\to\infty}\widehat\Phi_k^\star(B)$.
\\

\end{longtable}
\endgroup

\FloatBarrier
\Needspace{12\baselineskip}
\subsection{Complete pseudocode}
\label{app:complete-pseudocode}

The complete pseudocode is given in
Algorithm~\ref{alg:adaptive-collaborator-query-full}.

\par\medskip
\begingroup
\small

\setlength{\abovedisplayskip}{3pt}
\setlength{\belowdisplayskip}{3pt}
\setlength{\abovedisplayshortskip}{2pt}
\setlength{\belowdisplayshortskip}{2pt}

\captionsetup{
    font=small,
    labelfont=bf,
    labelsep=space,
    format=plain,
    justification=raggedright,
    singlelinecheck=false,
    position=top,
    skip=3pt,
    hypcap=false
}

\hrule height 0.8pt
\nobreak
\captionof{algorithm}{Lower-Bound-Guided Adaptive Collaborator Querying}
\label{alg:adaptive-collaborator-query-full}
\nobreak
\hrule height 0.4pt
\nobreak

\begin{algorithmic}[1]

\Require Offline trajectories $\mathcal D_0$, target accuracy $\epsilon$,
confidence level $\delta$, collaborator access, a known lower bound $0<p_{\min}\le p_{\min}^c$ and policy proxy
$\widehat\pi^c$ (with $\widehat\pi^c=\pi^c$ when $\pi^c$ is known).

\State If $\mathcal D_0=\varnothing$, skip the trimming step; otherwise,
apply the two-fold trimming procedure
\citep[Step~2 in Algorithm~3]{gen2024settling} with
$\delta_{\rm trim}=\delta/48$.

\State Split the retained observations into
$\mathcal D_0^\mathcal G$ and $\mathcal D_0^\mathcal D$, and partition
$\mathcal D_0^\mathcal D$ into fresh folds
$\{\mathcal D_{0,k}^{\mathcal D}\}_{k\ge1}$ using weights
$w_k=6/(\pi^2k^2)$.

\State Set
\[
\widehat\Pi_0\gets\Pi_d,
\qquad
\widehat{\mathbf n}_0\gets\mathbf 0.
\]

\For{$k=1,2,\ldots$}

    \State Set
    \[
    \epsilon_k\gets H2^{-(k-1)},
    \qquad
    q_{k-1}\gets|\widehat\Pi_{k-1}|^2,
    \]
    and compute $\beta_k$, $n_k^{\rm uc}$,
    $n_{k+1}^{\rm sen}$, and $m_k^{\rm st}$ according to
    Table~\ref{tab:algorithm-notation},
    \eqref{eq:app-epoch-sensitivity-threshold}, and
    \eqref{eq:app-stabilization-size}.

    \If{$k=1$}
        \If{$\mathsf{OffCover}_1=0$}
            \State \Return any policy in $\Pi_d$.
        \EndIf
    \EndIf

    \State Reveal the observations in
    $\mathcal D_{0,k}^{\mathcal D}$.

    \If{$k=1$}
        \State Set
        $\mathcal Q_1^{\rm ad}\gets\varnothing$.
    \Else
        \State Execute $\widehat{\mathbf n}_{k-1}$ and denote the resulting
        observations by $\mathcal Q_k^{\rm ad}$.
    \EndIf

    \State Query every state--time pair $(h,s)$ exactly
    $m_k^{\rm st}$ times and denote the resulting observations by
    $\mathcal Q_k^{\rm st}$.

    \State Form $\mathcal D_k^{\mathcal D}$ according to
    \eqref{eq:app-fresh-inference-sample}, record
    $\{N_{k,hsa}^{\mathcal D}\}_{h,s,a}$, and construct
    $\widehat M_k^\mathcal D$.

    \If{$k=1$}
        \State Set
        \[
        \widehat\Pi_1\gets\Pi_d
        \]
        and choose any
        \[
        \widehat\pi_1\in\widehat\Pi_1.
        \]
    \Else
        \State Compute
        $\widehat D_k(\cdot,\bar \pi_{k-1})$
        in $\widehat M_k^\mathcal D$.

        \State Update $\widehat\Pi_k$ according to
        \eqref{eq:app-localized-class}.

        \State Choose
        \[
        \widehat\pi_k
        \in
        \arg\max_{\pi\in\widehat\Pi_k}
        \widehat V_1^{\pi,k,\mathcal D}.
        \]
    \EndIf

    \If{$|\widehat\Pi_k|=1$}
        \State \Return the unique policy in $\widehat\Pi_k$.
    \EndIf

    \If{$\epsilon_k\le\epsilon$}
        \State \Return $\widehat\pi_k$.
    \EndIf

    \State Release $\mathcal D_k^{\mathcal D}$ into the historical sample and
    form $\mathcal Z_k^\mathcal G$ according to
    \eqref{eq:app-historical-sample}.

    \State Construct $\widehat M_k^\mathcal G$ and compute
    $\widehat G_k^r$, $\widehat G_k^P$, $\widehat\Psi_k$,
    and $\Delta_k^\mathcal G$ according to
    \eqref{eq:app-empirical-sensitivities}--%
    \eqref{eq:app-sensitivity-envelope}.

    \State Compute
    $\widehat\Gamma_{k,hsa}^{\rm raw}(\pi,\bar\pi)$ and
    $\overline\Gamma_{k,hsa}(\pi,\bar\pi)$
    for all $\pi,\bar\pi\in\widehat\Pi_k$ using
    \eqref{eq:app-raw-variance}--%
    \eqref{eq:app-small-sample-variance-envelope}.

    \State Compute
    $\widehat b_{k,\rm reg}$ according to
    \eqref{eq:app-regularized-margin}.

    \State Set
    \[
    \epsilon_{k+1}\gets\frac{\epsilon_k}{2},
    \qquad
    q_k\gets|\widehat\Pi_k|^2,
    \]
    and compute $\beta_{k+1}$, $n_{k+1}^{\rm uc}$,
    $n_{k+2}^{\rm sen}$, and $m_{k+1}^{\rm st}$ according to
    Table~\ref{tab:algorithm-notation},
    \eqref{eq:app-epoch-sensitivity-threshold}, and
    \eqref{eq:app-stabilization-size}.

    \If{$\mathsf{OffCover}_{k+1}=0$}
        \State \Return $\widehat\pi_k$.
    \EndIf

    \State Form the prospective counts
    $N_{k+1,hsa}^{+}(\mathbf n)$ and prospective radii
    $\widehat{\operatorname{rad}}_k^+
    (\pi,\bar\pi;\mathbf n)$ according to
    \eqref{eq:app-prospective-count} and
    \eqref{eq:app-prospective-radius}.

    \State Compute
    $\widehat\Phi_{k,\infty}^\star$
    according to
    \eqref{eq:app-infinite-budget-limit}.

    \If{$\widehat\Phi_{k,\infty}^\star\ge1/32$}
        \State \Return $\widehat\pi_k$.
    \EndIf

    \State Choose the smallest dyadic $B_k$ satisfying
    \eqref{eq:app-dyadic-budget} and compute
    $(\bar \pi_k,\widehat{\boldsymbol n}_k)$ according to
    \eqref{eq:app-design-optimizer}.

    \State Store
    $\bar \pi_k$ and $\widehat{\boldsymbol n}_k$.

\EndFor

\end{algorithmic}

\nobreak
\hrule height 0.8pt
\endgroup
\par\medskip

\section{Proofs of Main Results}

\subsection{Proof of Theorem~\ref{thm:information-lower-bound}}

\begin{proof}

We first consider $T_{\epsilon,\kappa}^\star(\mathbf{\mathcal M})\in (0,+\infty)$.
Fix
\[
\bar\pi\in\Pi_\epsilon(\mathbf{\mathcal M}),
\]
and define
\[
\begin{aligned}
T_{\bar\pi}
:=
\inf_{\boldsymbol \lambda\in\Delta_{H|\mathcal S|}}
\inf\Bigg\{
t\ge0:\;
\inf_{\widetilde{\mathbf{\mathcal M}}
\in\mathcal A_\epsilon(\bar\pi)}
\Big[
\kappa
J_\mu(
\mathbf{\mathcal M},
\widetilde{\mathbf{\mathcal M}}
)
+
t
I_{\boldsymbol \lambda}(
\mathbf{\mathcal M},
\widetilde{\mathbf{\mathcal M}}
)
\Big]
\ge1
\Bigg\}.
\end{aligned}
\]
Then
\[
T_{\epsilon,\kappa}^\star(\mathbf{\mathcal M})
=
\min_{\bar\pi\in\Pi_\epsilon(\mathbf{\mathcal M})}
T_{\bar\pi}.
\]

We first show that restricting the alternatives to models
with finite row-wise KL divergences does not change
$T_{\bar\pi}$. Define
\[
\mathcal A_\epsilon^{\rm fin}(\bar\pi)
:=
\left\{
\widetilde{\mathcal M}\in\mathcal A_\epsilon(\bar\pi):
\mathrm{KL}_{hsa}(\mathcal M,\widetilde{\mathcal M})<\infty
\text{ for every }(h,s,a)
\right\}.
\]
Fix any $\lambda\in\Delta_{H|\mathcal S|}$ and finite $t\ge0$,
and write
\[
c_{hsa}:=\kappa d_{hsa}^{\mu}+tq_{hsa}^{\lambda},
\qquad
C_{\lambda,t}(\widetilde{\mathcal M})
:=
\sum_{h,s,a}
c_{hsa}\,
\mathrm{KL}_{hsa}(\mathcal M,\widetilde{\mathcal M}).
\]
Thus
\[
C_{\lambda,t}(\widetilde{\mathcal M})
=
\kappa J_\mu(\mathcal M,\widetilde{\mathcal M})
+
tI_\lambda(\mathcal M,\widetilde{\mathcal M}).
\]
We use the conventions $0\cdot(+\infty)=0$ and
$\inf\varnothing=+\infty$.

Let
\[
v:=
\inf_{\widetilde{\mathcal M}\in\mathcal A_\epsilon(\bar\pi)}
C_{\lambda,t}(\widetilde{\mathcal M}),
\qquad
v_{\rm fin}:=
\inf_{\widetilde{\mathcal M}\in
\mathcal A_\epsilon^{\rm fin}(\bar\pi)}
C_{\lambda,t}(\widetilde{\mathcal M}).
\]
The set inclusion gives $v\le v_{\rm fin}$.
If $v=+\infty$, equality is immediate.
Otherwise, for any $\xi>0$, choose
$\widetilde{\mathcal M}\in\mathcal A_\epsilon(\bar\pi)$
such that
\[
C_{\lambda,t}(\widetilde{\mathcal M})\le v+\xi<\infty.
\]

For each row $x=(h,s,a)$, let
$\mathbf R_x,\mathbf P_x$ and
$\widetilde{\mathbf R}_x,\widetilde{\mathbf P}_x$
denote the reward and next-state marginal laws under
$\mathcal M$ and $\widetilde{\mathcal M}$, respectively.
For $\alpha\in(0,1)$, construct
$\widetilde{\mathcal M}_\alpha$ by replacing these
alternative marginals with
\[
\widetilde{\mathbf R}_{x,\alpha}
:=
(1-\alpha)\widetilde{\mathbf R}_x+\alpha\mathbf R_x,
\qquad
\widetilde{\mathbf P}_{x,\alpha}
:=
(1-\alpha)\widetilde{\mathbf P}_x+\alpha \mathbf P_x.
\]
 Then
\[
\mathrm{KL}_x
(\mathcal M,\widetilde{\mathcal M}_\alpha)
\le2\log(1/\alpha)<\infty
\qquad\text{for every }x.
\]

As $\alpha\downarrow0$, the reward means and transition
probabilities of $\widetilde{\mathcal M}_\alpha$ converge
to those of $\widetilde{\mathcal M}$.
By continuity of finite-horizon policy values and the
strict inequality
\[
V_1^\star(\widetilde{\mathcal M})
-
V_1^{\bar\pi}(\widetilde{\mathcal M})
>\epsilon,
\]
we have
$\widetilde{\mathcal M}_\alpha
\in\mathcal A_\epsilon^{\rm fin}(\bar\pi)$
for all sufficiently small $\alpha>0$.

Moreover, finiteness of
$C_{\lambda,t}(\widetilde{\mathcal M})$ implies that
$\mathrm{KL}_x(\mathcal M,\widetilde{\mathcal M})<\infty$
whenever $c_x>0$. By the construction of $\widetilde M_\alpha$,
\[
\mathrm{KL}_x
(\mathcal M,\widetilde{\mathcal M}_\alpha)
\le
\mathrm{KL}_x(\mathcal M,\widetilde{\mathcal M})
-
2\log(1-\alpha).
\]
Since $\sum_x c_x=\kappa H+t$, it follows that, for all
sufficiently small $\alpha>0$,
\[
v_{\rm fin}
\le
C_{\lambda,t}(\widetilde{\mathcal M}_\alpha)
\le
v+\xi-2(\kappa H+t)\log(1-\alpha).
\]
Letting $\alpha\downarrow0$ and then $\xi\downarrow0$
yields $v_{\rm fin}\le v$, proving equality.

Thus the two inner infima agree for every finite $t\ge0$.
Their threshold-feasible sets are therefore identical, so
\begin{align*}
&\inf\left\{
t\ge0:
\inf_{\widetilde{\mathcal M}\in\mathcal A_\epsilon(\bar\pi)}
\left[
\kappa J_\mu(\mathcal M,\widetilde{\mathcal M})
+
tI_\lambda(\mathcal M,\widetilde{\mathcal M})
\right]\ge1
\right\}
\\
&\quad =
\inf\left\{
t\ge0:
\inf_{\widetilde{\mathcal M}\in
\mathcal A_\epsilon^{\rm fin}(\bar\pi)}
\left[
\kappa J_\mu(\mathcal M,\widetilde{\mathcal M})
+
tI_\lambda(\mathcal M,\widetilde{\mathcal M})
\right]\ge1
\right\}.
\end{align*}
Taking the infimum over $\lambda$ shows that
$T_{\bar\pi}$ is unchanged by this restriction.
Since $\bar\pi\in\Pi_\epsilon(\mathcal M)$ was arbitrary,
the same holds for $T_{\epsilon,\kappa}^{\star}(\mathcal M)$.

In the remainder of this proof, all alternative models
are taken from $\mathcal A_\epsilon^{\rm fin}(\bar\pi)$;
we omit the superscript ${\rm fin}$ for notational simplicity.

Fix $0<t<T_{\bar\pi}$.
By the definition of $T_{\bar\pi}$, for every
$\boldsymbol \lambda\in\Delta_{H|\mathcal S|}$,
\[
\inf_{\widetilde{\mathbf{\mathcal M}}
\in\mathcal A_\epsilon(\bar\pi)}
\Big[
\kappa
J_\mu(
\mathbf{\mathcal M},
\widetilde{\mathbf{\mathcal M}}
)
+
t
I_{\boldsymbol \lambda}(
\mathbf{\mathcal M},
\widetilde{\mathbf{\mathcal M}}
)
\Big]
<1.
\]
The map
\[
\boldsymbol \lambda
\mapsto
\inf_{\widetilde{\mathbf{\mathcal M}}
\in\mathcal A_\epsilon(\bar\pi)}
\Big[
\kappa
J_\mu(
\mathbf{\mathcal M},
\widetilde{\mathbf{\mathcal M}}
)
+
t
I_{\boldsymbol \lambda}(
\mathbf{\mathcal M},
\widetilde{\mathbf{\mathcal M}}
)
\Big]
\]
is upper semicontinuous, since it is the infimum of affine continuous
functions of $\boldsymbol \lambda$. Since $\Delta_{H|\mathcal S|}$ is compact, its
supremum is attained. Hence
\begin{equation}
\label{eq:proof-c-barpi-t}
c_{\bar\pi,t}
:=
\sup_{\boldsymbol \lambda\in\Delta_{H|\mathcal S|}}
\inf_{\widetilde{\mathbf{\mathcal M}}
\in\mathcal A_\epsilon(\bar\pi)}
\Big[
\kappa
J_\mu(
\mathbf{\mathcal M},
\widetilde{\mathbf{\mathcal M}}
)
+
t
I_{\boldsymbol \lambda}(
\mathbf{\mathcal M},
\widetilde{\mathbf{\mathcal M}}
)
\Big]
<1.
\end{equation}

For every alternative model
$\widetilde{\mathbf{\mathcal M}}
\in\mathcal A_\epsilon(\bar\pi)$, define
\[
I_{hs}^{c}
(
\mathbf{\mathcal M},
\widetilde{\mathbf{\mathcal M}}
)
:=
\sum_{a\in\mathcal A}
\pi_h^c(a\mid s)
\operatorname{KL}
\left(
\nu_{hsa}^{\mathbf{\mathcal M}}
\,\middle\|\,
\nu_{hsa}^{\widetilde{\mathbf{\mathcal M}}}
\right).
\]
Then
\[
I_{\boldsymbol \lambda}(
\mathbf{\mathcal M},
\widetilde{\mathbf{\mathcal M}}
)
=
\sum_{h,s}
\lambda_{hs}
I_{hs}^{c}
(
\mathbf{\mathcal M},
\widetilde{\mathbf{\mathcal M}}
).
\]

Then
define the information vector
\[
d(\widetilde{\mathbf{\mathcal M}})
:=
\left(
J_\mu(
\mathbf{\mathcal M},
\widetilde{\mathbf{\mathcal M}}
),
\left\{
I_{hs}^{c}
(
\mathbf{\mathcal M},
\widetilde{\mathbf{\mathcal M}}
)
\right\}_{h,s}
\right)
\in
\mathbb R_+^{1+H|\mathcal S|}.
\]
For
\[
d=
\left(
d_0,\{d_{hs}\}_{h,s}
\right)
\in
\mathbb R_+^{1+H|\mathcal S|},
\]
let
\[
F_t(\boldsymbol \lambda,d)
:=
\kappa d_0
+
t\sum_{h,s}\lambda_{hs}d_{hs}.
\]
By construction,
\[
F_t
\left(
\boldsymbol \lambda,
d(\widetilde{\mathbf{\mathcal M}})
\right)
=
\kappa
J_\mu(
\mathbf{\mathcal M},
\widetilde{\mathbf{\mathcal M}}
)
+
t
I_{\boldsymbol \lambda}(
\mathbf{\mathcal M},
\widetilde{\mathbf{\mathcal M}}
).
\]

Let
\[
\mathcal D_{\bar\pi}
:=
\left\{
d(\widetilde{\mathbf{\mathcal M}}):
\widetilde{\mathbf{\mathcal M}}
\in\mathcal A_\epsilon(\bar\pi)
\right\},
\]
and
\[
\mathcal C_{\bar\pi}
:=
\overline{
\operatorname{conv}
(
\mathcal D_{\bar\pi}
)
}.
\]
Since $F_t(\boldsymbol \lambda,\cdot)$ is continuous and linear, for every fixed
$\boldsymbol \lambda\in\Delta_{H|\mathcal S|}$,
\[
\inf_{d\in\mathcal C_{\bar\pi}}
F_t(\boldsymbol \lambda,d)
=
\inf_{d\in\mathcal D_{\bar\pi}}
F_t(\boldsymbol \lambda,d).
\]
Therefore,
\[
c_{\bar\pi,t}
=
\sup_{\boldsymbol \lambda\in\Delta_{H|\mathcal S|}}
\inf_{d\in\mathcal C_{\bar\pi}}
F_t(\boldsymbol \lambda,d).
\]

The set $\Delta_{H|\mathcal S|}$ is compact and convex,
$\mathcal C_{\bar\pi}$ is convex, and $F_t$ is continuous and affine in
each argument. Sion's minimax theorem therefore gives
\begin{equation}
\label{eq:proof-sion}
c_{\bar\pi,t}
=
\inf_{d\in\mathcal C_{\bar\pi}}
\sup_{\boldsymbol \lambda\in\Delta_{H|\mathcal S|}}
F_t(\boldsymbol \lambda,d).
\end{equation}
Choose constants
\[
c_{\bar\pi,t}
<
c_0<c_1<c_2<1.
\]
By \eqref{eq:proof-sion}, there exists
$d^\star\in\mathcal C_{\bar\pi}$ such that
\[
\sup_{\boldsymbol \lambda\in\Delta_{H|\mathcal S|}}
F_t(\boldsymbol \lambda,d^\star)
<
c_0.
\]
Moreover,
\[
\sup_{\boldsymbol \lambda\in\Delta_{H|\mathcal S|}}
F_t(\boldsymbol \lambda,d)
=
\kappa d_0+t\max_{h,s}d_{hs},
\]
so this supremum is continuous in $d$.

Since
\[
\mathcal C_{\bar\pi}
=
\overline{\operatorname{conv}(\mathcal D_{\bar\pi})},
\]
continuity then
implies that there exists
\[
\bar d\in\operatorname{conv}(\mathcal D_{\bar\pi})
\]
such that
\[
\sup_{\boldsymbol \lambda\in\Delta_{H|\mathcal S|}}
F_t(\boldsymbol \lambda,\bar d)
<
c_1.
\]

By Carath\'eodory's theorem, $\bar d$ can be represented as a convex
combination of at most
\(
H|\mathcal S|+2
\)
elements of $\mathcal D_{\bar\pi}$. Hence there exist
\(
m\le H|\mathcal S|+2
\)
alternative models
\[
\widetilde{\mathbf{\mathcal M}}_1,\ldots,
\widetilde{\mathbf{\mathcal M}}_m
\in
\mathcal A_\epsilon(\bar\pi),
\]
and weights
\[
\alpha_1,\ldots,\alpha_m>0,
\qquad
\sum_{i=1}^m\alpha_i=1,
\]
such that
\[
\bar d
=
\sum_{i=1}^m
\alpha_i
d(\widetilde{\mathbf{\mathcal M}}_i).
\]
Consequently, for every
$\boldsymbol \lambda\in\Delta_{H|\mathcal S|}$,
\begin{equation}
\label{eq:uniform-control-mixed-information}
\sum_{i=1}^m
\alpha_i
\Big[
\kappa
J_\mu(
\mathbf{\mathcal M},
\widetilde{\mathbf{\mathcal M}}_i
)
+
t
I_{\boldsymbol \lambda}(
\mathbf{\mathcal M},
\widetilde{\mathbf{\mathcal M}}_i
)
\Big]
<
c_1
<
1.
\end{equation}

Set
\(
n_\delta
:=
\left\lfloor
t\log\left(\frac1\delta\right)
\right\rfloor.
\)
Consider the history consisting of the offline dataset, the learner's
internal randomization, and all collaborator-query observations up to
query $n_\delta\wedge\tau_\delta$. 

For each $i\in[m]$, let
\[
\Lambda_{i,\delta}
:=
\log
\frac{
d\mathbb P_{\mathbf{\mathcal M}}
}{
d\mathbb P_{\widetilde{\mathbf{\mathcal M}}_i}
}
\]
denote the log-likelihood ratio of this truncated history, and define
the mixture law
\[
\mathbb Q_\delta
:=
\sum_{i=1}^m
\alpha_i
\mathbb P_{\widetilde{\mathbf{\mathcal M}}_i}.
\]
Since
\[
\frac{
d\mathbb Q_\delta
}{
d\mathbb P_{\mathbf{\mathcal M}}
}
=
\sum_{i=1}^m
\alpha_i e^{-\Lambda_{i,\delta}},
\]
 we have
\begin{equation}
\label{eq:mixture-likelihood-upper}
\log
\frac{
d\mathbb P_{\mathbf{\mathcal M}}
}{
d\mathbb Q_\delta
}
\le
\sum_{i=1}^m
\alpha_i\Lambda_{i,\delta}.
\end{equation}

For brevity, write
\[
J_i
:=
J_\mu(
\mathbf{\mathcal M},
\widetilde{\mathbf{\mathcal M}}_i
),
\qquad
I_{i,hs}^{c}
:=
I_{hs}^{c}
(
\mathbf{\mathcal M},
\widetilde{\mathbf{\mathcal M}}_i
).
\]

Let
\[
Z_1^0,\ldots,Z_{n_{0,\delta}}^0
\]
denote the independent offline trajectories generated by the behavior
policy $\mu$. For the $j$th trajectory define
\[
\ell_{j,i}^0
:=
\log
\frac{
d\mathbb P_{\mathbf{\mathcal M}}^\mu
}{
d\mathbb P_{\widetilde{\mathbf{\mathcal M}}_i}^\mu
}
(Z_j^0).
\]
By the chain rule for KL divergence and the definition of $J_\mu$,
\[
\mathbb E_{\mathbf{\mathcal M}}
[
\ell_{j,i}^0
]
=
J_i.
\]
Thus
\[
\sum_{j=1}^{n_{0,\delta}}
\ell_{j,i}^0
=
n_{0,\delta}J_i
+
M_{i,\delta}^0,
\]
where
\[
M_{i,\delta}^0
:=
\sum_{j=1}^{n_{0,\delta}}
\bigl(
\ell_{j,i}^0-J_i
\bigr)
\]
is a martingale.

We next consider the collaborator-query observations.
Let
\(
X_q=(H_q,S_q)
\)
be the state-time pair selected at the $q$th query and let
\[
Y_q=(A_q,R_q,S_q')
\]
be the corresponding observation. Let $\mathcal F_{q-1}$ contain the
offline data, the learner's internal randomization, and all information
available immediately before the $q$th query. Then both the decision to
issue the $q$th query and $X_q$ are
$\mathcal F_{q-1}$-measurable.

Conditional on $X_q=(h,s)$,
\[
A_q\sim\pi_h^c(\cdot\mid s)
\]
under both $\mathbf{\mathcal M}$ and
$\widetilde{\mathbf{\mathcal M}}_i$, so the collaborator's action
probability cancels from the likelihood ratio. Define
\[
\ell_{q,i}^{c}
:=
\mathbf 1\{q\le\tau_\delta\}
\log
\frac{
d\mathbb P_{\mathbf{\mathcal M}}
(
Y_q\mid X_q,\mathcal F_{q-1}
)
}{
d\mathbb P_{\widetilde{\mathbf{\mathcal M}}_i}
(
Y_q\mid X_q,\mathcal F_{q-1}
)
}.
\]
Then
\[
\mathbb E_{\mathbf{\mathcal M}}
\left[
\ell_{q,i}^{c}
\mid
\mathcal F_{q-1}
\right]
=
\mathbf 1\{q\le\tau_\delta\}
I_{i,H_qS_q}^{c}.
\]

Define
\[
N_{hs}^{(\delta)}
:=
\sum_{q=1}^{n_\delta}
\mathbf 1
\left\{
q\le\tau_\delta,\,
X_q=(h,s)
\right\}.
\]
Then
\[
\sum_{h,s}
N_{hs}^{(\delta)}
=
n_\delta\wedge\tau_\delta
\le
n_\delta.
\]
Summing the query-level log-likelihood increments gives
\[
\sum_{q=1}^{n_\delta}
\ell_{q,i}^{c}
=
\sum_{h,s}
N_{hs}^{(\delta)}
I_{i,hs}^{c}
+
M_{i,\delta}^{c},
\]
where
\[
M_{i,\delta}^{c}
:=
\sum_{q=1}^{n_\delta}
\left[
\ell_{q,i}^{c}
-
\mathbb E_{\mathbf{\mathcal M}}
\left[
\ell_{q,i}^{c}
\mid
\mathcal F_{q-1}
\right]
\right]
\]
is a martingale.

Combining the offline and collaborator-query parts yields
\[
\Lambda_{i,\delta}
=
n_{0,\delta}J_i
+
\sum_{h,s}
N_{hs}^{(\delta)}
I_{i,hs}^{c}
+
M_{i,\delta},
\]
where
\[
M_{i,\delta}
:=
M_{i,\delta}^0+M_{i,\delta}^{c}.
\]
Therefore,
\[
\sum_{i=1}^m
\alpha_i\Lambda_{i,\delta}
=
A_\delta+ M_\delta,
\]
where
\[
A_\delta
:=
\sum_{i=1}^m
\alpha_i
\left[
n_{0,\delta}J_i
+
\sum_{h,s}
N_{hs}^{(\delta)}
I_{i,hs}^{c}
\right]
\]
and
\[
\mathcal M_\delta
:=
\sum_{i=1}^m
\alpha_iM_{i,\delta}.
\]

We first control the predictable term $A_\delta$.
Fix an arbitrary
\[
\boldsymbol\lambda^0\in\Delta_{H|\mathcal S|}.
\]
For each realization define
\[
\lambda_{hs}^{(\delta)}
:=
\frac{
N_{hs}^{(\delta)}
}{
n_\delta
}
+
\left(
1-
\frac{
\sum_{h',s'}
N_{h's'}^{(\delta)}
}{
n_\delta
}
\right)
\lambda_{hs}^0.
\]
Then
\[
\boldsymbol \lambda^{(\delta)}
\in\Delta_{H|\mathcal S|}.
\]
Since
\[
\sum_{h,s}
N_{hs}^{(\delta)}I_{i,hs}^{c}
\le
n_\delta
\sum_{h,s}
\lambda_{hs}^{(\delta)}I_{i,hs}^{c}
=
n_\delta
I_{{\boldsymbol \lambda}^{(\delta)}}
(
\mathbf{\mathcal M},
\widetilde{\mathbf{\mathcal M}}_i
),
\]
we have
\[
\begin{aligned}
\frac{A_\delta}{\log(1/\delta)}
&\le
\sum_{i=1}^m
\alpha_i
\left[
\frac{n_{0,\delta}}{\log(1/\delta)}J_i
+
\frac{n_\delta}{\log(1/\delta)}
I_{\boldsymbol \lambda^{(\delta)}}
(
\mathbf{\mathcal M},
\widetilde{\mathbf{\mathcal M}}_i
)
\right]
\\
&=
\sum_{i=1}^m
\alpha_i
\left[
\kappa J_i
+
t
I_{\boldsymbol \lambda^{(\delta)}}
(
\mathbf{\mathcal M},
\widetilde{\mathbf{\mathcal M}}_i
)
\right]
+
R_\delta,
\end{aligned}
\]
where
\[
|R_\delta|
\le
\left|
\frac{n_{0,\delta}}{\log(1/\delta)}
-\kappa
\right|
\sum_{i=1}^m\alpha_iJ_i+
\left|
\frac{n_\delta}{\log(1/\delta)}
-t
\right|
\sum_{i=1}^m
\alpha_i
\sup_{\boldsymbol \lambda\in\Delta_{H|\mathcal S|}}
I_{\boldsymbol \lambda}
(
\mathbf{\mathcal M},
\widetilde{\mathbf{\mathcal M}}_i
).
\]
 Since
\[
\frac{n_{0,\delta}}{\log(1/\delta)}
\rightarrow\kappa,
\qquad
\frac{n_\delta}{\log(1/\delta)}
\rightarrow t,
\]
we have
\[
R_\delta\rightarrow0.
\]

Hence, by
\eqref{eq:uniform-control-mixed-information},
\[
\frac{A_\delta}{\log(1/\delta)}
\le
c_1+o(1).
\]
Thus, for all sufficiently small $\delta$,
\begin{equation}
\label{eq:upper-bound-predictable}
A_\delta
\le
\frac{c_1+c_2}{2}
\log(1/\delta).
\end{equation}

It remains to control the martingale term $\mathcal M_\delta$.
For any $P,Q$ with $\operatorname{KL}(P\|Q)<\infty$,
\[
\mathbb E_P\left|
\log\frac{dP}{dQ}
\right|
\le
\operatorname{KL}(P\|Q)+\frac{2}{e}
<\infty.
\]
Since only finitely many positive-weight alternatives and finitely many
query rows are involved, the centered log-likelihood increments forming
$\mathcal M_\delta$ are uniformly integrable. Moreover, after concatenating
the offline and collaborator-query increments, they form a
martingale-difference triangular array. Therefore, by Theorem 1 in
\citet{andrews1988laws}
\[
\frac{\mathcal M_\delta}
     {n_{0,\delta}+n_\delta}
\xrightarrow{\mathbb P_{\mathbf{\mathcal M}}}
0.
\]
Since
\[
\frac{n_{0,\delta}+n_\delta}{\log(1/\delta)}
\longrightarrow
\kappa+t,
\]
it follows that
\[
\frac{\mathcal M_\delta}{\log(1/\delta)}
\xrightarrow{\mathbb P_{\mathbf{\mathcal M}}}
0.
\]

Combining this with
\eqref{eq:mixture-likelihood-upper}
and
\eqref{eq:upper-bound-predictable},
we obtain
\begin{equation}
\label{eq:mixture-likelihood-tail}
\mathbb P_{\mathbf{\mathcal M}}
\left(
\log
\frac{
d\mathbb P_{\mathbf{\mathcal M}}
}{
d\mathbb Q_\delta
}
>
c_2\log(1/\delta)
\right)
\longrightarrow0.
\end{equation}

Now define
\[
C_{\bar\pi,\delta}
:=
\left\{
\widehat\pi=\bar\pi,\;
\tau_\delta\le n_\delta
\right\}.
\]
For every $i\in[m]$,
\[
\widetilde{\mathbf{\mathcal M}}_i
\in
\mathcal A_\epsilon(\bar\pi),
\]
so
\[
V_1^\star(\widetilde{\mathbf{\mathcal M}}_i)
-
V_1^{\bar\pi}(\widetilde{\mathbf{\mathcal M}}_i)
>
\epsilon.
\]
Thus $\bar\pi$ is not $\epsilon$-optimal under
$\widetilde{\mathbf{\mathcal M}}_i$, and the
$(\epsilon,\delta)$-PAC property gives
\[
\mathbb P_{\widetilde{\mathbf{\mathcal M}}_i}
(
C_{\bar\pi,\delta}
)
\le
\mathbb P_{\widetilde{\mathbf{\mathcal M}}_i}
(
\widehat\pi=\bar\pi
)
\le
\delta.
\]
Therefore,
\[
\mathbb Q_\delta(C_{\bar\pi,\delta})
=
\sum_{i=1}^m
\alpha_i
\mathbb P_{\widetilde{\mathbf{\mathcal M}}_i}
(
C_{\bar\pi,\delta}
)
\le
\delta.
\]

For any event $C$ and any $x>0$, by Proposition 18 in \citet{degenne2019pure},
\[
\mathbb P_{\mathbf{\mathcal M}}(C)
\le
e^x\mathbb Q_\delta(C)
+
\mathbb P_{\mathbf{\mathcal M}}
\left(
\log
\frac{
d\mathbb P_{\mathbf{\mathcal M}}
}{
d\mathbb Q_\delta
}
>x
\right).
\]
Taking
\[
x=c_2\log(1/\delta)
\]
and using \eqref{eq:mixture-likelihood-tail} yields
\begin{equation}
\label{eq:single-upper-bound-stopping-time}
\mathbb P_{\mathbf{\mathcal M}}
(
C_{\bar\pi,\delta}
)
\le
\delta^{\,1-c_2}
+
o(1)
\longrightarrow0.
\end{equation}

We now conclude the proof.
Suppose first that
\[
0<
T_{\epsilon,\kappa}^\star(\mathbf{\mathcal M})
<
\infty.
\]
Fix $\xi\in(0,1)$ and let
\[
t
=
(1-\xi)
T_{\epsilon,\kappa}^\star(\mathbf{\mathcal M}).
\]
Since
\[
T_{\epsilon,\kappa}^\star(\mathbf{\mathcal M})
=
\min_{\bar\pi\in\Pi_\epsilon(\mathbf{\mathcal M})}
T_{\bar\pi},
\]
we have
\[
t<T_{\bar\pi}
\]
for every
\(
\bar\pi\in\Pi_\epsilon(\mathbf{\mathcal M}).
\)
Therefore,
\eqref{eq:single-upper-bound-stopping-time}
holds for every such $\bar\pi$.

By the $(\epsilon,\delta)$-PAC property under
$\mathbf{\mathcal M}$,
\[
\mathbb P_{\mathbf{\mathcal M}}
\left(
\widehat\pi
\notin
\Pi_\epsilon(\mathbf{\mathcal M})
\right)
\le
\delta.
\]
Moreover, since $\tau_\delta$ is integer-valued,
\[
\tau_\delta
<
t\log(1/\delta)
\quad\Longrightarrow\quad
\tau_\delta
\le
n_\delta.
\]
Since $\Pi_d$ is finite,
\[
\begin{aligned}
\mathbb P_{\mathbf{\mathcal M}}
\left(
\tau_\delta
<
t\log(1/\delta)
\right)
&\le
\mathbb P_{\mathbf{\mathcal M}}
\left(
\widehat\pi
\notin
\Pi_\epsilon(\mathbf{\mathcal M})
\right)+
\sum_{\bar\pi\in\Pi_\epsilon(\mathbf{\mathcal M})}
\mathbb P_{\mathbf{\mathcal M}}
\left(
\widehat\pi=\bar\pi,\;
\tau_\delta\le n_\delta
\right)
\\
&\le
\delta+o(1)
\rightarrow0.
\end{aligned}
\]
It follows that
\[
\mathbb E_{\mathbf{\mathcal M}}[\tau_\delta]
\ge
t\log(1/\delta)
\,
\mathbb P_{\mathbf{\mathcal M}}
\left(
\tau_\delta
\ge
t\log(1/\delta)
\right),
\]
and therefore
\[
\liminf_{\delta\downarrow0}
\frac{
\mathbb E_{\mathbf{\mathcal M}}[\tau_\delta]
}{
\log(1/\delta)
}
\ge
t.
\]
Hence
\[
\liminf_{\delta\downarrow0}
\frac{
\mathbb E_{\mathbf{\mathcal M}}[\tau_\delta]
}{
\log(1/\delta)
}
\ge
(1-\xi)
T_{\epsilon,\kappa}^\star(\mathbf{\mathcal M}).
\]
Letting $\xi\downarrow0$ gives
\[
\liminf_{\delta\downarrow0}
\frac{
\mathbb E_{\mathbf{\mathcal M}}[\tau_\delta]
}{
\log(1/\delta)
}
\ge
T_{\epsilon,\kappa}^\star(\mathbf{\mathcal M}).
\]

If
\(
T_{\epsilon,\kappa}^\star(\mathbf{\mathcal M})=0,
\)
the desired inequality is immediate.

Finally, suppose
\(
T_{\epsilon,\kappa}^\star(\mathbf{\mathcal M})
=
+\infty.
\)
Since $\Pi_\epsilon(\mathbf{\mathcal M})$ is finite,
this implies
\[
T_{\bar\pi}=+\infty
\qquad
\forall
\bar\pi\in\Pi_\epsilon(\mathbf{\mathcal M}).
\]
Thus the preceding argument applies to every finite $t>0$, giving
\[
\liminf_{\delta\downarrow0}
\frac{
\mathbb E_{\mathbf{\mathcal M}}[\tau_\delta]
}{
\log(1/\delta)
}
\ge
t
\]
for every finite $t$. Therefore,
\[
\liminf_{\delta\downarrow0}
\frac{
\mathbb E_{\mathbf{\mathcal M}}[\tau_\delta]
}{
\log(1/\delta)
}
=
+\infty.
\]

This completes the proof.
\end{proof}

\begin{remark}
The proof of Theorem~\ref{thm:information-lower-bound} follows from the
 proof strategy of Theorem~1 in
\citet{al2023towards}.  The main difference in our setting is that
the information comes from two sources and we also retain the exact KL-divergence formulation of the
information quantities throughout the lower-bound argument.
\end{remark}

\subsection{Proof of Corollary~\ref{cor:local-quadratic-lower-bound}}

\begin{proof}
Fix
\[
\bar\pi\in\Pi_\epsilon(\mathbf{\mathcal M}),
\qquad
\boldsymbol\lambda\in\Delta_{H|\mathcal S|},
\qquad
t\ge0,
\]
and write
\[
c_{hsa}
=
\kappa d_{hsa}^{\mu}
+
tq_{hsa}^{\boldsymbol\lambda}.
\] 
By definition,
\begin{equation}
\kappa J_\mu(
\mathbf{\mathcal M},
\widetilde{\mathbf{\mathcal M}}
)
+
tI_{\boldsymbol\lambda}(
\mathbf{\mathcal M},
\widetilde{\mathbf{\mathcal M}}
)
=
\sum_{h,s,a}
c_{hsa}
\KL_{hsa}(
\mathbf{\mathcal M},
\widetilde{\mathbf{\mathcal M}}
).
\label{eq:lq-weighted-information}
\end{equation}

We start from the exact alternative problem. Since
\[
V_1^\star(\widetilde{\mathbf{\mathcal M}})
=
\max_{\pi\in\Pi_d}
V_1^\pi(\widetilde{\mathbf{\mathcal M}}),
\]
we have
\[
\mathcal A_\epsilon(\bar\pi)
=
\bigcup_{\pi\in\Pi_d\setminus\{\bar\pi\}}
\left\{
\widetilde{\mathbf{\mathcal M}}:
V_1^\pi(\widetilde{\mathbf{\mathcal M}})
-
V_1^{\bar\pi}(\widetilde{\mathbf{\mathcal M}})
>
\epsilon
\right\}.
\]
Hence, since $\Pi_d$ is finite,
\begin{equation}
\inf_{\widetilde{\mathbf{\mathcal M}}
\in\mathcal A_\epsilon(\bar\pi)}
\left[
\kappa J_\mu(
\mathbf{\mathcal M},
\widetilde{\mathbf{\mathcal M}}
)
+
tI_{\boldsymbol\lambda}(
\mathbf{\mathcal M},
\widetilde{\mathbf{\mathcal M}}
)
\right]
=
\min_{\pi\in\Pi_d\setminus\{\bar\pi\}}
\inf_{\substack{
\widetilde{\mathbf{\mathcal M}}:\\
V_1^\pi(\widetilde{\mathbf{\mathcal M}})
-
V_1^{\bar\pi}(\widetilde{\mathbf{\mathcal M}})
>
\epsilon}}
\sum_{h,s,a}
c_{hsa}
\KL_{hsa}(
\mathbf{\mathcal M},
\widetilde{\mathbf{\mathcal M}}
).
\label{eq:lq-exact-challenger-problem}
\end{equation}
It therefore suffices to derive the local approximation of the inner
problem for a fixed challenger
$\pi\in\Pi_d\setminus\{\bar\pi\}$.

For each row $(h,s,a)$, recall
\[
\nu^{\mathbf{\mathcal M}}_{hsa}
=
\mathcal L_{\mathbf{\mathcal M}}
\left(
R_h,S_{h+1}
\mid
S_h=s,A_h=a
\right).
\]

We consider bounded additive score directions
\[
g_{hsa}(R_h,S_{h+1})
=
g_{hsa}^r(R_h)+g_{hsa}^P(S_{h+1})
\]
satisfying
\begin{equation}
\mathbb E_{\mathbf{\mathcal M}}
[g_{hsa}\mid s,a]
=
0,
\qquad
\mathbb E_{\mathbf{\mathcal M}}
[g_{hsa}^2\mid s,a]
=
1.
\label{eq:lq-unit-score}
\end{equation}
The unit-variance normalization is imposed only on nonzero
perturbation directions.
An unperturbed row is represented by
$\eta_{hsa}=0$ and $g_{hsa}\equiv0$.
In particular, if a row admits no nonzero centered score,
it is left unchanged and contributes zero to both the
linearized constraint and the quadratic information cost.
All derivative and Taylor identities below for normalized
scores apply only to nonzero directions.

For such a direction and any $\eta$ in a neighborhood of zero, define
the perturbed row law explicitly by
\begin{equation}
\frac{
d\nu_{hsa}^{\eta}
}{
d\nu_{hsa}^{\mathbf{\mathcal M}}
}
=
\exp\left\{
\eta g_{hsa}
-
\psi_{hsa}(\eta)
\right\},
\qquad
\psi_{hsa}(\eta)
:=
\log
\mathbb E_{\mathbf{\mathcal M}}
\left[
e^{\eta g_{hsa}}
\mid s,a
\right].
\label{eq:lq-exponential-path}
\end{equation}
Because $g_{hsa}$ is bounded, $\psi_{hsa}(\eta)$ is finite for both
positive and negative $\eta$, so
\eqref{eq:lq-exponential-path} defines a two-sided local path through
$\nu_{hsa}^{\mathbf{\mathcal M}}$.

Importantly, every row distribution satisfying
\eqref{eq:lq-admissible-local-alternative} admits such a bounded
exponential-tilt representation. Indeed, let
\[
\ell_{hsa}
:=
\log
\frac{
d\nu_{hsa}^{\widetilde{\mathbf{\mathcal M}}}
}{
d\nu_{hsa}^{\mathbf{\mathcal M}}
}.
\]
By \eqref{eq:lq-admissible-local-alternative},
$\ell_{hsa}\in L^\infty(\nu_{hsa}^{\mathbf{\mathcal M}})$.
If
\(
\operatorname{Var}_{\mathbf{\mathcal M}}
(\ell_{hsa}\mid s,a)>0,
\)
set
\[
\eta_{hsa}
:=
\sqrt{
\operatorname{Var}_{\mathbf{\mathcal M}}
(\ell_{hsa}\mid s,a)
},
\qquad
g_{hsa}
:=
\frac{
\ell_{hsa}
-
\mathbb E_{\mathbf{\mathcal M}}
[\ell_{hsa}\mid s,a]
}{
\eta_{hsa}
}.
\]
Then $g_{hsa}$ is bounded and satisfies
\[
\mathbb E_{\mathbf{\mathcal M}}
[g_{hsa}\mid s,a]=0,
\qquad
\mathbb E_{\mathbf{\mathcal M}}
[g_{hsa}^2\mid s,a]=1.
\]
Moreover,
\[
\psi_{hsa}(\eta_{hsa})
=
\log
\mathbb E_{\mathbf{\mathcal M}}
\left[
e^{\eta_{hsa}g_{hsa}}
\mid s,a
\right]
=
-
\mathbb E_{\mathbf{\mathcal M}}
[\ell_{hsa}\mid s,a],
\]
because
\[
\mathbb E_{\mathbf{\mathcal M}}
[e^{\ell_{hsa}}\mid s,a]
=
\int
\frac{
d\nu_{hsa}^{\widetilde{\mathbf{\mathcal M}}}
}{
d\nu_{hsa}^{\mathbf{\mathcal M}}
}
\,d\nu_{hsa}^{\mathbf{\mathcal M}}
=
1.
\]
Therefore
\[
\eta_{hsa}g_{hsa}
-
\psi_{hsa}(\eta_{hsa})
=
\ell_{hsa},
\]
and hence
\[
\frac{
d\nu_{hsa}^{\widetilde{\mathbf{\mathcal M}}}
}{
d\nu_{hsa}^{\mathbf{\mathcal M}}
}
=
\exp\left\{
\eta_{hsa}g_{hsa}
-
\psi_{hsa}(\eta_{hsa})
\right\}.
\]
If
$\operatorname{Var}_{\mathbf{\mathcal M}}(\ell_{hsa}\mid s,a)=0$,
it implies
$\ell_{hsa}=0$ almost surely, so
$\nu_{hsa}^{\widetilde{\mathbf{\mathcal M}}}
=
\nu_{hsa}^{\mathbf{\mathcal M}}$,
which corresponds to $\eta_{hsa}=0$.

Thus the bounded exponential-tilt parametrization is exact for the
row-wise alternative class specified in
\eqref{eq:lq-admissible-local-alternative}.

By \eqref{eq:lq-unit-score},
\[
\psi_{hsa}(0)=0,
\qquad
\psi_{hsa}'(0)=0,
\qquad
\psi_{hsa}''(0)=1.
\]
Since $g_{hsa}$ is bounded, $\psi_{hsa}$ has bounded derivatives in a
neighborhood of zero, and therefore
\begin{equation}
\psi_{hsa}(\eta)
=
\frac{\eta^2}{2}
+
o(|\eta|^2).
\label{eq:lq-log-normalizer-expansion}
\end{equation}
It follows from
\eqref{eq:lq-exponential-path} that
\begin{equation}
\frac{
d\nu_{hsa}^{\eta}
}{
d\nu_{hsa}^{\mathbf{\mathcal M}}
}
=
\exp\left\{
\eta g_{hsa}
-
\psi_{hsa}(\eta)
\right\}=
1+\eta g_{hsa}+O(\eta^2),
\label{eq:lq-local-density-expansion}
\end{equation}
where the remainder is uniform over the support of the row law.

Consequently, for every bounded row-wise function
$\varphi(R_h,S_{h+1})$,
\begin{equation}
\mathbb E_{\nu_{hsa}^{\eta}}
[\varphi]
-
\mathbb E_{\nu_{hsa}^{\mathbf{\mathcal M}}}
[\varphi]
=
\mathbb E_{\mathbf{\mathcal M}}
\left[
\varphi
\left(
\frac{
d\nu_{hsa}^{\eta}
}{
d\nu_{hsa}^{\mathbf{\mathcal M}}
}
-1
\right)
\middle|s,a
\right]
=
\eta\,
\mathbb E_{\mathbf{\mathcal M}}
[
\varphi g_{hsa}\mid s,a
]
+
O(\eta^2).
\label{eq:lq-local-expectation-expansion}
\end{equation}

Finally,
\begin{align}
\KL\!\left(
\nu_{hsa}^{\mathbf{\mathcal M}},
\nu_{hsa}^{\eta}
\right)
&=
\mathbb E_{\mathbf{\mathcal M}}
\left[
\log
\frac{
d\nu_{hsa}^{\mathbf{\mathcal M}}
}{
d\nu_{hsa}^{\eta}
}
\middle|s,a
\right]
\nonumber\\
&=
-\eta\,
\mathbb E_{\mathbf{\mathcal M}}
[g_{hsa}\mid s,a]
+
\psi_{hsa}(\eta)
\nonumber\\
&=
\frac{\eta^2}{2}
+
o(|\eta|^2),
\label{eq:lq-row-kl}
\end{align}
where the last equality follows from
\eqref{eq:lq-unit-score} and
\eqref{eq:lq-log-normalizer-expansion}.

Applying \eqref{eq:lq-local-expectation-expansion} to $\varphi(R,S')=R$ and to
$\varphi(R,S')=\mathbf 1\{S'=s'\}$ yields
\[
\widetilde r_h(s,a)-r_h(s,a)
=
\eta_{hsa}
\mathbb E_{\mathbf{\mathcal M}}
[
R_hg_{hsa}\mid s,a
]
+
O(\eta_{hsa}^2),
\]
and
\[
\widetilde P_h(s'\mid s,a)-P_h(s'\mid s,a)
=
\eta_{hsa}
\mathbb E_{\mathbf{\mathcal M}}
[
\mathbf 1\{S_{h+1}=s'\}g_{hsa}
\mid s,a
]
+
O(\eta_{hsa}^2).
\]

We now linearize the exact alternative constraint in
\eqref{eq:lq-exact-challenger-problem}. By
Lemma~\ref{lem:model-difference-expansion},
\begin{align}
\bigl(
V_1^\pi(\widetilde{\mathbf{\mathcal M}}_\eta)
-
V_1^{\bar\pi}(\widetilde{\mathbf{\mathcal M}}_\eta)
\bigr)
-
\bigl(
V_1^\pi(\mathbf{\mathcal M})
-
V_1^{\bar\pi}(\mathbf{\mathcal M})
\bigr)
&=
\sum_{h,s,a}
G_{hsa}^r(\pi,\bar\pi)
\bigl(
\widetilde r_h(s,a)-r_h(s,a)
\bigr)
\nonumber
\\
&+
\sum_{h,s,a,s'}
G_{hsa}^P(\pi,\bar\pi)(s')
\bigl(
\widetilde P_h(s'\mid s,a)
-
P_h(s'\mid s,a)
\bigr)
+
O(\|\eta\|_\infty^2).
\label{eq:lq-value-expansion-start}
\end{align}
Here the remainder is second order because the explicit remainder in
Lemma~\ref{lem:model-difference-expansion} is a sum of products between
a model perturbation and an occupancy perturbation, and the latter is
$O(\|\eta\|_\infty)$ for a finite-horizon finite-state MDP.

Using
\[
\Psi_{hsa}^{\pi,\bar\pi}(R_h,S_{h+1})
=
G_{hsa}^r(\pi,\bar\pi)R_h
+
G_{hsa}^P(\pi,\bar\pi)(S_{h+1}),
\]
equation \eqref{eq:lq-value-expansion-start} becomes
\begin{align}
\bigl(
V_1^\pi(\widetilde{\mathbf{\mathcal M}}_\eta)
-
V_1^{\bar\pi}(\widetilde{\mathbf{\mathcal M}}_\eta)
\bigr)
-
\bigl(
V_1^\pi(\mathbf{\mathcal M})
-
V_1^{\bar\pi}(\mathbf{\mathcal M})
\bigr)
=
\sum_{h,s,a}
\eta_{hsa}
\mathbb E_{\mathbf{\mathcal M}}
\left[
\Psi_{hsa}^{\pi,\bar\pi}
g_{hsa}
\mid s,a
\right]
+
o(\|\eta\|_\infty).
\label{eq:lq-value-first-order}
\end{align}
Since the scores are centered,
\[
\mathbb E_{\mathbf{\mathcal M}}
[
\Psi_{hsa}^{\pi,\bar\pi}g_{hsa}\mid s,a
]
=
\mathbb E_{\mathbf{\mathcal M}}
\left[
\left(
\Psi_{hsa}^{\pi,\bar\pi}
-
\mathbb E_{\mathbf{\mathcal M}}
[
\Psi_{hsa}^{\pi,\bar\pi}\mid s,a
]
\right)
g_{hsa}
\mid s,a
\right].
\]

Recall
\[
b(\pi,\bar\pi)
=
\epsilon
-
\left(
V_1^\pi(\mathbf{\mathcal M})
-
V_1^{\bar\pi}(\mathbf{\mathcal M})
\right).
\]
Therefore, retaining only the first-order term in
\eqref{eq:lq-value-first-order}, we replace the exact
challenger constraint
\[
V_1^\pi(\widetilde{\mathbf{\mathcal M}})
-
V_1^{\bar\pi}(\widetilde{\mathbf{\mathcal M}})
>
\epsilon
\]
by the non-strict linearized constraint
\begin{equation}
\sum_{h,s,a}
\eta_{hsa}
\mathbb E_{\mathbf{\mathcal M}}
\left[
\left(
\Psi_{hsa}^{\pi,\bar\pi}
-
\mathbb E_{\mathbf{\mathcal M}}
[
\Psi_{hsa}^{\pi,\bar\pi}\mid s,a
]
\right)
g_{hsa}
\mid s,a
\right]
\ge
b(\pi,\bar\pi).
\label{eq:lq-linearized-alternative}
\end{equation}
Whenever the strict linearized constraint is feasible,
replacing $>$ by $\ge$ leaves the infimum of the quadratic
information cost unchanged. 

Likewise, substituting \eqref{eq:lq-row-kl} into the exact information
cost \eqref{eq:lq-weighted-information} and retaining only the leading
quadratic term gives
\begin{equation}
\frac12
\sum_{h,s,a}
c_{hsa}\eta_{hsa}^2.
\label{eq:lq-leading-information-cost}
\end{equation}

Thus, after replacing both the exact alternative constraint and the
exact information cost by their leading local terms, the
challenger-specific problem becomes
\begin{equation}
\inf_{\{g_{hsa},\eta_{hsa}\}}
\left\{
\frac12
\sum_{h,s,a}
c_{hsa}\eta_{hsa}^2\right\},
\label{eq:lq-score-length-program}
\end{equation}
under \eqref{eq:lq-unit-score} and
\eqref{eq:lq-linearized-alternative}.

We first optimize over the score directions. By Cauchy-Schwarz inequality,
\begin{equation}
\left|
\mathbb E_{\mathbf{\mathcal M}}
\left[
\left(
\Psi_{hsa}^{\pi,\bar\pi}
-
\mathbb E_{\mathbf{\mathcal M}}
[
\Psi_{hsa}^{\pi,\bar\pi}\mid s,a
]
\right)
g_{hsa}
\mid s,a
\right]
\right|
\le
\sqrt{
\Gamma_{hsa}(\pi,\bar\pi)
}.
\label{eq:lq-score-cs}
\end{equation}
Whenever $\Gamma_{hsa}(\pi,\bar\pi)>0$, equality is attained by
\begin{equation}
g_{hsa}^\star
=
\frac{
\Psi_{hsa}^{\pi,\bar\pi}
-
\mathbb E_{\mathbf{\mathcal M}}
[
\Psi_{hsa}^{\pi,\bar\pi}\mid s,a
]
}{
\sqrt{\Gamma_{hsa}(\pi,\bar\pi)}
}.
\label{eq:lq-optimal-score}
\end{equation}
This is an admissible local direction. Indeed,
$R_h\in[0,1]$, $S_{h+1}$ has finite support, and
$\Psi_{hsa}^{\pi,\bar\pi}$ is additive in $(R_h,S_{h+1})$; hence
$g_{hsa}^\star$ is bounded and additive, so the row-wise exponential
tilt described above remains within the reward--transition product
model. Its sign may be reversed without affecting either
\eqref{eq:lq-unit-score} or the quadratic information cost.

Consequently, after optimizing over $g_{hsa}$, the local problem is
\begin{equation}
\inf_{\{\eta_{hsa}\}}
\left\{
\frac12
\sum_{h,s,a}
c_{hsa}\eta_{hsa}^2:
\sum_{h,s,a}
\sqrt{\Gamma_{hsa}(\pi,\bar\pi)}
\,|\eta_{hsa}|
\ge
b(\pi,\bar\pi)
\right\}.
\label{eq:lq-length-program}
\end{equation}

Suppose first that $b(\pi,\bar\pi)>0$ and
$c_{hsa}>0$ whenever
$\Gamma_{hsa}(\pi,\bar\pi)>0$. Then
\[
\left(
\sum_{h,s,a}
\sqrt{\Gamma_{hsa}(\pi,\bar\pi)}
\,|\eta_{hsa}|
\right)^2
\le
\left(
\sum_{h,s,a}
\frac{
\Gamma_{hsa}(\pi,\bar\pi)
}{
c_{hsa}
}
\right)
\left(
\sum_{h,s,a}
c_{hsa}\eta_{hsa}^2
\right).
\]
Hence every feasible perturbation satisfies
\[
\frac12
\sum_{h,s,a}
c_{hsa}\eta_{hsa}^2
\ge
\frac{
b(\pi,\bar\pi)^2
}{
2\displaystyle
\sum_{h,s,a}
\frac{
\Gamma_{hsa}(\pi,\bar\pi)
}{
c_{hsa}
}
}.
\]
Equality is attained by choosing
\[
|\eta_{hsa}|
=
\frac{
b(\pi,\bar\pi)
}{
\displaystyle
\sum_{\tilde h,\tilde s,\tilde a}
\frac{
\Gamma_{\tilde h\tilde s\tilde a}(\pi,\bar\pi)
}{
c_{\tilde h\tilde s\tilde a}
}
}
\,
\frac{
\sqrt{\Gamma_{hsa}(\pi,\bar\pi)}
}{
c_{hsa}
}.
\]
Therefore the challenger-specific local quadratic information is
\begin{equation}
\frac{
b(\pi,\bar\pi)^2
}{
2\displaystyle
\sum_{h,s,a}
\frac{
\Gamma_{hsa}(\pi,\bar\pi)
}{
\kappa d_{hsa}^\mu
+
tq_{hsa}^{\boldsymbol\lambda}
}
}.
\label{eq:lq-challenger-information}
\end{equation}

If $c_{hsa}=0$ for some row with
$\Gamma_{hsa}(\pi,\bar\pi)>0$, that row can satisfy the 
constraint at zero quadratic information cost, and the local infimum is
zero.
If $\Gamma_{hsa}(\pi,\bar\pi)=0$ for every row and
$b(\pi,\bar\pi)>0$, the constraint is infeasible and the local
cost is $+\infty$. Finally, if $b(\pi,\bar\pi)=0$, the local infimum is
zero.

Taking the minimum over challengers, the local quadratic surrogate of
the exact inner information problem in
\eqref{eq:lq-exact-challenger-problem} is therefore
\[
\min_{\pi\in\Pi_d\setminus\{\bar\pi\}}
\frac{
b(\pi,\bar\pi)^2
}{
2\displaystyle
\sum_{h,s,a}
\frac{
\Gamma_{hsa}(\pi,\bar\pi)
}{
\kappa d_{hsa}^\mu
+
tq_{hsa}^{\boldsymbol\lambda}
}
}.
\]
Hence the exact information constraint
\[
\inf_{\widetilde{\mathbf{\mathcal M}}
\in\mathcal A_\epsilon(\bar\pi)}
\left[
\kappa J_\mu(
\mathbf{\mathcal M},
\widetilde{\mathbf{\mathcal M}}
)
+
tI_{\boldsymbol\lambda}(
\mathbf{\mathcal M},
\widetilde{\mathbf{\mathcal M}}
)
\right]
\ge1
\]
is locally replaced by
\[
\frac{
b(\pi,\bar\pi)^2
}{
2\displaystyle
\sum_{h,s,a}
\frac{
\Gamma_{hsa}(\pi,\bar\pi)
}{
\kappa d_{hsa}^\mu
+
tq_{hsa}^{\boldsymbol\lambda}
}
}
\ge1
\qquad
\forall
\pi\in\Pi_d\setminus\{\bar\pi\}.
\]
Equivalently,
\begin{equation}
\max_{\pi\in\Pi_d\setminus\{\bar\pi\}}
\frac{2}{b(\pi,\bar\pi)^2}
\sum_{h,s,a}
\frac{
\Gamma_{hsa}(\pi,\bar\pi)
}{
\kappa d_{hsa}^\mu
+
tq_{hsa}^{\boldsymbol\lambda}
}
\le1,
\label{eq:lq-information-constraint}
\end{equation}
where a term with $b(\pi,\bar\pi)=0$ is understood as $+\infty$.

Finally, substituting
\eqref{eq:lq-information-constraint}
into the outer optimization defining the characteristic threshold gives
\[
\begin{aligned}
T_{\mathrm{LQ},\kappa}(\mathbf{\mathcal M})
:=
\min_{\bar\pi\in\Pi_\epsilon(\mathbf{\mathcal M})}
\inf_{\boldsymbol\lambda\in\Delta_{H|\mathcal S|}}
\inf\Bigg\{
t\ge0:\;
\max_{\pi\in\Pi_d\setminus\{\bar\pi\}}
\frac{2}{b(\pi,\bar\pi)^2}
\sum_{h,s,a}
\frac{
\Gamma_{hsa}(\pi,\bar\pi)
}{
\kappa d_{hsa}^\mu
+
tq_{hsa}^{\boldsymbol\lambda}
}
\le1
\Bigg\},
\end{aligned}
\]
which is the claimed local quadratic characteristic threshold.
\end{proof}

\subsection{Proof of Lemma~\ref{lem:local-bernstein}}
\begin{proof}

By 
Lemma~\ref{lem:model-difference-expansion},
\[
\begin{aligned}
\widehat D(\pi,\bar\pi)
-
D(\pi,\bar\pi)
={}&
\sum_{h,s,a}
G_{hsa}^{r}(\pi,\bar\pi)
\left(
\widehat r_h(s,a)-r_h(s,a)
\right)
\\
&+
\sum_{h,s,a,s'}
G_{hsa}^{P}(\pi,\bar\pi)(s')
\left(
\widehat P_h(s'\mid s,a)
-
P_h(s'\mid s,a)
\right)
\\
&+
\operatorname{Rem}_{N}(\pi,\bar\pi).
\end{aligned}
\]
For each row, recall
\[
\Psi_{hsa}^{\pi,\bar\pi}(r,s')
=
G_{hsa}^{r}(\pi,\bar\pi)r
+
G_{hsa}^{P}(\pi,\bar\pi)(s').
\]
Since
\[
\widehat r_h(s,a)
=
\frac1{N_{hsa}}
\sum_{i=1}^{N_{hsa}}
R_{hsa}^{(i)}
\]
and
\[
\widehat P_h(s'\mid s,a)
=
\frac1{N_{hsa}}
\sum_{i=1}^{N_{hsa}}
\mathbf 1
\left\{
S_{hsa}^{\prime(i)}=s'
\right\},
\]
the linear part of the preceding expansion can be written exactly as
\[
\sum_{h,s,a}
\frac1{N_{hsa}}
\sum_{i=1}^{N_{hsa}}
\left[
\Psi_{hsa}^{\pi,\bar\pi}
\left(
Z_{hsa}^{(i)}
\right)
-
\mathbb E_{\mathbf{\mathcal M}}
\left[
\Psi_{hsa}^{\pi,\bar\pi}(Z)
\mid S_h=s,A_h=a
\right]
\right].
\]

Define
\[
X_{hsa}^{(i)}
:=
\frac1{N_{hsa}}
\left[
\Psi_{hsa}^{\pi,\bar\pi}
\left(
Z_{hsa}^{(i)}
\right)
-
\mathbb E_{\mathbf{\mathcal M}}
\left[
\Psi_{hsa}^{\pi,\bar\pi}(Z)
\mid S_h=s,A_h=a
\right]
\right].
\]
Conditional on the row counts, the variables
$\{X_{hsa}^{(i)}\}_{h,s,a,i}$ are independent and centered.

By the definition of
$\Gamma_{hsa}(\pi,\bar\pi)$,
\[
\operatorname{Var}_{\mathbf{\mathcal M}}
\left(
X_{hsa}^{(i)}
\right)
=
\frac{
\Gamma_{hsa}(\pi,\bar\pi)
}{
N_{hsa}^2
}.
\]
Therefore the variance of the full linear term is
\[
\begin{aligned}
\sum_{h,s,a}
\sum_{i=1}^{N_{hsa}}
\operatorname{Var}_{\mathbf{\mathcal M}}
\left(
X_{hsa}^{(i)}
\right)
&=
\sum_{h,s,a}
N_{hsa}
\frac{
\Gamma_{hsa}(\pi,\bar\pi)
}{
N_{hsa}^2
}
\\
&=
\sum_{h,s,a}
\frac{
\Gamma_{hsa}(\pi,\bar\pi)
}{
N_{hsa}
}.
\end{aligned}
\]

Moreover, letting
\[
L_{hsa}(\pi,\bar\pi)
:=
\operatorname{range}
\left(
\Psi_{hsa}^{\pi,\bar\pi}
\right),
\]
we have
\[
\left|
\Psi_{hsa}^{\pi,\bar\pi}(Z)
-
\mathbb E_{\mathbf{\mathcal M}}
\left[
\Psi_{hsa}^{\pi,\bar\pi}(Z)
\mid S_h=s,A_h=a
\right]
\right|
\le
L_{hsa}(\pi,\bar\pi),
\]
and consequently
\[
\left|
X_{hsa}^{(i)}
\right|
\le
\frac{
L_{hsa}(\pi,\bar\pi)
}{
N_{hsa}
}.
\]
Thus
\[
\max_{h,s,a,i}
\left|
X_{hsa}^{(i)}
\right|
\le
\max_{h,s,a}
\frac{
L_{hsa}(\pi,\bar\pi)
}{
N_{hsa}
}.
\]

Applying Bernstein's inequality to the sum of the independent centered
variables $\{X_{hsa}^{(i)}\}$ gives, with probability at least
$1-\delta$,
\[
\begin{aligned}
&
\left|
\sum_{h,s,a}
\frac1{N_{hsa}}
\sum_{i=1}^{N_{hsa}}
\left[
\Psi_{hsa}^{\pi,\bar\pi}
\left(
Z_{hsa}^{(i)}
\right)
-
\mathbb E_{\mathbf{\mathcal M}}
\left[
\Psi_{hsa}^{\pi,\bar\pi}(Z)
\mid S_h=s,A_h=a
\right]
\right]
\right|
\\
&\qquad\le
\sqrt{
2\log\frac{2}{\delta}
\sum_{h,s,a}
\frac{
\Gamma_{hsa}(\pi,\bar\pi)
}{
N_{hsa}
}
}
+
\frac13
\log\frac{2}{\delta}
\max_{h,s,a}
\frac{
L_{hsa}(\pi,\bar\pi)
}{
N_{hsa}
}.
\end{aligned}
\]
Combining this inequality with the plug-in decomposition yields
\[
\begin{aligned}
\left|
\widehat D(\pi,\bar\pi)
-
D(\pi,\bar\pi)
\right|
\le{}&
\sqrt{
2\log\frac{2}{\delta}
\sum_{h,s,a}
\frac{
\Gamma_{hsa}(\pi,\bar\pi)
}{
N_{hsa}
}
}
\\
&+
\frac13
\log\frac{2}{\delta}
\max_{h,s,a}
\frac{
L_{hsa}(\pi,\bar\pi)
}{
N_{hsa}
}
\\
&+
\left|
\operatorname{Rem}_{N}(\pi,\bar\pi)
\right|.
\end{aligned}
\]
\end{proof}

\subsection{Proof of Theorem~\ref{thm:stopping-correctness}}
\label{app:proof-stopping-correctness}

\begin{proof}
We first construct a global event on which all concentration statements used
by the algorithm hold simultaneously.

\paragraph{The trimming event.}
Let $\mathcal E_{\rm trim}$ denote the event in
Lemma~\ref{lem:two-fold-trimming-guarantee}.  Since
\[
\delta_{\rm trim}
=
\frac{\delta}{48},
\]
Lemma~\ref{lem:two-fold-trimming-guarantee} gives
\[
\mathbb P_{\mathbf{\mathcal M}}
\left(
\mathcal E_{\rm trim}^c
\right)
\le
8\delta_{\rm trim}
=
\frac{\delta}{6}.
\]

We use the coupling convention of Remark~\ref{rem:augmented-iid-analysis} throughout.

\paragraph{The stabilization event.}
For epoch $k$, let
\[
N_{k,hsa}^{\rm q}
:=
N_{k,hsa}^{\rm ad}
+
N_{k,hsa}^{\rm st}.
\]
Conditional on the history before epoch $k$,
\[
N_{k,hsa}^{\rm q}
\sim
\operatorname{Binomial}
\left(
\widehat n_{k-1,hs}+m_k^{\rm st},
\pi_h^c(a\mid s)
\right)
\]
for every collaborator-supported row. Write
\[
\mu_{k,hsa}
:=
\left(
\widehat n_{k-1,hs}+m_k^{\rm st}
\right)
\pi_h^c(a\mid s).
\]
By \eqref{eq:app-stabilization-size},
\[
\mu_{k,hsa}
\ge
m_k^{\rm st}p_{\min}
\ge
\max
\left\{
2n_k^{\rm uc},
2n_{k+1}^{\rm sen},
8\beta_k
\right\}.
\]
Let $\mathcal E_{\rm st}$ be the event that
\[
N_{k,hsa}^{\rm q}
\ge
\frac12\mu_{k,hsa}
\]
simultaneously over all epochs and collaborator-supported rows. The
multiplicative Chernoff bound gives
\[
\mathbb P_{\mathbf{\mathcal M}}
\left(
N_{k,hsa}^{\rm q}
<
\frac12\mu_{k,hsa}
\mid
\mathcal F_{k-1}
\right)
\le
\exp
\left(
-\frac{\mu_{k,hsa}}8
\right)
\le
e^{-\beta_k}.
\]
Consequently,
\[
\mathbb P_{\mathbf{\mathcal M}}
\left(
\mathcal E_{\rm st}^c
\right)
\le
\frac{\delta}{12}.
\]

On $\mathcal E_{\rm st}$, every supported row satisfies
\begin{equation}
N_{k,hsa}^{\mathcal D}
\ge
N_{k,hsa}^{\rm q}
\ge
\frac12\mu_{k,hsa}
\ge
n_k^{\rm uc}\vee n_{k+1}^{\rm sen}.
\label{eq:thm-supported-uc}
\end{equation}
Moreover, under the theoretical convention
$\widehat\pi^c=\pi^c$, if $\widehat{\mathbf n}_{k-1}$ is the
allocation selected at epoch $k-1$, then
\begin{equation}
N_{k,hsa}^{\mathcal D}
\ge
\frac12
N_{k,hsa}^{+}
\left(
\widehat{\mathbf n}_{k-1}
\right).
\label{eq:thm-realized-prospective-count}
\end{equation}
\paragraph{The historical sensitivity event.}
For each row $(h,s,a)$, enumerate the observations entering the historical
sample in their row-wise order of arrival as
\[
Z_{hsa,1},Z_{hsa,2},\ldots,
\qquad
Z_{hsa,i}
=
(R_{hsa,i},S'_{hsa,i}),
\]
and let $\mathcal G_{hsa,i}$ denote the sigma-field containing all
observations and query decisions made before the $(i+1)$st observation from
that row is revealed.

  Since the query decision and the
collaborator action are determined before the corresponding reward and next
state are observed, for every $i\ge1$,
\[
R_{hsa,i}-r_h(s,a)
\]
and
\[
\mathbf 1\{S'_{hsa,i}=s'\}
-
P_h(s'\mid s,a)
\]
are bounded martingale difference sequences.

The historical sample size $N_{k,hsa}^{\mathcal G}$ is random and depends on the
adaptive querying history.  We therefore require a concentration event that
holds simultaneously over all possible row-wise sample sizes, rather
than only at a fixed deterministic sample size.  For $n\ge1$, define
\[
\beta^\mathcal G(n)
:=
\log
\left(
\frac{
2\pi^2 n^2 H|\mathcal S||\mathcal A|(|\mathcal S|+1)
}{
\delta
}
\right).
\]
For a fixed row $(h,s,a)$ and a fixed $n$, the martingale Hoeffding
inequality gives
\[
\mathbb P_{\mathbf{\mathcal M}}
\left(
\left|
\frac1n
\sum_{i=1}^n
R_{hsa,i}
-
r_h(s,a)
\right|
>
\sqrt{
\frac{
\beta^\mathcal G(n)
}{
2n
}
}
\right)
\le
2e^{-\beta^\mathcal G(n)}.
\]
Similarly for all $s'\in\mathcal S$,
\[
\mathbb P_{\mathbf{\mathcal M}}
\left(
\left|
\frac1n
\sum_{i=1}^n
\mathbf 1\{S'_{hsa,i}=s'\}
-
P_h(s'\mid s,a)
\right|
>
\sqrt{
\frac{
\beta^\mathcal G(n)
}{
2n
}
}
\right)
\le
2e^{-\beta^\mathcal G(n)}.
\]

Let $\mathcal E_G$ denote the event that the preceding inequalities hold
simultaneously for every row $(h,s,a)$, every reward or transition
coordinate, and every integer $n\ge1$.  A union bound over the $H|\mathcal S||\mathcal A|$ rows,
the reward coordinate and the $|\mathcal S|$ transition coordinates, and all possible
sample sizes gives
\[
\begin{aligned}
\mathbb P_{\mathbf{\mathcal M}}
\left(
\mathcal E_G^c
\right)
&\le
2H|\mathcal S||\mathcal A|(|\mathcal S|+1)
\sum_{n\ge1}
e^{-\beta^\mathcal G(n)}
\\
&=
2H|\mathcal S||\mathcal A|(|\mathcal S|+1)
\sum_{n\ge1}
\frac{
\delta
}{
2\pi^2n^2H|\mathcal S||\mathcal A|(|\mathcal S|+1)
}
\\
&=
\frac{\delta}{\pi^2}
\sum_{n\ge1}\frac1{n^2}
\\
&=
\frac{\delta}{6}.
\end{aligned}
\]
Thus, on $\mathcal E_G$, for every deterministic $n\ge1$,
\[
\left|
\widehat r_{hsa,n}
-
r_h(s,a)
\right|
\le
\min
\left\{
1,\,
\sqrt{
\frac{\beta^\mathcal G(n)}{2n}
}
\right\},
\]
and
\[
\left\|
\widehat P_{hsa,n}(\cdot)
-
P_h(\cdot\mid s,a)
\right\|_\infty
\le
\min
\left\{
1,\,
\sqrt{
\frac{\beta^\mathcal G(n)}{2n}
}
\right\}.
\]
Because these inequalities hold simultaneously for all $n\ge1$, they may
be evaluated at the random historical count
$n=N_{k,hsa}^{\mathcal G}$.  Consequently, simultaneously for every epoch $k$ and
every row,
\[
\left|
\widehat r_{k,h}^{\mathcal G}(s,a)
-
r_h(s,a)
\right|
\le
\alpha_{k,hsa}^{\mathcal G,r},
\]
and
\[
\left\|
\widehat P_{k,h}^{\mathcal G}(\cdot\mid s,a)
-
P_h(\cdot\mid s,a)
\right\|_\infty
\le
\alpha_{k,hsa}^{\mathcal G,P},
\]
whenever $N_{k,hsa}^{\mathcal G}>0$.

Finally, by definition
\[
\Delta_{k,hsa}^{\mathcal G}
\le
C_{\rm sen}
\max
\left\{
\alpha_k^{\mathcal G,r},
\alpha_k^{\mathcal G,P}
\right\}.
\]
 Hence, whenever every row
contains at least $n^{\rm sen}_{k+1}$ observations, we have
\begin{equation}\label{eq:bound for delta}
\left(
\Delta_{k,hsa}^{\mathcal G}
\right)^2
\le
\epsilon_{k+1},
\qquad
\text{for every }(h,s,a).
\end{equation}

Also on $\mathcal E_G$, 
for any fixed \((h,s,a)\) and policy pair
\((\pi,\bar\pi)\), recall that
\[
\Psi_{hsa}^{\pi,\bar\pi}(r,s')
=
G_{hsa}^{r}(\pi,\bar\pi)r
+
G_{hsa}^{P}(\pi,\bar\pi)(s'),
\]
and analogously
\[
\widehat\Psi_{k,hsa}^{\pi,\bar\pi}(r,s')
=
\widehat G_{k,hsa}^{r}(\pi,\bar\pi)r
+
\widehat G_{k,hsa}^{P}(\pi,\bar\pi)(s').
\]
Hence, since \(r\in[0,1]\),
\[
\begin{aligned}
\left|
\widehat\Psi_{k,hsa}^{\pi,\bar\pi}(r,s')
-
\Psi_{hsa}^{\pi,\bar\pi}(r,s')
\right|\le
\left|
\widehat G_{k,hsa}^{r}(\pi,\bar\pi)
-
G_{hsa}^{r}(\pi,\bar\pi)
\right|
+
\left|
\widehat G_{k,hsa}^{P}(\pi,\bar\pi)(s')
-
G_{hsa}^{P}(\pi,\bar\pi)(s')
\right|.
\end{aligned}
\]

For the reward-sensitivity term,
\[
\begin{aligned}
&
\left|
\widehat G_{k,hsa}^{r}(\pi,\bar\pi)
-
G_{hsa}^{r}(\pi,\bar\pi)
\right|
\\
&=
\left|
\left(
\widehat d_{hsa}^{\pi,k,\mathcal G}
-
\widehat d_{hsa}^{\bar\pi,k,\mathcal G}
\right)
-
\left(
d_{hsa}^{\pi}
-
d_{hsa}^{\bar\pi}
\right)
\right|
\\
&\le
\left|
\widehat d_{hsa}^{\pi,k,\mathcal G}
-
d_{hsa}^{\pi}
\right|
+
\left|
\widehat d_{hsa}^{\bar\pi,k,\mathcal G}
-
d_{hsa}^{\bar\pi}
\right|
\\
&\le
2(h-1)|\mathcal S|\alpha_k^{\mathcal G,P}.
\end{aligned}
\]

For the transition-sensitivity term, first, for either
\(\rho\in\{\pi,\bar\pi\}\),
\[
\begin{aligned}
&
\left|
\widehat d_{hsa}^{\rho,k,\mathcal G}
\widehat V_{h+1}^{\rho,k,\mathcal G}(s')
-
d_{hsa}^{\rho}
V_{h+1}^{\rho}(s')
\right|
\\
&\le
\left|
\widehat d_{hsa}^{\rho,k,\mathcal G}
-
d_{hsa}^{\rho}
\right|
\left|
\widehat V_{h+1}^{\rho,k,\mathcal G}(s')
\right|
+
d_{hsa}^{\rho}
\left|
\widehat V_{h+1}^{\rho,k,\mathcal G}(s')
-
V_{h+1}^{\rho}(s')
\right|
\\
&\le
(h-1)(H-h)|\mathcal S|\alpha_k^{\mathcal G,P}
+
E_{k,h+1}^{V,\mathcal G},
\end{aligned}
\]
where we used
\(
0\le
\widehat V_{h+1}^{\rho,k,\mathcal G}
\le H-h
\)
and \(0\le d_{hsa}^{\rho}\le1\).
Therefore,
\[
\begin{aligned}
&
\left|
\widehat G_{k,hsa}^{P}(\pi,\bar\pi)(s')
-
G_{hsa}^{P}(\pi,\bar\pi)(s')
\right|
\le
2(h-1)(H-h)|\mathcal S|\alpha_k^{\mathcal G,P}
+
2E_{k,h+1}^{V,\mathcal G}.
\end{aligned}
\]

Combining the two bounds gives
\[
\begin{aligned}
&
\left|
\widehat\Psi_{k,hsa}^{\pi,\bar\pi}(r,s')
-
\Psi_{hsa}^{\pi,\bar\pi}(r,s')
\right|
\\
&\le
2(h-1)|\mathcal S|\alpha_k^{\mathcal G,P}
+
2(h-1)(H-h)|\mathcal S|\alpha_k^{\mathcal G,P}
+
2E_{k,h+1}^{V,\mathcal G}
\\
&=
2(h-1)(H-h+1)|\mathcal S|\alpha_k^{\mathcal G,P}
+
2E_{k,h+1}^{V,\mathcal G}
\\
&=
\Delta_{k,hsa}^{\mathcal G}.
\end{aligned}
\]
Since the right-hand side does not depend on \(r\) or \(s'\), we conclude that
\begin{equation}
\sup_{r,s'}
\left|
\widehat\Psi_{k,hsa}^{\pi,\bar\pi}(r,s')
-
\Psi_{hsa}^{\pi,\bar\pi}(r,s')
\right|
\le
\Delta_{k,hsa}^{\mathcal G}.
\label{eq:thm-sensitivity-good}
\end{equation}

\paragraph{The fresh sample concentration event.}
For each reached epoch $k$ and row $(h,s,a)$, define
\[
\alpha_{k,hsa}^{\mathcal D,r}
=
\alpha_{k,hsa}^{\mathcal D,P}
:=
\begin{cases}
\displaystyle
\min\left\{
1,\sqrt{\frac{\beta_k}{2N_{k,hsa}^{\mathcal D}}}
\right\},
& N_{k,hsa}^{\mathcal D}>0,\\[1.2ex]
1,
& N_{k,hsa}^{\mathcal D}=0.
\end{cases}
\]
Let
\[
\alpha_k^{\mathcal D,r}
:=
\max_{h,s,a}\alpha_{k,hsa}^{\mathcal D,r},
\qquad
\alpha_k^{\mathcal D,P}
:=
\max_{h,s,a}\alpha_{k,hsa}^{\mathcal D,P},
\]
and define the uniform remainder bound
\[
\overline{\operatorname{Rem}}_k
:=
K_H
\left(
\alpha_k^{\mathcal D,r}
\vee
\alpha_k^{\mathcal D,P}
\right)^2.
\]
Let $\mathcal E_D$ denote the event that, simultaneously over all epochs and
rows,
\[
\left|
\widehat r_{k,h}^{\mathcal D}(s,a)-r_h(s,a)
\right|
\le
\alpha_{k,hsa}^{\mathcal D,r},
\]
and
\[
\left\|
\widehat P_{k,h}^{\mathcal D}(\cdot\mid s,a)
-
P_h(\cdot\mid s,a)
\right\|_\infty
\le
\alpha_{k,hsa}^{\mathcal D,P}.
\]
Conditional on the history preceding epoch $k$ and on the realized row
counts, the observations in the fresh sample $\mathcal D_k^\mathcal D$ are row-wise
iid with the correct row laws.  Therefore Hoeffding's inequality and the
choice of $\beta_k$ give
\[
\mathbb P_{\mathbf{\mathcal M}}
\left(
\mathcal E_D^c
\right)
\le
\frac{\delta}{6}.
\]
On $\mathcal E_D$, Lemma~\ref{lem:iid-remainder} gives
\begin{equation}
\left|
\operatorname{Rem}_{\widehat{\mathbf{\mathcal M}}_k^{\mathcal D}}(\pi,\bar\pi)
\right|
\le
\overline{\operatorname{Rem}}_k
\label{eq:thm-remainder-good}
\end{equation}
simultaneously over all relevant policy pairs.

\paragraph{The local-variance and Bernstein events.}

Let
\[
\mathcal P_k
:=
\widehat\Pi_{k-1}^2.
\]
Since
$\widehat\Pi_k\subseteq\widehat\Pi_{k-1}$,
every policy pair used after the epoch-$k$ localization belongs to
$\mathcal P_k$.
Let $\mathcal E_\Gamma$ denote the intersection of the
standard-deviation concentration events $\mathcal E_{\rm var}$
from Lemma~\ref{lem:variance-envelope-general}, applied to the fixed
population scores $\Psi_{hsa}^{\pi,\bar\pi}$,
simultaneously over all reached epochs, rows with
$N_{k,hsa}^\mathcal D\ge2$, and policy pairs
$(\pi,\bar\pi)\in\mathcal P_k$.
Then by Lemma~\ref{lem:variance-envelope-general}, \[
\mathbb P_{\mathbf{\mathcal M}}
\left(
\mathcal E_\Gamma^c
\right)
\le
\frac{\delta}{6},
\] and under $\mathcal E_\Gamma\cap \mathcal E_G$, by Lemma~\ref{lem:variance-envelope-general} and \eqref{eq:thm-sensitivity-good}, 
\begin{equation}\label{eq:thm-variance-control}
\Gamma_{hsa}(\pi,\bar\pi)
\le
\overline\Gamma_{k,hsa}(\pi,\bar\pi)
\le
3\left[
\Gamma_{hsa}(\pi,\bar\pi)
+
\frac{32H^2\beta_k}{N_{k,hsa}^\mathcal D-1}
+
9\bigl(\Delta_{k,hsa}^\mathcal G\bigr)^2
\right]
\end{equation}
holds.

Let $\mathcal E_B$ be the event on which
Lemma~\ref{lem:iid-linear-bernstein} holds simultaneously over all epochs
and all policy pairs in the predictable parent classes.  The same confidence
allocation gives
\[
\mathbb P_{\mathbf{\mathcal M}}
\left(
\mathcal E_B^c
\right)
\le
\frac{\delta}{6}.
\]

\paragraph{The global good event.}
Define
\[
\mathcal E
:=
\mathcal E_{\rm trim}
\cap
\mathcal E_G
\cap
\mathcal E_D
\cap
\mathcal E_\Gamma
\cap
\mathcal E_B
\cap
\mathcal E_{\rm st}.
\]
The preceding bounds imply
\[
\begin{aligned}
\mathbb P_{\mathbf{\mathcal M}}(\mathcal E^c)
&\le
\frac{\delta}{6}
+
\frac{\delta}{6}
+
\frac{\delta}{6}
+
\frac{\delta}{6}
+
\frac{\delta}{6}
+
\frac{\delta}{12}
=
\frac{11\delta}{12}
<
\delta.
\end{aligned}
\]
Hence
\[
\mathbb P_{\mathbf{\mathcal M}}(\mathcal E)
\ge
1-\delta.
\]
We work on $\mathcal E$ for the remainder of the proof.

\paragraph{Policy Localization.}
We prove simultaneously by induction that every epoch $k$ that completes
the localization step satisfies
\begin{equation}
\Pi_{\epsilon_k}
\subseteq
\widehat\Pi_k
\subseteq
\Pi_{3\epsilon_k},
\label{eq:thm-policy-localization}
\end{equation}
and, whenever epoch $k$ reaches the prospective design step,
\begin{equation}
\widehat b_{k,\rm reg}(\pi,\bar\pi)
=
4\epsilon_k,
\qquad
\forall \pi,\bar\pi\in\widehat\Pi_k.
\label{eq:thm-regularized-margin-induction}
\end{equation}

At the initial epoch, we have
\(
\epsilon_1=H
\).
Whenever epoch $1$ proceeds past the offline-coverage check, recall that we set
\(
\widehat\Pi_1=\Pi_d
\).
Since every deterministic policy has value in $[0,H]$,
\[
\Pi_{\epsilon_1}
=
\Pi_d
=
\widehat\Pi_1,
\]
so \eqref{eq:thm-policy-localization} holds for $k=1$.
Moreover, if epoch \(1\) reaches the prospective design step, then
\(\epsilon<\epsilon_1=H\). Since empirical policy values also lie in
\([0,H]\),
\[
\widehat D_1(\pi,\bar\pi)\ge -H=-\epsilon_1
\]
for every \(\pi,\bar\pi\in\widehat\Pi_1\). Therefore
\[
\epsilon-\widehat D_1(\pi,\bar\pi)
\le
\epsilon+\epsilon_1
<
2\epsilon_1
<
4\epsilon_1,
\]
and hence, by the definition of the regularized margin,
\[
\widehat b_{1,\rm reg}(\pi,\bar\pi)
=
4\epsilon_1.
\]
Thus both statements hold at the initial epoch.

Suppose that the two induction statements hold at epoch \(k\), that epoch
\(k\) reaches the prospective design step, and that
\[
\mathsf{OffCover}_{k+1}=1.
\]
By \eqref{eq:thm-regularized-margin-induction},
\[
\widehat b_{k,\rm reg}(\pi,\bar\pi)
=
4\epsilon_k,
\qquad
\pi,\bar\pi\in\widehat\Pi_k.
\]
Since \(B_k\) is chosen so that
\[
\widehat\Phi_k^\star(B_k)
\le
\frac1{16},
\]
the optimizer
\((\bar\pi_k,\widehat{\mathbf n}_k)\) satisfies
\begin{equation}
\begin{aligned}
\max_{\pi\in\widehat\Pi_k}
\widehat{\operatorname{rad}}_k^+
\left(
\pi,\bar\pi_k;
\widehat{\mathbf n}_k
\right)
&\le
\frac{4\epsilon_k}{16}
=
\frac{\epsilon_{k+1}}2.
\end{aligned}
\label{eq:thm-design-radius}
\end{equation}

Together with
\(\mathsf{OffCover}_{k+1}=1\) and
\eqref{eq:thm-supported-uc}, this implies
\begin{equation}
N_{k+1,hsa}^{\mathcal D}
\ge
n_{k+1}^{\rm uc}
\qquad
\forall(h,s,a).
\label{eq:thm-all-row-uc}
\end{equation}

Fix \(\pi\in\widehat\Pi_k\). On
\(\mathcal E_{\rm st}\cap\mathcal E_\Gamma\),
\eqref{eq:thm-realized-prospective-count},
\eqref{eq:thm-variance-control}, and
\eqref{eq:thm-design-radius} give
\begin{equation}
\begin{aligned}
&
\sqrt{
2\beta_{k+1}
\sum_{h,s,a}
\frac{
\Gamma_{hsa}
(
\pi,\bar\pi_k
)
}{
N_{k+1,hsa}^{\mathcal D}
}
}
\\
&\le
2
\sqrt{
\beta_{k+1}
\sum_{h,s,a}
\frac{
\overline\Gamma_{k,hsa}
(
\pi,\bar\pi_k
)
}{
N_{k+1,hsa}^{+}
(
\widehat{\mathbf n}_k
)
}
}
\\
&=
\frac14
\widehat{\operatorname{rad}}_k^+
\left(
\pi,\bar\pi_k;
\widehat{\mathbf n}_k
\right)
\\
&\le
\frac{\epsilon_{k+1}}8.
\end{aligned}
\label{eq:thm-leading-bernstein}
\end{equation}

It remains to control the Bernstein range term and the nonlinear remainder.

By the definition
\[
n_{k+1}^{\rm uc}
=
\left\lceil
\frac{
C_{\rm uc}\beta_{k+1}
}{
\epsilon_{k+1}
}
\right\rceil,
\qquad
C_{\rm uc}
\ge
\max
\left\{
64H^2,
8K_H
\right\},
\]
\eqref{eq:thm-all-row-uc} yields
\begin{equation}
\frac{2H}{3}
\beta_{k+1}
\max_{h,s,a}
\frac1{N_{k+1,hsa}^{\mathcal D}}
\le
\frac{
2H
}{
3C_{\rm uc}
}
\epsilon_{k+1}
\le
\frac{\epsilon_{k+1}}{96}.
\label{eq:thm-bernstein-range}
\end{equation}

Furthermore,
\[
\alpha_{k+1}^{\mathcal D,r}
\vee
\alpha_{k+1}^{\mathcal D,P}
\le
\sqrt{
\frac{
\beta_{k+1}
}{
2n_{k+1}^{\rm uc}
}
}.
\]
Therefore Lemma~\ref{lem:iid-remainder} and
\(C_{\rm uc}\ge8K_H\) imply
\begin{equation}
\begin{aligned}
\left|
\operatorname{Rem}_{\widehat{\mathbf{\mathcal M}}_{k+1}^{\mathcal D}}
\left(
\pi,\bar\pi_k
\right)
\right|
&\le
K_H
\frac{
\beta_{k+1}
}{
2n_{k+1}^{\rm uc}
}
\\
&\le
\frac{\epsilon_{k+1}}{16}.
\end{aligned}
\label{eq:thm-remainder-sixteenth}
\end{equation}
where $\operatorname{Rem}_{\widehat{\mathbf{\mathcal M}}_{k+1}^{\mathcal D}}
\left(
\pi,\bar\pi_k
\right)
$ is defined in Lemma~\ref{lem:model-difference-expansion}.
Combining
\eqref{eq:thm-bernstein-range}
and
\eqref{eq:thm-remainder-sixteenth},
\begin{equation}
\frac{2H}{3}
\beta_{k+1}
\max_{h,s,a}
\frac1{N_{k+1,hsa}^{\mathcal D}}
+
\left|
\operatorname{Rem}_{\widehat{\mathbf{\mathcal M}}_{k+1}^{\mathcal D}}
\left(
\pi,\bar\pi_k
\right)
\right|
<
\frac{\epsilon_{k+1}}8.
\label{eq:thm-lower-order-eighth}
\end{equation}

Applying Lemma~\ref{lem:model-difference-expansion} and
Lemma~\ref{lem:iid-linear-bernstein}, and combining
\eqref{eq:thm-leading-bernstein} with
\eqref{eq:thm-lower-order-eighth}, we obtain
\begin{equation}
\sup_{\pi\in\widehat\Pi_k}
\left|
\widehat D_{k+1}
\left(
\pi,\bar\pi_k
\right)
-
D
\left(
\pi,\bar\pi_k
\right)
\right|
\le
\frac{\epsilon_{k+1}}4.
\label{eq:thm-referencewise-quarter}
\end{equation}

First take any
\(\pi\in\Pi_{\epsilon_{k+1}}\).
Since
\(\epsilon_{k+1}\le\epsilon_k\),
\[
\pi
\in
\Pi_{\epsilon_k}
\subseteq
\widehat\Pi_k.
\]
For every \(\pi'\in\widehat\Pi_k\),
\[
\begin{aligned}
D
\left(
\pi',\bar\pi_k
\right)
-
D
\left(
\pi,\bar\pi_k
\right)
=
V_1^{\pi'}-V_1^\pi \le
V_1^\star-V_1^\pi
\le
\epsilon_{k+1}.
\end{aligned}
\]
Using \eqref{eq:thm-referencewise-quarter} twice,
\[
\begin{aligned}
\widehat D_{k+1}
\left(
\pi',\bar\pi_k
\right)
-
\widehat D_{k+1}
\left(
\pi,\bar\pi_k
\right)
\le
D
\left(
\pi',\bar\pi_k
\right)
-
D
\left(
\pi,\bar\pi_k
\right)
+
\frac{\epsilon_{k+1}}2
<
\frac52\epsilon_{k+1}.
\end{aligned}
\]
Hence, by the localization rule
\eqref{eq:app-localized-class},
\[
\Pi_{\epsilon_{k+1}}
\subseteq
\widehat\Pi_{k+1}.
\]

Conversely, take
\(\pi\in\widehat\Pi_{k+1}\).
Since
\[
\pi^\star
\in
\Pi_{\epsilon_k}
\subseteq
\widehat\Pi_k,
\]
the localization rule
\eqref{eq:app-localized-class} gives
\[
\widehat D_{k+1}
\left(
\pi^\star,\bar\pi_k
\right)
-
\widehat D_{k+1}
\left(
\pi,\bar\pi_k
\right)
\le
\frac52\epsilon_{k+1}.
\]
Again using \eqref{eq:thm-referencewise-quarter} twice,
\[
\begin{aligned}
V_1^\star-V_1^\pi
&=
D
\left(
\pi^\star,\bar\pi_k
\right)
-
D
\left(
\pi,\bar\pi_k
\right)
\\
&\le
\widehat D_{k+1}
\left(
\pi^\star,\bar\pi_k
\right)
-
\widehat D_{k+1}
\left(
\pi,\bar\pi_k
\right)
+
\frac{\epsilon_{k+1}}2
\\
&\le
3\epsilon_{k+1}.
\end{aligned}
\]
Thus
\[
\widehat\Pi_{k+1}
\subseteq
\Pi_{3\epsilon_{k+1}},
\]
and therefore
\[
\Pi_{\epsilon_{k+1}}
\subseteq
\widehat\Pi_{k+1}
\subseteq
\Pi_{3\epsilon_{k+1}}.
\]

It remains to propagate the regularized-margin statement. Suppose that epoch
\(k+1\) reaches the prospective design step. Then, by the stopping rule,
\[
\epsilon<\epsilon_{k+1}.
\]
For any
\(\pi,\bar\pi\in\widehat\Pi_{k+1}\), we have
\(\bar\pi\in\widehat\Pi_k\), and the localization rule for \(\pi\) gives
\[
\max_{\pi'\in\widehat\Pi_k}
\left[
\widehat D_{k+1}
\left(
\pi',\bar\pi_k
\right)
-
\widehat D_{k+1}
\left(
\pi,\bar\pi_k
\right)
\right]
\le
\frac52\epsilon_{k+1}.
\]
In particular, taking \(\pi'=\bar\pi\),
\[
\widehat D_{k+1}
\left(
\bar\pi,\bar\pi_k
\right)
-
\widehat D_{k+1}
\left(
\pi,\bar\pi_k
\right)
\le
\frac52\epsilon_{k+1}.
\]
Since empirical value differences satisfy
\[
\widehat D_{k+1}(\pi,\bar\pi)
=
\widehat D_{k+1}
\left(
\pi,\bar\pi_k
\right)
-
\widehat D_{k+1}
\left(
\bar\pi,\bar\pi_k
\right),
\]
it follows that
\[
\widehat D_{k+1}(\pi,\bar\pi)
\ge
-\frac52\epsilon_{k+1}.
\]
Therefore
\[
\epsilon-\widehat D_{k+1}(\pi,\bar\pi)
\le
\epsilon+\frac52\epsilon_{k+1}
<
4\epsilon_{k+1}.
\]
By the definition of the regularized margin,
\[
\widehat b_{k+1,\rm reg}(\pi,\bar\pi)
=
4\epsilon_{k+1},
\qquad
\forall\pi,\bar\pi\in\widehat\Pi_{k+1}.
\]
Thus both induction statements hold at epoch \(k+1\), completing the
induction. 
\paragraph{Accuracy of the output at a stopping epoch.}

First suppose that the algorithm terminates at the initial offline-coverage
check because
\[
\mathsf{OffCover}_1=0.
\]
The algorithm then returns an arbitrary policy in $\Pi_d$. Since all policy
values lie in $[0,H]$ and $\epsilon_1=H$,
\[
V_1^\star-V_1^{\widehat\pi_1}
\le
H
=
\epsilon_1.
\]

Now suppose that the algorithm stops after completing the localization step
at some epoch $k\ge1$.

If $k=1$, then again
\[
V_1^\star-V_1^{\widehat\pi_1}
\le
H
=
\epsilon_1,
\]
so the desired guarantee holds.

It remains to consider $k\ge2$. Since epoch $k$ is entered, epoch $k-1$
completed prospective design. Hence
\eqref{eq:thm-referencewise-quarter}, applied with $k-1$ in place of $k$,
gives
\begin{equation}
\sup_{\pi\in\widehat\Pi_{k-1}}
\left|
\widehat D_k
\left(
\pi,\bar\pi_{k-1}
\right)
-
D
\left(
\pi,\bar\pi_{k-1}
\right)
\right|
\le
\frac{\epsilon_k}{4}.
\label{eq:thm-current-referencewise-quarter}
\end{equation}

If
\[
|\widehat\Pi_k|=1,
\]
then, by \eqref{eq:thm-policy-localization},
\[
\pi^\star
\in
\Pi_{\epsilon_k}
\subseteq
\widehat\Pi_k.
\]
Therefore
\[
\widehat\Pi_k
=
\{\pi^\star\},
\]
and the returned policy is optimal.

For every other stopping rule occurring after the epoch-$k$ localization
step, including
\[
\epsilon_k\le\epsilon,
\qquad
\mathsf{OffCover}_{k+1}=0,
\qquad\text{or}\qquad
\widehat\Phi_{k,\infty}^\star\ge\frac1{32},
\]
the algorithm returns
\[
\widehat\pi_k
\in
\arg\max_{\pi\in\widehat\Pi_k}
\widehat V_1^{\pi,k,\mathcal D}.
\]
By \eqref{eq:thm-policy-localization},
\[
\pi^\star
\in
\Pi_{\epsilon_k}
\subseteq
\widehat\Pi_k
\subseteq
\widehat\Pi_{k-1}.
\]
Moreover,
\[
\widehat\pi_k
\in
\widehat\Pi_k
\subseteq
\widehat\Pi_{k-1}.
\]
Since $\widehat\pi_k$ maximizes the fresh empirical value over
$\widehat\Pi_k$,
\[
\widehat D_k
\left(
\pi^\star,\bar\pi_{k-1}
\right)
-
\widehat D_k
\left(
\widehat\pi_k,\bar\pi_{k-1}
\right)
\le
0.
\]
Using
\eqref{eq:thm-current-referencewise-quarter}
for $\pi^\star$ and $\widehat\pi_k$, respectively, gives
\[
\begin{aligned}
V_1^\star
-
V_1^{\widehat\pi_k}
&=
D
\left(
\pi^\star,\bar\pi_{k-1}
\right)
-
D
\left(
\widehat\pi_k,\bar\pi_{k-1}
\right)
\\
&\le
\widehat D_k
\left(
\pi^\star,\bar\pi_{k-1}
\right)
-
\widehat D_k
\left(
\widehat\pi_k,\bar\pi_{k-1}
\right)
+
\frac{\epsilon_k}{2}
\\
&\le
\frac{\epsilon_k}{2}
\\
&\le
\epsilon_k.
\end{aligned}
\]

Therefore every stopping output covered by the theorem is
$\epsilon_k$-optimal.
\paragraph{Finite termination and number of epochs.}
It remains to show that the algorithm cannot continue indefinitely.

At any epoch $k$, the algorithm may terminate immediately because of one of
its stopping conditions.  Otherwise, it reaches the prospective design step
with
\[
\widehat\Phi_{k,\infty}^\star
<
\frac1{32}.
\]
Since $\widehat\Phi_k^\star(B)$ is nonincreasing in $B$ and
\[
\widehat\Phi_{k,\infty}^\star
=
\inf_{B<\infty}
\widehat\Phi_k^\star(B),
\]
there exists a finite $B_0$ such that
\[
\widehat\Phi_k^\star(B_0)
\le
\frac1{32}.
\]
Hence the dyadic search defining $B_k$ terminates after finitely many steps,
and every nonterminal epoch advances to the next epoch.

Since
\[
\epsilon_k
=
H2^{-(k-1)},
\]
the first epoch satisfying
\[
\epsilon_k
\le
\epsilon
\]
obeys
\[
k
\le
2+\log_2\frac{H}{\epsilon}.
\]
If the algorithm has not stopped earlier, it therefore stops at this epoch.
Consequently,
\[
k_{\rm stop}
=
O
\left(
\log
\frac{H}{\epsilon}
\right).
\]

Whenever
\[
3\epsilon_k
<
\Delta_{\min},
\]
\eqref{eq:thm-policy-localization} implies
\[
\widehat\Pi_k
\subseteq
\Pi_{3\epsilon_k}
=
\{\pi^\star\}.
\]
Together with
$\pi^\star\in\widehat\Pi_k$, this gives
\[
\widehat\Pi_k
=
\{\pi^\star\},
\]
so the singleton stopping rule terminates the algorithm at epoch $k$.

Thus the algorithm stops no later than the first epoch for which either
\[
\epsilon_k\le\epsilon
\qquad\text{or}\qquad
3\epsilon_k<\Delta_{\min}.
\]
Since $\epsilon_k=H2^{-(k-1)}$,
\[
k_{\rm stop}
=
O
\left(
1+
\log_2
\frac{
H
}{
\epsilon\vee\Delta_{\min}
}
\right).
\]
This proves both claims.
\end{proof}

\subsection{Proof of Theorem~\ref{thm:support-floor-stopping}}

\begin{proof}
Condition on the global good event used in the proof of
Theorem~\ref{thm:stopping-correctness}, which holds with probability at
least $1-\delta$. If $n_0=0$, then $\mathrm{OffCover}_1=0$ since
$\mathcal U_c\ne\varnothing$, so the infinite-budget stopping
event cannot occur. We therefore assume $n_0>0$ below.

Suppose that the algorithm stops at epoch $k$ because
\[
\widehat\Phi_{k,\infty}^{\star}
\ge
\frac1{32}.
\]
Every epoch reaching the computation of
$\widehat\Phi_{k,\infty}^{\star}$ satisfies
$\epsilon_k>\epsilon$.  Moreover, by
\eqref{eq:thm-policy-localization},
\[
\Pi_{\epsilon_k}(M)
\subseteq
\widehat\Pi_k
\subseteq
\Pi_{3\epsilon_k}(M).
\]

Fix any
\[
\bar\pi\in\Pi_{\epsilon_k}(M).
\]
Since $\bar\pi\in\widehat\Pi_k$ and
$\widehat\Phi_{k,\infty}^{\star}$ minimizes over the reference policy,
\begin{equation}
\frac1{32}
\le
\max_{\pi\in\widehat\Pi_k}
\frac{
8\sqrt{
\beta_{k+1}
\displaystyle\sum_{x\in\mathcal U_c}
\frac{
\overline\Gamma_{k,x}(\pi,\bar\pi)
}{
N_{0,k+1,x}^{\mathcal D}
}
}
}{
\widehat b_{k,\mathrm{reg}}(\pi,\bar\pi)
}.
\label{eq:accuracy-gamma-bar-lower-bound}
\end{equation}
Here the infinite-query limit removes all collaborator-supported rows,
leaving only $x\in\mathcal U_c$.

We first compare the empirical variance envelope with the population
variance. By Lemma~\ref{lem:variance-envelope-general},
we obtain
\[
\overline\Gamma_{k,x}(\pi,\bar\pi)
\le
3
\left[
\Gamma_x(\pi,\bar\pi)
+
\frac{
32H^2\beta_k
}{
N_{k,x}^{\mathcal D}-1
}
+
9
\left(
\Delta_{k,x}^{\mathcal G}
\right)^2
\right].
\]

Because epoch $k$ reaches prospective design, every row used in this
variance envelope satisfies the uniform fresh-row coverage condition
\[
N_{k,x}^{\mathcal D}
\ge
n_k^{\rm uc}
=
\left\lceil
C_{\rm uc}\frac{\beta_k}{\epsilon_k}
\right\rceil.
\]
Since $C_{\rm uc}\beta_k/\epsilon_k\ge2$,
\[
N_{k,x}^{\mathcal D}-1
\ge
n_k^{\rm uc}-1
\ge
\frac{C_{\rm uc}\beta_k}{2\epsilon_k}.
\]
Therefore, using $C_{\rm uc}\ge64H^2$,
\[
\frac{32H^2\beta_k}
{N_{k,x}^{\mathcal D}-1}
\le
\epsilon_k.
\]

The historical sensitivity guarantee on the same good event gives
\[
\left(
\Delta_{k,x}^{\mathcal G}
\right)^2
\le
\epsilon_{k+1}
=
\frac{\epsilon_k}{2}.
\]
Hence, uniformly over
$\pi\in\widehat\Pi_k$ and $x$,
\begin{equation}
\overline\Gamma_{k,x}(\pi,\bar\pi)
\le
3\Gamma_x(\pi,\bar\pi)
+
\frac{33}{2}\epsilon_k.
\label{eq:local-variance-envelope-upper}
\end{equation}

We next compare the regularized empirical margin with the population
$\epsilon_k$-margin.  Recall that
\[
b_{\epsilon_k}(\pi,\bar\pi)
:=
\epsilon_k
-
D(\pi,\bar\pi)
=
\epsilon_k
-
V_1^\pi
+
V_1^{\bar\pi}.
\]
Since
$\bar\pi\in\Pi_{\epsilon_k}(M)$,
\[
D(\pi,\bar\pi)
\le
V_1^\star-V_1^{\bar\pi}
\le
\epsilon_k,
\]
and therefore
\[
b_{\epsilon_k}(\pi,\bar\pi)\ge0.
\]
On the other hand,
$\pi\in\widehat\Pi_k\subseteq\Pi_{3\epsilon_k}(M)$ implies
\[
D(\pi,\bar\pi)
=
\left(
V_1^\star-V_1^{\bar\pi}
\right)
-
\left(
V_1^\star-V_1^\pi
\right)
\ge
-3\epsilon_k.
\]
Thus
\[
0
\le
b_{\epsilon_k}(\pi,\bar\pi)
\le
4\epsilon_k.
\]
By definition,
\[
\widehat b_{k,\mathrm{reg}}(\pi,\bar\pi)
=
\max
\left\{
\epsilon-\widehat D_k(\pi,\bar\pi),
4\epsilon_k
\right\},
\]
so in particular
\begin{equation}
\widehat b_{k,\mathrm{reg}}(\pi,\bar\pi)
\ge
4\epsilon_k
\ge
b_{\epsilon_k}(\pi,\bar\pi).
\label{eq:local-margin-comparison}
\end{equation}

We now control the additive part in
\eqref{eq:local-variance-envelope-upper}.
Since the algorithm has reached prospective design,
$\mathsf{OffCover}_{k+1}=1$ on every
$x\in\mathcal U_c$, and hence
\[
N_{0,k+1,x}^{\mathcal D}
\ge
n_{k+1}^{\rm uc}
=
\left\lceil
C_{\rm uc}
\frac{\beta_{k+1}}{\epsilon_{k+1}}
\right\rceil,
\qquad
x\in\mathcal U_c.
\]
Using
$\epsilon_{k+1}=\epsilon_k/2$ and
$|\mathcal U_c|\le H|\mathcal S||\mathcal A|$,
\[
\frac{\beta_{k+1}}{\epsilon_k}
\sum_{x\in\mathcal U_c}
\frac1{N_{0,k+1,x}^{\mathcal D}}
\le
\frac{H|\mathcal S||\mathcal A|}{2C_{\rm uc}}.
\]
Combining this with
\eqref{eq:local-variance-envelope-upper} and
\eqref{eq:local-margin-comparison} gives
\[
\begin{aligned}
&
\max_{\pi\in\widehat\Pi_k}
\frac{
8\sqrt{
\beta_{k+1}
\displaystyle\sum_{x\in\mathcal U_c}
\frac{
\overline\Gamma_{k,x}(\pi,\bar\pi)
}{
N_{0,k+1,x}^{\mathcal D}
}
}
}{
\widehat b_{k,\mathrm{reg}}(\pi,\bar\pi)
}
\\
&\le
8\sqrt3
\max_{\pi\in\widehat\Pi_k}
\frac{
\sqrt{
\beta_{k+1}
\displaystyle\sum_{x\in\mathcal U_c}
\frac{
\Gamma_x(\pi,\bar\pi)
}{
N_{0,k+1,x}^{\mathcal D}
}
}
}{
\widehat b_{k,\mathrm{reg}}(\pi,\bar\pi)
}
\\
&\qquad
+
2
\sqrt{
\frac{33}{2}
\frac{\beta_{k+1}}{\epsilon_k}
\sum_{x\in\mathcal U_c}
\frac1{N_{0,k+1,x}^{\mathcal D}}
}
\\
&\le
8\sqrt3
\max_{\pi\in\widehat\Pi_k}
\frac{
\sqrt{
\beta_{k+1}
\displaystyle\sum_{x\in\mathcal U_c}
\frac{
\Gamma_x(\pi,\bar\pi)
}{
N_{0,k+1,x}^{\mathcal D}
}
}
}{
\widehat b_{k,\mathrm{reg}}(\pi,\bar\pi)
}
+
\sqrt{
\frac{
33H|\mathcal S||\mathcal A|
}{
C_{\rm uc}
}
}.
\end{aligned}
\]
With
\[
C_{\rm uc}
\ge
67584H|\mathcal S||\mathcal A|
=
4096\cdot\frac{33}{2}H|\mathcal S||\mathcal A|,
\]
the last term is at most
\[
\frac1{32\sqrt2}.
\]
Combining this with
\eqref{eq:accuracy-gamma-bar-lower-bound} yields
\[
\max_{\pi\in\widehat\Pi_k}
\frac{
\sqrt{
\beta_{k+1}
\displaystyle\sum_{x\in\mathcal U_c}
\frac{
\Gamma_x(\pi,\bar\pi)
}{
N_{0,k+1,x}^{\mathcal D}
}
}
}{
\widehat b_{k,\mathrm{reg}}(\pi,\bar\pi)
}
\ge
c_0,
\]
where
\[
c_0
:=
\frac{
1-2^{-1/2}
}{
256\sqrt3
}
>0.
\]
By \eqref{eq:local-margin-comparison},
\[
\frac1{
b_{\epsilon_k}(\pi,\bar\pi)
}
\ge
\frac1{
\widehat b_{k,\mathrm{reg}}(\pi,\bar\pi)
},
\]
and therefore
\begin{equation}
c_0
\le
\max_{\pi\in\widehat\Pi_k}
\frac{
\sqrt{
\beta_{k+1}
\displaystyle\sum_{x\in\mathcal U_c}
\frac{
\Gamma_x(\pi,\bar\pi)
}{
N_{0,k+1,x}^{\mathcal D}
}
}
}{
b_{\epsilon_k}(\pi,\bar\pi)
}.
\label{eq:local-population-leading-lower}
\end{equation}

It remains to replace the realized offline counts by their population
counterparts. On the two-fold trimming event,
Lemma~\ref{lem:two-fold-trimming-guarantee} and the deterministic
epoch-wise split give
\[
N_{0,k+1,x}^{\mathcal D}
\ge
\left[
\frac{w_{k+1}}{2}
\left[
\frac{n_0d_x^\mu}{8}
-
5
\sqrt{
n_0d_x^\mu\ell_{\rm trim}
}
\right]_+
-
2
\right]_+.
\]
Using
\[
5\sqrt{y\ell_{\rm trim}}
\le
\frac{y}{16}
+
100\ell_{\rm trim},
\qquad
y\ge0,
\]
with $y=n_0d_x^\mu$, we obtain
\begin{equation}
N_{0,k+1,x}^{\mathcal D}
\ge
\left[
\frac{
w_{k+1}n_0d_x^\mu
}{32}
-
50w_{k+1}\ell_{\rm trim}
-
2
\right]_+.
\label{eq:local-trim-preliminary}
\end{equation}

On the other hand, because the algorithm has reached the prospective
design at epoch $k$, we have
\[
\mathsf{OffCover}_{k+1}=1.
\]
Hence, for every $x\in\mathcal U_c$,
\begin{equation}
N_{0,k+1,x}^{\mathcal D}
\ge
50w_{k+1}\ell_{\rm trim}+2.
\label{eq:local-offcover-trim-threshold}
\end{equation}

Then
\eqref{eq:local-trim-preliminary} and
\eqref{eq:local-offcover-trim-threshold} imply

\begin{equation}
N_{0,k+1,x}^{\mathcal D}
\ge
\frac{
w_{k+1}n_0d_x^\mu
}{64},
\qquad
\forall x\in\mathcal U_c.
\label{eq:local-trim-population-comparison}
\end{equation}

It follows from
\eqref{eq:local-population-leading-lower} and
\eqref{eq:local-trim-population-comparison} that
\[
c_0
\le
\sqrt{
\frac{64
\beta_{k+1}
}{
w_{k+1}
}
}
\max_{\pi\in\widehat\Pi_k}
\frac{
\sqrt{
\displaystyle\sum_{x\in\mathcal U_c}
\frac{
\Gamma_x(\pi,\bar\pi)
}{
n_0d_x^\mu
}
}
}{
b_{\epsilon_k}(\pi,\bar\pi)
}.
\]
Hence, for some universal constant $c_1>0$,
\[
\sup_{\pi\in\Pi_d\setminus\{\bar\pi\}}
\frac{
2
}{
b_{\epsilon_k}(\pi,\bar\pi)^2
}
\sum_{x\in\mathcal U_c}
\frac{
\Gamma_x(\pi,\bar\pi)
}{
n_0d_x^\mu
}
\ge
c_1
\frac{
w_{k+1}
}{
\beta_{k+1}
}.
\]
Here we used
$\widehat\Pi_k\subseteq\Pi_d$; moreover, the policy witnessing the
positive lower bound cannot equal $\bar\pi$, since
$\Gamma_x(\bar\pi,\bar\pi)=0$ for every $x$.

The preceding inequality holds for every
$\bar\pi\in\Pi_{\epsilon_k}(M)$.  Taking the infimum over
$\bar\pi$ and recalling the definition of
$\mathcal F_\infty$ therefore gives
\[
\mathcal F_\infty(\epsilon_k;n_0)
\ge
c_1
\frac{
w_{k+1}
}{
\beta_{k+1}
}.
\]
Since
\[
w_{k+1}
=
\frac{
6
}{
\pi^2(k+1)^2
},
\]
there exists a universal constant $c_2>0$ such that
\begin{equation}
\mathcal F_\infty(\epsilon_k;n_0)
\ge
\frac{
2c_2
}{
\beta_{k+1}(k+1)^2
}.
\label{eq:local-F-infty-lower}
\end{equation}

By homogeneity in the effective offline sample size,
\[
\mathcal F_\infty(\eta;n)
=
\frac{n_0}{n}
\mathcal F_\infty(\eta;n_0).
\]
Set
\[
n_k^{\rm lim}
:=
\frac{
c_2n_0
}{
\beta_{k+1}(k+1)^2
}.
\]
Then \eqref{eq:local-F-infty-lower} implies
\[
\mathcal F_\infty
\left(
\epsilon_k;
n_k^{\rm lim}
\right)
\ge
2
>
1.
\]
Because
$\mathcal F_\infty(\eta;n)$ is nonincreasing in $\eta$, and
by the definition of $\epsilon^{\lim}(n)$, 
we conclude that
\begin{equation}
\epsilon_k
\le
\epsilon^{\rm lim}
\left(
\frac{
c_2n_0
}{
\beta_{k+1}(k+1)^2
}
\right).
\label{eq:local-epsilon-lim-intermediate}
\end{equation}

Finally, Theorem~\ref{thm:stopping-correctness} implies that every epoch
reaching prospective design satisfies
\[
k
=
O\left(
\log
\frac{
H
}{
\epsilon\vee\Delta_{\min}
}
\right).
\]
Moreover, from the definition of the confidence parameter and
$\widehat\Pi_k\subseteq\Pi_d$,
\[
\beta_{k+1}
=
O\left(
H|\mathcal S|\log (|\mathcal A|)
+
\log
\frac{
(k+1)H|\mathcal S||\mathcal A|
}{
\delta
}
\right).
\]
Consequently,
\[
\beta_{k+1}(k+1)^2
\le
H|\mathcal S|\,
\operatorname{polylog}
\left(
H,
|\mathcal S|,
|\mathcal A|,
\frac1{\epsilon\vee\Delta_{\min}},
\frac1\delta
\right).
\]
  Since
$\epsilon^{\rm lim}(n)$ is nonincreasing in $n$,
\eqref{eq:local-epsilon-lim-intermediate} yields
\[
\epsilon_k
\le
\epsilon^{\rm lim}
\left(
\frac{
n_0
}{
H|\mathcal S|\,
\operatorname{polylog}
\left(
H,
|\mathcal S|,
|\mathcal A|,
1/(\epsilon\vee\Delta_{\min}),
1/\delta
\right)
}
\right),
\]
\end{proof}

\subsection{Proof of Theorem~\ref{thm:query-complexity}}
\label{app:proof-query-complexity}

\begin{proof}
We work throughout on the global good event
$\mathcal E$ constructed in the proof of
Theorem~\ref{thm:stopping-correctness}. By construction,
\[
\mathbb P_{\mathbf{\mathcal M}}(\mathcal E)\ge 1-\delta.
\]
On this event, the proof of Theorem~\ref{thm:stopping-correctness} shows every epoch $k$ that reaches the prospective
allocation step satisfies
\begin{equation}
\Pi_{\epsilon_k}(\mathbf{\mathcal M})
\subseteq
\widehat\Pi_k
\subseteq
\Pi_{3\epsilon_k}(\mathbf{\mathcal M}),
\label{eq:qc-localization}
\end{equation}
and
\begin{equation}
\widehat b_{k,\rm reg}(\pi,\bar\pi)
=
4\epsilon_k,
\qquad
\pi,\bar\pi\in\widehat\Pi_k.
\label{eq:qc-margin}
\end{equation}

By Theorem~\ref{thm:stopping-correctness}, the algorithm stops no
later than the first epoch at which either
$\epsilon_k\le\epsilon$ or
$3\epsilon_k<\Delta_{\min}$.
Hence
\[
\epsilon_{K_\star}
=
H2^{-\lceil\log_2(6H/(\epsilon\vee\Delta_{\min}))\rceil}
\le
\frac{\epsilon\vee\Delta_{\min}}{6},
\]
If $\epsilon\ge\Delta_{\min}$, this implies
$\epsilon_{K_\star}<\epsilon$; if
$\Delta_{\min}>\epsilon$, then $\Delta_{\min}>0$, and hence the
optimal deterministic policy is unique, so
\eqref{eq:qc-localization} and
$3\epsilon_{K_\star}<\Delta_{\min}$ imply that
$\widehat\Pi_{K_\star}$ is a singleton.
Therefore
\begin{equation}
\epsilon_{k_{\rm stop}}
\ge
\epsilon_{K_\star}
\ge
\frac{\epsilon\vee\Delta_{\min}}{12} \qquad \text{and}\qquad k_{\rm stop}\le K_\star
\label{eq:qc-terminal-resolution}
\end{equation}

\paragraph{A finite budget controlled by the epoch-wise local quantity.}

For every collaborator-supported row,
the definition of $m_k^{\rm st}$ gives
\[
m_k^{\rm st}\pi_h^c(a\mid s)
\ge
50w_k\ell_{\rm trim}+2.
\]
Hence by \eqref{eq:local-trim-preliminary},
\begin{equation}
N_{0,k,hsa}^{\mathcal D}
+
m_k^{\rm st}\pi_h^c(a\mid s)
\ge
\frac{
n_0w_kd_{hsa}^{\mu}
}{32}.
\label{eq:qc-effective-offline-mass}
\end{equation}

Fix a design epoch $k<K_\star$. For this paragraph, let
\begin{equation}
\begin{aligned}
T_{{\rm LQ}}^k(\mathbf{\mathcal M})
:=
\inf_{\bar\pi\in\Pi_{\epsilon_k}(\mathbf{\mathcal M})}
\inf_{\boldsymbol \lambda\in\Delta_{H|\mathcal S|}}
\inf
\Bigg\{
T\ge0:\;
\sup_{\pi\in\Pi_{3\epsilon_k}(\mathbf{\mathcal M})}
\frac{\overline\beta_{k+1}}{\epsilon_k^2}
\sum_{h,s,a}
\frac{
\Gamma_{hsa}(\pi,\bar\pi)
}{
n_0w_{k+1}d_{hsa}^{\mu}
+
T\lambda_{hs}\pi_h^c(a\mid s)
}
\le
c_{\rm LQ}
\Bigg\},
\end{aligned}
\label{eq:qc-epochwise-LQ}
\end{equation}
where
\[
\overline\beta_{k+1}
:=
\log\!\left(
\frac{
2\pi^2 (k+1)^2 H|\mathcal S||\mathcal A|(|\mathcal S|+1)
\left|\Pi_{3\epsilon_k}(\mathbf{\mathcal M})\right|^2
}{
\delta
}
\right)
\] with $c_{\rm LQ}=2^{-20}$.
By \eqref{eq:qc-localization},
\[
\widehat\Pi_k
\subseteq
\Pi_{3\epsilon_k}(\mathbf{\mathcal M}),
\]
and hence
\(
\beta_{k+1}
\le
\overline\beta_{k+1}.
\)

Take any
\[
T>T_{{\rm LQ}}^k(\mathbf{\mathcal M}).
\]
Then there exist
\[
\bar\pi^0\in\Pi_{\epsilon_k}(\mathbf{\mathcal M}),
\qquad
\boldsymbol \lambda\in\Delta_{H|\mathcal S|}
\]
such that
\begin{equation}
\sup_{\pi\in\Pi_{3\epsilon_k}(\mathbf{\mathcal M})}
\frac{\overline\beta_{k+1}}{\epsilon_k^2}
\sum_{h,s,a}
\frac{
\Gamma_{hsa}(\pi,\bar\pi^0)
}{
n_0w_{k+1}d_{hsa}^{\mu}
+
T\lambda_{hs}\pi_h^c(a\mid s)
}
\le
c_{\rm LQ}.
\label{eq:qc-LQ-feasible}
\end{equation}
By \eqref{eq:qc-localization},
$\bar\pi^0\in\widehat\Pi_k$.

Since the algorithm reaches the design step and does not stop at the
infinite-budget criterion,
\[
\widehat\Phi_{k,\infty}^\star<\frac1{32}.
\]
Because $\widehat\Pi_k$ is finite, there exists
$\bar\pi_k^\infty\in\widehat\Pi_k$ such that
\begin{equation}
\max_{\pi\in\widehat\Pi_k}
\frac{C_0}{4\epsilon_k}
\sqrt{
\beta_{k+1}
\sum_{(h,s,a)\in\mathcal U_c}
\frac{
\overline\Gamma_{k,hsa}
(\pi,\bar\pi_k^\infty)
}{
N_{0,k+1,hsa}^{\mathcal D}
}
}
<
\frac1{32}.
\label{eq:qc-infinite-unsupported}
\end{equation}
Since $C_0=8$,
\begin{equation}
\frac{\beta_{k+1}}{\epsilon_k^2}
\sum_{(h,s,a)\in\mathcal U_c}
\frac{
\overline\Gamma_{k,hsa}
(\pi,\bar\pi_k^\infty)
}{
N_{0,k+1,hsa}^{\mathcal D}
}
<
\frac1{4096},
\qquad
\forall\pi\in\widehat\Pi_k.
\label{eq:qc-unsupported-envelope}
\end{equation}

Now allocate
\[
n_{hs}
:=
\left\lceil T\lambda_{hs}\right\rceil,
\qquad
\mathbf n:=\{n_{hs}\}_{h,s}.
\]
Then
\begin{equation}
\sum_{h,s}n_{hs}
\le
T+H|\mathcal S|.
\label{eq:qc-candidate-total}
\end{equation}
For every collaborator-supported row,
\eqref{eq:qc-effective-offline-mass} gives
\[
\begin{aligned}
N_{k+1,hsa}^{+}(\mathbf n)
&=
N_{0,k+1,hsa}^{\mathcal D}
+
\bigl(m_{k+1}^{\rm st}+n_{hs}\bigr)
\pi_h^c(a\mid s)
\\
&\ge
\frac{
n_0w_{k+1}d_{hsa}^{\mu}
}{32}
+
T\lambda_{hs}\pi_h^c(a\mid s)
\\
&\ge
\frac1{32}
\left(
n_0w_{k+1}d_{hsa}^{\mu}
+
T\lambda_{hs}\pi_h^c(a\mid s)
\right).
\end{aligned}
\]
For every
$\pi\in\widehat\Pi_k\subseteq
\Pi_{3\epsilon_k}(\mathbf{\mathcal M})$,
we have
\[
\Psi_{hsa}^{\pi,\bar\pi_k^\infty}
=
\Psi_{hsa}^{\pi,\bar\pi^0}
-
\Psi_{hsa}^{\bar\pi_k^\infty,\bar\pi^0}.
\]
Therefore
\[
\Gamma_{hsa}(\pi,\bar\pi_k^\infty)
\le
2\Gamma_{hsa}(\pi,\bar\pi^0)
+
2\Gamma_{hsa}(\bar\pi_k^\infty,\bar\pi^0).
\]
Both $\pi$ and $\bar\pi_k^\infty$ belong to
$\Pi_{3\epsilon_k}(\mathbf{\mathcal M})$, so
\eqref{eq:qc-LQ-feasible}, together with
$\beta_{k+1}\le\overline\beta_{k+1}$, yields
\begin{equation}
\frac{\beta_{k+1}}{\epsilon_k^2}
\sum_{(h,s,a)\notin\mathcal U_c}
\frac{
\Gamma_{hsa}(\pi,\bar\pi_k^\infty)
}{
N_{k+1,hsa}^{+}(\mathbf n)
}
\le
128c_{\rm LQ}.
\label{eq:qc-supported-leading}
\end{equation}

On the global good event, the variance envelope \eqref{eq:local-variance-envelope-upper} satisfies
\begin{equation}
\overline\Gamma_{k,hsa}
(\pi,\bar\pi_k^\infty)
\le
3\Gamma_{hsa}(\pi,\bar\pi_k^\infty)
+
C_{\rm env}\epsilon_k.
\label{eq:qc-envelope}
\end{equation} with $C_{\rm env}=33/2$.
Moreover, every collaborator-supported row satisfies
\[
N_{k+1,hsa}^{+}(\mathbf n)
\ge
m_{k+1}^{\rm st}p_{\min}
\ge
2n_{k+1}^{\rm uc}.
\]
Since
\[
n_{k+1}^{\rm uc}
\ge
\frac{
C_{\rm uc}\beta_{k+1}
}{
\epsilon_{k+1}
}
=
\frac{
2C_{\rm uc}\beta_{k+1}
}{
\epsilon_k
},
\]
we have
\begin{equation}
\frac{\beta_{k+1}}{\epsilon_k}
\sum_{(h,s,a)\notin\mathcal U_c}
\frac1{
N_{k+1,hsa}^{+}(\mathbf n)
}
\le
\frac{H|\mathcal S||\mathcal A|}{4C_{\rm uc}}.
\label{eq:qc-additive-envelope}
\end{equation}

Combining
\eqref{eq:qc-supported-leading},
\eqref{eq:qc-unsupported-envelope},
\eqref{eq:qc-envelope}, and
\eqref{eq:qc-additive-envelope}, we obtain, for every
$\pi\in\widehat\Pi_k$,
\[
\begin{aligned}
\frac{\beta_{k+1}}{\epsilon_k^2}
\sum_{h,s,a}
\frac{
\overline\Gamma_{k,hsa}
(\pi,\bar\pi_k^\infty)
}{
N_{k+1,hsa}^{+}(\mathbf n)
}
&\le
384c_{\rm LQ}
+
\frac1{4096}
+
\frac{C_{\rm env}H|\mathcal S||\mathcal A|}{4C_{\rm uc}}
\\
&\le
\frac3{8192}
+
\frac2{8192}
+
\frac2{8192}
\\
&=
\frac7{8192}.
\end{aligned}
\]
Here we used $c_{\rm LQ}=2^{-20}$ and 
\(
C_{\rm uc}
\ge
1024C_{\rm env}H|\mathcal S||\mathcal A|.
\)
Using \eqref{eq:qc-margin},
\[
\begin{aligned}
\frac{
\widehat{\operatorname{rad}}_k^+
(\pi,\bar\pi_k^\infty;\mathbf n)
}{
\widehat b_{k,\rm reg}
(\pi,\bar\pi_k^\infty)
}
&=
\frac{8}{4\epsilon_k}
\sqrt{
\beta_{k+1}
\sum_{h,s,a}
\frac{
\overline\Gamma_{k,hsa}
(\pi,\bar\pi_k^\infty)
}{
N_{k+1,hsa}^{+}(\mathbf n)
}
}
\\
&\le
2\sqrt{\frac7{8192}}
<
\frac1{16}.
\end{aligned}
\]
Thus the allocation $\mathbf n$ is feasible for the empirical
minimax program with total size at most
$T+H|\mathcal S|$. Since $B_k$ is the smallest dyadic budget
attaining the threshold $1/16$, letting
$T\downarrow T_{{\rm LQ}}^k(\mathbf{\mathcal M})$ gives
\begin{equation}
B_k
\le
C
\left(
T_{{\rm LQ}}^k(\mathbf{\mathcal M})
+
H|\mathcal S|
\right)
\label{eq:qc-budget-vs-epoch-LQ}
\end{equation}
for a universal constant $C$.

\paragraph{Early epochs.}

Let
\[
k_0:=\left\lceil\sqrt{K_\star}\right\rceil.
\]
For $k\le k_0$, we use a crude uniform allocation.
Note that
\begin{equation}
\Gamma_{hsa}(\pi,\bar\pi)
\le
4H^2.
\label{eq:qc-Gamma-uniform}
\end{equation}
As above, because the epoch reaches the design step,
\[
\widehat\Phi_{k,\infty}^\star<\frac1{32},
\]
so we may choose $\bar\pi_k^\infty$ satisfying
\eqref{eq:qc-unsupported-envelope}.

Let
\[
B'
:=
12288\,C_{\rm wc}
\frac{\overline\beta_{k+1}}{\epsilon_k^2},
\qquad
C_{\rm wc}
:=
\frac{4H^4|\mathcal S|^2|\mathcal A|}{p^c_{\min}},
\]
and take
\[
n_{hs}
=
\left\lceil
\frac{B'}{H|\mathcal S|}
\right\rceil.
\]
The actual size of this allocation is
\[
\sum_{h,s}n_{hs}
\le
B'+H|\mathcal S|.
\]
For every collaborator-supported row,
\[
N_{k+1,hsa}^{+}(\mathbf n)
\ge
n_{hs}\pi_h^c(a\mid s)
\ge
\frac{B'p^c_{\min}}{H|\mathcal S|}.
\]
Therefore, by \eqref{eq:qc-Gamma-uniform},
\[
\sum_{(h,s,a)\notin\mathcal U_c}
\frac{
\Gamma_{hsa}(\pi,\bar\pi_k^\infty)
}{
N_{k+1,hsa}^{+}(\mathbf n)
}
\le
\frac{C_{\rm wc}}{B'}.
\]
Since $\beta_{k+1}\le\overline\beta_{k+1}$,
\[
\frac{\beta_{k+1}}{\epsilon_k^2}
\sum_{(h,s,a)\notin\mathcal U_c}
\frac{
3\Gamma_{hsa}(\pi,\bar\pi_k^\infty)
}{
N_{k+1,hsa}^{+}(\mathbf n)
}
\le
\frac1{4096}.
\]
The unsupported-row contribution is at most $1/4096$ by
\eqref{eq:qc-unsupported-envelope}, and the additive part of the
variance envelope is at most $1/4096$ by
\eqref{eq:qc-additive-envelope} and the choice of $C_{\rm uc}$.
Hence
\[
\frac{\beta_{k+1}}{\epsilon_k^2}
\sum_{h,s,a}
\frac{
\overline\Gamma_{k,hsa}
(\pi,\bar\pi_k^\infty)
}{
N_{k+1,hsa}^{+}(\mathbf n)
}
\le
\frac3{4096}.
\]
Consequently,
\[
\frac{
\widehat{\operatorname{rad}}_k^+
(\pi,\bar\pi_k^\infty;\mathbf n)
}{
\widehat b_{k,\rm reg}
(\pi,\bar\pi_k^\infty)
}
\le
2\sqrt{\frac3{4096}}
<
\frac1{16}.
\]
After the dyadic search,
\begin{equation}
B_k
\le
C_{\rm early}
\frac{\overline\beta_{k+1}}{\epsilon_k^2}
+
2H|\mathcal S|,
\qquad
k\le k_0,
\label{eq:qc-early-budget}
\end{equation}
where $C_{\rm early}$ depends polynomially only on
$H,|\mathcal S|,|\mathcal A|,(p^c_{\min})^{-1}$.

Since
\[
\epsilon_k=H2^{-(k-1)},
\]
\[
\sum_{k=1}^{k_0}\frac1{\epsilon_k^2}
\le
\frac{4^{k_0}}{3H^2}.
\]
If $K_\star\ge7$, then
\[
2k_0
=
2\left\lceil\sqrt{K_\star}\right\rceil
\le
K_\star-1,
\]
so
\[
4^{k_0}
\le
2^{K_\star-1}
<
\frac{12H}{\epsilon\vee\Delta_{\min}}.
\]
The finitely many cases $K_\star<7$ are absorbed into the constant.
Since $\overline\beta_{k+1}$ is logarithmic,
\begin{equation}
\sum_{k\le k_0}B_k
=
\widetilde O
\left(
\frac1{\epsilon\vee\Delta_{\min}}
\right).
\label{eq:qc-early-sum}
\end{equation}

\paragraph{Tail epochs.}

For $n\ge0$, recall
\begin{equation}
\begin{aligned}
T_{{\rm LQ},n}(\mathbf{\mathcal M})
:=
\min_{\bar\pi\in\Pi_\epsilon(\mathbf{\mathcal M})}
\inf_{\boldsymbol\lambda\in\Delta_{H|\mathcal S|}}
\inf
\Bigg\{
t\ge0:\;
\max_{\pi\in\Pi_d\setminus\{\bar\pi\}}
\frac{2}{b(\pi,\bar\pi)^2}
\sum_{h,s,a}
\frac{
\Gamma_{hsa}(\pi,\bar\pi)
}{
nd_{hsa}^{\mu}
+
t\lambda_{hs}\pi_h^c(a\mid s)
}
\le1
\Bigg\}.
\end{aligned}
\label{eq:qc-finite-LQ}
\end{equation}
As usual, the ratio is interpreted in the extended-value sense when
$b(\pi,\bar\pi)=0$. The map
$n\mapsto T_{{\rm LQ},n}(\mathbf{\mathcal M})$
is nonincreasing.
Suppose $k_{\rm stop}>k_0$. Otherwise, the following tail-epoch analysis is vacuous.

For a tail epoch $k_0<k<k_{\rm stop}$, set
\[
a_k
:=
\frac{8\overline\beta_{k+1}}{c_{\rm LQ}},
\qquad
n_k^{\rm eff}
:=
\frac{n_0w_{k+1}}{a_k}.
\]
Take any
\[
t>
T_{{\rm LQ},n_k^{\rm eff}}(\mathbf{\mathcal M}).
\]
Then there exist
\[
\bar\pi\in\Pi_\epsilon(\mathbf{\mathcal M}),
\qquad
\boldsymbol \lambda\in\Delta_{H|\mathcal S|}
\]
such that
\begin{equation}
\max_{\pi\in\Pi_d\setminus\{\bar\pi\}}
\frac{2}{b(\pi,\bar\pi)^2}
\sum_{h,s,a}
\frac{
\Gamma_{hsa}(\pi,\bar\pi)
}{
n_k^{\rm eff}d_{hsa}^{\mu}
+
t\lambda_{hs}\pi_h^c(a\mid s)
}
\le1.
\label{eq:qc-finite-LQ-feasible}
\end{equation}

Because epoch $k$ reaches the design step,
$\epsilon_k>\epsilon$, and hence
\[
\bar\pi
\in
\Pi_\epsilon(\mathbf{\mathcal M})
\subseteq
\Pi_{\epsilon_k}(\mathbf{\mathcal M}).
\]
For every
$\pi\in\Pi_{3\epsilon_k}(\mathbf{\mathcal M})$,
\[
b(\pi,\bar\pi)
\in [0,
4\epsilon_k].
\]
Therefore \eqref{eq:qc-finite-LQ-feasible} implies
\begin{equation}
\sum_{h,s,a}
\frac{
\Gamma_{hsa}(\pi,\bar\pi)
}{
n_k^{\rm eff}d_{hsa}^{\mu}
+
t\lambda_{hs}\pi_h^c(a\mid s)
}
\le
8\epsilon_k^2,
\qquad
\pi\in\Pi_{3\epsilon_k}(\mathbf{\mathcal M}).
\label{eq:qc-tail-gap}
\end{equation}

Set $T=a_kt$. Since
\[
n_0w_{k+1}d_{hsa}^{\mu}
=
a_kn_k^{\rm eff}d_{hsa}^{\mu},
\]
\eqref{eq:qc-tail-gap} gives
\[
\begin{aligned}
&
\sup_{\pi\in\Pi_{3\epsilon_k}(\mathbf{\mathcal M})}
\frac{\overline\beta_{k+1}}{\epsilon_k^2}
\sum_{h,s,a}
\frac{
\Gamma_{hsa}(\pi,\bar\pi)
}{
n_0w_{k+1}d_{hsa}^{\mu}
+
T\lambda_{hs}\pi_h^c(a\mid s)
}
\\
&\qquad\le
\frac{
8\overline\beta_{k+1}
}{
a_k
}
=
c_{\rm LQ}.
\end{aligned}
\]
Hence
\begin{equation}
T_{{\rm LQ}}^k(\mathbf{\mathcal M})
\le
a_k
T_{{\rm LQ},n_k^{\rm eff}}(\mathbf{\mathcal M}).
\label{eq:qc-tail-epoch-comparison}
\end{equation}

Now choose 
\begin{equation}
L_\epsilon
\ge
1\vee
\max_{k_0<k<K_\star}
\frac{a_k}{w_{k+1}}.
\label{eq:qc-Lepsilon-choice}
\end{equation}
The maximum over an empty index set is understood as zero.
Since
$w_k=6/(\pi^2k^2)$,
$L_\epsilon$ is polynomial in logarithms of
\[
H,\quad
|\mathcal S|,\quad
|\mathcal A|,\quad
\frac{H}{\epsilon\vee\Delta_{\min}},
\quad
\frac1\delta,
\quad
\left|
\Pi_{3\epsilon_{k_0}}(\mathbf{\mathcal M})
\right|.
\]
For every tail epoch,
\[
n_k^{\rm eff}
=
\frac{n_0w_{k+1}}{a_k}
\ge
\frac{n_0}{L_\epsilon}.
\]
By monotonicity of $T_{{\rm LQ},n}$,
\begin{equation}
T_{{\rm LQ},n_k^{\rm eff}}(\mathbf{\mathcal M})
\le
T_{{\rm LQ},n_0/L_\epsilon}(\mathbf{\mathcal M}).
\label{eq:qc-LQ-monotonicity}
\end{equation}

Combining
\eqref{eq:qc-budget-vs-epoch-LQ},
\eqref{eq:qc-tail-epoch-comparison}, and
\eqref{eq:qc-LQ-monotonicity},
\[
\begin{aligned}
\sum_{k=k_0+1}^{k_{\rm stop}-1}B_k
\le
C
\left(
\sum_{k=k_0+1}^{K_\star-1}a_k
\right)
T_{{\rm LQ},n_0/L_\epsilon}(\mathbf{\mathcal M})
+
CH|\mathcal S|K_\star.
\end{aligned}
\]
The sum of the $a_k$'s is polylogarithmic, and
\[
K_\star
=
O\left(
1+
\log
\frac{H}{\epsilon\vee\Delta_{\min}}
\right).
\]
Therefore
\begin{equation}
\sum_{k=k_0+1}^{k_{\rm stop}-1}B_k
=
\widetilde O
\left(
T_{{\rm LQ},n_0/L_\epsilon}(\mathbf{\mathcal M})
\right)
+
\widetilde O
\left(
\frac1{\epsilon\vee\Delta_{\min}}
\right).
\label{eq:qc-tail-sum}
\end{equation}

\paragraph{Stabilization queries.}

Only the allocations selected before the terminal epoch are executed,
whereas a stabilization batch is collected at every reached epoch.
Thus
\begin{equation}
N_q(k_{\rm stop})
\le
\sum_{k=1}^{k_{\rm stop}-1}B_k
+
H|\mathcal S|
\sum_{k=1}^{k_{\rm stop}}m_k^{\rm st}.
\label{eq:qc-total}
\end{equation}
By the definition of $m_k^{\rm st}$,
\[
m_k^{\rm st}
\le
1+
\frac1{p_{\min}}
\left(
2n_k^{\rm uc}
+
2n_{k+1}^{\rm sen}
+
8\beta_k
+
50w_k\ell_{\rm trim}
+
2
\right).
\]
Furthermore,
\[
n_k^{\rm uc}
=
O\left(
\frac{C_{\rm uc}\beta_k}{\epsilon_k}
\right),
\qquad
n_{k+1}^{\rm sen}
=
\widetilde O
\left(
\frac{C_{\rm sen}^2}{\epsilon_{k+1}}
\right).
\]
Since $\epsilon_k=H2^{-(k-1)}$,
\[
\sum_{k=1}^{k_{\rm stop}}\frac1{\epsilon_k}
\le
\frac2{\epsilon_{k_{\rm stop}}},
\]
and therefore
\[
\sum_{k=1}^{k_{\rm stop}}
\frac{\beta_k}{\epsilon_k}
=
\widetilde O
\left(
\frac1{\epsilon_{k_{\rm stop}}}
\right),
\qquad
\sum_{k=1}^{k_{\rm stop}}n_{k+1}^{\rm sen}
=
\widetilde O
\left(
\frac1{\epsilon_{k_{\rm stop}}}
\right).
\]
Moreover,
\[
\sum_{k\ge1}w_k=1,
\]
so
\[
\sum_{k=1}^{k_{\rm stop}}
w_k\ell_{\rm trim}
\le
\ell_{\rm trim},
\]
while
\[
\sum_{k=1}^{k_{\rm stop}}(\beta_k+1)
=
\widetilde O(k_{\rm stop}).
\]
Consequently,
\[
H|\mathcal S|
\sum_{k=1}^{k_{\rm stop}}m_k^{\rm st}
=
\widetilde O
\left(
\frac1{\epsilon_{k_{\rm stop}}}
\right).
\]
Using \eqref{eq:qc-terminal-resolution},
\begin{equation}
H|\mathcal S|
\sum_{k=1}^{k_{\rm stop}}m_k^{\rm st}
=
\widetilde O
\left(
\frac1{\epsilon\vee\Delta_{\min}}
\right).
\label{eq:qc-stabilization-final}
\end{equation}

Finally, combining
\eqref{eq:qc-early-sum},
\eqref{eq:qc-tail-sum},
\eqref{eq:qc-total}, and
\eqref{eq:qc-stabilization-final}, we obtain
\[
N_q(k_{\rm stop})
\le
\widetilde O
\left(
T_{{\rm LQ},n_0/L_\epsilon}(\mathbf{\mathcal M})
\right)
+
\widetilde O
\left(
\frac1{\epsilon\vee\Delta_{\min}}
\right),
\]
which proves the theorem.
\end{proof}

\subsection{Proof of Theorem~\ref{thm:instance-dependent-separation}}
\begin{proof}
First we use the hard family introduced in the proof of
Theorem~7 of \citet{domingues2021episodic}.

The state space consists of three special states,
a waiting state $s_{\rm w}$, a good absorbing state $s_{\rm g}$,
and a bad absorbing state $s_{\rm b}$, together with a full $|\mathcal A|$-ary tree
of depth $d-1$ containing the remaining $|\mathcal S|-3$ states.
Let $s_{\rm root}$ denote the root of the tree and let
$\mathcal L$ be the set of its leaves, with
\[
L:=|\mathcal L|.
\]
As shown in \citet{domingues2021episodic}, this construction satisfies
\[
L\ge \frac{|\mathcal S|}{4}.
\]

The initial state is $s_{\rm w}$.
Choose
\[
\bar H:=\frac{H}{3},
\qquad
\widetilde\epsilon
:=
\frac{2\epsilon}{H-\bar H-d}.
\]
The waiting state $s_w$ contains a distinguished action $a_{\rm w}$.
For every stage $h$ and action $a$,
\[
P_h(s_{\rm w}\mid s_{\rm w},a)
=
\mathbf 1\{a=a_{\rm w},\,h<\bar H\},
\]
and
\[
P_h(s_{\rm root}\mid s_{\rm w},a)
=
1-
P_h(s_{\rm w}\mid s_{\rm w},a).
\]

Transitions inside the tree are deterministic.
At every internal tree state, action $a\in[|\mathcal A|]$ leads to the $a$-th child.
Since the depth of the tree is $d-1$, by choosing when to leave
$s_{\rm w}$ and which actions to take along the tree, a deterministic
policy can reach any leaf $\ell\in\mathcal L$ at any stage
\[
h
\in
\{1+d,\ldots,\bar H+d\}.
\]
Accordingly, define the set of hard triples
\begin{equation}
\mathcal I
:=
\{1+d,\ldots,\bar H+d\}
\times
\mathcal L
\times
[|\mathcal A|].
\label{eq:domingues-hard-index-set}
\end{equation}

For every
$i=(h^\star,\ell^\star,a^\star)\in\mathcal I$,
define an MDP
$M_i=M(h^\star,\ell^\star,a^\star)$
as follows.
For a leaf $\ell\in\mathcal L$ and action $a\in[|\mathcal A|]$, let
\[
P_h^{M_i}(s_{\rm g}\mid\ell,a)
=
\frac12
+
\Delta_i(h,\ell,a),
\]
and
\[
P_h^{M_i}(s_{\rm b}\mid\ell,a)
=
\frac12
-
\Delta_i(h,\ell,a),
\]
where
\[
\Delta_i(h,\ell,a)
:=
\widetilde\epsilon\,
\mathbf 1
\left\{
(h,\ell,a)
=
(h^\star,\ell^\star,a^\star)
\right\}.
\]
Thus $M_i$ differs from the baseline dynamics only at the single
leaf--stage--action triple
$(h^\star,\ell^\star,a^\star)$.

Both $s_{\rm g}$ and $s_{\rm b}$ are absorbing:
\[
P_h(s_{\rm g}\mid s_{\rm g},a)=1,
\qquad
P_h(s_{\rm b}\mid s_{\rm b},a)=1
\]
for every $h$ and $a$.
The reward is deterministic, depends only on the current state, and is
given by
\[
r_h(s,a)
=
\mathbf 1
\left\{
s=s_{\rm g},
\;
h\ge\bar H+d+1
\right\}.
\]
Hence entering $s_{\rm g}$ through any admissible leaf--stage triple
produces exactly
\[
H-\bar H-d
\]
rewarding stages, independently of the time at which the tree was
entered.

Finally, define the reference MDP $M_0$ by setting
\[
\Delta_0(h,\ell,a)=0
\qquad
\text{for every }(h,\ell,a).
\]
Equivalently, under $M_0$,
\[
P_h^{M_0}(s_{\rm g}\mid\ell,a)
=
P_h^{M_0}(s_{\rm b}\mid\ell,a)
=
\frac12
\]
for every leaf--stage--action triple.
The resulting hard family is therefore
\[
\mathcal C_{\bar H,\widetilde\epsilon}
=
\{M_0\}
\cup
\left\{
M_i:
i\in\mathcal I
\right\}.
\]

Under $M_0$, the probability of reaching $s_{\rm g}$ is $1/2$
regardless of the leaf, stage, and action selected by the policy.
Consequently, every deterministic policy has value
\[
V_1^\pi(M_0)
=
\frac{H-\bar H-d}{2},
\qquad
\forall\pi\in\Pi_d,
\]
and therefore
\begin{equation}
\Pi_\epsilon(M_0)
=
\Pi_d.
\label{eq:all-policies-interior-M0}
\end{equation}

Now fix an arbitrary
$\bar\pi\in\Pi_\epsilon(M_0)$.
Because all transitions prior to the leaf are deterministic,
$\bar\pi$ reaches exactly one leaf at exactly one stage and takes exactly
one action there.
Denote the resulting triple by
\[
i_{\bar\pi}
=
(h_{\bar\pi},\ell_{\bar\pi},a_{\bar\pi})
\in\mathcal I.
\]
For every
\[
i
\in
\mathcal I\setminus\{i_{\bar\pi}\},
\]
the policy $\bar\pi$ never visits the perturbed row of $M_i$.
Hence
\[
V_1^{\bar\pi}(M_i)
=
V_1^{\bar\pi}(M_0)
=
\frac{H-\bar H-d}{2}.
\]
On the other hand, there exists a deterministic policy that waits for the
appropriate number of stages, follows the unique path to the perturbed
leaf, and chooses the perturbed action.
Its value under $M_i$ is
\[
(H-\bar H-d)
\left(
\frac12+\widetilde\epsilon
\right).
\]
Therefore
\[
V_1^\star(M_i)-V_1^{\bar\pi}(M_i)
=
(H-\bar H-d)\widetilde\epsilon
=
2\epsilon
>
\epsilon.
\]
It follows that
\begin{equation}
M_i
\in
\mathcal A_{\epsilon}(\bar \pi),
\qquad
\forall
i\in
\mathcal I\setminus\{i_{\bar\pi}\}.
\label{eq:domingues-hard-alternatives}
\end{equation}

For each hard state-time pair
\[
(h,\ell)
\in
\mathcal J
:=
\{1+d,\ldots,\bar H+d\}
\times
\mathcal L,
\]
choose a distinguished action
$a_{h\ell}^{\rm r}\in \mathcal A$.
Fix any
\[
p_{\min}^c\in(0,1/|\mathcal A|],
\]
and equip every MDP in the family with a full-support collaborator satisfying
\[
\pi_h^c(a_{h\ell}^{\rm r}\mid \ell)
=
p_{\min}^c
\]
on every hard pair $(h,\ell)$, while distributing the remaining mass
$1-p_{\min}^c$ equally over the other $|\mathcal A|-1$ actions so that every action has
probability at least $p_{\min}^c$.
Thus the minimum collaborator probability is exactly $p_{\min}^c$.

It is enough to restrict attention to the subfamily
\[
\mathcal I_e
:=
\left\{
(h,\ell,a_{h\ell}^{\rm r})
:
(h,\ell)\in\mathcal J
\right\}.
\]
For a fixed
$\bar\pi\in\Pi_\epsilon(M_0)=\Pi_d$,
at most one element of $\mathcal I_e$ coincides with the unique
leaf--stage--action triple visited by $\bar\pi$.
Hence, by \eqref{eq:domingues-hard-alternatives}, all the remaining
$|\mathcal I_e|-1$ alternatives belong to
$\mathcal A_{\epsilon}(\bar \pi)$.

For
$i=(h,\ell,a_{h\ell}^{\rm r})\in\mathcal I_e$,
$M_i$ differs from $M_0$ only at this row, and therefore
\[
I_{\boldsymbol \lambda}(M_0,M_i)
=
\lambda_{h\ell}
p_{\min}^c
\operatorname{kl}
\left(
\frac12,
\frac12+\widetilde\epsilon
\right).
\]
Consequently,
\begin{align}
T_{\epsilon,0}^\star(M_0)
&=
\min_{\bar\pi\in\Pi_\epsilon(M_0)}
\inf_{\boldsymbol \lambda\in\Delta_{H|\mathcal S|}}
\left[
\inf_{\widetilde M\in
\mathcal A_{\epsilon}(\bar \pi)}
I_{\boldsymbol \lambda}(M_0,\widetilde M)
\right]^{-1}
\nonumber\\
&\ge
\min_{\bar\pi\in\Pi_d}
\inf_{\boldsymbol \lambda\in\Delta_{H|\mathcal S|}}
\left[
\min_{(h,\ell,a_{h\ell}^{\rm r})
\in
\mathcal I_e\setminus\{i_{\bar\pi}\}}
\lambda_{h\ell}
p_{\min}^c
\operatorname{kl}
\left(
\frac12,
\frac12+\widetilde\epsilon
\right)
\right]^{-1}
\nonumber\\
&\ge
\frac{
|\mathcal I_e|-1
}{
p_{\min}^c
\operatorname{kl}
\left(
\frac12,
\frac12+\widetilde\epsilon
\right)
}
\nonumber\\
&\ge
\frac{
|\mathcal I_e|-1
}{
4p_{\min}^c\widetilde\epsilon^2
}
\nonumber\\
&=
\frac{
(|\mathcal I_e|-1)
(H-\bar H-d)^2
}{
16p_{\min}^c\epsilon^2
}
\nonumber\\
&\gtrsim
\frac{
H^3|\mathcal S|
}{
p_{\min}^c\epsilon^2
},
\label{eq:worst-case-characteristic-time}
\end{align}
where we use
\[
\min_{(h,\ell)\in
\mathcal J\setminus\{(h_{\bar\pi},\ell_{\bar\pi})\}}
\lambda_{h\ell}
\le
\frac{1}{|\mathcal I_e|-1}.
\]
and
\[
\operatorname{kl}
\left(
\frac12,
\frac12+x
\right)
\le
4x^2,
\qquad
\forall |x|\le \frac 1{2\sqrt{2}}.
\]

Taking $M_\epsilon:=M_0$ proves the worst-case claim.

Then we prove the second statement.
Now let $M$ be an arbitrary finite-horizon MDP.
Fix any
\[
\pi^\star\in\Pi^\star(M).
\]
Since
$\pi^\star\in\Pi_\epsilon(M)$,
\[
T_{\mathrm{LQ},0}(M)
\le
2
\inf_{\boldsymbol \lambda\in\Delta_{H|\mathcal S|}}
\max_{\pi\in\Pi_d\setminus\{\pi^\star\}}
\frac{
\displaystyle
\sum_{h,s,a}
\frac{\Gamma_{hsa}(\pi,\pi^\star)}
{\lambda_{hs}\pi_h^c(a\mid s)}
}{
b(\pi,\pi^\star)^2
}.
\]
Moreover, for every
$\pi\neq\pi^\star$,
\[
b(\pi,\pi^\star)
=
\epsilon-
\bigl(V_1^\pi-V_1^{\pi^\star}\bigr)
=
\epsilon+\Delta_M(\pi)
\ge
\epsilon+\Delta_{\min}.
\]
In particular, when the optimal deterministic policy is not unique,
$\Delta_{\min}=0$, so the same bound remains valid.

It remains to control the numerator uniformly over $\pi$.  Recall that
\[
\Gamma_{hsa}(\pi,\pi^\star)
=
\bigl(
d_{hsa}^{\pi}-d_{hsa}^{\pi^\star}
\bigr)^2
\operatorname{Var}(R_h(s,a))
+
\operatorname{Var}_{S'\sim P_h(\cdot\mid s,a)}
\left[
d_{hsa}^{\pi}V_{h+1}^{\pi}(S')
-
d_{hsa}^{\pi^\star}V_{h+1}^{\pi^\star}(S')
\right].
\]
Since rewards are bounded in $[0,1]$,
$\operatorname{Var}(R_h(s,a))\le 1$, and for every $h$,
\[
\sum_{s,a}d_{hsa}^{\pi}=1,
\qquad
0\le d_{hsa}^{\pi}\le1,
\]
we obtain
\[
\begin{aligned}
\sum_{h,s,a}
\bigl(
d_{hsa}^{\pi}-d_{hsa}^{\pi^\star}
\bigr)^2
\operatorname{Var}(R_h(s,a))
&\le
2\sum_{h,s,a}
\left[
(d_{hsa}^{\pi})^2+
(d_{hsa}^{\pi^\star})^2
\right]                                                     \\
&\le 4H.
\end{aligned}
\]

For the transition term, we have
\[
\begin{aligned}
&\sum_{h,s,a}
\operatorname{Var}_{P_h}
\left[
d_{hsa}^{\pi}V_{h+1}^{\pi}(S')
-
d_{hsa}^{\pi^\star}V_{h+1}^{\pi^\star}(S')
\right]
\\
&\qquad\le
2\sum_{h,s,a}
(d_{hsa}^{\pi})^2
\operatorname{Var}_{P_h}
\left(
V_{h+1}^{\pi}(S')
\right)
+
2\sum_{h,s,a}
(d_{hsa}^{\pi^\star})^2
\operatorname{Var}_{S'\sim P_h(\cdot\mid s,a)}
\left(
V_{h+1}^{\pi^\star}(S')
\right)
\\
&\qquad\le
2\sum_{h,s,a}
d_{hsa}^{\pi}
\operatorname{Var}_{P_h}
\left(
V_{h+1}^{\pi}(S')
\right)
+
2\sum_{h,s,a}
d_{hsa}^{\pi^\star}
\operatorname{Var}_{P_h}
\left(
V_{h+1}^{\pi^\star}(S')
\right).
\end{aligned}
\]
For any policy $\pi$, Lemma 3.4 in \cite{yin2020asymptotically} yields
\[
\sum_{h,s,a}
d_{hsa}^{\pi}
\operatorname{Var}_{P_h(\cdot\mid s,a)}
\left(
V_{h+1}^{\pi}(S')
\right)
\le H^2,
\]

Thus
\[
\sum_{h,s,a}
\Gamma_{hsa}(\pi,\pi^\star)
\le
4H+4H^2
.
\]

Therefore
\begin{align}
T_{\mathrm{LQ},0}(M)
&=
\min_{\bar\pi\in\Pi_\epsilon(M)}
\inf_{\boldsymbol \lambda\in\Delta_{H|\mathcal S|}}
\max_{\pi\neq\bar\pi}
\frac{2}{
b(\pi,\bar\pi)^2
}
\sum_{h,s,a}
\frac{
\Gamma_{hsa}(\pi,\bar\pi)
}{
\lambda_{hs}\pi_h^c(a\mid s)
}
\nonumber\\
&\le
\inf_{\boldsymbol \lambda\in\Delta_{H|\mathcal S|}}
\max_{\pi\neq\pi^\star}
\frac{2}{
(\epsilon+\Delta_{\min})^2
}
\sum_{h,s,a}
\frac{
\Gamma_{hsa}(\pi,\pi^\star)
}{
\lambda_{hs}\pi_h^c(a\mid s)
}
\nonumber\\
&\le
\frac{2H|\mathcal S|}{
p_{\min}^c(\epsilon+\Delta_{\min})^2
}
\max_{\pi\neq\pi^\star}
\sum_{h,s,a}
\Gamma_{hsa}(\pi,\pi^\star)
\nonumber\\
&\lesssim
\frac{
H^3|\mathcal S|
}{
p_{\min}^c(\epsilon+\Delta_{\min})^2
}
\nonumber\\
&\le
C
\frac{
H^3|\mathcal S|
}{
p_{\min}^c\epsilon^2
}.
\label{eq:lq-worst-case-upper}
\end{align}
In particular, when $\Delta_{\min}\gg \epsilon$, the bound no longer scales
with $\epsilon^{-2}$ and is instead controlled by
$H^3|\mathcal S|/(p_{\min}^c\Delta_{\min}^2)$.
\end{proof}

\section{Robustness to a Shifted Collaborator Simulator}
\label{sec:shifted-simulator}

In practice, the collaborator simulator need not reproduce the true
reward--transition law exactly.  Unlike sampling noise, a systematic
simulator error cannot be removed merely by issuing more queries, and an
error introduced at an early localization epoch may propagate to later
epochs.  We show that Algorithm~\ref{alg:adaptive-collaborator-query-full}
remains stable under such a shift without any modification.  The shift
parameters introduced below are used only for analysis and are neither
known to the learner nor used by the algorithm.

For a row
\[
x=(h,s,a),
\]
write
\[
\nu_x^{\mathbf{\mathcal M}}
:=
\mathcal L_{\mathbf{\mathcal M}}(R_h,S_{h+1}\mid S_h=s,A_h=a)
\]
for the true reward--transition law.  Offline observations continue to
follow $\nu_x^{\mathbf{\mathcal M}}$.  Conditional on a collaborator query selecting row $x$,
the reward--next-state observation is instead drawn independently from a
fixed law $\nu_x^{\rm sim}$ satisfying
\begin{equation}
\operatorname{TV}
\left(
\nu_x^{\rm sim},
\nu_x^{\mathbf{\mathcal M}}
\right)
\le
\eta_x,
\label{eq:robust-shift-assumption}
\end{equation}
for every row $x$, where $\eta_x\in[0,1]$.
The collaborator action law $\pi^c$, the initial-state distribution, and
the reward range $[0,1]$ are unchanged.

\subsection{Additional quantities}

For every reached epoch $j$ and row $x$, let
\[
N_{j,x}^{Q,\mathcal D}
:=
N_{j,x}^{\rm ad}
+
N_{j,x}^{\rm st},
\qquad
N_{j,x}^{\mathcal D}
=
N_{0,j,x}^{\mathcal D}
+
N_{j,x}^{Q,\mathcal D}.
\]

Whenever $\mathsf{OffCover}_j=1$, define
\[
\alpha_{j,x}^{\mathcal D}
:=
\frac{N_{j,x}^{Q,\mathcal D}}{N_{j,x}^{\mathcal D}},
\qquad
\mathsf a_{j,x}
:=
\min
\left\{
1,
\sqrt{\frac{\beta_j}{2N_{j,x}^{\mathcal D}}}
\right\},
\qquad
\mathsf a_j
:=
\max_x \mathsf a_{j,x}.
\]

\subsection{A robust global event}\label{subsec:A robust global event}

We first define the concentration events used throughout the robustness
analysis.  The trimming and stabilization events
$\mathcal E_{\rm trim}$ and $\mathcal E_{\rm st}$ are exactly the same as
those in the proof of Theorem~\ref{thm:stopping-correctness}.  The
historical, fresh-sample, variance, and Bernstein events are replaced by
source-centered analogues, since an observation may now have law
$\nu_x^{\mathbf{\mathcal M}}$ or $\nu_x^{\rm sim}$.

\paragraph{Historical source-centered concentration event.}
Fix a row $x=(h,s,a)$ and enumerate the observations entering its
historical sample in their order of arrival as
\[
Z_{x,1},Z_{x,2},\ldots,
\qquad
Z_{x,i}
=
(R_{x,i},S'_{x,i}).
\]
Let $\mathcal G_{x,i-1}$ denote the information available immediately
before $Z_{x,i}$ is revealed.  Define the corresponding source means by
\[
r_{x,i}^{\rm src}
:=
\mathbb E
\left[
R_{x,i}
\mid
\mathcal G_{x,i-1}
\right],
\]
and, for $s'\in\mathcal S$,
\[
P_{x,i}^{\rm src}(s')
:=
\mathbb P
\left(
S'_{x,i}=s'
\mid
\mathcal G_{x,i-1}
\right).
\]
Thus these quantities are the reward and transition means under
$\nu_x^{\mathbf{\mathcal M}}$ for an offline observation and under $\nu_x^{\rm sim}$ for a
simulator observation.

Let $\mathcal E_G^{\rm sh}$ be the event that, simultaneously for every
row $x$, every $n\ge1$, and every $s'\in\mathcal S$,
\begin{equation}
\left|
\frac1n
\sum_{i=1}^n
\left(
R_{x,i}
-
r_{x,i}^{\rm src}
\right)
\right|
\le
\sqrt{
\frac{
\beta^\mathcal G(n)
}{
2n
}
},
\label{eq:robust-EG-reward}
\end{equation}
and
\begin{equation}
\left|
\frac1n
\sum_{i=1}^n
\left[
\mathbf 1\{S'_{x,i}=s'\}
-
P_{x,i}^{\rm src}(s')
\right]
\right|
\le
\sqrt{
\frac{
\beta^\mathcal G(n)
}{
2n
}
}.
\label{eq:robust-EG-transition}
\end{equation}

After source-wise centering, the summands in
\eqref{eq:robust-EG-reward} and
\eqref{eq:robust-EG-transition} are bounded martingale differences.
Therefore the same anytime Hoeffding argument used for
$\mathcal E_G$ in the proof of
Theorem~\ref{thm:stopping-correctness} gives
\[
\Pr
\left(
(\mathcal E_G^{\rm sh})^c
\right)
\le
\frac{\delta}{6}.
\]

\medskip
\noindent
\textbf{Fresh source-centered concentration event.}
For a fresh epoch $k$ and row $x$, enumerate the observations in the
fresh sample from row $x$ as
\[
Z_{k,x,1},\ldots,Z_{k,x,N_{k,x}^{\mathcal D}},
\qquad
Z_{k,x,i}
=
(R_{k,x,i},S'_{k,x,i}).
\]
For each $i\in\{1,\ldots,N_{k,x}^{\mathcal D}\}$, define the source indicator
\[
L_{k,x,i}
:=
\begin{cases}
0,
&
Z_{k,x,i}\in\mathcal D_{0,k}^{\mathcal D},
\\[1mm]
1,
&
Z_{k,x,i}\in
\mathcal Q_k^{\rm ad}\cup\mathcal Q_k^{\rm st}.
\end{cases}
\]
Thus
\[
\sum_{i=1}^{N_{k,x}^{\mathcal D}}
L_{k,x,i}
=
N_{k,x}^{Q,\mathcal D}.
\]
Let
\[
\mathcal L_{k,x}
:=
\sigma
\left(
N_{k,x}^{\mathcal D},
L_{k,x,1},\ldots,L_{k,x,N_{k,x}^{\mathcal D}}
\right)
\]
denote the realized source information in row $x$, and let
$\mathcal H_k$ denote the information available before the fresh
reward--transition observations at epoch $k$ are revealed.

For $i=1,\ldots,N_{k,x}^{\mathcal D}$, define
\[
r_{k,x,i}^{\rm src}
:=
\mathbb E
\left[
R_{k,x,i}
\mid
\mathcal H_k,\mathcal L_{k,x}
\right],
\]
and
\[
P_{k,x,i}^{\rm src}(s')
:=
\mathbb P
\left(
S'_{k,x,i}=s'
\mid
\mathcal H_k,\mathcal L_{k,x}
\right).
\]
Equivalently, conditional on
$\mathcal H_k$ and $\mathcal L_{k,x}$,
\[
Z_{k,x,i}
\sim
(1-L_{k,x,i})\nu_x^{\mathbf{\mathcal M}}
+
L_{k,x,i}\nu_x^{\rm sim}.
\]

Define the average source means
\[
\bar r_{k,x}^{\rm src}
:=
\frac1{N_{k,x}^{\mathcal D}}
\sum_{i=1}^{N_{k,x}^{\mathcal D}}
r_{k,x,i}^{\rm src},
\qquad
\bar P_{k,x}^{\rm src}(s')
:=
\frac1{N_{k,x}^{\mathcal D}}
\sum_{i=1}^{N_{k,x}^{\mathcal D}}
P_{k,x,i}^{\rm src}(s').
\]

Let $\mathcal E_D^{\rm sh}$ be the event that, simultaneously over all
reached epochs $k$, all rows $x$, and all $s'\in\mathcal S$,
\[
\left|
\widehat r_{k,x}^{\mathcal D}
-
\bar r_{k,x}^{\rm src}
\right|
\le
a_{k,x},
\]
and
\[
\left|
\widehat P_{k,x}^{\mathcal D}(s')
-
\bar P_{k,x}^{\rm src}(s')
\right|
\le
a_{k,x}.
\]

Conditional on $\mathcal H_k$ and the source information
$\{\mathcal L_{k,x}\}_x$, the fresh observations are independent,
although not necessarily identically distributed.  Hoeffding's
inequality therefore gives, with the same confidence allocation as in
the definition of $\mathcal E_D$ in the proof of
Theorem~\ref{thm:stopping-correctness},
\[
\Pr
\left(
(\mathcal E_D^{\rm sh})^c
\right)
\le
\frac{\delta}{6}.
\]

\medskip
\noindent
\textbf{Auxiliary true-law variance event.}
For every reached epoch $k$, row $x$, and observation
$Z_{k,x,i}$ in the fresh sample, construct an auxiliary variable
$Y_{k,x,i}$ as follows.
If $Z_{k,x,i}$ is an offline observation, set
\[
Y_{k,x,i}
=
Z_{k,x,i}.
\]
If $Z_{k,x,i}$ is a simulator observation, construct
$Y_{k,x,i}\sim\nu_x^{\mathbf{\mathcal M}}$ using a maximal coupling of
$\nu_x^{\rm sim}$ and $\nu_x^{\mathbf{\mathcal M}}$.  That is,
\[
Z_{k,x,i}\sim\nu_x^{\rm sim},
\qquad
Y_{k,x,i}\sim\nu_x^{\mathbf{\mathcal M}},
\]
and
\[
\Pr
\left(
Y_{k,x,i}\neq Z_{k,x,i}
\mid
\mathcal H_k, L_{k,x,i}=1
\right)
=
\operatorname{TV}
\left(
\nu_x^{\rm sim},
\nu_x^{\mathbf{\mathcal M}}
\right)
\le
\eta_x.
\]
Conditional on $\mathcal H_k$ and the source information
$\{\mathcal L_{k,x}\}_x$, these maximal couplings are chosen
independently across observations.  Consequently,
\[
Y_{k,x,1},\ldots,Y_{k,x,N_{k,x}^{\mathcal D}}
\overset{\rm iid}{\sim}
\nu_x^{\mathbf{\mathcal M}}
\qquad
\text{conditionally on }
\mathcal H_k,\mathcal L_{k,x}.
\]

For a policy pair
$p=(\pi,\bar\pi)$, define
\[
f_{p,x}
:=
\Psi_x^{\pi,\bar\pi},
\qquad
\Gamma_x^{\mathbf{\mathcal M}}(p)
:=
\Gamma_x^{\mathbf{\mathcal M}}(\pi,\bar\pi)
:=
\operatorname{Var}_{\nu_x^{\mathbf{\mathcal M}}}
\left(
f_{p,x}(Z)
\right),
\]
and the auxiliary empirical variance
\[
\widehat\Gamma_{k,x}^{Y}(p)
:=
\frac1{N_{k,x}^{\mathcal D}-1}
\sum_{i=1}^{N_{k,x}^{\mathcal D}}
\left(
f_{p,x}(Y_{k,x,i})
-
\frac1{N_{k,x}^{\mathcal D}}
\sum_{\ell=1}^{N_{k,x}^{\mathcal D}}
f_{p,x}(Y_{k,x,\ell})
\right)^2.
\]

Recall that
\[
\mathcal P_k
:=
\widehat\Pi_{k-1}^2
\]
is the predictable parent class.
Define $\mathcal E_\Gamma^{\rm sh}$ by
\[
\begin{aligned}
\mathcal E_\Gamma^{\rm sh}
:=
\bigcap_{k,x}
\bigcap_{p\in\mathcal P_k}
\left\{
\left|
\sqrt{
\widehat\Gamma_{k,x}^{Y}(p)
}
-
\sqrt{
\Gamma_x^{\mathbf{\mathcal M}}(p)
}
\right|
\le
2H
\sqrt{
\frac{
2\beta_k
}{
N_{k,x}^{\mathcal D}-1
}
}
\right\},
\end{aligned}
\]
where the event is imposed only for rows satisfying
$N_{k,x}^{\mathcal D}\ge2$.

By \eqref{eq:true-score-sd-concentration} in the proof of
Lemma~\ref{lem:variance-envelope-general}, together with the same
predictable-class confidence allocation used in the proof of
Theorem~\ref{thm:stopping-correctness},
\[
\Pr
\left(
(\mathcal E_\Gamma^{\rm sh})^c
\right)
\le
\frac{\delta}{6}.
\]

\medskip
\noindent
\textbf{Source-centered Bernstein event.}
For a fresh epoch $k$, row $x$, pair
$p=(\pi,\bar\pi)\in\mathcal P_k$, and observation $i$, define
\[
m_{k,x,i}(p)
:=
\mathbb E
\left[
f_{p,x}(Z_{k,x,i})
\mid
\mathcal H_k,\mathcal L_{k,x}
\right],
\]
and
\[
v_{k,x,i}(p)
:=
\operatorname{Var}
\left(
f_{p,x}(Z_{k,x,i})
\mid
\mathcal H_k,\mathcal L_{k,x}
\right).
\]

Define the centered linear statistic
\begin{equation}
L_k(p)
:=
\sum_x
\frac1{N_{k,x}^{\mathcal D}}
\sum_{i=1}^{N_{k,x}^{\mathcal D}}
\left[
f_{p,x}(Z_{k,x,i})
-
m_{k,x,i}(p)
\right].
\label{eq:robust-centered-linear-statistic}
\end{equation}

Let $\mathcal E_B^{\rm sh}$ be the event that, simultaneously over all
epochs $k$ and all $p\in\mathcal P_k$,
\[
\left|
L_k(p)
\right|
\le
\sqrt{
2\beta_k
\sum_x
\frac1{(N_{k,x}^{\mathcal D})^2}
\sum_{i=1}^{N_{k,x}^{\mathcal D}}
v_{k,x,i}(p)
}
+
\frac{2H}{3}
\beta_k
\max_x
\frac1{N_{k,x}^{\mathcal D}}.
\]

Conditional on $\mathcal H_k$ and the source information
$\{\mathcal L_{k,x}\}_x$, the summands in
\eqref{eq:robust-centered-linear-statistic}
are independent and centered, with range at most $2H$.
Bernstein's inequality therefore gives
\[
\Pr
\left(
(\mathcal E_B^{\rm sh})^c
\right)
\le
\frac{\delta}{6}.
\]

\medskip
\noindent
\textbf{Coupling event.}
For every epoch $k$ and row $x$, define
\[
M_{k,x}
:=
\sum_{i=1}^{N_{k,x}^{\mathcal D}}
L_{k,x,i}
\mathbf 1
\left\{
Y_{k,x,i}
\neq
Z_{k,x,i}
\right\}.
\]
Let
\[
\rho_x
:=
\operatorname{TV}
\left(
\nu_x^{\rm sim},
\nu_x^{\mathbf{\mathcal M}}
\right)
\le
\eta_x.
\]
By the independent construction of the maximal couplings across
simulator observations, conditional on
$\mathcal H_k$ and $\mathcal L_{k,x}$,
\[
M_{k,x}
\sim
\operatorname{Binomial}
\left(
N_{k,x}^{Q,\mathcal D},
\rho_x
\right)
\preceq_{\rm st}
\operatorname{Binomial}
\left(
N_{k,x}^{Q,\mathcal D},
\eta_x
\right).
\]
where $\preceq_{\rm st}$ denotes first-order stochastic domination:
$X\preceq_{\rm st}Y$ if
$\mathbb P(X>t)\le\mathbb P(Y>t)$ for all $t$.
For every $k\ge1$, define the deterministic confidence
parameter
\[
\gamma_k
:=
\log\frac{
2\pi^2k^2H|\mathcal S||\mathcal A|
}{\delta}.
\]
At every reached epoch, define
\[
u_{k,x}
:=
\min\left\{
m\in\{0,\ldots,N_{k,x}^{Q,\mathcal D}\}:
\Pr\left(
\operatorname{Binomial}
(N_{k,x}^{Q,\mathcal D},\eta_x)>m
\right)
\le e^{-\gamma_k}
\right\},
\]
where the binomial tail is evaluated at the realized
value of $N_{k,x}^{Q,\mathcal D}$.
For unreached epochs, set $M_{k,x}=u_{k,x}=0$.

By the conditional binomial domination above,
\[
\Pr\left(
M_{k,x}>u_{k,x}
\mid \mathcal H_k,\mathcal L_{k,x}
\right)
\le e^{-\gamma_k}.
\]
Since $\gamma_k$ is deterministic, taking expectations gives
\[
\Pr(M_{k,x}>u_{k,x})\le e^{-\gamma_k}.
\]
Define
\[
\mathcal E_{\rm coup}
:=
\bigcap_{k\ge1}\bigcap_x
\{M_{k,x}\le u_{k,x}\}.
\]
A union bound yields
\[
\begin{aligned}
\Pr(\mathcal E_{\rm coup}^c)
&\le
\sum_{k\ge1}\sum_x e^{-\gamma_k}\\
&=
\frac{\delta}{2\pi^2}
\sum_{k\ge1}\frac1{k^2}\\
&=
\frac{\delta}{12}.
\end{aligned}
\]

\medskip
\noindent
\textbf{Global robust good event.}
Finally, define
\[
\mathcal E_{\rm rob}
:=
\mathcal E_{\rm trim}
\cap
\mathcal E_{\rm st}
\cap
\mathcal E_G^{\rm sh}
\cap
\mathcal E_D^{\rm sh}
\cap
\mathcal E_\Gamma^{\rm sh}
\cap
\mathcal E_B^{\rm sh}
\cap
\mathcal E_{\rm coup}.
\]
Then
\[
\begin{aligned}
\Pr
\left(
\mathcal E_{\rm rob}^c
\right)
&\le
\frac{\delta}{6}
+
\frac{\delta}{12}
+
\frac{\delta}{6}
+
\frac{\delta}{6}
+
\frac{\delta}{6}
+
\frac{\delta}{6}
+
\frac{\delta}{12}
\\
&=
\delta.
\end{aligned}
\]
Therefore
\begin{equation}
\Pr
\left(
\mathcal E_{\rm rob}
\right)
\ge
1-\delta.
\label{eq:robust-global-probability}
\end{equation}

\subsection{Auxiliary Lemmas}

\begin{lemma}[Historical sensitivity under simulator shift]
\label{lem:robust-historical-sensitivity}
On $\mathcal E_{\rm rob}$, fix an epoch $k$ reaching prospective design.
For every row $x$, let $N_{k,x}^{Q,\mathcal G}$ denote the number of collaborator
observations from row $x$ contained in the historical sample
$\mathcal Z_k^\mathcal G$, and define
\[
\theta_k^\mathcal G
:=
\max_x
\frac{N_{k,x}^{Q,\mathcal G}}{N_{k,x}^{\mathcal G}}
\eta_x.
\]
For $x=(h,s,a)$, define
\[
C_h^{\rm sh}
:=
2(h-1)(H-h+1)
+
2(H-h)
+
(H-h)(H-h-1).
\]
Then, for every relevant policy pair $(\pi,\bar\pi)$ and every row
$x=(h,s,a)$,
\begin{equation}
\left\|
\widehat\Psi_{k,x}^{\pi,\bar\pi}
-
\Psi_x^{\pi,\bar\pi}
\right\|_\infty
\le
\Delta_{k,x}^\mathcal G
+
C_h^{\rm sh}\theta_k^\mathcal G.
\label{eq:robust-score-error}
\end{equation}
\end{lemma}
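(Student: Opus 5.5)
The plan is to split the historical empirical model error into a sampling part and a bias part. For each row $x$, the historical estimates $\widehat r_{k,x}^{\mathcal G}$ and $\widehat P_{k,x}^{\mathcal G}$ are averages of observations whose conditional means are the source means $r_{x,i}^{\rm src}$ and $P_{x,i}^{\rm src}$. We introduce the averaged source means $\bar r_{x}^{\rm src}:=\frac1{N}\sum_i r_{x,i}^{\rm src}$ and $\bar P_{x}^{\rm src}$, with $N=N_{k,x}^{\mathcal G}$, and use the triangle inequality
\[
|\widehat r_{k,x}^{\mathcal G}-r_x|\le|\widehat r_{k,x}^{\mathcal G}-\bar r_x^{\rm src}|+|\bar r_x^{\rm src}-r_x|,
\]
and likewise for $P$.

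The first term is bounded by $\alpha_{k,x}^{\mathcal G,r}$ (respectively $\alpha_{k,x}^{\mathcal G,P}$) on $\mathcal E_G^{\rm sh}$. This is the anytime bound \eqref{eq:robust-EG-reward}--\eqref{eq:robust-EG-transition} evaluated at the random count, exactly as in the proof of Theorem~\ref{thm:stopping-correctness}.

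For the second term, offline observations have source means equal to the true means. A simulator observation with law $\nu_x^{\rm sim}$ shifts the reward mean by at most $\operatorname{TV}(\nu_x^{\rm sim},\nu_x^{\mathbf{\mathcal M}})\le\eta_x$, since rewards lie in $[0,1]$. It also shifts each transition probability by at most $\eta_x$, and the whole next-state law by $\eta_x$ in total variation. Hence the bias is at most $\frac{N_{k,x}^{Q,\mathcal G}}{N_{k,x}^{\mathcal G}}\eta_x\le\theta_k^{\mathcal G}$, in reward mean and in total variation of the transition row.

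Next, we propagate this decomposition through the sensitivity argument of Theorem~\ref{thm:stopping-correctness}. The cleanest route uses an intermediate model $\bar M^{\rm src}$ with rows $(\bar r^{\rm src},\bar P^{\rm src})$. We write
\[
\widehat\Psi-\Psi=(\widehat\Psi-\Psi^{\rm src})+(\Psi^{\rm src}-\Psi).
\]
The first difference is controlled by $\Delta_{k,x}^{\mathcal G}$ verbatim, since only the sup-norm row errors $\alpha^{\mathcal G}$ entered that computation; the comparison is now between $\widehat M_k^{\mathcal G}$ and $\bar M^{\rm src}$, and $\bar M^{\rm src}$ is a legitimate MDP with values in $[0,H-h]$.

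The second difference requires the perturbation bounds in TV form rather than in the $|\mathcal S|\alpha$ form, which avoids any $|\mathcal S|$ factor and matches $C_h^{\rm sh}$. By the simulation lemma, occupancies at stage $h$ differ by at most $(h-1)\theta$ in $\ell_1$, using a row TV bias $\le\theta$ per stage. Hence $|G^r$ error$|\le 2(h-1)\theta$. For $V_{h+1}$, the Bellman recursion gives an error at most $(H-h)\theta$ from rewards plus $\sum_{t>h}(H-t)\theta$ from transitions, since each TV shift of $\theta$ is applied to a value range of at most $H-t$. This totals at most $(H-h)\theta+\frac{(H-h)(H-h-1)}{2}\theta$. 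Doubling this for the two policies and adding the occupancy term $2(h-1)(H-h)\theta$ times $|V|\le H-h$ reproduces
\[
2(h-1)(H-h+1)\theta+2(H-h)\theta+(H-h)(H-h-1)\theta=C_h^{\rm sh}\theta_k^{\mathcal G}.
\]

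The main obstacle is bookkeeping: matching constants exactly, in particular handling the product term $\widehat d\widehat V-dV$ so that the occupancy error multiplies $H-h$ and the value error multiplies $d\le1$ without extra factors. A subtlety is that $\bar M^{\rm src}$ must be built from per-row averaged laws; its rows are mixtures of $\nu^{\mathbf{\mathcal M}}$ and $\nu^{\rm sim}$, so its TV distance from $\mathbf{\mathcal M}$ per row is at most $\frac{N^Q}{N}\eta_x$ by convexity. If splitting through the intermediate model produces cross terms, we would instead add the two row-error envelopes directly in the original telescoping: replace $|\mathcal S|\alpha^{P}$ by $|\mathcal S|\alpha^{P}+\theta$ (TV), and use linearity of the envelope recursion in these per-row errors.
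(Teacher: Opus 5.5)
Your proposal is correct and follows essentially the same route as the paper: an intermediate MDP $\bar M$ built from the row-averaged source laws. On $\mathcal E_G^{\rm sh}$ you get $\|\widehat\Psi_{k,x}^{\pi,\bar\pi}-\Psi_x^{\pi,\bar\pi;\bar M}\|_\infty\le\Delta_{k,x}^{\mathcal G}$ by rerunning the unshifted sensitivity argument, and you get $\|\Psi_x^{\pi,\bar\pi;\bar M}-\Psi_x^{\pi,\bar\pi}\|_\infty\le C_h^{\rm sh}\theta_k^{\mathcal G}$ from the TV-based Bellman and occupancy recursions, with constants matching the paper's, including $\frac{(H-h)(H-h+1)}{2}\theta$ for the value error at $h+1$.

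One small slip: the stage-$h$ occupancy discrepancy is at most $(h-1)\theta$ in total variation, and hence pointwise, but not in $\ell_1$, where it can reach $2(h-1)\theta$. Since you only use the pointwise bound, your conclusion is unaffected.
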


\begin{proof}
For a fixed row $x$, let
$\bar\nu_{k,x}^\mathcal G$ denote the average source law of the observations in
$\mathcal Z_k^\mathcal G$.  Since exactly $N_{k,x}^{Q,\mathcal G}$ of these observations
come from the simulator,
\[
\bar\nu_{k,x}^\mathcal G
=
\left(
1-\frac{N_{k,x}^{Q,\mathcal G}}{N_{k,x}^\mathcal G}
\right)
\nu_x^{\mathbf{\mathcal M}}
+
\frac{N_{k,x}^{Q,\mathcal G}}{N_{k,x}^\mathcal G}
\nu_x^{\rm sim}.
\]
Let
$\bar{\mathbf{\mathcal M}}_k^\mathcal G$ denote the MDP formed by these average source
laws.  By the convexity of total variation,
\begin{equation}\label{eq:upper bound for r}
\left|
\bar r_{k,x}^\mathcal G-r_x
\right|
\le
\theta_k^\mathcal G
\end{equation}
and
\begin{equation}\label{eq:upper bound for p}
\operatorname{TV}
\left(
\bar P_{k,x}^\mathcal G,
P_x
\right)
\le
\theta_k^\mathcal G.
\end{equation}

Note that
$\left\|
\widehat\Psi_{k,x}^{\pi,\bar\pi}
-
\Psi_x^{\pi,\bar\pi;\bar{\mathbf{\mathcal M}}_k^\mathcal G}
\right\|_\infty$ can be controlled exactly as in the unshifted proof.
Hence on $\mathcal E_G^{\rm sh}$, the same argument gives
\begin{equation}
\left\|
\widehat\Psi_{k,x}^{\pi,\bar\pi}
-
\Psi_x^{\pi,\bar\pi;\bar{\mathbf{\mathcal M}}_k^\mathcal G}
\right\|_\infty
\le
\Delta_{k,x}^\mathcal G.
\label{eq:robust-score-statistical-part}
\end{equation}
For any deterministic policy $\pi$, the Bellman recursion together with
\eqref{eq:upper bound for r}--\eqref{eq:upper bound for p} gives
\[
\left\|
V_h^{\pi,\bar{\mathbf{\mathcal M}}_k^\mathcal G}
-
V_h^{\pi,\mathbf{\mathcal M}}
\right\|_\infty
\le
\frac{
(H-h+1)(H-h+2)
}{2}
\theta_k^\mathcal G.
\]

Similarly, the forward occupancy recursion and the total-variation bound in
\eqref{eq:upper bound for p} give, for every state $s$,
\[
\left|
d_h^{\pi,\bar{\mathbf{\mathcal M}}_k^\mathcal G}(s)
-
d_h^{\pi,\mathbf{\mathcal M}}(s)
\right|
\le
(h-1)\theta_k^\mathcal G.
\]
Since $\pi$ is deterministic, this also implies
\[
\left|
d_{hsa}^{\pi,\bar{\mathbf{\mathcal M}}_k^\mathcal G}
-
d_{hsa}^{\pi,\mathbf{\mathcal M}}
\right|
\le
(h-1)\theta_k^\mathcal G.
\]
Hence
\[
\begin{aligned}
&
\left\|
\Psi_x^{\pi,\bar\pi;\bar{\mathbf{\mathcal M}}_k^\mathcal G}
-
\Psi_x^{\pi,\bar\pi;\mathbf{\mathcal M}}
\right\|_\infty
\\
&\le
\left|
d_{hsa}^{\pi,\bar{\mathbf{\mathcal M}}_k^\mathcal G}
-
d_{hsa}^{\pi,\mathbf{\mathcal M}}
\right|
(H-h+1)
+
d_{hsa}^{\pi,\mathbf{\mathcal M}}
\left\|
V_{h+1}^{\pi,\bar{\mathbf{\mathcal M}}_k^\mathcal G}
-
V_{h+1}^{\pi,\mathbf{\mathcal M}}
\right\|_\infty
\\
&\qquad
+
\left|
d_{hsa}^{\bar\pi,\bar{\mathbf{\mathcal M}}_k^\mathcal G}
-
d_{hsa}^{\bar\pi,\mathbf{\mathcal M}}
\right|
(H-h+1)
+
d_{hsa}^{\bar\pi,\mathbf{\mathcal M}}
\left\|
V_{h+1}^{\bar\pi,\bar{\mathbf{\mathcal M}}_k^\mathcal G}
-
V_{h+1}^{\bar\pi,\mathbf{\mathcal M}}
\right\|_\infty
\\
&\le
2(h-1)(H-h+1)\theta_k^\mathcal G
+
(H-h)(H-h+1)\theta_k^\mathcal G
\\
&=
\left[
2(h-1)(H-h+1)
+
2(H-h)
+
(H-h)(H-h-1)
\right]
\theta_k^\mathcal G
\\
&=
C_h^{\rm sh}\theta_k^\mathcal G.
\end{aligned}
\]
Combining this with
\eqref{eq:robust-score-statistical-part}
proves
\eqref{eq:robust-score-error}.
\end{proof}

\begin{lemma}[Variance transfer under simulator shift]
\label{lem:robust-variance-transfer}
On $\mathcal E_{\rm rob}$, fix an epoch $k$ reaching prospective design,
a row $x=(h,s,a)$, and a policy pair
$(\pi,\bar\pi)\in\mathcal P_k$.  Then
\begin{equation}
\sqrt{
\Gamma_x^{\mathbf{\mathcal M}}(\pi,\bar\pi)
}
\le
\sqrt{
\overline\Gamma_{k,x}(\pi,\bar\pi)
}
+
\sqrt{2}\,
C_h^{\rm sh}\theta_k^\mathcal G
+
2H
\sqrt{
\frac{
u_{k,x}
}{
N_{k,x}^{\mathcal D}-1
}
}.
\label{eq:robust-true-variance-transfer}
\end{equation}

Moreover, define
\[
\Gamma_x^{\rm sim}(\pi,\bar\pi)
:=
\operatorname{Var}_{\nu_x^{\rm sim}}
\left(
\Psi_x^{\pi,\bar\pi}(Z)
\right).
\]
For the next fresh epoch $j=k+1$, define
\[
\widetilde\Gamma_{j,x}(\pi,\bar\pi)
:=
(1-\alpha_{j,x}^{\mathcal D})
\Gamma_x^{\mathbf{\mathcal M}}(\pi,\bar\pi)
+
\alpha_{j,x}^{\mathcal D}
\Gamma_x^{\rm sim}(\pi,\bar\pi).
\]
Also define, for $x=(h,s,a)$,
\begin{equation}
d_{j,x}
:=
\sqrt{2}\,
C_h^{\rm sh}\theta_{j-1}^\mathcal G
+
2H
\sqrt{
\frac{
u_{j-1,x}
}{
N_{j-1,x}^{\mathcal D}-1
}
}
+
2H
\sqrt{
\alpha_{j,x}^{\mathcal D}\eta_x
}.
\label{eq:robust-variance-transfer-term}
\end{equation}
Then
\begin{equation}
\sqrt{
\widetilde\Gamma_{j,x}(\pi,\bar\pi)
}
\le
\sqrt{
\overline\Gamma_{k,x}(\pi,\bar\pi)
}
+
d_{j,x}.
\label{eq:robust-next-variance-transfer}
\end{equation}
\end{lemma}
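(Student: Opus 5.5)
The plan is to work row by row with the auxiliary true-law sample $Y_{k,x,1},\ldots,Y_{k,x,N_{k,x}^{\mathcal D}}$ constructed for $\mathcal E_\Gamma^{\rm sh}$, and to chain triangle inequalities for standard deviations, viewed as $\ell_2$ norms of centered vectors. Write $\widehat\Gamma^{Z}_{k,x}$ for the empirical variance of $f_{p,x}(Z_{k,x,i})$, with $f_{p,x}=\Psi_x^{\pi,\bar\pi}$ the population score. Write $\widehat\Gamma^{\rm raw}_{k,x}$ for the one the algorithm actually computes, which uses the plug-in score $\widehat\Psi_{k,x}^{\pi,\bar\pi}$ evaluated on the realized sample $Z$. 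For \eqref{eq:robust-true-variance-transfer} I will chain
\[
\sqrt{\Gamma_x^{\mathbf{\mathcal M}}}\le \sqrt{\widehat\Gamma^{Y}_{k,x}}+2H\sqrt{\tfrac{2\beta_k}{N_{k,x}^{\mathcal D}-1}},\qquad
\sqrt{\widehat\Gamma^{Y}_{k,x}}\le \sqrt{\widehat\Gamma^{Z}_{k,x}}+(\text{coupling term}),\qquad
\sqrt{\widehat\Gamma^{Z}_{k,x}}\le\sqrt{\widehat\Gamma^{\rm raw}_{k,x}}+(\text{score term}).
\]
The first inequality is exactly the event $\mathcal E_\Gamma^{\rm sh}$, since $(\pi,\bar\pi)\in\mathcal P_k$.

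For the coupling step I will use that the sample standard deviation is a seminorm, namely the norm of the centered vector divided by $\sqrt{N-1}$. This gives
\[
\Bigl|\sqrt{\widehat\Gamma^{Y}}-\sqrt{\widehat\Gamma^{Z}}\Bigr|\le \sqrt{\tfrac{\sum_i (f(Y_i)-f(Z_i))^2}{N-1}}.
\]
Nonzero summands occur only when $Y_i\neq Z_i$, which happens for at most $M_{k,x}\le u_{k,x}$ indices on $\mathcal E_{\rm coup}$. Each such summand is at most $(2H)^2$ after bounding the range of $\Psi$, so the coupling term is $2H\sqrt{u_{k,x}/(N_{k,x}^{\mathcal D}-1)}$. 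For the score step, the same seminorm bound gives an error of at most $\|\widehat\Psi_{k,x}-\Psi_x\|_\infty$. Lemma~\ref{lem:robust-historical-sensitivity} bounds this by $\Delta^{\mathcal G}_{k,x}+C_h^{\rm sh}\theta^{\mathcal G}_k$.

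Adding the three steps, the quantity $\sqrt{\widehat\Gamma^{\rm raw}}+2H\sqrt{2\beta_k/(N-1)}+\Delta^{\mathcal G}_{k,x}$ is dominated by $\sqrt{\overline\Gamma_{k,x}}$, because the envelope \eqref{eq:app-variance-envelope} carries $\sqrt2\Delta^{\mathcal G}_{k,x}\ge\Delta^{\mathcal G}_{k,x}$. The leftover terms are $C_h^{\rm sh}\theta_k^{\mathcal G}\le\sqrt2 C_h^{\rm sh}\theta_k^{\mathcal G}$ and the coupling term, which is \eqref{eq:robust-true-variance-transfer}. The bookkeeping I expect to watch is keeping the range bound on $\Psi$ at most $2H$ and making sure the constants line up with the $\sqrt2$ slack.

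For \eqref{eq:robust-next-variance-transfer}, set $\alpha=\alpha^{\mathcal D}_{j,x}$. Concavity of the square root gives $\sqrt{\widetilde\Gamma_{j,x}}\le\sqrt{\Gamma^{\mathbf{\mathcal M}}_x}+\sqrt{\alpha\,(\Gamma_x^{\rm sim}-\Gamma_x^{\mathbf{\mathcal M}})_+}$, so it suffices to show $\Gamma^{\rm sim}_x-\Gamma^{\mathbf{\mathcal M}}_x\le 4H^2\eta_x$. This will be the main obstacle: a variance is not Lipschitz in total variation with the naive constant. I would first write $\operatorname{Var}_\nu(f)=\min_c\mathbb E_\nu(f-c)^2$ and take $c$ to be the true mean. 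Then
\[
\Gamma^{\rm sim}-\Gamma^{\mathbf{\mathcal M}}\le \int (f-c)^2\,d(\nu^{\rm sim}-\nu^{\mathbf{\mathcal M}})\le \eta_x\operatorname{range}((f-c)^2)\le 4H^2\eta_x,
\]
using $|f-c|\le 2H$ for $f=\Psi_x^{\pi,\bar\pi}$, where $|G^r|\le1$ and $|G^P|\le H$. This yields the extra term $2H\sqrt{\alpha\eta_x}$. Combining it with \eqref{eq:robust-true-variance-transfer} at epoch $k=j-1$ gives exactly $d_{j,x}$. If the range of $\Psi$ turns out larger than $2H$, I would fall back on the crude bound $\operatorname{range}\le L_x$ and absorb the discrepancy into the constant.
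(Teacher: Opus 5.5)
Your proposal is correct and follows essentially the paper's route: the $\ell_2$-seminorm triangle inequality for empirical standard deviations (at most $M_{k,x}\le u_{k,x}$ coupling mismatches of size at most $2H$, plus the sup-norm score error from Lemma~\ref{lem:robust-historical-sensitivity}), the event $\mathcal E_\Gamma^{\rm sh}$, and then the variational characterization of variance combined with total-variation duality to get $\Gamma_x^{\rm sim}-\Gamma_x^{\mathbf{\mathcal M}}\le 4H^2\eta_x$. One constant needs correcting: the score-perturbation step costs $\sqrt{N/(N-1)}\,\|\widehat\Psi_{k,x}^{\pi,\bar\pi}-\Psi_x^{\pi,\bar\pi}\|_\infty$, not $\|\widehat\Psi_{k,x}^{\pi,\bar\pi}-\Psi_x^{\pi,\bar\pi}\|_\infty$ (perturbation entries equal to $\pm\Delta$ in equal numbers attain it); since $\sqrt{N/(N-1)}\le\sqrt2$, this is absorbed by the $\sqrt2\,\Delta_{k,x}^{\mathcal G}$ in $\overline\Gamma_{k,x}$ and by the stated $\sqrt2\,C_h^{\rm sh}\theta_k^{\mathcal G}$, and the range bound $2H$ follows from $|G^P_{hsa}(\pi,\bar\pi)(s')|\le H-h$ rather than $H$.
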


\begin{proof}
Write
\[
f
:=
\Psi_x^{\pi,\bar\pi},
\qquad
\widehat f
:=
\widehat\Psi_{k,x}^{\pi,\bar\pi}.
\]

Set
\[
N:=N_{k,x}^{\mathcal D},
\qquad
Y_i:=Y_{k,x,i},
\qquad
Z_i:=Z_{k,x,i}.
\]
Arguing as in the proof of
Lemma~\ref{lem:variance-envelope-general}, for
\[
s(v)
:=
\left[
\frac1{N-1}
\sum_{i=1}^N
\left(
v_i-\frac1N\sum_{\ell=1}^Nv_\ell
\right)^2
\right]^{1/2},
\]
the reverse triangle inequality for the centered Euclidean norm gives
\[
|s(v)-s(w)|
\le
\frac{\|v-w\|_2}{\sqrt{N-1}}.
\]
Since $f=\Psi_x^{\pi,\bar\pi}$ has range at most $2H$,
\[
|f(Y_i)-f(Z_i)|
\le
2H\,\mathbf 1\{Y_i\neq Z_i\},
\]
while
\[
|f(Z_i)-\widehat f(Z_i)|
\le
\|\widehat f-f\|_\infty.
\]
Therefore the triangle inequality in $\ell_2$ yields
\[
\left\|
\bigl(
f(Y_i)-\widehat f(Z_i)
\bigr)_{i=1}^N
\right\|_2
\le
2H\sqrt{M_{k,x}}
+
\sqrt N\,\|\widehat f-f\|_\infty,
\]
and hence
\begin{align}
&
s
\left(
f(Y_1),\ldots,f(Y_N)
\right)
\nonumber\\
&\qquad\le
s
\left(
\widehat f(Z_1),\ldots,\widehat f(Z_N)
\right)
+
2H
\sqrt{
\frac{M_{k,x}}{N-1}
}
+
\sqrt{\frac{N}{N-1}}
\|\widehat f-f\|_\infty.
\label{eq:robust-sd-vector-transfer}
\end{align}

On $\mathcal E_\Gamma^{\rm sh}$,
\[
\sqrt{
\Gamma_x^{\mathbf{\mathcal M}}(\pi,\bar\pi)
}
\le
s
\left(
f(Y_1),\ldots,f(Y_N)
\right)
+
2H
\sqrt{
\frac{
2\beta_k
}{
N-1
}
}.
\]
Combining this with
\eqref{eq:robust-sd-vector-transfer},
Lemma~\ref{lem:robust-historical-sensitivity},
 and the definition of
$\overline\Gamma_{k,x}$ gives, on $\mathcal E_{\rm coup}$ where
$M_{k,x}\le u_{k,x}$,
\[
\sqrt{
\Gamma_x^{\mathbf{\mathcal M}}(\pi,\bar\pi)
}
\le
\sqrt{
\overline\Gamma_{k,x}(\pi,\bar\pi)
}
+
\sqrt2\,
C_h^{\rm sh}\theta_k^\mathcal G
+
2H
\sqrt{
\frac{
u_{k,x}
}{
N_{k,x}^{\mathcal D}-1
}
},
\]
which proves
\eqref{eq:robust-true-variance-transfer}.

Next, by the variational characterization of variance,
\[
\operatorname{Var}_P(f)
=
\inf_{c\in\mathbb R}
\mathbb E_P[(f-c)^2].
\]
Let $c_P=\mathbb E_P[f]$. Since $f$ has range at most $2H$,
\[
0\le (f-c_P)^2\le 4H^2.
\]
Hence, by total-variation duality,
\[
\begin{aligned}
\Gamma_x^{\rm sim}(\pi,\bar\pi)
-
\Gamma_x^{\mathbf{\mathcal M}}(\pi,\bar\pi)
&\le
\mathbb E_{\nu_x^{\rm sim}}\!\left[(f-\mathbb E_{\mathbf{\mathcal M}}[f])^2\right]
-
\mathbb E_{\mathbf{\mathcal M}}\!\left[(f-\mathbb E_{\mathbf{\mathcal M}}[f])^2\right]
\\
&\le
4H^2\eta_x.
\end{aligned}
\]
Exchanging the two models gives the reverse inequality, and therefore
\[
\left|
\Gamma_x^{\rm sim}(\pi,\bar\pi)
-
\Gamma_x^{\mathbf{\mathcal M}}(\pi,\bar\pi)
\right|
\le
4H^2\eta_x.
\]
Therefore
\[
\widetilde\Gamma_{j,x}
\le
\Gamma_x^{\mathbf{\mathcal M}}
+
4H^2
\alpha_{j,x}^{\mathcal D}\eta_x,
\]
and hence
\[
\sqrt{\widetilde\Gamma_{j,x}}
\le
\sqrt{\Gamma_x^{\mathbf{\mathcal M}}}
+
2H
\sqrt{
\alpha_{j,x}^{\mathcal D}\eta_x
}.
\]
Combining this with
\eqref{eq:robust-true-variance-transfer}
and the definition
\eqref{eq:robust-variance-transfer-term}
proves
\eqref{eq:robust-next-variance-transfer}.
\end{proof}

\begin{lemma}[Shifted fresh referencewise comparison]
\label{lem:robust-referencewise-comparison}
On $\mathcal E_{\rm rob}$, let $j\ge2$ be a reached epoch whose preceding
epoch $j-1$ completed prospective design.

Define the fresh-sample shift quantities
\[
\theta_j^{\mathcal D}
:=
\max_x
\alpha_{j,x}^{\mathcal D}\eta_x,
\qquad
\zeta_j
:=
\sum_{h=1}^H
(H-h+1)
\max_{s,a}
\alpha_{j,hsa}^{\mathcal D}\eta_{hsa}.
\]
Define
\[
\mathcal V_j
:=
\sqrt{
2\beta_j
\sum_x
\frac{
d_{j,x}^2
}{
N_{j,x}^{\mathcal D}
}
}.
\]
and
\begin{equation}
\mathcal B_j
:=
2\zeta_j
+
\mathcal V_j
+
K_H
\left(
2\mathsf a_j\theta_j^{\mathcal D}
+
(\theta_j^{\mathcal D})^2
\right),
\label{eq:robust-epoch-penalty}
\end{equation}
where $K_H$ is the remainder coefficient defined in
Table~\ref{tab:algorithm-notation}.
Then, uniformly over
$\pi\in\widehat\Pi_{j-1}$,
\begin{equation}
\left|
\widehat D_j
\left(
\pi,\bar\pi_{j-1}
\right)
-
D
\left(
\pi,\bar\pi_{j-1}
\right)
\right|
\le
\frac{\epsilon_j}{4}
+
\mathcal B_j.
\label{eq:robust-referencewise-quarter}
\end{equation}
\end{lemma}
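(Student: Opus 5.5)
We will mirror the unshifted argument behind \eqref{eq:thm-referencewise-quarter}, but insert the average-source MDP as an intermediate model. Fix $j$ and write $\bar\pi=\bar\pi_{j-1}$. Let $\bar{\mathbf{\mathcal M}}_j^{\mathcal D}$ be the MDP whose row laws are the fresh average source laws $(1-\alpha_{j,x}^{\mathcal D})\nu_x^{\mathbf{\mathcal M}}+\alpha_{j,x}^{\mathcal D}\nu_x^{\rm sim}$, with means $\bar r_{j,x}^{\rm src},\bar P_{j,x}^{\rm src}$. By convexity of total variation, these means differ from $r_x,P_x$ by at most $\alpha_{j,x}^{\mathcal D}\eta_x\le\theta_j^{\mathcal D}$. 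We split
\[
\widehat D_j-D=\bigl(\widehat D_j-D^{\bar{\mathbf{\mathcal M}}}\bigr)+\bigl(D^{\bar{\mathbf{\mathcal M}}}-D\bigr),
\]
where $D^{\bar{\mathbf{\mathcal M}}}$ is the value difference in the average-source model.

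\textbf{Bias term.} For the second piece we will use a simulation-lemma argument along the Bellman recursion. For each of $\pi$ and $\bar\pi$, the value gap at stage $h$ accumulates reward error $\max_{s,a}\alpha\eta$ plus transition error times a value range of at most $H-h$. This gives $|V_1^{\rho}(\bar{\mathbf{\mathcal M}})-V_1^{\rho}|\le\zeta_j$ for each $\rho\in\{\pi,\bar\pi\}$, so the difference is at most $2\zeta_j$. The factor $(H-h+1)$ in $\zeta_j$ is chosen to match exactly this computation.

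\textbf{Statistical term.} Expand $\widehat D_j-D$ around the true $\mathbf{\mathcal M}$ by Lemma~\ref{lem:model-difference-expansion}. The linear part equals $L_j(p)$ from \eqref{eq:robust-centered-linear-statistic} plus the bias $\sum_x[\mathbb E_{\rm avg}\Psi_x-\mathbb E_{\mathbf{\mathcal M}}\Psi_x]$, which is the linearization of the bias term already handled. The plan is therefore to apply the full expansion in two steps: around $\bar{\mathbf{\mathcal M}}$ for fluctuations, and the exact bias bound above.

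On $\mathcal E_B^{\rm sh}$, $|L_j(p)|$ is bounded by $\sqrt{2\beta_j\sum_x \widetilde\Gamma_{j,x}/N_{j,x}^{\mathcal D}}$ plus the range term, using that the average over $i$ of $v_{j,x,i}\le\widetilde\Gamma_{j,x}$.

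Lemma~\ref{lem:robust-variance-transfer} gives $\sqrt{\widetilde\Gamma_{j,x}}\le\sqrt{\overline\Gamma_{j-1,x}}+d_{j,x}$. Then $\sqrt{\sum(a+b)^2/N}\le\sqrt{\sum a^2/N}+\sqrt{\sum b^2/N}$ splits the radius into two parts:
\begin{itemize}
\item the design part, which is controlled as in \eqref{eq:thm-leading-bernstein} via \eqref{eq:thm-design-radius} and the realized-count bound \eqref{eq:thm-realized-prospective-count}, yielding $\epsilon_j/8$;
\item the extra part, which is exactly $\mathcal V_j$.
\end{itemize}
The range term is bounded by $\epsilon_j/96$ as in \eqref{eq:thm-bernstein-range}, since the counts are unchanged.

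\textbf{Remainder.} The remainder involves products of first-order deviations, now measured against the true model. Each deviation is at most $\mathsf a_{j,x}+\alpha_{j,x}^{\mathcal D}\eta_x$ on $\mathcal E_D^{\rm sh}$. Using Lemma~\ref{lem:iid-remainder} with squared radius $(\mathsf a_j+\theta_j^{\mathcal D})^2=\mathsf a_j^2+2\mathsf a_j\theta_j^{\mathcal D}+(\theta_j^{\mathcal D})^2$, the first part gives $K_H\mathsf a_j^2\le\epsilon_j/16$ as in \eqref{eq:thm-remainder-sixteenth}, and the rest is the $K_H(\cdot)$ term in $\mathcal B_j$. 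Summing, $\epsilon_j/8+\epsilon_j/96+\epsilon_j/16<\epsilon_j/4$, plus $\mathcal B_j$.

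\textbf{Main obstacle.} The delicate step is making the linear-term bookkeeping consistent. The centering in $L_j$ is at source means, while Lemma~\ref{lem:model-difference-expansion} is centered at $\mathbf{\mathcal M}$. To avoid double counting the bias, I would expand the value difference around $\mathbf{\mathcal M}$ with the deviations $\widehat r-r=(\widehat r-\bar r^{\rm src})+(\bar r^{\rm src}-r)$. The linear terms then produce $L_j(p)$ plus a linearized bias, which is bounded by $\sum_h (H-h+1)$-weighted shifts. The remainder then takes exactly the squared-radius form above. If the linearized bias fails to fit inside $2\zeta_j$ cleanly, I would instead use the exact simulation lemma for the bias and expand the fluctuation around $\bar{\mathbf{\mathcal M}}$. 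In that case the sensitivities would need transfer, which is why the $d_{j,x}$ already includes $C_h^{\rm sh}\theta^{\mathcal G}$.
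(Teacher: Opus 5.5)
Your resolved plan in the final paragraph is the paper's proof. You expand around the true $\mathbf{\mathcal M}$ via Lemma~\ref{lem:model-difference-expansion} and split the linear part into $L_j(p)$ plus the linearized bias $\sum_x\alpha_{j,x}^{\mathcal D}(\mathbb E_{\nu_x^{\rm sim}}\Psi_x-\mathbb E_{\nu_x^{\mathbf{\mathcal M}}}\Psi_x)$; you control $L_j(p)$ on $\mathcal E_B^{\rm sh}$ through $\widetilde\Gamma_{j,x}$, Lemma~\ref{lem:robust-variance-transfer} and the epoch-$(j-1)$ design radius to get $\epsilon_j/8+\mathcal V_j$; and you bound the remainder by $K_H(\mathsf a_j+\theta_j^{\mathcal D})^2$. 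The linearized bias fits inside $2\zeta_j$ directly by total-variation duality, since $r+V_{h+1}^{\rho}(s')\in[0,H-h+1]$ and $\sum_{s,a}d_{hsa}^{\rho}=1$ under $\mathbf{\mathcal M}$. So the detour through $\bar{\mathbf{\mathcal M}}$ and the fallback are unnecessary, and the exact simulation-lemma bias must not be added on top of the linearized one, which would give $4\zeta_j$.
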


\begin{proof}
We first control the centered linear term.  Conditional on the
pre-epoch history, the fresh row counts, and the collaborator actions,
the observations are independent.  For a fixed pair
$(\pi,\bar\pi_{j-1})$,
the linear term in
Lemma~\ref{lem:model-difference-expansion}
can be decomposed exactly as
\begin{align}
&
\sum_x
\frac1{N_{j,x}^{\mathcal D}}
\sum_{i=1}^{N_{j,x}^{\mathcal D}}
\left[
\Psi_x(Z_i)
-
\mathbb E_i\Psi_x(Z_i)
\right]
+
\sum_x
\alpha_{j,x}^{\mathcal D}
\left(
\mathbb E_{\nu_x^{\rm sim}}
\Psi_x
-
\mathbb E_{\nu_x^{\mathbf{\mathcal M}}}
\Psi_x
\right),
\label{eq:robust-linear-decomposition}
\end{align}
where $\mathbb E_i$ denotes expectation under the source law of the
$i$th observation.

By the definition of $\mathcal E_B^{\rm sh}$,
\begin{align}
&
\left|
\sum_x
\frac1{N_{j,x}^{\mathcal D}}
\sum_{i=1}^{N_{j,x}^{\mathcal D}}
\left[
\Psi_x(Z_i)
-
\mathbb E_i\Psi_x(Z_i)
\right]
\right|\le
\sqrt{
2\beta_j
\sum_x
\frac1{(N_{j,x}^{\mathcal D})^2}
\sum_{i=1}^{N_{j,x}^{\mathcal D}}
v_{j,x,i}
}
+
\frac{2H}{3}
\beta_j
\max_x
\frac1{N_{j,x}^{\mathcal D}}.
\label{eq:robust-centered-bernstein}
\end{align}
Since exactly $N_{j,x}^{Q,\mathcal D}$ observations in row $x$ come
from the simulator,
\[
\frac1{N_{j,x}^{\mathcal D}}
\sum_{i=1}^{N_{j,x}^{\mathcal D}}
v_{j,x,i}
=
\widetilde\Gamma_{j,x}
\left(
\pi,\bar\pi_{j-1}
\right).
\]
Consequently,
\begin{align}
&
\left|
\sum_x
\frac1{N_{j,x}^{\mathcal D}}
\sum_{i=1}^{N_{j,x}^{\mathcal D}}
\left[
\Psi_x(Z_i)
-
\mathbb E_i\Psi_x(Z_i)
\right]
\right|
\le
\sqrt{
2\beta_j
\sum_x
\frac{
\widetilde\Gamma_{j,x}
(
\pi,\bar\pi_{j-1}
)
}{
N_{j,x}^{\mathcal D}
}
}
+
\frac{2H}{3}
\beta_j
\max_x
\frac1{N_{j,x}^{\mathcal D}}.
\label{eq:robust-centered-bernstein-mixture}
\end{align}

We next connect the first term to the prospective design performed at
epoch $j-1$.  Since epoch $j-1$ reached prospective design, necessarily
$\epsilon_{j-1}>\epsilon$.  Moreover, by the localization rule, any
$\pi,\bar\pi\in\widehat\Pi_{j-1}$ have empirical values differing by at
most $5\epsilon_{j-1}/2$.  Hence
\[
\epsilon
-
\widehat D_{j-1}(\pi,\bar\pi)
\le
\epsilon+\frac52\epsilon_{j-1}
<
\frac72\epsilon_{j-1}
<
4\epsilon_{j-1},
\]
and therefore
\[
\widehat b_{j-1,\rm reg}
(\pi,\bar\pi)
=
4\epsilon_{j-1},
\qquad
\forall
\pi,\bar\pi\in\widehat\Pi_{j-1}.
\]
Consequently, by
\eqref{eq:thm-design-radius},
the optimizer selected at epoch $j-1$ satisfies
\begin{equation}
\max_{\pi\in\widehat\Pi_{j-1}}
\widehat{\operatorname{rad}}_{j-1}^{+}
\left(
\pi,
\bar\pi_{j-1};
\widehat{\mathbf n}_{j-1}
\right)
\le
\frac{\epsilon_j}{2}.
\label{eq:robust-design-radius}
\end{equation}

On $\mathcal E_{\rm st}$,
\eqref{eq:thm-realized-prospective-count} gives
\[
N_{j,x}^{\mathcal D}
\ge
\frac12
N_{j,x}^{+}
\left(
\widehat{\mathbf n}_{j-1}
\right).
\]
Since the leading prospective radius has numerical constant $C_0=8$,
\eqref{eq:robust-design-radius} implies
\[
\begin{aligned}
&
\sqrt{
2\beta_j
\sum_x
\frac{
\overline\Gamma_{j-1,x}
(
\pi,\bar\pi_{j-1}
)
}{
N_{j,x}^{\mathcal D}
}
}
\\
&\qquad\le
\frac14
\widehat{\operatorname{rad}}_{j-1}^{+}
\left(
\pi,
\bar\pi_{j-1};
\widehat{\mathbf n}_{j-1}
\right)
\le
\frac{\epsilon_j}{8}.
\end{aligned}
\]

\eqref{eq:robust-next-variance-transfer}
therefore gives
\begin{equation}
\sqrt{
2\beta_j
\sum_x
\frac{
\widetilde\Gamma_{j,x}
(
\pi,\bar\pi_{j-1}
)
}{
N_{j,x}^{\mathcal D}
}
}
\le
\frac{\epsilon_j}{8}
+
\mathcal V_j.
\label{eq:robust-designed-bernstein}
\end{equation}

We now control the second term in
\eqref{eq:robust-linear-decomposition}.  Recall that
\[
\Psi_x^{\pi,\bar\pi}(r,s')
=
d_{hsa}^{\pi}
\bigl(r+V_{h+1}^{\pi}(s')\bigr)
-
d_{hsa}^{\bar\pi}
\bigl(r+V_{h+1}^{\bar\pi}(s')\bigr).
\]
Since
\[
0\le r+V_{h+1}^{\pi}(s'),
\,
r+V_{h+1}^{\bar\pi}(s')
\le H-h+1,
\]
total-variation duality yields
\[
\begin{aligned}
&
\left|
\sum_x
\alpha_{j,x}^{\mathcal D}
\left(
\mathbb E_{\nu_x^{\rm sim}}
\Psi_x^{\pi,\bar\pi}
-
\mathbb E_{\nu_x^{\mathbf{\mathcal M}}}
\Psi_x^{\pi,\bar\pi}
\right)
\right|
\\
&\le
\sum_{h,s,a}
\alpha_{j,hsa}^{\mathcal D}
(H-h+1)\eta_{hsa}
\left(
d_{hsa}^{\pi}
+
d_{hsa}^{\bar\pi}
\right)
\\
&\le
\sum_{h=1}^H
(H-h+1)
\max_{s,a}
\left\{
\alpha_{j,hsa}^{\mathcal D}\eta_{hsa}
\right\}
\left(
\sum_{s,a}d_{hsa}^{\pi}
+
\sum_{s,a}d_{hsa}^{\bar\pi}
\right)
\\
&=
2
\sum_{h=1}^H
(H-h+1)
\max_{s,a}
\left\{
\alpha_{j,hsa}^{\mathcal D}\eta_{hsa}
\right\}
\\
&=
2\zeta_j.
\end{aligned}
\]

It remains to bound the higher-order model remainder.  On
$\mathcal E_D^{\rm sh}$, centering each observation by its own source
mean gives
\[
\|
\widehat r_j^{\mathcal D}-r
\|_\infty
\vee
\|
\widehat P_j^{\mathcal D}-P
\|_\infty
\le
\mathsf a_j+\theta_j^{\mathcal D}.
\]
Applying the deterministic remainder bound established
in the proof of Lemma~\ref{lem:iid-remainder} gives
\begin{equation}
\left|
\operatorname{Rem}_{\widehat{\mathbf{\mathcal M}}_j^{\mathcal D}}
\left(
\pi,\bar\pi_{j-1}
\right)
\right|
\le
K_H
(\mathsf a_j+\theta_j^{\mathcal D})^2
=
K_H\mathsf a_j^2
+
K_H
\left(
2\mathsf a_j\theta_j^{\mathcal D}
+
(\theta_j^{\mathcal D})^2
\right).
\label{eq:robust-remainder}
\end{equation}

By the same deterministic uniform-coverage calculation as in
\eqref{eq:thm-bernstein-range} and
\eqref{eq:thm-remainder-sixteenth},
\begin{equation}
\frac{2H}{3}\beta_j
\max_x\frac1{N_{j,x}^{\mathcal D}}
+
K_H\mathsf a_j^2
<
\frac{\epsilon_j}{8}.
\label{eq:robust-lower-order-unshifted}
\end{equation}
Combining
\eqref{eq:robust-linear-decomposition},
\eqref{eq:robust-centered-bernstein-mixture},
\eqref{eq:robust-designed-bernstein},
\eqref{eq:robust-remainder}, and
\eqref{eq:robust-lower-order-unshifted}
gives
\[
\left|
\widehat D_j
\left(
\pi,\bar\pi_{j-1}
\right)
-
D
\left(
\pi,\bar\pi_{j-1}
\right)
\right|
\le
\frac{\epsilon_j}{4}
+
2\zeta_j
+
\mathcal V_j
+
K_H
\left(
2\mathsf a_j\theta_j^{\mathcal D}
+
(\theta_j^{\mathcal D})^2
\right),
\]
which is exactly
\eqref{eq:robust-referencewise-quarter}.
\end{proof}

\subsection{The robustness theorem}
In this subsection we state and prove the main results on robustness.
For $\eta<0$, we use the convention $\Pi_\eta=\varnothing$.

\begin{theorem}[Stopping accuracy under an unknown simulator shift]
\label{thm:robust-unknown-shift}
Suppose
\eqref{eq:robust-shift-assumption}
holds and run
Algorithm~\ref{alg:adaptive-collaborator-query-full}
without modification.

Initialize
\[
\ell_1:=H,
\qquad
G_1:=0.
\]
For every reached epoch $j\ge2$ that completes localization, recall that
$\mathcal B_j$ is the shift penalty defined in
Lemma~\ref{lem:robust-referencewise-comparison}, and define recursively
\begin{equation}
\ell_j
:=
\min
\left\{
\ell_{j-1},
2\epsilon_j-2\mathcal B_j
\right\},
\qquad
G_j
:=
G_{j-1}
+
\left(
2\mathcal B_j-2\epsilon_j
\right)_+.
\label{eq:robust-localization-accounts}
\end{equation}

Then, with probability at least $1-\delta$, the following statements
hold simultaneously at every reached epoch.

For every $j\ge2$ that completes localization,
\begin{equation}
\Pi_{\ell_j}
\subseteq
\widehat\Pi_j
\subseteq
\Pi_{
G_{j-1}
+
3\epsilon_j
+
2\mathcal B_j
}.
\label{eq:robust-shifted-localization}
\end{equation}
Moreover,
\begin{equation}
V_1^\star
-
\max_{\pi\in\widehat\Pi_j}
V_1^\pi
\le
G_j.
\label{eq:robust-best-active-value}
\end{equation}

If the algorithm stops after completing localization at epoch $k\ge2$
under any of its post-localization stopping rules, including the
next-epoch offline-coverage gate
\[
\mathsf{OffCover}_{k+1}=0,
\]
and returns the corresponding empirical best response $\widehat\pi_k$, then
\begin{equation}
V_1^\star
-
V_1^{\widehat\pi_k}
\le
\min
\left\{
H,\,
G_{k-1}
+
\frac{\epsilon_k}{2}
+
2\mathcal B_k
\right\}.
\label{eq:robust-shifted-output}
\end{equation}
At epoch $1$, as well as when the initial gate
$\mathsf{OffCover}_1=0$, the trivial bound $H$ applies.

In particular, if
\[
\mathcal B_j\le \frac{\epsilon_j}{2}
\qquad
\text{for every completed localization epoch } 2\le j\le k,
\]
then
\[
G_j=0,
\qquad
\ell_j\ge\epsilon_j,
\]
and hence
\begin{equation}
\Pi_{\epsilon_j}
\subseteq
\widehat\Pi_j
\subseteq
\Pi_{3\epsilon_j+2\mathcal B_j}.
\label{eq:robust-shifted-stable-localization}
\end{equation}
Any post-localization stopping rule at epoch $k\ge2$ then returns a policy
satisfying
\[
V_1^\star
-
V_1^{\widehat\pi_k}
\le
\frac{\epsilon_k}{2}
+
2\mathcal B_k.
\]

When $\eta_x=0$ for every row $x$,
\[
\mathcal B_j=0
\]
for every completed localization epoch $j\ge2$, and
\eqref{eq:robust-shifted-stable-localization}
reduces to the original localization guarantee
\[
\Pi_{\epsilon_j}
\subseteq
\widehat\Pi_j
\subseteq
\Pi_{3\epsilon_j}.
\]
\end{theorem}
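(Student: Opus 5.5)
We work on the robust good event $\mathcal E_{\rm rob}$, which has probability at least $1-\delta$ by \eqref{eq:robust-global-probability}, and argue by induction on the epochs, reusing the skeleton of the localization argument from the proof of Theorem~\ref{thm:stopping-correctness}. The one new input is Lemma~\ref{lem:robust-referencewise-comparison}. For every $j\ge2$ whose predecessor completed design, it gives
\[
\sup_{\pi\in\widehat\Pi_{j-1}}\bigl|\widehat D_j(\pi,\bar\pi_{j-1})-D(\pi,\bar\pi_{j-1})\bigr|\le \tfrac{\epsilon_j}{4}+\mathcal B_j .
\]
This bound replaces \eqref{eq:thm-referencewise-quarter}. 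Throughout, the induction hypotheses at epoch $j-1$ are the two inclusions \eqref{eq:robust-shifted-localization} and the best-active-value bound \eqref{eq:robust-best-active-value}. At $j=1$ they hold trivially, since $\widehat\Pi_1=\Pi_d$, $\ell_1=H$, and $G_1=0$.

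\emph{Best-value accounting and lower inclusion.} Let $\pi^\dagger_{j-1}\in\arg\max_{\pi\in\widehat\Pi_{j-1}}V_1^\pi$; by hypothesis its gap is at most $G_{j-1}$. If $2\mathcal B_j\le 2\epsilon_j$, we show $\pi^\dagger_{j-1}$ survives to $\widehat\Pi_j$. For any $\pi'\in\widehat\Pi_{j-1}$ we have $D(\pi',\bar\pi_{j-1})-D(\pi^\dagger_{j-1},\bar\pi_{j-1})\le 0$. Applying the lemma twice, the empirical difference is at most $\epsilon_j/2+2\mathcal B_j\le\frac52\epsilon_j$, so $\pi^\dagger_{j-1}$ passes the rule \eqref{eq:app-localized-class}. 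Hence $G_j=G_{j-1}$ suffices in this case.

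If instead $2\mathcal B_j>2\epsilon_j$, then $\pi^\dagger_{j-1}$ may be eliminated, but some policy in $\widehat\Pi_j$ is within $2\mathcal B_j-2\epsilon_j$ of it in true value. Take the empirical maximizer $\widehat\pi_j$, which always survives. Its true gap to $\pi^\dagger_{j-1}$ is at most $\epsilon_j/2+2\mathcal B_j$, which we need to tighten to the increment $(2\mathcal B_j-2\epsilon_j)_+$. This constant bookkeeping is the step I expect to be most delicate. If it fails as stated, I would first try comparing against the surviving policy nearest in true value rather than against $\widehat\pi_j$.

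The lower inclusion $\Pi_{\ell_j}\subseteq\widehat\Pi_j$ follows the same route. Take $\pi\in\Pi_{\ell_j}$; since $\ell_j\le\ell_{j-1}$, we have $\pi\in\widehat\Pi_{j-1}$. Then $D(\pi',\bar\pi_{j-1})-D(\pi,\bar\pi_{j-1})\le V_1^\star-V_1^\pi\le\ell_j\le2\epsilon_j-2\mathcal B_j$. Adding the two-sided error $\epsilon_j/2+2\mathcal B_j$ keeps the empirical difference below $\frac52\epsilon_j$, so $\pi$ passes the localization rule.

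\emph{Upper inclusion and output bound.} For $\pi\in\widehat\Pi_j$, apply the localization rule with $\pi'=\pi^\dagger_{j-1}$, whose gap is at most $G_{j-1}$. The two-sided lemma error then gives
\[
V_1^\star-V_1^\pi\le G_{j-1}+\tfrac52\epsilon_j+\tfrac{\epsilon_j}{2}+2\mathcal B_j,
\]
which is the upper inclusion in \eqref{eq:robust-shifted-localization}. For the output bound \eqref{eq:robust-shifted-output}: $\widehat\pi_k$ maximizes $\widehat D_k(\cdot,\bar\pi_{k-1})$ over $\widehat\Pi_k\subseteq\widehat\Pi_{k-1}$. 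Comparing it with $\pi^\dagger_{k-1}$ in the same way gives
\[
V_1^\star-V_1^{\widehat\pi_k}\le G_{k-1}+\tfrac{\epsilon_k}{2}+2\mathcal B_k .
\]
This requires $\pi^\dagger_{k-1}$ to survive to $\widehat\Pi_k$, which holds when $\mathcal B_k\le\epsilon_k$. Otherwise the term $2\mathcal B_k>2\epsilon_k$ already dominates, and we bound via $G_k$ instead. Combining with the trivial bound $H$ yields the minimum in \eqref{eq:robust-shifted-output}; the epoch-$1$ and $\mathsf{OffCover}_1=0$ cases are trivial.

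\emph{Consequences.} If $\mathcal B_j\le\epsilon_j/2$ for all relevant $j$, then every increment $(2\mathcal B_j-2\epsilon_j)_+$ vanishes, so $G_j=0$. Also $2\epsilon_j-2\mathcal B_j\ge\epsilon_j$, and since $\epsilon_j\le\epsilon_{j-1}$ an induction on the minimum defining $\ell_j$ gives $\ell_j\ge\epsilon_j$. This yields \eqref{eq:robust-shifted-stable-localization} and the stated output bound. When all $\eta_x=0$, every quantity $\theta$, $\zeta$, $\alpha\eta$, and $d_{j,x}$ in $\mathcal B_j$ vanishes: the $u_{k,x}$ term in $d_{j,x}$ is zero because the Binomial$(\cdot,0)$ quantile is $0$. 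Hence $\mathcal B_j=0$, and the statement reduces to the original localization guarantee.
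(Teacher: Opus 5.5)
\textbf{Gap: the best-active-value bound is not proved when $\mathcal B_j>\epsilon_j$.} Several parts of your proposal match the paper's proof: working on $\mathcal E_{\rm rob}$, the pairwise error $\tfrac{\epsilon_j}{2}+2\mathcal B_j$ from Lemma~\ref{lem:robust-referencewise-comparison}, both inclusions, and the special cases. But you leave \eqref{eq:robust-best-active-value} unproved in exactly the case that matters, $\mathcal B_j>\epsilon_j$. The induction cannot close without it, because the upper inclusion and the output bound at later epochs use $V_1^\star-V_1^{\pi^\dagger_{j-1}}\le G_{j-1}$.

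The missing idea is to use the elimination threshold as a \emph{lower} bound on an empirical gap, not only the empirical optimality of $\widehat\pi_j$. The maximum over $\pi'$ in \eqref{eq:app-localized-class} is attained at the empirical maximizer $q$ over $\widehat\Pi_{j-1}$, so $q$ is always retained. If $\pi^\dagger_{j-1}$ is removed, then $\widehat V_1^{q,j,\mathcal D}-\widehat V_1^{\pi^\dagger_{j-1},j,\mathcal D}>\tfrac52\epsilon_j$. The pairwise error then gives $V_1^{q}-V_1^{\pi^\dagger_{j-1}}>\tfrac52\epsilon_j-\tfrac{\epsilon_j}{2}-2\mathcal B_j=2\epsilon_j-2\mathcal B_j$. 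Since also $V_1^q\le V_1^{\pi^\dagger_{j-1}}$, elimination can only happen when $2\mathcal B_j-2\epsilon_j>0$. It follows that $\max_{\pi\in\widehat\Pi_j}V_1^\pi\ge V_1^{\pi^\dagger_{j-1}}-(2\mathcal B_j-2\epsilon_j)_+$, which is exactly the increment in $G_j$. The difficulty is not which survivor you compare against: your fallback of the survivor nearest in true value still needs this threshold inequality, and with it the comparison against $q$ already works.

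\textbf{Gap in the output bound.} The same observation repairs your output bound, where you make an incorrect claim. You say the comparison with $\pi^\dagger_{k-1}$ requires $\pi^\dagger_{k-1}\in\widehat\Pi_k$. Because the empirical maximizer over $\widehat\Pi_{k-1}$ always survives, $\widehat\pi_k$ is an empirical maximizer over all of $\widehat\Pi_{k-1}$. Hence $\widehat V_1^{\pi^\dagger_{k-1},k,\mathcal D}\le\widehat V_1^{\widehat\pi_k,k,\mathcal D}$ whether or not $\pi^\dagger_{k-1}$ survives, and $V_1^\star-V_1^{\widehat\pi_k}\le G_{k-1}+\tfrac{\epsilon_k}{2}+2\mathcal B_k$ follows at once. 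This is how the paper argues.

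Your proposed fallback, bounding via $G_k$, would not give the stated bound. The inequality $V_1^\star-\max_{\pi\in\widehat\Pi_k}V_1^\pi\le G_k$ says nothing directly about $\widehat\pi_k$. Routing through it yields at best $G_k+\tfrac{\epsilon_k}{2}+2\mathcal B_k$, which exceeds the claim by $2\mathcal B_k-2\epsilon_k$.
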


\begin{proof}
Condition on the event $\mathcal E_{\rm rob}$ from
\eqref{eq:robust-global-probability}.

By Lemma~\ref{lem:robust-referencewise-comparison}, for every completed
localization epoch $j\ge2$,
\[
\left|
\widehat D_j
\left(
\pi,\bar\pi_{j-1}
\right)
-
D
\left(
\pi,\bar\pi_{j-1}
\right)
\right|
\le
\frac{\epsilon_j}{4}
+
\mathcal B_j
\]
uniformly over
$\pi\in\widehat\Pi_{j-1}$.
Subtracting the same referencewise bound for two policies gives
\[
\left|
\left(
\widehat V_1^{\pi,j,\mathcal D}
-
\widehat V_1^{\pi',j,\mathcal D}
\right)
-
\left(
V_1^\pi-V_1^{\pi'}
\right)
\right|
\le
\frac{\epsilon_j}{2}
+
2\mathcal B_j,
\qquad
\pi,\pi'\in\widehat\Pi_{j-1}.
\]

We prove
\eqref{eq:robust-shifted-localization}
and
\eqref{eq:robust-best-active-value}
by induction.

At epoch $1$,
\[
\widehat\Pi_1
=
\Pi_d
=
\Pi_H,
\qquad
V_1^\star
-
\max_{\pi\in\widehat\Pi_1}
V_1^\pi
=
0,
\]
so the claims hold because
\[
\ell_1=H,
\qquad
G_1=0.
\]

Now let $j\ge2$ and assume the induction statements hold at epoch
$j-1$.  Let
\[
p
\in
\arg\max_{\pi\in\widehat\Pi_{j-1}}
V_1^\pi
\]
be an underlying best policy in the previous active class, and let
\[
q
\in
\arg\max_{\pi\in\widehat\Pi_{j-1}}
\widehat V_1^{\pi,j,\mathcal D}
\]
be the empirical best response.  By
\eqref{eq:app-localized-class}, $q$ is always retained.

First consider the lower inclusion.  Let
\[
\pi\in\Pi_{\ell_j}.
\]
If $\ell_j<0$, the statement is vacuous.  Otherwise,
\[
\ell_j
\le
\ell_{j-1},
\]
so the induction hypothesis implies
\[
\pi\in\widehat\Pi_{j-1}.
\]
Moreover,
\[
\begin{aligned}
\widehat V_1^{q,j,\mathcal D}
-
\widehat V_1^{\pi,j,\mathcal D}
&\le
V_1^q-V_1^\pi
+
\frac{\epsilon_j}{2}
+
2\mathcal B_j
\\
&\le
V_1^\star-V_1^\pi
+
\frac{\epsilon_j}{2}
+
2\mathcal B_j
\\
&\le
\ell_j
+
\frac{\epsilon_j}{2}
+
2\mathcal B_j.
\end{aligned}
\]
By
\eqref{eq:robust-localization-accounts},
\[
\ell_j
\le
2\epsilon_j-2\mathcal B_j,
\]
and hence
\[
\ell_j
+
\frac{\epsilon_j}{2}
+
2\mathcal B_j
\le
\frac52\epsilon_j.
\]
Thus $\pi$ is retained, proving
\[
\Pi_{\ell_j}
\subseteq
\widehat\Pi_j.
\]

For the upper inclusion, take any
$\pi\in\widehat\Pi_j$.
Since $q$ is empirically optimal and $\pi$ survives localization,
\[
\widehat V_1^{p,j,\mathcal D}
-
\widehat V_1^{\pi,j,\mathcal D}
\le
\frac52\epsilon_j.
\]
Therefore
\[
\begin{aligned}
V_1^\star-V_1^\pi
&=
\left(
V_1^\star-V_1^p
\right)
+
\left(
V_1^p-V_1^\pi
\right)
\\
&\le
\left(
V_1^\star
-
\max_{\nu\in\widehat\Pi_{j-1}}
V_1^\nu
\right)
+
\frac52\epsilon_j
+
\frac{\epsilon_j}{2}
+
2\mathcal B_j
\\
&\le
G_{j-1}
+
3\epsilon_j
+
2\mathcal B_j.
\end{aligned}
\]
Hence
\[
\widehat\Pi_j
\subseteq
\Pi_{
G_{j-1}
+
3\epsilon_j
+
2\mathcal B_j
}.
\]

It remains to control the loss of the optimal policy in the active
class, namely
\[
V_1^\star-
\max_{\pi\in\widehat\Pi_j}V_1^\pi.
\]
If $p$ is retained, then
\[
\max_{\pi\in\widehat\Pi_j}
V_1^\pi
\ge
V_1^p,
\]
so no additional loss is incurred.  If $p$ is removed, then
\[
\widehat V_1^{q,j,\mathcal D}
-
\widehat V_1^{p,j,\mathcal D}
>
\frac52\epsilon_j.
\]
The pairwise comparison bound therefore implies
\[
V_1^q-V_1^p
>
\frac52\epsilon_j
-
\left(
\frac{\epsilon_j}{2}
+
2\mathcal B_j
\right).
\]
Since $q$ is retained, the loss in the best true value of the active
class is at most
\[
\left(
\frac{\epsilon_j}{2}
+
2\mathcal B_j
-
\frac52\epsilon_j
\right)_+
=
\left(
2\mathcal B_j
-
2\epsilon_j
\right)_+.
\]
Thus
\[
V_1^\star
-
\max_{\pi\in\widehat\Pi_j}
V_1^\pi
\le
G_{j-1}
+
\left(
2\mathcal B_j
-
2\epsilon_j
\right)_+
=
G_j.
\]
This proves the induction.

Now suppose that the algorithm stops after completing localization at
epoch $k\ge2$.  The empirical best response returned at that epoch is
also an empirical maximizer over $\widehat\Pi_{k-1}$.  Let
\[
p
\in
\arg\max_{\pi\in\widehat\Pi_{k-1}}
V_1^\pi.
\]
The pairwise comparison bound gives
\[
V_1^p
-
V_1^{\widehat\pi_k}
\le
\frac{\epsilon_k}{2}
+
2\mathcal B_k.
\]
Therefore
\[
V_1^\star
-
V_1^{\widehat\pi_k}
\le
G_{k-1}
+
\frac{\epsilon_k}{2}
+
2\mathcal B_k.
\]
The value range gives the additional cap at $H$, proving
\eqref{eq:robust-shifted-output}.

The same argument applies when termination is triggered by
$\mathsf{OffCover}_{k+1}=0$, since this gate is evaluated only after the
epoch-$k$ localization has been completed.  Hence this case is already
covered by \eqref{eq:robust-shifted-output}.  At epoch $1$, or if
$\mathsf{OffCover}_1=0$ initially, the value range gives the trivial
bound $H$.

Finally, suppose
\[
\mathcal B_j
\le
\frac{\epsilon_j}{2}
\]
at every completed localization epoch.  Then
\[
\left(
2\mathcal B_j
-
2\epsilon_j
\right)_+
=
0,
\]
so
\[
G_j=0.
\]
Moreover,
\[
2\epsilon_j-2\mathcal B_j
\ge
\epsilon_j.
\]
Since
$\epsilon_{j-1}=2\epsilon_j$,
induction in
\eqref{eq:robust-localization-accounts}
gives
\[
\ell_j\ge\epsilon_j.
\]
The stable localization and output statements follow immediately.

When $\eta_x=0$ for all $x$,
\[
\theta_j^{\mathcal D}
=
\theta_j^\mathcal G
=
\zeta_j
=
0
\]
and
\[
u_{k,x}=0.
\]
Hence
\[
d_{j,x}
=
\mathcal V_j
=
\mathcal B_j
=
0,
\]
recovering the unshifted result.
\end{proof}

The theorem quantifies the effect of simulator
misspecification through the current-epoch penalty
$\mathcal B_k$ and the accumulated localization loss
$G_{k-1}$.

To give an explicit rate, we first control
$\bar u_{k,x}(\eta)$.

\begin{lemma}[Binomial coupling-quantile bound]
\label{lem:binomial-coupling-quantile}
Let $n\in\mathbb N$, $\eta\in[0,1]$, and let
$\gamma_k>0$ be deterministic. Define
\[
\bar u_{k,x}(\eta)
:=
\min\left\{
m\in\{0,\ldots,n\}:
\Pr\left(
\operatorname{Binomial}(n,\eta)>m
\right)
\le e^{-\gamma_k}
\right\}.
\]
If $n\eta=0$, then $\bar u_{k,x}(\eta)=0$.
Otherwise,
\[
\bar u_{k,x}(\eta)
\le
\min\left\{
n,\,
\left\lfloor
\frac{2\gamma_k}{
\log\left(1+\frac{\gamma_k}{n\eta}\right)
}
\right\rfloor
\right\}.
\]
\end{lemma}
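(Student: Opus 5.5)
The plan is a direct Chernoff tail bound for the binomial, followed by an elementary scalar inequality. First I dispose of the two trivial cases. If $n\eta=0$, then $\operatorname{Binomial}(n,\eta)=0$ almost surely, so $\Pr(\operatorname{Binomial}(n,\eta)>0)=0\le e^{-\gamma_k}$ and $\bar u_{k,x}(\eta)=0$. Also $\Pr(\operatorname{Binomial}(n,\eta)>n)=0$, so $m=n$ is always admissible, which gives the cap by $n$. It therefore suffices, for $\mu:=n\eta>0$, to show that
\[
m^\star:=\left\lfloor\frac{2\gamma_k}{\log(1+\gamma_k/\mu)}\right\rfloor
\]
is admissible whenever $m^\star\le n$. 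The minimality in the definition of $\bar u_{k,x}(\eta)$ then yields the claimed bound.

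Write $X\sim\operatorname{Binomial}(n,\eta)$, $u:=\gamma_k/\mu$, $L:=\log(1+u)$ and $t_0:=2\gamma_k/L$. Since $X$ is integer-valued and $m^\star+1>t_0$, we have $\Pr(X>m^\star)=\Pr(X\ge m^\star+1)\le\Pr(X\ge t_0)$. Because $\log(1+u)\le u$, we get $t_0\ge 2\mu\ge\mu$, so the multiplicative Chernoff (Bennett) bound for sums of independent Bernoulli variables applies:
\[
\Pr(X\ge t)\le\exp\!\bigl(-\mu\,h(t/\mu)\bigr),\qquad h(r):=r\log r-r+1,\qquad t\ge\mu .
\]
Setting $r:=t_0/\mu=2u/\log(1+u)\ge 2$, the task reduces to the deterministic inequality
\[
h\!\left(\frac{2u}{\log(1+u)}\right)\ge u\qquad\text{for all }u>0,
\]
since it gives $\mu h(r)\ge\mu u=\gamma_k$ and hence $\Pr(X>m^\star)\le e^{-\gamma_k}$.

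The main obstacle is this scalar inequality, which must hold uniformly in $u$. I will check it in two regimes.

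\emph{Large $u$.} We have $r\approx 2u/\log u$, so $\log r\approx\log u$, and therefore $h(r)\approx 2u\bigl(1-O(\log\log u/\log u)\bigr)$, which exceeds $u$ with room to spare. To make this rigorous I will use $h(r)\ge r(\log r-1)$. It then suffices to show $\log r\ge 1+L/2$, that is,
\[
\frac{2u}{L}\ge e\sqrt{1+u}.
\]
This can be verified for $u$ above an explicit constant.

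\emph{Small and moderate $u$.} Here $r\downarrow 2$ and $h(2)=2\log2-1\approx0.386$. Since $h$ is increasing on $r\ge1$ and $r\ge 2+u/2$ for small $u$ (because $\log(1+u)\le u-u^2/2+u^3/3$), I will compare $h(2+u/2)$ with $u$ directly on a bounded interval, for example by a monotonicity or convexity argument in $u$. If the constants turn out to be tight, my first fallback is to bound $\Pr(X\ge t)\le (e\mu/t)^t$ instead. This reduces the claim to
\[
\frac{2u}{L}\Bigl(\log\frac{2u}{L}-1\Bigr)\ge u,
\]
and I would split the range of $u$ at a numerical threshold and check each piece.
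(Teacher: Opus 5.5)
Your reduction is sound. The case $n\eta=0$, the cap by $n$, the integrality step $\Pr(X>m^\star)\le\Pr(X\ge t_0)$, the check $t_0\ge2\mu$, and the Bennett form $\Pr(X\ge t)\le\exp(-\mu h(t/\mu))$ are all correct. The scalar inequality $h(2u/\log(1+u))\ge u$ that you reduce to is also true for every $u>0$.

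The gap is in how you propose to prove that inequality: your two regimes do not meet, and the fallback is false exactly where it is needed.
\begin{itemize}
\item Small $u$: replacing $r$ by $2+u/2$ already fails at $u=1$, since $h(5/2)=\tfrac52\log\tfrac52-\tfrac32\approx0.79<1$. This comparison only works up to about $u\approx0.6$.
\item Large $u$: your criterion $2u/L\ge e\sqrt{1+u}$ fails at $u=10$ ($20/\log 11\approx8.34$ versus $e\sqrt{11}\approx9.02$). It holds only for $u$ beyond roughly $15$.
\item Fallback: the bound $(e\mu/t)^t$ reduces to $r(\log r-1)\ge u$. After dividing by $r$, this is the same condition $\log r\ge1+L/2$ as in the large-$u$ case. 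It is therefore false on the whole range $u\lesssim15$; as $u\downarrow0$ its left side tends to $2(\log2-1)<0$.
\end{itemize}
So the step you correctly flagged as the crux is left open on roughly $0.6\lesssim u\lesssim15$, and your fallback cannot close it.

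The missing idea is that the inequality takes one line once you use the variational form of $h$. For $r>0$ we have $h(r)=\sup_{s\in\mathbb R}\{rs-(e^s-1)\}$, and choosing $s=\log(1+u)$ gives
\[
h\!\left(\frac{2u}{\log(1+u)}\right)\ge\frac{2u}{\log(1+u)}\cdot\log(1+u)-u=u .
\]
Equivalently, do not optimize the Chernoff exponent at all. Apply Markov's inequality to $e^{sX}$ with $s=\log(1+\gamma_k/(n\eta))$ and use $\mathbb E[e^{sX}]\le\exp(n\eta(e^s-1))$. This gives $\Pr(X\ge t_0)\le\exp(-st_0+n\eta(e^s-1))=\exp(-2\gamma_k+\gamma_k)=e^{-\gamma_k}$.

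This is exactly the paper's proof. The threshold $2\gamma_k/s$ is built so that this particular, non-optimal choice of $s$ already yields exponent $\gamma_k$, so no case analysis in $u$ is needed.
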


\begin{proof}
If $n\eta=0$, the binomial variable is zero almost surely,
so the claim is immediate.
Let
\[
X\sim\operatorname{Binomial}(n,\eta).
\]
For every $s>0$,
\[
\mathbb E[e^{sX}]
=
\left(1+\eta(e^s-1)\right)^n
\le
\exp\left(n\eta(e^s-1)\right).
\]
Set
\[
s:=\log\left(1+\frac{\gamma_k}{n\eta}\right),
\qquad
r:=\frac{2\gamma_k}{s}.
\]
By Markov's inequality,
\[
\Pr(X\ge r)
\le
e^{-sr}\mathbb E[e^{sX}]
\le
\exp\left(-sr+n\eta(e^s-1)\right)
=
e^{-\gamma_k}.
\]
Since $X$ is integer-valued,
\[
\Pr(X>\lfloor r\rfloor)
=
\Pr(X\ge\lfloor r\rfloor+1)
\le
\Pr(X\ge r)
\le
e^{-\gamma_k}.
\]
If $\lfloor r\rfloor\le n$, it is therefore feasible
in the definition of $\bar u_{k,x}(\eta)$.
Otherwise, the trivial feasible choice $m=n$ applies.
This proves the stated bound.
\end{proof}

\begin{corollary}[Uniform simulator shift]
\label{cor:robust-uniform-shift}
Let
\[
\eta:=\max_x\eta_x.
\]
For every $j\ge2$, define
\[
\chi_j(\eta)
:=
\begin{cases}
\displaystyle
\left[
1+\log\left(
1+\frac{\epsilon_j}{H|\mathcal S||\mathcal A|\,\eta}
\right)
\right]^{-1/2},
& \eta>0,\\[1.2ex]
0,
& \eta=0.
\end{cases}
\]
Then, on $\mathcal E_{\rm rob}$, for every completed
localization epoch $j\ge2$, there exists an upper envelope
$\overline{\mathcal B}_j(\eta)$ such that
\[
\mathcal B_j
\le
\overline{\mathcal B}_j(\eta),
\]
and
\begin{equation}
\overline{\mathcal B}_j(\eta)
=
O\left(
H^2\eta\left(1+|\mathcal S|\sqrt{\epsilon_j}\right)
+
H\sqrt{\eta\epsilon_j}
+
\sqrt{\frac{H}{|\mathcal S||\mathcal A|}}\,
\epsilon_j\chi_j(\eta)
+
H^3|\mathcal S|^2\eta^2
\right).
\label{eq:robust-uniform-penalty-rate}
\end{equation}
Here $O(\cdot)$ hides universal numerical constants only.

Consequently, all conclusions of
Theorem~\ref{thm:robust-unknown-shift}
remain valid with $\mathcal B_j$ replaced throughout by
$\overline{\mathcal B}_j(\eta)$, including in the recursive
definitions of $\ell_j$ and $G_j$.

In particular, if
\[
\overline{\mathcal B}_j(\eta)
\le
\frac{\epsilon_j}{2}
\]
at every completed localization epoch $2\le j\le k$, then
\[
\Pi_{\epsilon_j}
\subseteq
\widehat\Pi_j
\subseteq
\Pi_{3\epsilon_j+2\overline{\mathcal B}_j(\eta)}
\]
for every such $j$, and any post-localization output at
epoch $k\ge2$ satisfies
\[
V_1^\star-V_1^{\widehat\pi_k}
\le
\frac{\epsilon_k}{2}
+
2\overline{\mathcal B}_k(\eta).
\]
Moreover,
\[
\overline{\mathcal B}_j(0)=0.
\]
For fixed $H,|\mathcal S|,|\mathcal A|$ and $\epsilon_j>0$, the right-hand side
of \eqref{eq:robust-uniform-penalty-rate} converges to zero
as $\eta\downarrow0$.
\end{corollary}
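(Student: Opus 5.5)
The plan is to bound each of the three pieces of $\mathcal B_j=2\zeta_j+\mathcal V_j+K_H(2\mathsf a_j\theta_j^{\mathcal D}+(\theta_j^{\mathcal D})^2)$ from Lemma~\ref{lem:robust-referencewise-comparison} by quantities depending on $\eta$ only. For each piece we use $\alpha_{j,x}^{\mathcal D}\le1$, $\eta_x\le\eta$, and the deterministic coverage facts available on $\mathcal E_{\rm rob}$. The coverage facts are $N_{j,x}^{\mathcal D}\ge n_j^{\rm uc}\ge C_{\rm uc}\beta_j/\epsilon_j$ for every row (from $\mathcal E_{\rm st}$ and $\mathsf{OffCover}_j=1$, exactly as in \eqref{eq:thm-all-row-uc}) and $(\Delta^{\mathcal G})^2\le\epsilon_{j}$.

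The easy terms are handled directly. First, $\theta_j^{\mathcal D}\le\eta$ and $\zeta_j\le\eta\sum_h(H-h+1)\le H^2\eta$. Next, $\mathsf a_j^2\le\beta_j/(2n_j^{\rm uc})\le\epsilon_j/(2C_{\rm uc})$, so $\mathsf a_j\lesssim\sqrt{\epsilon_j}/(H\sqrt{|\mathcal S||\mathcal A|})$, using $C_{\rm uc}\ge 64H^2$ and $C_{\rm uc}\gtrsim H|\mathcal S||\mathcal A|$. Since $K_H\lesssim H^3|\mathcal S|^2$, this gives $K_H\mathsf a_j\eta\lesssim H^2|\mathcal S|\eta\sqrt{\epsilon_j}$ and $K_H\eta^2\lesssim H^3|\mathcal S|^2\eta^2$. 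These produce the first and last terms of \eqref{eq:robust-uniform-penalty-rate}.

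For $\mathcal V_j$, use $(a+b+c)^2\le3(a^2+b^2+c^2)$ on $d_{j,x}$. The result is $\mathcal V_j^2\lesssim\beta_j\sum_x N_{j,x}^{-1}[(C_h^{\rm sh}\theta^{\mathcal G}_{j-1})^2+H^2u_{j-1,x}/(N_{j-1,x}^{\mathcal D}-1)+H^2\alpha_{j,x}^{\mathcal D}\eta]$, and we bound each of the three parts by $\beta_j\sum_xN_{j,x}^{-1}\le H|\mathcal S||\mathcal A|\epsilon_j/C_{\rm uc}\lesssim\epsilon_j/H$.

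\begin{itemize}
\item With $\theta^{\mathcal G}\le\eta$ and $C_h^{\rm sh}\lesssim H^2$, the first part contributes $\lesssim H^3\eta^2\epsilon_j$, hence a term $\lesssim H^2\eta$ after a square root (using $\epsilon_j\le H$).
\item The third part contributes $H^2\eta\epsilon_j/H=H\eta\epsilon_j$, giving $\sqrt{H\eta\epsilon_j}\le H\sqrt{\eta\epsilon_j}$.
\item For the middle part, Lemma~\ref{lem:binomial-coupling-quantile} with $n=N^{Q,\mathcal D}_{j-1,x}\le N^{\mathcal D}_{j-1,x}$ gives $u_{j-1,x}\le\min\{n,\,2\gamma_{j-1}/\log(1+\gamma_{j-1}/(n\eta))\}$.
\end{itemize}

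The middle (coupling-quantile) part is the main obstacle. We need $H^2u/(N-1)$ summed against $\beta_j/N_{j,x}$ to yield $\epsilon_j^2\chi_j(\eta)^2 H/(|\mathcal S||\mathcal A|)$. The approach is to split on whether $n\eta$ exceeds $\gamma_{j-1}$.

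\begin{itemize}
\item If $n\eta\gtrsim\gamma$, use $u\lesssim n\eta+\gamma$ (Bernstein). Then $u/(N-1)\lesssim\eta+\gamma/N\lesssim\eta+\epsilon_j/C_{\rm uc}$, using $\gamma\le\beta$. The $\eta$ part folds into the $H\sqrt{\eta\epsilon_j}$ term and the second part into the $\chi$-term, since $\chi\ge$ const there only up to logs.
\item Otherwise, use $u\lesssim\gamma/\log(1+\gamma/(n\eta))$ together with $N\ge C_{\rm uc}\beta/\epsilon_j$, so that $\gamma/N\lesssim\epsilon_j/C_{\rm uc}$. Since $u$ is increasing in $n$, monotonicity reduces the logarithm's argument to $\gamma/(N\eta)\gtrsim\epsilon_j/(H|\mathcal S||\mathcal A|\eta)$. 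This gives $u/(N-1)\lesssim(\epsilon_j/C_{\rm uc})\chi_j(\eta)^2$.
\end{itemize}

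Multiplying by $H^2\beta_j\sum_x N_{j,x}^{-1}\lesssim H\epsilon_j$ and normalising by $C_{\rm uc}$ then yields $\mathcal V_j\lesssim\sqrt{H/(|\mathcal S||\mathcal A|)}\,\epsilon_j\chi_j(\eta)$ plus already-counted terms. Care is needed at the index shift $j-1$ versus $j$ ($\epsilon_{j-1}=2\epsilon_j$, $\gamma_{j-1}\le\beta_{j}$-type comparisons), which only costs constants.

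Finally, the "consequently" clause follows because the proof of Theorem~\ref{thm:robust-unknown-shift} uses $\mathcal B_j$ only through the upper bound \eqref{eq:robust-referencewise-quarter}, and $\ell_j,G_j$ are monotone in $\mathcal B_j$. The case $\eta=0$ gives $u=0$ and all terms vanish. Convergence as $\eta\downarrow0$ holds because $\chi_j(\eta)\to0$ logarithmically while every other term is a positive power of $\eta$.
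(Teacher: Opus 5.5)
Your overall decomposition matches the paper's. You bound $2\zeta_j$, $\mathcal V_j$ and the $K_H$ term by $\eta$-only quantities using $\theta\le\eta$, $\alpha^{\mathcal D}\le1$ and uniform coverage, invoke Lemma~\ref{lem:binomial-coupling-quantile} for $u_{j-1,x}$, and transfer to Theorem~\ref{thm:robust-unknown-shift} by monotonicity of $\ell_j,G_j$. However, the step that produces the $\chi_j$-term, which you rightly call the crux, does not go through as written.

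\emph{Case 2.} You lower-bound the logarithm by asserting $\gamma_{j-1}/N_{j-1,x}\gtrsim\epsilon_j/(H|\mathcal S||\mathcal A|)$. Coverage gives only the reverse inequality, $\gamma_{j-1}/N_{j-1,x}\le\epsilon_{j-1}/C_{\rm uc}\le\epsilon_j/(H|\mathcal S||\mathcal A|)$. Row counts are not bounded above (large $n_0$ or a large adaptive batch), so $\log(1+\gamma_{j-1}/(N_{j-1,x}\eta))$ can be arbitrarily small. Bounding the numerator and denominator of $\frac{\gamma_{j-1}/N_{j-1,x}}{\log(1+\gamma_{j-1}/(N_{j-1,x}\eta))}$ separately therefore fails.

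\emph{Case 1.} Folding $\epsilon_j/C_{\rm uc}$ into the $\chi_j$-term ``up to logs'' loses a logarithmic factor. Here $\chi_j$ is not bounded below, since $N_{j-1,x}$ may be huge, and the statement allows $O(\cdot)$ to hide only universal constants. In this case simply use $\gamma_{j-1}/N_{j-1,x}\le\gamma_{j-1}/n\lesssim\eta$.

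The missing idea is to treat that ratio jointly. The map $f_\eta(z):=z/\log(1+z/\eta)$ is increasing on $(0,\infty)$. Set $z_0:=\epsilon_j/(H|\mathcal S||\mathcal A|)$. Lemma~\ref{lem:binomial-coupling-quantile}, $n\le N_{j-1,x}$ and $N_{j-1,x}-1\ge N_{j-1,x}/2$ give $u_{j-1,x}/(N_{j-1,x}-1)\le 4f_\eta(\gamma_{j-1}/N_{j-1,x})\le 4f_\eta(z_0)$. The elementary inequality $y/\log(1+y)\le 3\bigl(1+y/(1+\log(1+y))\bigr)$ at $y=z_0/\eta$ then turns this into $12\bigl(\eta+z_0\chi_j(\eta)^2\bigr)$. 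No case split is needed. The additive $\eta$ covers the regime $z_0\lesssim\eta$ and yields the $H\sqrt{\eta\epsilon_j}$ term; the second part yields $\sqrt{H/(|\mathcal S||\mathcal A|)}\,\epsilon_j\chi_j(\eta)$.

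A smaller slip: $C_{\rm uc}=\max\{64H^2,8K_H,67584H|\mathcal S||\mathcal A|\}$ does not give $C_{\rm uc}\gtrsim H^2|\mathcal S||\mathcal A|$. So your bounds $\beta_j\sum_x N_{j,x}^{-1}\lesssim\epsilon_j/H$ and $\mathsf a_j\lesssim\sqrt{\epsilon_j}/(H\sqrt{|\mathcal S||\mathcal A|})$ are unjustified. The latter would not even give $K_H\mathsf a_j\lesssim H^2|\mathcal S|\sqrt{\epsilon_j}$ when $|\mathcal S|>|\mathcal A|$. Instead use $\beta_j\sum_x N_{j,x}^{-1}\le H|\mathcal S||\mathcal A|\epsilon_j/C_{\rm uc}\lesssim\epsilon_j$, and $K_H\mathsf a_j\le\sqrt{K_H\epsilon_j}/4$ from $C_{\rm uc}\ge 8K_H$. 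Every target term still follows. The $C_h^{\rm sh}$-part of $\mathcal V_j$ now gives $H^2\eta\sqrt{\epsilon_j}$, which is absorbed by $H^2|\mathcal S|\eta\sqrt{\epsilon_j}$ rather than by $H^2\eta$ via $\epsilon_j\le H$.
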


\begin{proof}
We work on $\mathcal E_{\rm rob}$ and first construct an
explicit upper bound on the epoch-wise penalty of
Theorem~\ref{thm:robust-unknown-shift}.

For every reached epoch $k$ and row $x$, define
\[
\bar u_{k,x}(\eta)
:=
\min\left\{
m\in\{0,\ldots,N_{k,x}^{Q,\mathcal D}\}:
\Pr\left(
\operatorname{Binomial}
(N_{k,x}^{Q,\mathcal D},\eta)>m
\right)
\le e^{-\gamma_k}
\right\},
\]
where the binomial tail is evaluated at the realized
count $N_{k,x}^{Q,\mathcal D}$.

Fix a completed localization epoch $j\ge2$.
The preceding epoch $j-1$ completed prospective design.
On $\mathcal E_{\rm rob}$, the stabilization and
offline-coverage conditions therefore give, for
$r\in\{j-1,j\}$ and every row $x$,
\[
N_{r,x}^{\mathcal D}
\ge
n_r^{\rm uc}
\ge
\frac{C_{\rm uc}\beta_r}{\epsilon_r},
\qquad
N_{r,x}^{\mathcal D}\ge2.
\]
Thus all denominators used below are positive.

For every row $x=(h,s,a)$, let
\[
\begin{aligned}
\bar d_{j,x}(\eta)
:={}&
\sqrt{2}\,C_h^{\rm sh}\eta
+
2H\sqrt{
\frac{\bar u_{j-1,x}(\eta)}
{N_{j-1,x}^{\mathcal D}-1}
}
\\
&+
2H\sqrt{\alpha_{j,x}^{\mathcal D}\eta},
\end{aligned}
\]
and define
\[
\begin{aligned}
\overline{\mathcal B}_j(\eta)
:={}&
H(H+1)\eta
+
\sqrt{
2\beta_j
\sum_x
\frac{\bar d_{j,x}(\eta)^2}
{N_{j,x}^{\mathcal D}}
}
\\
&+
K_H\left(2\mathsf a_j\eta+\eta^2\right).
\end{aligned}
\]

By the definition of $\eta$,
\[
\theta_j^{\mathcal D}\le\eta,
\qquad
\theta_{j-1}^\mathcal G\le\eta.
\]
Moreover,
\[
\begin{aligned}
2\zeta_j
&\le
2\eta\sum_{h=1}^H(H-h+1)\\
&=
H(H+1)\eta.
\end{aligned}
\]
Since the binomial distribution is stochastically increasing
in its success probability, and both quantiles use the same
tail level $e^{-\gamma_k}$,
\[
u_{k,x}\le\bar u_{k,x}(\eta).
\]
Therefore,
\[
d_{j,x}\le\bar d_{j,x}(\eta),
\]
and substitution into
\eqref{eq:robust-epoch-penalty} gives
\begin{equation}
\mathcal B_j
\le
\overline{\mathcal B}_j(\eta).
\label{eq:robust-uniform-majorization}
\end{equation}

We next bound $\overline{\mathcal B}_j(\eta)$.
If $\eta=0$, then
\[
\bar u_{k,x}(0)=0,
\qquad
\bar d_{j,x}(0)=0,
\]
and hence
\[
\overline{\mathcal B}_j(0)=0.
\]
Then consider $\eta>0$.
If $N_{j-1,x}^{Q,\mathcal D}=0$, then
$\bar u_{j-1,x}(\eta)=0$.
Otherwise, Lemma~\ref{lem:binomial-coupling-quantile} gives
\[
\bar u_{j-1,x}(\eta)
\le
\frac{
2\gamma_{j-1}
}{
\log\left(
1+\frac{\gamma_{j-1}}
{N_{j-1,x}^{Q,\mathcal D}\eta}
\right)
}.
\]
Since
\[
N_{j-1,x}^{Q,\mathcal D}
\le
N_{j-1,x}^{\mathcal D}
\]
and
\[
N_{j-1,x}^{\mathcal D}-1
\ge
\frac{N_{j-1,x}^{\mathcal D}}{2},
\]
we have
\[
\frac{\bar u_{j-1,x}(\eta)}
{N_{j-1,x}^{\mathcal D}-1}
\le
4\,
\frac{
\gamma_{j-1}/N_{j-1,x}^{\mathcal D}
}{
\log\left(
1+
\frac{
\gamma_{j-1}/N_{j-1,x}^{\mathcal D}
}{\eta}
\right)
}.
\]

For fixed $\eta>0$, the function
\[
f_\eta(z)
:=
\frac{z}{\log(1+z/\eta)}
\]
is increasing on $(0,\infty)$.
The prescribed choice of $C_{\rm uc}$ satisfies
$C_{\rm uc}\ge2H|\mathcal S||\mathcal A|$.
Using $\gamma_{j-1}\le\beta_{j-1}$,
the uniform coverage bound, and
$\epsilon_{j-1}=2\epsilon_j$, we obtain
\[
\frac{\gamma_{j-1}}
{N_{j-1,x}^{\mathcal D}}
\le
\frac{
\gamma_{j-1}\epsilon_{j-1}
}{
C_{\rm uc}\beta_{j-1}
}
\le
\frac{2\epsilon_j}{C_{\rm uc}}
\le
\frac{\epsilon_j}{H|\mathcal S||\mathcal A|}.
\]
Consequently,
\[
\frac{\bar u_{j-1,x}(\eta)}
{N_{j-1,x}^{\mathcal D}-1}
\le
\frac{
4\epsilon_j
}{
H|\mathcal S||\mathcal A|\log\left(
1+\frac{\epsilon_j}{H|\mathcal S||\mathcal A|\eta}
\right)
}.
\]

For every $y>0$, the elementary inequality
\[
\frac{y}{\log(1+y)}
\le
3\left(
1+\frac{y}{1+\log(1+y)}
\right)
\]
holds. 
Applying this inequality with
$y=\epsilon_j/(H|\mathcal S||\mathcal A|\eta)$ yields
\[
\frac{\bar u_{j-1,x}(\eta)}
{N_{j-1,x}^{\mathcal D}-1}
\le
12\left(
\eta+\frac{\epsilon_j}{H|\mathcal S||\mathcal A|}\chi_j(\eta)^2
\right).
\]
This bound also holds when
$N_{j-1,x}^{Q,\mathcal D}=0$.

Using
\[
C_h^{\rm sh}\le H(H-1),
\qquad
\alpha_{j,x}^{\mathcal D}\le1,
\]
we therefore obtain
\[
\max_x\bar d_{j,x}(\eta)
\lesssim
H^2\eta
+
H\sqrt{\eta}
+
H\sqrt{\frac{\epsilon_j}{H|\mathcal S||\mathcal A|}}\chi_j(\eta).
\]

Then
\[
\sqrt{
2\beta_j\sum_x
\frac{\bar d_{j,x}(\eta)^2}
{N_{j,x}^{\mathcal D}}
}
\le
\sqrt{
\frac{
2\beta_j H|\mathcal S||\mathcal A|
}{
\min_x N_{j,x}^{\mathcal D}
}
}
\max_x\bar d_{j,x}(\eta),
\]
and
\[
\mathsf a_j
\le
\sqrt{
\frac{
\beta_j
}{
2\min_x N_{j,x}^{\mathcal D}
}
}.
\]
The uniform fresh-row coverage gives
\[
\sqrt{
\frac{
2\beta_j H|\mathcal S||\mathcal A|
}{
\min_x N_{j,x}^{\mathcal D}
}
}
\le
\sqrt{
\frac{
2H|\mathcal S||\mathcal A|\epsilon_j
}{
C_{\text{uc}}
}
}
\le
\sqrt{\epsilon_j},
\qquad
\mathsf a_j
\le
\sqrt{
\frac{\epsilon_j}{2C_{\text{uc}}}
}.
\]
Substitution yields
\begin{equation}
\begin{aligned}
\overline{\mathcal B}_j(\eta)
=
O\Bigg(
&
H^2\eta
+
H^2\eta\sqrt{\epsilon_j}
+
H\sqrt{\eta\epsilon_j}
+
\frac{H}{\sqrt{H|\mathcal S||\mathcal A|}}\,
\epsilon_j\chi_j(\eta)
+
K_H\eta\sqrt{\frac{\epsilon_j}{C_{\rm uc}}}
+
K_H\eta^2
\Bigg).
\end{aligned}
\label{eq:robust-uniform-intermediate-rate}
\end{equation}

We now substitute the fixed constants.
By construction,
\[
C_{\rm uc}\ge8K_H,
\qquad
K_H\lesssim H^3|\mathcal S|^2.
\]
Therefore,
\[
K_H\eta\sqrt{\frac{\epsilon_j}{C_{\rm uc}}}
\lesssim
\sqrt{K_H}\,\eta\sqrt{\epsilon_j}
\lesssim
H^2|\mathcal S|\eta\sqrt{\epsilon_j},
\]
and
\[
K_H\eta^2\lesssim H^3|\mathcal S|^2\eta^2.
\]
It follows from
\eqref{eq:robust-uniform-intermediate-rate} that
\[
\overline{\mathcal B}_j(\eta)
=
O\left(
H^2\eta\left(1+|\mathcal S|\sqrt{\epsilon_j}\right)
+
H\sqrt{\eta\epsilon_j}
+
\sqrt{\frac{H}{|\mathcal S||\mathcal A|}}\,
\epsilon_j\chi_j(\eta)
+
H^3|\mathcal S|^2\eta^2
\right).
\]
Together with $\overline{\mathcal B}_j(0)=0$
and $\chi_j(0)=0$, this proves
\eqref{eq:robust-uniform-penalty-rate}.
For fixed $H,|\mathcal S|,|\mathcal A|$ and $\epsilon_j>0$,
$\chi_j(\eta)\to0$ as $\eta\downarrow0$,
so the displayed rate also converges to zero.

Finally, \eqref{eq:robust-uniform-majorization}
allows every comparison bound in
Theorem~\ref{thm:robust-unknown-shift}
to be weakened by replacing $\mathcal B_j$ with
$\overline{\mathcal B}_j(\eta)$.
This replacement can only decrease $\ell_j$ and increase
$G_j$ in the corresponding recursions.
Hence the localization and stopping-accuracy conclusions
remain valid with the substituted penalties.
The stated stability conclusions follow directly from
Theorem~\ref{thm:robust-unknown-shift}.
\end{proof}

Corollary~\ref{cor:robust-uniform-shift} shows that simulator
misspecification affects the guarantee only through the additive penalty
$\overline{\mathcal B}_j(\eta)$ if $\eta$ is sufficiently small.  For fixed $H,|\mathcal S|,|\mathcal A|$ and $\epsilon_j>0$, the upper bound
in \eqref{eq:robust-uniform-penalty-rate} vanishes
as $\eta\downarrow0$.
Hence the guarantee degrades gracefully as the simulator departs from the
true reward-transition law.  In particular, whenever
$\overline{\mathcal B}_j(\eta)\le\epsilon_j/2$ over the relevant epochs,
the original localization argument continues to hold, with only an
additive $2\overline{\mathcal B}_j(\eta)$ loss in the returned policy's
suboptimality.  Moreover, $\overline{\mathcal B}_j(0)=0$, so the original
unshifted guarantee is recovered exactly.

\section{Technical Lemmas}

\begin{lemma}[Model-difference identity and policy-difference decomposition]
\label{lem:model-difference-expansion}
Fix a deterministic policy $\pi\in\Pi_d$. For any MDP
$\widetilde{\mathbf{\mathcal M}}$ with the same state and action spaces, horizon, and
initial-state distribution as $\mathbf{\mathcal M}$, and with reward means
$\widetilde{\boldsymbol r}$ and transition kernel $\widetilde{\boldsymbol P}$,
\begin{equation}
\begin{aligned}
V_1^\pi(\mathbf{\mathcal M})-V_1^\pi(\widetilde{\mathbf{\mathcal M}})
=
\sum_{h,s,a}
\widetilde d_{hsa}^\pi
\Bigg[
&r_h(s,a)-\widetilde r_h(s,a)
\\
&+
\sum_{s'}
\bigl(
P_h(s'\mid s,a)-\widetilde P_h(s'\mid s,a)
\bigr)
V_{h+1}^\pi(s')
\Bigg],
\end{aligned}
\label{eq:model-difference-identity}
\end{equation}
where $\widetilde d^\pi$ denotes the state--action occupancy of $\pi$
under $\widetilde{\mathbf{\mathcal M}}$, while $V^\pi$ denotes the value function under
$\mathbf{\mathcal M}$.

Consequently, for any $\pi,\bar\pi\in\Pi_d$,
\begin{equation}
\begin{aligned}
&
\bigl(
V_1^\pi( \widetilde{\mathbf{\mathcal M}})-V_1^{\bar\pi}(\widetilde{\mathbf{\mathcal M}})
\bigr)
-
\bigl(
V_1^\pi(\mathbf{\mathcal M})-V_1^{\bar\pi}(\mathbf{\mathcal M})
\bigr)
\\
&\qquad=
\sum_{h,s,a}
G^r_{hsa}(\pi,\bar\pi)
\bigl(
\widetilde r_h(s,a)-r_h(s,a)
\bigr)
\\
&\qquad\quad+
\sum_{h,s,a,s'}
G^P_{hsa}(\pi,\bar\pi)(s')
\bigl(
\widetilde P_h(s'\mid s,a)-P_h(s'\mid s,a)
\bigr)
\\
&\qquad\quad+
\operatorname{Rem}_{\widetilde{\mathbf{\mathcal M}}}(\pi,\bar\pi),
\end{aligned}
\label{eq:policy-difference-decomposition}
\end{equation}
where
\begin{equation}
\begin{aligned}
\operatorname{Rem}_{\widetilde{\mathbf{\mathcal M}}}(\pi,\bar\pi)
:={}&
\sum_{h,s,a}
\Big[
\bigl(\widetilde d_{hsa}^\pi-d_{hsa}^\pi\bigr)
-
\bigl(\widetilde d_{hsa}^{\bar\pi}-d_{hsa}^{\bar\pi}\bigr)
\Big]
\bigl(
\widetilde r_h(s,a)-r_h(s,a)
\bigr)
\\
&+
\sum_{h,s,a,s'}
\Big[
\bigl(\widetilde d_{hsa}^\pi-d_{hsa}^\pi\bigr)
V_{h+1}^\pi(s')
\\
&\hspace{31mm}
-
\bigl(\widetilde d_{hsa}^{\bar\pi}-d_{hsa}^{\bar\pi}\bigr)
V_{h+1}^{\bar\pi}(s')
\Big]
\\
&\hspace{22mm}\times
\bigl(
\widetilde P_h(s'\mid s,a)-P_h(s'\mid s,a)
\bigr).
\end{aligned}
\label{eq:policy-difference-remainder}
\end{equation}
\end{lemma}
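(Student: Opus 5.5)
The identity \eqref{eq:model-difference-identity} is the standard simulation lemma, and I will prove it by telescoping over stages. Write $\widetilde V^\pi_h$ for the value of $\pi$ under $\widetilde{\mathbf{\mathcal M}}$. At each stage $h$ I compare the Bellman equations of the two models. For a deterministic $\pi$ and $a=\pi_h(s)$,
\[
V_h^\pi(s)-\widetilde V_h^\pi(s)
=
\bigl(r_h-\widetilde r_h\bigr)(s,a)
+\sum_{s'}\bigl(P_h-\widetilde P_h\bigr)(s'\mid s,a)V_{h+1}^\pi(s')
+\sum_{s'}\widetilde P_h(s'\mid s,a)\bigl(V_{h+1}^\pi-\widetilde V_{h+1}^\pi\bigr)(s').
\]
I first average over $\widetilde d_h^\pi(s)$, with $h=1$ using $\rho$. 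Then $\sum_s\widetilde d_h^\pi(s)\widetilde P_h(s'\mid s,\pi_h(s))=\widetilde d_{h+1}^\pi(s')$, so the last term becomes the same discrepancy at stage $h+1$ weighted by $\widetilde d_{h+1}^\pi$. Unrolling from $h=1$ to $H$ with $V_{H+1}=\widetilde V_{H+1}=0$ gives \eqref{eq:model-difference-identity}. For a deterministic policy, $\widetilde d_{hsa}^\pi=\widetilde d_h^\pi(s)\mathbf 1\{a=\pi_h(s)\}$, which converts the state sum into the $(h,s,a)$ sum. The key choice is to expand the difference in the direction that leaves $\widetilde d$ as the weight and the true $V_{h+1}^\pi$ as the test function.

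Next I apply \eqref{eq:model-difference-identity} to $\pi$ and to $\bar\pi$ and subtract, which gives the left side of \eqref{eq:policy-difference-decomposition} up to an overall sign flip. In each bracket I split $\widetilde d^\rho_{hsa}=d^\rho_{hsa}+(\widetilde d^\rho_{hsa}-d^\rho_{hsa})$ for $\rho\in\{\pi,\bar\pi\}$. The $d^\rho$ parts, multiplied by $\widetilde r-r$ and by $(\widetilde P-P)V^\rho_{h+1}$, are exactly $G^r_{hsa}(\pi,\bar\pi)$ and $G^P_{hsa}(\pi,\bar\pi)(s')$ as defined in Table~\ref{tab:key-notation}. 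The $(\widetilde d-d)$ parts collect verbatim into \eqref{eq:policy-difference-remainder}.

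There is no real obstacle here. The only step needing care is bookkeeping: the value function in the remainder must remain the true $V^\rho_{h+1}$, not $\widetilde V$, and the stage-$(h+1)$ value must pair with the stage-$h$ occupancy. If I had telescoped in the other direction (weights $d$, test function $\widetilde V$), the leading term would not match $G^P$, so I will fix the telescoping direction from the start.
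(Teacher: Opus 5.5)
Your proposal is correct and follows the paper's proof essentially step for step. Both compare the stagewise Bellman equations, average over the occupancy of $\pi$ under $\widetilde{\mathbf{\mathcal M}}$, and telescope to obtain the identity; both then subtract the identities for $\pi$ and $\bar\pi$ and split $\widetilde d^\rho_{hsa}=d^\rho_{hsa}+(\widetilde d^\rho_{hsa}-d^\rho_{hsa})$ to isolate $G^r$, $G^P$, and the remainder.
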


\begin{proof}
We first prove \eqref{eq:model-difference-identity}.
For a fixed policy $\pi$, the Bellman equations under $\mathbf{\mathcal M}$ and
$\widetilde{\mathbf{\mathcal M}}$ give
\begin{equation}
\begin{aligned}
V_h^\pi(s)-V_h^\pi(\widetilde{\mathbf{\mathcal M}};s)
&=
r_h(s,\pi_h(s))
-
\widetilde r_h(s,\pi_h(s))
\\
&\quad+
\sum_{s'}
\Bigl(
P_h(s'\mid s,\pi_h(s))
-
\widetilde P_h(s'\mid s,\pi_h(s))
\Bigr)
V_{h+1}^\pi(s')
\\
&\quad+
\sum_{s'}
\widetilde P_h(s'\mid s,\pi_h(s))
\Bigl(
V_{h+1}^\pi(s')
-
V_{h+1}^\pi(\widetilde{\mathbf{\mathcal M}};s')
\Bigr).
\end{aligned}
\label{eq:bellman-model-difference}
\end{equation}
Here $V_h^\pi(\widetilde{\mathbf{\mathcal M}};s)$ denotes the value function under
$\widetilde{\mathbf{\mathcal M}}$. Taking expectation in
\eqref{eq:bellman-model-difference} with respect to the state
distribution induced by $\pi$ under $\widetilde{\mathbf{\mathcal M}}$ and summing over
$h=1,\ldots,H$ makes the last term telescope. Since
$V_{H+1}^\pi\equiv0$, this yields
\eqref{eq:model-difference-identity}.

Applying \eqref{eq:model-difference-identity} to $\pi$ and $\bar\pi$,
reversing signs, and subtracting gives
\begin{equation}
\begin{aligned}
&
\bigl(
V_1^\pi(\widetilde{\mathbf{\mathcal M}})-V_1^{\bar\pi}(\widetilde{\mathbf{\mathcal M}})
\bigr)
-
\bigl(
V_1^\pi(\mathbf{\mathcal M})-V_1^{\bar\pi}(\mathbf{\mathcal M})
\bigr)
\\
&=
\sum_{h,s,a}
\bigl(
\widetilde d_{hsa}^\pi
-
\widetilde d_{hsa}^{\bar\pi}
\bigr)
\bigl(
\widetilde r_h(s,a)-r_h(s,a)
\bigr)
\\
&\quad+
\sum_{h,s,a,s'}
\Bigl(
\widetilde d_{hsa}^\pi V_{h+1}^\pi(s')
-
\widetilde d_{hsa}^{\bar\pi}V_{h+1}^{\bar\pi}(s')
\Bigr)
\\
&\hspace{31mm}\times
\bigl(
\widetilde P_h(s'\mid s,a)-P_h(s'\mid s,a)
\bigr).
\end{aligned}
\label{eq:policy-difference-exact}
\end{equation}

For the reward term in \eqref{eq:policy-difference-exact}, write
\begin{equation}
\begin{aligned}
\widetilde d_{hsa}^\pi
-
\widetilde d_{hsa}^{\bar\pi}
={}&
\bigl(
d_{hsa}^\pi-d_{hsa}^{\bar\pi}
\bigr)
\\
&+
\bigl(
\widetilde d_{hsa}^\pi-d_{hsa}^\pi
\bigr)
-
\bigl(
\widetilde d_{hsa}^{\bar\pi}-d_{hsa}^{\bar\pi}
\bigr).
\end{aligned}
\label{eq:reward-occupancy-split}
\end{equation}
Likewise, for the transition term,
\begin{equation}
\begin{aligned}
&
\widetilde d_{hsa}^\pi V_{h+1}^\pi(s')
-
\widetilde d_{hsa}^{\bar\pi}V_{h+1}^{\bar\pi}(s')
\\
&\qquad=
d_{hsa}^\pi V_{h+1}^\pi(s')
-
d_{hsa}^{\bar\pi}V_{h+1}^{\bar\pi}(s')
\\
&\qquad\quad+
\bigl(
\widetilde d_{hsa}^\pi-d_{hsa}^\pi
\bigr)V_{h+1}^\pi(s')
-
\bigl(
\widetilde d_{hsa}^{\bar\pi}-d_{hsa}^{\bar\pi}
\bigr)
V_{h+1}^{\bar\pi}(s').
\end{aligned}
\label{eq:transition-occupancy-split}
\end{equation}

Substituting \eqref{eq:reward-occupancy-split} and
\eqref{eq:transition-occupancy-split} into
\eqref{eq:policy-difference-exact}, and using
\[
G^r_{hsa}(\pi,\bar\pi)
=
d_{hsa}^\pi-d_{hsa}^{\bar\pi},
\]
and
\[
G^P_{hsa}(\pi,\bar\pi)(s')
=
d_{hsa}^\pi V_{h+1}^\pi(s')
-
d_{hsa}^{\bar\pi}V_{h+1}^{\bar\pi}(s'),
\]
gives \eqref{eq:policy-difference-decomposition}, with the remaining
terms exactly equal to
\eqref{eq:policy-difference-remainder}.
\end{proof}

\begin{lemma}[Two-fold trimming guarantee]
\label{lem:two-fold-trimming-guarantee}
Let
\[
\mathcal D_0
=
\left\{
(S_1^{(j)},A_1^{(j)},R_1^{(j)},\ldots,
S_H^{(j)},A_H^{(j)},R_H^{(j)})
\right\}_{j=1}^{n_0}
\]
consist of $n_0$ independent offline trajectories generated by the
behavior policy $\mu$.
Apply the two-fold subsampling procedure of
\citet[step 
2 of Algorithm 3]{gen2024settling}: split the trajectories
into two halves $\mathcal D_0^{\rm main}$ and
$\mathcal D_0^{\rm aux}$, use the auxiliary half to construct the
trimming levels, and uniformly subsample the main half accordingly.

Let $N_{0,hsa}^{\rm trim}$ denote the number of retained observations
from row $(h,s,a)$. Then, for every
$\delta_{\rm trim}\in(0,1)$, with probability at least
$1-8\delta_{\rm trim}$,
\begin{equation}
N_{0,hsa}^{\rm trim}
\ge
\left[
\frac{
n_0 d_{hsa}^\mu
}{
8
}
-
5
\sqrt{
n_0 d_{hsa}^\mu
\log
\frac{
n_0H
}{
\delta_{\rm trim}
}
}
\right]_+
\qquad
\forall(h,s,a).
\label{eq:trimmed-row-count-lower-bound}
\end{equation}

Moreover, one can construct
an augmented dataset
\begin{equation}
\mathcal D_0^{\rm aug}
=
\bigsqcup_{h,s,a}
\left\{
Z_{hsa}^{(i)}
\right\}_{i=1}^{N_{0,hsa}^{\rm aug}},
\qquad
Z_{hsa}^{(i)}
=
\left(R_{hsa}^{(i)},S_{hsa}^{\prime(i)}\right),
\label{eq:trimmed-iid-representation}
\end{equation}
such that, conditional on the augmented row counts
$\{N_{0,hsa}^{\rm aug}\}_{h,s,a}$,
\begin{equation}
Z_{hsa}^{(1)},\ldots,
Z_{hsa}^{(N_{0,hsa}^{\rm aug})}
\overset{\rm iid}{\sim}
\nu_{hsa}^{\mathbf{\mathcal M}},
\label{eq:trimmed-row-iid-law}
\end{equation}
independently across rows.

There exists an event $\mathcal E_{\rm trim}$ with
$\mathbb P_{\mathbf{\mathcal M}}(\mathcal E_{\rm trim})\ge1-8\delta_{\rm trim}$
on which both
\eqref{eq:trimmed-row-count-lower-bound} and
\[
\mathcal D_0^{\rm trim}
=
\mathcal D_0^{\rm aug},
\qquad
N_{0,hsa}^{\rm trim}
=
N_{0,hsa}^{\rm aug}
\quad\text{for all }(h,s,a)
\]
hold.
\end{lemma}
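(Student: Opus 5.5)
The plan is to import the analysis of the two-fold subsampling scheme in \citet{gen2024settling} almost verbatim, and to add only the bookkeeping needed to state it row by row for our time-inhomogeneous notation. Concretely, I will split $\mathcal D_0$ into $\mathcal D_0^{\rm main}$ and $\mathcal D_0^{\rm aux}$ of $n_0/2$ trajectories each. Let $N_{hs}^{\rm main}$ and $N_{hs}^{\rm aux}$ denote the numbers of visits to $(h,s)$ in the two halves. The trimming level is
\[
N_{hs}^{\rm trim}:=\Bigl[N_{hs}^{\rm aux}-c\sqrt{N_{hs}^{\rm aux}\log(n_0H/\delta_{\rm trim})}\Bigr]_+,
\]
and we keep, for each $(h,s)$, the first $\min\{N_{hs}^{\rm trim},N_{hs}^{\rm main}\}$ main-half transitions out of $(h,s)$, with their actions and outcomes.

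\textbf{Step 1: the count bound.} The visit counts $N_{hs}^{\rm aux}$ and $N_{hs}^{\rm main}$ are independent $\mathrm{Binomial}(n_0/2,d_h^\mu(s))$ variables. I will apply Bernstein's inequality to both, with a union bound over $(h,s)$ and over the two halves. On the resulting event we have $N_{hs}^{\rm trim}\le N_{hs}^{\rm main}$, so the number kept is exactly $N_{hs}^{\rm trim}$, and
\[
N_{hs}^{\rm trim}\ge n_0d_h^\mu(s)/4-O\bigl(\sqrt{n_0d_h^\mu(s)\log(n_0H/\delta_{\rm trim})}\bigr).
\]
The actions recorded among these kept transitions are i.i.d.\ draws from $\mu_h(\cdot\mid s)$. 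A second Bernstein bound, again with a union bound over $(h,s,a)$, then splits the count by action and gives the stated $n_0d^\mu_{hsa}/8-5\sqrt{\cdot}$ lower bound. The constants $1/8$ and $5$ come from halving twice and from absorbing lower-order $\log$ terms using $\sqrt{xy}\le x+y$. Budgeting each of the roughly four concentration events at $2\delta_{\rm trim}$ yields the total failure probability $8\delta_{\rm trim}$.

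\textbf{Step 2: the i.i.d.\ augmented representation.} This is the substantive part and the main obstacle, because the trimmed data are still collected along trajectories. I plan to use the standard ``sample-then-reveal'' coupling of \citet{gen2024settling}. Before any trajectory is generated, draw for every row $(h,s,a)$ an infinite i.i.d.\ stack of outcomes $(R,S')\sim\nu^{\mathbf{\mathcal M}}_{hsa}$, independently across rows. Then generate each main-half trajectory by letting its $m$-th visit to $(h,s,a)$ consume the $m$-th unused element of that row's stack. This reproduces the law of $\mathcal D_0^{\rm main}$, since rewards and next states are conditionally independent given the current $(h,s,a)$.

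Next define $N^{\rm aug}_{0,hsa}$ from quantities independent of the stacks. Note that $N_{hs}^{\rm trim}$ is a function of the auxiliary half only, and hence independent of the main-half stacks. The aggregated-by-$s$ trimming level is deterministic given the auxiliary half, but the split into actions involves $\mu$'s action draws. For those I will use separate action stacks drawn from $\mu_h(\cdot\mid s)$, which are independent of the outcome stacks. This lets me set $N^{\rm aug}_{0,hsa}$ equal to the number of times action $a$ appears among the first $N_{hs}^{\rm trim}$ entries of the $(h,s)$ action stack, and take $\mathcal D_0^{\rm aug}$ to be the first $N^{\rm aug}_{0,hsa}$ outcome-stack entries of each row. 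Conditional on these counts, the outcomes are i.i.d.\ $\nu^{\mathbf{\mathcal M}}_{hsa}$ and independent across rows, because the counts are measurable with respect to the auxiliary half and the action stacks.

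\textbf{Step 3: identification on the good event.} On the event of Step 1 where $N_{hs}^{\rm trim}\le N_{hs}^{\rm main}$ for all $(h,s)$, the first $N_{hs}^{\rm trim}$ visits to $(h,s)$ in the main half actually occur. Those visits consume precisely the prefix entries of the action and outcome stacks. Hence $\mathcal D_0^{\rm trim}=\mathcal D_0^{\rm aug}$ row by row, and in particular $N^{\rm trim}_{0,hsa}=N^{\rm aug}_{0,hsa}$. Intersecting this event with the count-bound event gives $\mathcal E_{\rm trim}$, of probability at least $1-8\delta_{\rm trim}$.

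The delicate point I will check carefully is that ``the $m$-th visit consumes the $m$-th stack entry'' does not let the trimming level depend on the outcomes. It does not, since the trimming level is a function of the auxiliary half only.
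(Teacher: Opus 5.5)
Your route is the one the paper relies on: its proof is only a pointer to Lemma~3 and Lemma~C.2 of \citet{gen2024settling}, and your sketch reconstructs that argument. The pieces are the auxiliary-half trimming level, containment $N^{\rm trim}_{hs}\le N^{\rm main}_{hs}$, a binomial split by actions, pre-drawn row-wise i.i.d.\ stacks, and identification $\mathcal D_0^{\rm trim}=\mathcal D_0^{\rm aug}$ on the containment event. The step that does not go through as written is the probability accounting. You put $L:=\log(n_0H/\delta_{\rm trim})$ into the trimming level and aim for the $|\mathcal S|$-free bound \eqref{eq:trimmed-row-count-lower-bound}, but then take plain union bounds over all $(h,s)$ and all $(h,s,a)$. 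At per-event level $e^{-L}=\delta_{\rm trim}/(n_0H)$ these sum to order $|\mathcal S||\mathcal A|\delta_{\rm trim}/n_0$, which is not $O(\delta_{\rm trim})$ when $|\mathcal S||\mathcal A|\gg n_0$. For the count bound this is harmless: its right-hand side vanishes unless $n_0d^\mu_{hsa}>1600L$, and since $\sum_{s,a}d^\mu_{hsa}=1$, at most $n_0/(1600L)$ rows per step qualify. The containment event is the real problem. Your Step~3 needs $N^{\rm trim}_{hs}\le N^{\rm main}_{hs}$ for \emph{every} $(h,s)$, rarely visited states included, because a single violation makes the retained counts differ from $N^{\rm aug}_{0,hsa}$ on that row.

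The missing observation is that containment can fail only when $N^{\rm aux}_{hs}>c^2L$, since otherwise $N^{\rm trim}_{hs}=0$. Take states with $n_0d_h^\mu(s)\le 2L$, and set $n=n_0/2$, $p=d_h^\mu(s)$, so that $np\le L$. The tail bound $\Pr(\mathrm{Bin}(n,p)\ge m)\le(enp/m)^m\le\frac{np}{L}(eL/m)^m$ with $m=c^2L$ is then linear in $p$. Summing over these states with $\sum_s d_h^\mu(s)=1$ gives at most $\frac{n_0}{L}(e/c^2)^{c^2L}$, which is far below $\delta_{\rm trim}/H$ for, e.g., $c=10$. The remaining states number at most $n_0/(2L)$ per step, and your Bernstein union bound handles them. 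Without this step, or an $|\mathcal S|$-dependent logarithm (which changes the statement), the $1-8\delta_{\rm trim}$ guarantee is not established.

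A smaller point: the lemma keeps a \emph{uniformly random} subset of the $N^{\rm main}_{hs}$ transitions, not the first ones in trajectory order. Your coupling adapts if you generate the main half step-synchronously in $h$. Draw the random index set at $(h,s)$ from the step-$h$ counts, which depend only on earlier draws, and let the selected visits consume the leading entries of the action and outcome stacks.
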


\begin{proof}
The result is exactly from
\citet[Lemma~3 \& Lemma~C.2]{gen2024settling}.
\end{proof}
\begin{remark}
\label{rem:augmented-iid-analysis}
By Lemma~\ref{lem:two-fold-trimming-guarantee}, the trimmed offline data can be coupled with
data that are iid within each row and independent across rows
conditional on the row counts. The two datasets coincide on
$\mathcal E_{\rm trim}$. 

In the proof of Theorem~\ref{thm:stopping-correctness}
and the analysis in Section~\ref{sec:shifted-simulator},
all distributional and concentration arguments involving
the offline data are actually carried out for the algorithm run
on the augmented data of
Lemma~\ref{lem:two-fold-trimming-guarantee}. Since the final guarantees are established on good events
contained in $\mathcal E_{\rm trim}$, we retain the original notation
throughout these arguments for simplicity. This follows the same argument as in
\citep{gen2024settling}.

\end{remark}

\begin{lemma}[Variance envelope under a data-dependent perturbed score]
\label{lem:variance-envelope-general}
Let
\[
Z,Z_1,\ldots,Z_N
\overset{\rm iid}{\sim}
P,
\qquad
N\ge2,
\]
and let
\[
f:\mathcal Z\to\mathbb R
\]
be a fixed measurable function satisfying
\[
\operatorname{range}(f)
\le
L.
\]
Let
\[
\widehat f=\widehat f(Z_1,\ldots,Z_N)
\]
be an arbitrary, possibly sample-dependent, random measurable function.

Suppose that, on an event $\mathcal E_\Delta$,
\[
\|f-\widehat f\|_\infty
\le
\Delta.
\]
Define
\[
\Gamma
:=
\operatorname{Var}\!\left(f(Z)\right)
\]
and
\[
\widehat\Gamma
:=
\frac{1}{N-1}
\sum_{i=1}^N
\left(
\widehat f(Z_i)
-
\frac1N\sum_{j=1}^N\widehat f(Z_j)
\right)^2.
\]
Let
\[
\overline\Gamma
:=
\left[
\sqrt{\widehat\Gamma}
+
L
\sqrt{
\frac{2\log(2/\delta)}{N-1}
}
+
\sqrt{2}\,\Delta
\right]^2.
\]
Then there exists an event $\mathcal E_{\rm var}$ with
\[
\mathbb P(\mathcal E_{\rm var})
\ge
1-\delta
\]
such that, on
$\mathcal E_{\rm var}\cap\mathcal E_\Delta$,
\begin{equation}
\Gamma
\le
\overline\Gamma
\le
3
\left[
\Gamma
+
4L^2
\frac{2\log(2/\delta)}{N-1}
+
9\Delta^2
\right].
\label{eq:general-variance-sandwich}
\end{equation}
In particular, no independence between $\widehat f$ and
$Z_1,\ldots,Z_N$ is required.
\end{lemma}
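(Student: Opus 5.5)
The plan is to compare sample standard deviations rather than variances, because the map $v\mapsto s(v)$ (the square root of the unbiased sample variance of a vector $v\in\mathbb R^N$) is a seminorm. Specifically, $s(v)=\|Cv\|_2/\sqrt{N-1}$ with $C$ the centering projection, so by the reverse triangle inequality
\[
|s(v)-s(w)|\le \frac{\|v-w\|_2}{\sqrt{N-1}}.
\]
This inequality is deterministic, so it holds for any sample-dependent $\widehat f$. That removes any need for independence between $\widehat f$ and the sample.

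The first step is to apply this with $v=(f(Z_i))_i$ and $w=(\widehat f(Z_i))_i$. On $\mathcal E_\Delta$ we have $\|v-w\|_2\le\sqrt N\,\Delta$, hence
\[
|\sqrt{\widehat\Gamma}-s(v)|\le\sqrt{N/(N-1)}\,\Delta\le\sqrt2\,\Delta,
\]
using $N\ge2$.

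The second step, which is the only probabilistic one, is a concentration inequality for the sample standard deviation of the fixed bounded function $f$. I would show that with probability at least $1-\delta$,
\[
|s(v)-\sqrt\Gamma|\le L\sqrt{\frac{2\log(2/\delta)}{N-1}}.
\]
This is the standard empirical-Bernstein-type bound of Maurer and Pontil (2009, Theorem 10), stated for variables with range at most $L$ after rescaling to $[0,1]$. It follows from the self-bounding property of the sample variance: it is a U-statistic with kernel $(x-y)^2/2$, and its square root concentrates at rate $L/\sqrt{N-1}$. I define $\mathcal E_{\rm var}$ as this two-sided event; this is the referenced display \eqref{eq:true-score-sd-concentration}. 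This step is the main obstacle, because deriving the constant directly would require the entropy/self-bounding argument. I would therefore cite it rather than reprove it, and check that the constant $\sqrt{2\log(2/\delta)}$ matches after rescaling by $L$ and taking a two-sided union.

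The final step combines the two bounds. The lower bound gives
\[
\sqrt\Gamma\le s(v)+L\sqrt{2\log(2/\delta)/(N-1)}\le\sqrt{\widehat\Gamma}+\sqrt2\,\Delta+L\sqrt{2\log(2/\delta)/(N-1)},
\]
and squaring yields $\Gamma\le\overline\Gamma$. For the upper bound, write $\epsilon_N:=L\sqrt{2\log(2/\delta)/(N-1)}$. Then
\[
\sqrt{\overline\Gamma}\le\sqrt\Gamma+2\epsilon_N+2\sqrt2\,\Delta.
\]
By $(a+b+c)^2\le3(a^2+b^2+c^2)$,
\[
\overline\Gamma\le3\bigl[\Gamma+4\epsilon_N^2+8\Delta^2\bigr]\le3\bigl[\Gamma+4L^2\cdot 2\log(2/\delta)/(N-1)+9\Delta^2\bigr],
\]
which is the claimed sandwich \eqref{eq:general-variance-sandwich}.
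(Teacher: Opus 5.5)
Your proposal is correct and follows the paper's proof essentially step for step. The paper also writes the sample standard deviation via the centering projection to get the deterministic perturbation bound $\sqrt{N/(N-1)}\,\Delta\le\sqrt2\,\Delta$, defines $\mathcal E_{\rm var}$ by the two-sided Maurer--Pontil Theorem~10 bound for the fixed $f$ rescaled by $L$, and combines the two bounds, squaring with $(a+b+c)^2\le 3(a^2+b^2+c^2)$. The only cosmetic difference is that the paper bounds $2\sqrt2\,\Delta\le 3\Delta$ before squaring, whereas you square first and use $8\Delta^2\le 9\Delta^2$.
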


\begin{proof}
For a measurable function $g:\mathcal Z\to\mathbb R$, write
\[
s_N(g)
:=
\left[
\frac{1}{N-1}
\sum_{i=1}^N
\left(
g(Z_i)
-
\frac1N\sum_{j=1}^N g(Z_j)
\right)^2
\right]^{1/2}.
\]
Thus
\[
s_N(\widehat f)
=
\sqrt{\widehat\Gamma}.
\]

Since $f$ is fixed and
$\operatorname{range}(f)\le L$,
Theorem~10 of \citet{maurer2009empirical} implies that there exists
an event $\mathcal E_{\rm var}$ with probability at least
$1-\delta$ on which
\begin{equation}
\left|
s_N(f)
-
\sqrt{\Gamma}
\right|
\le
L
\sqrt{
\frac{2\log(2/\delta)}{N-1}
}.
\label{eq:true-score-sd-concentration}
\end{equation}

We next compare the empirical standard deviations of $f$ and
$\widehat f$.  Let
\[
\mathbf f
:=
\bigl(
f(Z_1),\ldots,f(Z_N)
\bigr)^\top,
\qquad
\widehat{\mathbf f}
:=
\bigl(
\widehat f(Z_1),\ldots,\widehat f(Z_N)
\bigr)^\top,
\]
and let
\[
\mathbf P_N
:=
I_N-\frac1N\mathbf 1\mathbf 1^\top
\]
denote the centering projection.  Then
\[
s_N(f)
=
\frac{
\|\mathbf P_N\mathbf f\|_2
}{
\sqrt{N-1}
},
\qquad
s_N(\widehat f)
=
\frac{
\|\mathbf P_N\widehat{\mathbf f}\|_2
}{
\sqrt{N-1}
}.
\]
Hence, by the triangle inequality and the fact that
$\|\mathbf P_N\|_{\rm op}\le1$,
\[
\begin{aligned}
\left|
s_N(\widehat f)-s_N(f)
\right|
&\le
\frac{
\|\mathbf P_N(
\widehat{\mathbf f}-\mathbf f
)\|_2
}{
\sqrt{N-1}
}
\\
&\le
\frac{
\|\widehat{\mathbf f}-\mathbf f\|_2
}{
\sqrt{N-1}
}.
\end{aligned}
\]
On $\mathcal E_\Delta$,
\[
\|\widehat{\mathbf f}-\mathbf f\|_2
\le
\sqrt N\,\Delta,
\]
and therefore
\begin{equation}
\left|
s_N(\widehat f)-s_N(f)
\right|
\le
\sqrt{\frac{N}{N-1}}\Delta
\le
\sqrt2\,\Delta.
\label{eq:empirical-sd-perturbation}
\end{equation}

Combining
\eqref{eq:true-score-sd-concentration}
and
\eqref{eq:empirical-sd-perturbation},
on
$\mathcal E_{\rm var}\cap\mathcal E_\Delta$,
\[
\begin{aligned}
\sqrt{\Gamma}
&\le
s_N(f)
+
L
\sqrt{
\frac{2\log(2/\delta)}{N-1}
}
\\
&\le
s_N(\widehat f)
+
L
\sqrt{
\frac{2\log(2/\delta)}{N-1}
}
+
\sqrt2\,\Delta
\\
&=
\sqrt{\overline\Gamma}.
\end{aligned}
\]
Thus
\[
\Gamma
\le
\overline\Gamma.
\]

Conversely,
\[
\begin{aligned}
s_N(\widehat f)
&\le
s_N(f)+\sqrt2\,\Delta
\\
&\le
\sqrt{\Gamma}
+
L
\sqrt{
\frac{2\log(2/\delta)}{N-1}
}
+
\sqrt2\,\Delta.
\end{aligned}
\]
Therefore
\[
\begin{aligned}
\sqrt{\overline\Gamma}
&=
s_N(\widehat f)
+
L
\sqrt{
\frac{2\log(2/\delta)}{N-1}
}
+
\sqrt2\,\Delta
\\
&\le
\sqrt{\Gamma}
+
2L
\sqrt{
\frac{2\log(2/\delta)}{N-1}
}
+
2\sqrt2\,\Delta
\\
&\le
\sqrt{\Gamma}
+
2L
\sqrt{
\frac{2\log(2/\delta)}{N-1}
}
+
3\Delta.
\end{aligned}
\]
Hence
we obtain
\[
\overline\Gamma
\le
3
\left[
\Gamma
+
4L^2
\frac{2\log(2/\delta)}{N-1}
+
9\Delta^2
\right].
\]
This proves
\eqref{eq:general-variance-sandwich}.
\end{proof}

\begin{lemma}[Control of the higher-order remainder]
\label{lem:iid-remainder}
Suppose that, conditional on the row counts
$\{N_{hsa}\}$, the empirical MDP
$\widehat{\mathbf{\mathcal M}}=(\widehat r,\widehat P)$
is constructed from independent row-wise samples.
Assume that, on an event $\mathcal E_{\rm mod}$,
\[
\|\widehat r-r\|_\infty
\le
\alpha^r,
\qquad
\|\widehat P-P\|_\infty
\le
\alpha^P.
\]
Then, simultaneously for all
$\pi,\bar\pi\in\Pi_d$,
\begin{equation}
\begin{aligned}
\left|
\operatorname{Rem}_{\widehat{\mathbf{\mathcal M}}}(\pi,\bar\pi)
\right|
\le{}&
H(H-1)|\mathcal S|
\alpha^P\alpha^r
\\
&+
\frac{H(H-1)(H-2)}{3}
|\mathcal S|^2
(\alpha^P)^2.
\end{aligned}
\label{eq:iid-remainder-bound}
\end{equation}
\end{lemma}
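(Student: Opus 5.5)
Start from the exact policy-difference decomposition of Lemma~\ref{lem:model-difference-expansion}, with $\widetilde{\mathbf{\mathcal M}}=\widehat{\mathbf{\mathcal M}}$. The remainder is then a sum of products. Each product pairs an occupancy perturbation $\widetilde d^{\rho}_{hsa}-d^{\rho}_{hsa}$ (for $\rho\in\{\pi,\bar\pi\}$) with a model perturbation $\widehat r-r$ or $\widehat P-P$. The estimate is purely deterministic on $\mathcal E_{\rm mod}$, so the row-wise independence hypothesis plays no role beyond guaranteeing that event. Since the bound holds pointwise in $(\pi,\bar\pi)$, uniformity over all pairs is automatic.

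\textbf{Step 1: occupancy perturbation.} For a deterministic policy $\rho$, I bound $|\widehat d^{\rho}_h(s)-d^{\rho}_h(s)|$ through the forward recursion
\[
\widehat d_{h+1}(s')-d_{h+1}(s')=\sum_s(\widehat d_h(s)-d_h(s))\widehat P_h(s'\mid s,\rho_h(s))+\sum_s d_h(s)(\widehat P_h-P_h)(s'\mid s,\rho_h(s)).
\]
Summing over $s'$ and using $\|\widehat P-P\|_\infty\le\alpha^P$, the $\ell_1$ error grows by at most $|\mathcal S|\alpha^P$ per stage. Starting from $d_1=\widehat d_1=\rho$, this gives $\|\widehat d_h-d_h\|_1\le (h-1)|\mathcal S|\alpha^P$. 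The state--action version is the same, because $\rho$ is deterministic.

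\textbf{Step 2: reward part of the remainder.} For fixed $h$, the reward part is at most $\alpha^r\bigl(\|\widehat d^\pi_h-d^\pi_h\|_1+\|\widehat d^{\bar\pi}_h-d^{\bar\pi}_h\|_1\bigr)\le 2(h-1)|\mathcal S|\alpha^P\alpha^r$. Summing over $h$ gives $\sum_h 2(h-1)=H(H-1)$, which is the first term of \eqref{eq:iid-remainder-bound}.

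\textbf{Step 3: transition part (the main obstacle).} I exploit the fact that $\widehat P_h(\cdot\mid s,a)-P_h(\cdot\mid s,a)$ sums to zero over $s'$. Hence $V_{h+1}^{\rho}$ may be recentered by any constant, and $\bigl|\sum_{s'}(\widehat P-P)(s')V_{h+1}^{\rho}(s')\bigr|\le |\mathcal S|\alpha^P\cdot\|V^{\rho}_{h+1}\|_\infty$. I also use $0\le V_{h+1}\le H-h$, with the constant shift available to tighten this.

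Combining with Step 1, the contribution of stage $h$ is at most $2(h-1)(H-h)|\mathcal S|^2(\alpha^P)^2$. Summing over $h$ uses $\sum_{h=1}^H (h-1)(H-h)=H(H-1)(H-2)/6$, so the total is $\frac{H(H-1)(H-2)}{3}|\mathcal S|^2(\alpha^P)^2$, which is the second term.

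The delicate point is bookkeeping the constants so that they land exactly on $H(H-1)/\!\cdot$ and $H(H-1)(H-2)/3$. That requires using the range $H-h$ of $V_{h+1}$, rather than the crude range $H$, and remembering that the first stage has no occupancy error. If a factor is off, I would absorb it by using the centered recentering trick above more carefully; the order of the bound is clearly right.
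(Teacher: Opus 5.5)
Your proposal is correct and is essentially the paper's proof. It uses the same $\ell_1$ forward recursion giving $\|\widehat d_h^\rho-d_h^\rho\|_1\le(h-1)|\mathcal S|\alpha^P$, the same per-stage bounds for the reward and transition parts using $0\le V_{h+1}^\rho\le H-h$, and the same sums $\sum_h(h-1)=H(H-1)/2$ and $\sum_h(h-1)(H-h)=H(H-1)(H-2)/6$, doubled over the two policies. The recentering remark is unnecessary, and no factor is off: the direct bound $\sum_{s'}|\widehat P_h(s'\mid s,a)-P_h(s'\mid s,a)|\,V_{h+1}^\rho(s')\le|\mathcal S|\alpha^P(H-h)$ already yields exactly the stated constants.
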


\begin{proof}
We condition on the event $\mathcal E_{\rm mod}$.
Recall that the remainder is
\[
\begin{aligned}
\operatorname{Rem}_{\widehat{\mathbf{\mathcal M}}}(\pi,\bar\pi)
={}&
\sum_{h,s,a}
\left(
\widehat d_{hsa}^{\pi}-d_{hsa}^{\pi}
\right)
\left(
\widehat r_h(s,a)-r_h(s,a)
\right)
\\
&+
\sum_{h,s,a,s'}
\left(
\widehat d_{hsa}^{\pi}-d_{hsa}^{\pi}
\right)
\left(
\widehat P_h(s'\mid s,a)-P_h(s'\mid s,a)
\right)
V_{h+1}^{\pi}(s')
\\
&-
\sum_{h,s,a}
\left(
\widehat d_{hsa}^{\bar\pi}-d_{hsa}^{\bar\pi}
\right)
\left(
\widehat r_h(s,a)-r_h(s,a)
\right)
\\
&-
\sum_{h,s,a,s'}
\left(
\widehat d_{hsa}^{\bar\pi}-d_{hsa}^{\bar\pi}
\right)
\left(
\widehat P_h(s'\mid s,a)-P_h(s'\mid s,a)
\right)
V_{h+1}^{\bar\pi}(s').
\end{aligned}
\]

For every deterministic policy $\pi$, write
\[
d_h^\pi
=
\left(
d_{hsa}^\pi
\right)_{s,a}
\]
for the state--action occupancy vector at step $h$, and define the
policy-induced state--action transition kernel by
\[
P_h^\pi\bigl((s',a')\mid(s,a)\bigr)
:=
P_h(s'\mid s,a)
\mathbf 1
\left\{
a'=\pi_{h+1}(s')
\right\}.
\]
Define $\widehat P_h^\pi$ analogously using $\widehat P_h$. Then
\[
d_{h+1}^\pi
=
d_h^\pi P_h^\pi,
\qquad
\widehat d_{h+1}^\pi
=
\widehat d_h^\pi\widehat P_h^\pi.
\]
Since
\[
\left\|
\widehat P_h^\pi(\cdot\mid s,a)
-
P_h^\pi(\cdot\mid s,a)
\right\|_1
=
\sum_{s'}
\left|
\widehat P_h(s'\mid s,a)
-
P_h(s'\mid s,a)
\right|
\le
|\mathcal S|\alpha^P,
\]
we obtain
\[
\begin{aligned}
\left\|
\widehat d_{h+1}^\pi
-
d_{h+1}^\pi
\right\|_1
&=
\left\|
\widehat d_h^\pi\widehat P_h^\pi
-
d_h^\pi P_h^\pi
\right\|_1
\\
&\le
\left\|
\left(
\widehat d_h^\pi-d_h^\pi
\right)
\widehat P_h^\pi
\right\|_1
+
\left\|
d_h^\pi
\left(
\widehat P_h^\pi-P_h^\pi
\right)
\right\|_1
\\
&\le
\left\|
\widehat d_h^\pi-d_h^\pi
\right\|_1
+
|\mathcal S|\alpha^P.
\end{aligned}
\]
Since the initial-state distribution is the same under $\mathbf{\mathcal M}$ and
$\widehat{\mathbf{\mathcal M}}$,
\[
\widehat d_1^\pi=d_1^\pi,
\]
and induction yields
\[
\left\|
\widehat d_h^\pi-d_h^\pi
\right\|_1
\le
(h-1)|\mathcal S|\alpha^P.
\]

Therefore, the reward part of the remainder for one policy satisfies
\[
\begin{aligned}
&
\left|
\sum_{h,s,a}
\left(
\widehat d_{hsa}^{\pi}
-
d_{hsa}^{\pi}
\right)
\left(
\widehat r_h(s,a)-r_h(s,a)
\right)
\right|
\\
&\qquad\le
\alpha^r
\sum_{h=1}^H
\left\|
\widehat d_h^\pi-d_h^\pi
\right\|_1
\\
&\qquad\le
|\mathcal S|\alpha^P\alpha^r
\sum_{h=1}^H(h-1)
\\
&\qquad=
\frac{H(H-1)}{2}
|\mathcal S|\alpha^P\alpha^r.
\end{aligned}
\]

Similarly, since
\[
0\le V_{h+1}^\pi(s')\le H-h,
\]
the transition part of the remainder satisfies
\[
\begin{aligned}
&
\left|
\sum_{h,s,a,s'}
\left(
\widehat d_{hsa}^{\pi}
-
d_{hsa}^{\pi}
\right)
\left(
\widehat P_h(s'\mid s,a)
-
P_h(s'\mid s,a)
\right)
V_{h+1}^{\pi}(s')
\right|
\\
&\qquad\le
\sum_{h=1}^H
\sum_{s,a}
\left|
\widehat d_{hsa}^{\pi}
-
d_{hsa}^{\pi}
\right|
\sum_{s'}
\left|
\widehat P_h(s'\mid s,a)
-
P_h(s'\mid s,a)
\right|
V_{h+1}^{\pi}(s')
\\
&\qquad\le
|\mathcal S|\alpha^P
\sum_{h=1}^H
(H-h)
\left\|
\widehat d_h^\pi-d_h^\pi
\right\|_1
\\
&\qquad\le
|\mathcal S|^2(\alpha^P)^2
\sum_{h=1}^H
(h-1)(H-h)
\\
&\qquad=
\frac{H(H-1)(H-2)}{6}
|\mathcal S|^2(\alpha^P)^2.
\end{aligned}
\]
Hence
\[
\left|
\operatorname{Rem}_{\widehat{\mathbf{\mathcal M}}}(\pi,\bar\pi)
\right|
\le
H(H-1)|\mathcal S|\alpha^P\alpha^r
+
\frac{H(H-1)(H-2)}{3}
|\mathcal S|^2(\alpha^P)^2,
\]
simultaneously for all
$\pi,\bar\pi\in\Pi_d$.
\end{proof}

\begin{lemma}[Bernstein bound]
\label{lem:iid-linear-bernstein}
Suppose that, conditional on the row counts
$\{N_{hsa}\}_{h,s,a}$ with $N_{hsa}\ge1$,
\[
Z_{hsa}^{(1)},\ldots,Z_{hsa}^{(N_{hsa})}
\overset{\rm iid}{\sim}
\nu_{hsa}^{\mathbf{\mathcal M}}
\]
independently across rows. Then, for every fixed policy pair
$(\pi,\bar\pi)$, with probability at least $1-\delta$,
\begin{equation}
\begin{aligned}
&
\left|
\sum_{h,s,a}
\frac1{N_{hsa}}
\sum_{i=1}^{N_{hsa}}
\left[
\Psi_{hsa}^{\pi,\bar\pi}
\left(Z_{hsa}^{(i)}\right)
-
\mathbb E_{\mathbf{\mathcal M}}
\left[
\Psi_{hsa}^{\pi,\bar\pi}(Z)
\mid s,a
\right]
\right]
\right|
\\
&\qquad\le
\sqrt{
2\log\frac{2}{\delta}
\sum_{h,s,a}
\frac{
\Gamma_{hsa}(\pi,\bar\pi)
}{
N_{hsa}
}
}
+
\frac{2H}{3}
\log\frac{2}{\delta}
\max_{h,s,a}
\frac1{N_{hsa}}.
\end{aligned}
\label{eq:iid-linear-bernstein}
\end{equation}
\end{lemma}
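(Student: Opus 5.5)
The plan is to reduce \eqref{eq:iid-linear-bernstein} to the scalar Bernstein inequality for a sum of independent, bounded, centered random variables. The argument runs conditionally on the row counts $\{N_{hsa}\}$.

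For each row and each $i\le N_{hsa}$, set
\[
X_{hsa}^{(i)}:=\frac{1}{N_{hsa}}\Bigl[\Psi_{hsa}^{\pi,\bar\pi}\bigl(Z_{hsa}^{(i)}\bigr)-\mathbb E_{\mathbf{\mathcal M}}\bigl[\Psi_{hsa}^{\pi,\bar\pi}(Z)\mid s,a\bigr]\Bigr].
\]
The hypothesis says the samples are iid within rows and independent across rows given the counts. Since the pair $(\pi,\bar\pi)$ is fixed, $\Psi^{\pi,\bar\pi}$ is a deterministic function. Hence the collection $\{X_{hsa}^{(i)}\}$ is independent and centered conditionally on the counts, and its sum is exactly the left-hand side of \eqref{eq:iid-linear-bernstein}.

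The conditional variance computation is the same as in the proof of Lemma~\ref{lem:local-bernstein}:
\[
\sum_{h,s,a}\sum_{i}\operatorname{Var}\bigl(X_{hsa}^{(i)}\bigr)=\sum_{h,s,a}\frac{\Gamma_{hsa}(\pi,\bar\pi)}{N_{hsa}}.
\]

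The step that needs care is the uniform range bound, which must come out as $2H$ rather than $L_{hsa}$. I would write
\[
\Psi_{hsa}^{\pi,\bar\pi}(r,s')=d_{hsa}^{\pi}\bigl(r+V_{h+1}^{\pi}(s')\bigr)-d_{hsa}^{\bar\pi}\bigl(r+V_{h+1}^{\bar\pi}(s')\bigr).
\]
Because $r\in[0,1]$ and $V_{h+1}\in[0,H-h]$, each bracket lies in $[0,H-h+1]\subseteq[0,H]$. The occupancies satisfy $d\in[0,1]$, so $\Psi$ takes values in $[-H,H]$. It follows that the centered variable has absolute value at most $2H$, and therefore $|X_{hsa}^{(i)}|\le 2H/N_{hsa}\le 2H\max_{h,s,a}N_{hsa}^{-1}$.

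Apply the two-sided Bernstein inequality with variance proxy $\sigma^2=\sum\Gamma_{hsa}/N_{hsa}$ and almost-sure bound $b=2H\max_{h,s,a}N_{hsa}^{-1}$. This gives, with probability at least $1-\delta$, a deviation of at most $\sqrt{2\sigma^2\log(2/\delta)}+\tfrac{b}{3}\log(2/\delta)$, which is exactly \eqref{eq:iid-linear-bernstein}. The bound holds conditionally on the counts with a right-hand side that depends only on the counts, so integrating over the counts yields the unconditional statement.

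The only obstacle is getting the range constant to come out as $2H$, and the sign decomposition of $\Psi$ above handles that.
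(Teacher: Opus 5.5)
Your proposal is correct and follows the paper's proof essentially step by step. Both arguments condition on the row counts, compute the variance of the centered sum as $\sum_{h,s,a}\Gamma_{hsa}(\pi,\bar\pi)/N_{hsa}$, bound each summand by $2H/N_{hsa}$, and apply Bernstein's inequality in its sub-gamma form $\sqrt{2\sigma^2\log(2/\delta)}+\tfrac{b}{3}\log(2/\delta)$.

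The only difference is cosmetic and lies in the range bound. The paper uses $|G^r_{hsa}|\le1$ and $\operatorname{range}(G^P_{hsa})\le 2(H-h)$ to get $1+2(H-h)\le 2H$. Your decomposition $\Psi_{hsa}^{\pi,\bar\pi}=d^\pi_{hsa}(r+V^\pi_{h+1}(s'))-d^{\bar\pi}_{hsa}(r+V^{\bar\pi}_{h+1}(s'))$ gives $2(H-h+1)\le 2H$, and either bound suffices.
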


\begin{proof}
Condition on the realized row counts.

For each row $(h,s,a)$,
\[
\operatorname{Var}_{\mathbf{\mathcal M}}
\left(
\Psi_{hsa}^{\pi,\bar\pi}(Z)
\mid s,a
\right)
=
\Gamma_{hsa}(\pi,\bar\pi).
\]
Therefore, the variance of the full sum is
\[
\sum_{h,s,a}
N_{hsa}
\frac{
\Gamma_{hsa}(\pi,\bar\pi)
}{
N_{hsa}^2
}
=
\sum_{h,s,a}
\frac{
\Gamma_{hsa}(\pi,\bar\pi)
}{
N_{hsa}
}.
\]

Moreover, we have
\[
\left|
G_{hsa}^{r}(\pi,\bar\pi)
\right|
\le1,
\]
and, since
$0\le V_{h+1}^{\pi},V_{h+1}^{\bar\pi}\le H-h$,
\[
\operatorname{range}
\left(
G_{hsa}^{P}(\pi,\bar\pi)
\right)
\le
2(H-h).
\]
Thus
\[
\operatorname{range}
\left(
\Psi_{hsa}^{\pi,\bar\pi}
\right)
\le
1+2(H-h)
\le
2H.
\]

Applying Bernstein's inequality
gives, with conditional probability at least $1-\delta$,
\[
\begin{aligned}
&
\left|
\sum_{h,s,a}
\frac1{N_{hsa}}
\sum_{i=1}^{N_{hsa}}
\left[
\Psi_{hsa}^{\pi,\bar\pi}
\left(Z_{hsa}^{(i)}\right)
-
\mathbb E_{\mathbf{\mathcal M}}
\left[
\Psi_{hsa}^{\pi,\bar\pi}(Z)
\mid s,a
\right]
\right]
\right|
\\
&\qquad\le
\sqrt{
2\log\frac{2}{\delta}
\sum_{h,s,a}
\frac{
\Gamma_{hsa}(\pi,\bar\pi)
}{
N_{hsa}
}
}
+
\frac{2H}{3}
\log\frac{2}{\delta}
\max_{h,s,a}
\frac1{N_{hsa}}.
\end{aligned}
\]
\end{proof}

\section{Additional Simulation Details and Results}\label{app:simulations}

Here we describe the implementation and protocol behind Section~\ref{sec:simulations} and report every setting: offline sample sizes $n_0\in\{100,250,500,1000,5000\}$ in Simulations~1 and~2, and these sizes combined with $H\in\{5,10,15\}$ in Simulation~3, for 25 settings with 300 paired trials each under both versions of the algorithm. All results use $C_0=0.5$, except in Appendix~\ref{app:sim-c01}, which repeats every setting with $C_0=1$. Figure~\ref{fig:app-environments} specifies the environments in full. Figures~\ref{fig:app-sim1}-\ref{fig:app-sim3-h15} show recovery, regret, certification and query allocation in every setting, and Figure~\ref{fig:app-stopping} the certification costs.

\begin{figure}[ht!]
\centering
\includegraphics[width=0.90\linewidth]{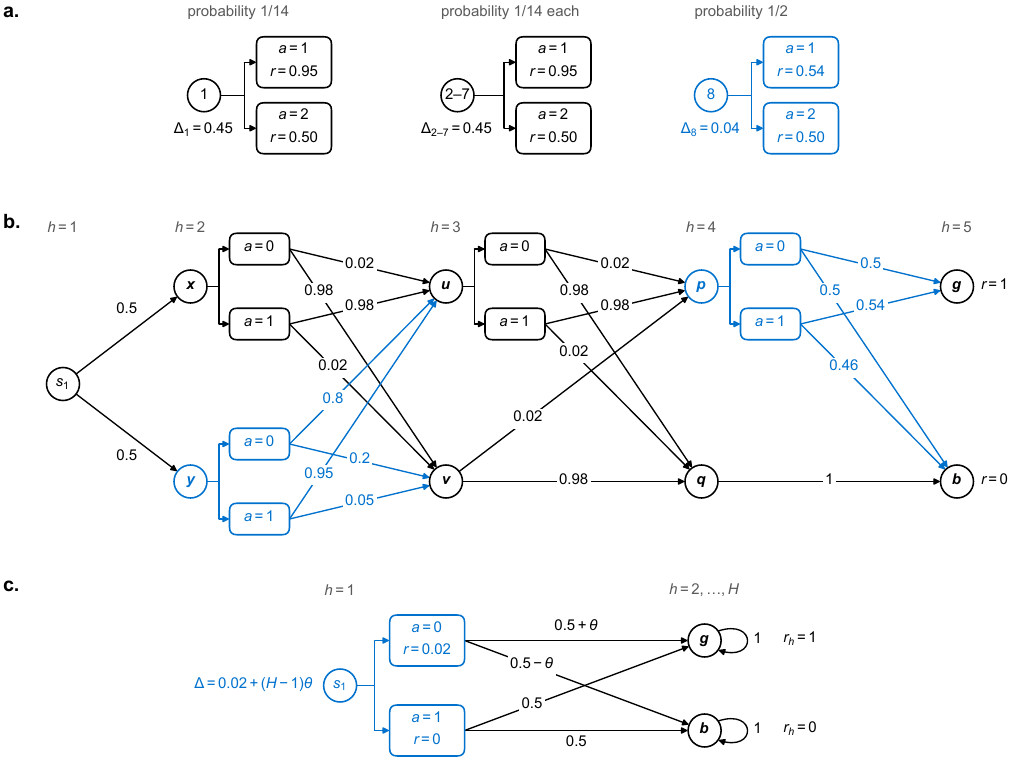}
\caption{Simulation environments in full, in the style of Figure~\ref{fig:sim-environments}. Circles are states, boxes are actions, and arrows are labelled with transition probabilities. \textbf{a.}~Simulation~1: each context's probability, the Bernoulli reward means of its two actions, and the gap $\Delta_s$. Contexts~2-7 are identical and drawn once; Context~1 is drawn separately because the behavior policy selects its optimal action with probability $0.01$, compared with $0.20$ elsewhere. \textbf{b.}~Simulation~2: $v$ and $q$ have a single action; reward is~1 only at $g$, and $q$ never reaches $g$. \textbf{c.}~Simulation~3: the absorbing states $g$ and $b$ pay 1 and 0 at every step until the horizon $H$. Blue marks the hard decisions.}
\label{fig:app-environments}
\end{figure}

\subsection{Implementation and Protocol}\label{app:sim-protocol}

\paragraph{Theory-aligned version.}
The theory-aligned version retains the data-splitting and epoch
structure of the algorithm in Section~\ref{sec:algorithm}, on tabular layered MDPs
($H=1$ in Simulation~1), with the experimental parameter choices
and stabilization rule described below. The offline trajectories are trimmed with the two-fold procedure of~\cite{gen2024settling} at failure probability $\delta/48$, and the retained data are split equally into the historical sample $\mathcal{D}^\mathcal G_0$ and the inference sample $\mathcal{D}^\mathcal D_0$; the latter is revealed in fresh folds with weights $w_k=6/(\pi^2k^2)$. At epoch $k$, with resolution $\varepsilon_k=H2^{1-k}$ and the confidence factor $\beta_k$ computed from $q_{k-1}=|\widehat\Pi_{k-1}|^2$, the algorithm collects a uniform stabilization batch at every state--time pair, executes the allocation chosen at the previous epoch, and localizes the candidate set at $5\varepsilon_k/2$ using that epoch's fresh data only. Occupancies, values and sensitivities are computed in the empirical MDP of the historical data, and the variance envelope on the fresh sample. The next allocation solves the minimax design with the regularized margin $\max\{\varepsilon-\widehat D_k,4\varepsilon_k\}$ at the smallest dyadic budget $B_k$ with $\widehat\Phi^\star_k(B_k)\le1/16$. For each candidate reference policy, the continuous minimum budget and the min--max allocation are convex programs with one variable per state--time pair, which we solve with SLSQP; integer allocations use largest-remainder rounding, and the $1/16$ criterion is rechecked after rounding. A run stops when a single candidate remains or $\varepsilon_k\le\varepsilon$, and returns the empirical best policy on that epoch's fresh data. Because the collaborator plays every action with positive probability, the stops for insufficient coverage of collaborator-unsupported rows and for the certification limit never occur. The policy classes contain all deterministic policies: $256$, $16$ and $2$ in Simulations~1-3.

\paragraph{Efficient version.}
The efficient version makes three changes, applied to every method alike. First, it pools the data: every step uses all data collected so far, namely the full untrimmed offline sample and every collaborator query from every epoch, in place of the epoch's fresh sample and the historical sample. This applies to the value estimates and the localization, the reported policy, the empirical model, the scores and variance envelope, and the counts already held when the next design is solved. Second, it eliminates candidates by confidence: after the localization, a candidate $\pi$ is also removed when another candidate $\pi'$ satisfies
\[
\widehat D_k(\pi',\pi)\;>\;C_0\Big(\beta_k\sum_{h,s,a}\Gamma_{k,hsa}(\pi,\pi')\big/N_{k,hsa}\Big)^{1/2},
\]
where $\widehat D_k(\pi',\pi)$ is the estimated value difference, $\Gamma_{k,hsa}(\pi,\pi')$ the variance envelope of the pair's influence score at $(h,s,a)$ and $N_{k,hsa}$ the number of observations of $(h,s,a)$. The right-hand side is the realized counterpart of the confidence radius that the design controls, computed from the same data. Third, it tops up the stabilization batches: at epoch~$k$ each state-time pair receives only the shortfall between $m^{\mathrm{st}}_k$ and the stabilization queries it has received in earlier epochs, instead of $m^{\mathrm{st}}_k$ new ones, since with pooled data the earlier batches remain in use.

\paragraph{Constants.}
All methods and both versions use the same constants. The trimming constant of~\cite{gen2024settling} is $1$, and $C_0=0.5$ ($C_0=1$ in Appendix~\ref{app:sim-c01}). We set $C_{\mathrm{uc}}=C_{\mathrm{sen}}=0$. For stabilization, we directly choose the target
\(
m_k^{\rm st}:=\lceil 5\beta_k\rceil
\)
per state-time pair.
This is an empirical simplification of
\eqref{eq:app-stabilization-size}. In particular, the
trimming correction $50w_k\ell_{\rm trim}+2$ is omitted
when setting this target. The variance envelope uses the exact range of the influence score in place of $2H$, and the sensitivity envelope $\Delta^\mathcal G$ is set to zero, which is exact at $H=1$.

\paragraph{Choice of $C_0$.}
$C_0$ scales the confidence radius $C_0\big(\beta_k\sum_{h,s,a}\Gamma_{k,hsa}/N_{k,hsa}\big)^{1/2}$, which sizes each design batch and, in the efficient version, sets the elimination threshold. The analysis takes $C_0=8$, so that the radius holds simultaneously over all candidate pairs and epochs with worst-case concentration constants. Because $\Gamma_{k,hsa}$ envelopes the variance of the influence score, the radius is at least $C_0\sqrt{\beta_k}$ estimated standard errors of the value difference, and $\sqrt{\beta_k}$ lies between $3.5$ and $5.0$ in our runs. With $C_0=0.5$ the radius is therefore about two standard errors, the half-width of a pointwise 95\% normal interval; a normal-approximation union bound over all candidate pairs and epochs would correspond to $C_0\approx1$. A pointwise radius suffices in these instances because a stopping failure requires a comparison against the optimal policy to be wrong by more than the radius plus the competitor's regret, which exceeds $\varepsilon$, and only a few competitors are that close. All methods share the stopping rule, so $C_0$ rescales their costs together: with $C_0=1$ (Appendix~\ref{app:sim-c01}), adaptive allocation needs $2.2$ to $3.7$ times as many queries in the efficient version and $3.2$ to $4.6$ times as many in the theory-aligned version, and it remains the cheapest in every setting.

\paragraph{Baselines.}
Uniform and coverage run the same version of the algorithm on the same offline sample and folds as the paired adaptive run, with the same stabilization, localization, elimination (efficient version), stopping and reporting rules. Only the allocation of design queries differs: uniform spreads each design batch evenly over the state--time pairs, and coverage splits it in proportion to $\sum_a\pi^c_h(a\mid s)/(1+N_{hsa})^2$, where $N_{hsa}$ counts all observations of $(h,s,a)$ collected so far, offline and queried; neither uses gaps, values or variances. In the fixed-budget protocol both baselines also use the paired adaptive run's epoch batch sizes and stabilization draws. In the stopping protocol each baseline solves the same design problem with its allocation restricted to its own shape, and so selects its own batch sizes. The offline-only reference returns the policy with the highest value in the empirical model of the full, untrimmed offline sample, with unobserved rows contributing zero.

\paragraph{Protocols and metrics.}
In the fixed-budget protocol the stop at $\varepsilon_k\le\varepsilon$ is disabled, and each setting runs until every adaptive trial would have stopped; a run whose candidate set has shrunk to a single policy keeps returning it. We record the policy returned at each epoch end, which all three methods reach with the same number of queries, and report its recovery (whether it is optimal) and regret $V^\star_1-V^{\widehat\pi}_1$ at the mean cumulative number of queries. In the theory-aligned version this policy is selected on the epoch's fresh data, which in early epochs contain only a fold of the offline sample ($61\%$ of $\mathcal{D}^\mathcal D_0$ in epoch~1 and $15\%$ in epoch~2). Its recovery can therefore fall from one epoch to the next, and at large $n_0$ the first epochs can fall below the offline-only reference, which uses the full sample. The efficient version selects it on all data collected so far. Every returned policy that is not optimal errs only at the hard decisions: its regret is $0.02$ in Simulation~1, $0.038$ ($p$), $0.039$ ($y$) or $0.074$ (both) in Simulation~2, and the value gap in Simulation~3. Mean regret therefore mirrors recovery, and we discuss recovery below. In the stopping protocol, the certification cost is the total number of collaborator queries, stabilization included, until a method's stopping rule returns; a stopping run fails if the returned policy is not $\varepsilon$-optimal. The certification panels give the fraction of trials whose stopping rule has returned by each budget, each method under its own allocation.

\paragraph{Statistics, seeds and computation.}
Each setting uses 300 trials. Within a trial, all methods and both versions share the offline sample with its trimming and fold split, and each trial's random streams are derived from the simulation's seed base ($20261101$, $20261161$ and $20261171$), the setting and the trial index; both values of $C_0$ use the same seed bases. Proportions have pointwise 95\% Wilson intervals, mean regret and mean query costs have intervals of the mean $\pm1.96$ standard errors, and paired differences in recovery have intervals of the mean $\pm1.96$ standard errors of the per-trial differences. No multiplicity adjustment is applied. The runs used six to nine worker processes on a single Apple-silicon laptop (Python~3.13.5, NumPy~2.5.3, SciPy~1.18.1). For each value of $C_0$, Simulation~1 took under 7 minutes for the efficient version and under 1.5 hours for the theory-aligned version, and Simulations~2 and~3 under 3 minutes each.

\subsection{Summary of Results}\label{app:sim-summary}

\paragraph{Certification cost.}
Adaptive allocation needs the fewest queries to certify in every setting under both versions (Figure~\ref{fig:app-stopping}). In the efficient version it certifies after $6.1\times10^{3}$ to $2.2\times10^{4}$ queries on average, and uniform and coverage allocation need $5.9$-$24.6$ and $3.6$-$17.0$ times as many. In the theory-aligned version it certifies after $2.6\times10^{5}$ to $5.5\times10^{5}$ queries, and the baselines need $5.1$-$21.2$ and $3.4$-$15.8$ times as many; there, in every setting, the last adaptive trial certified before the first baseline trial. The efficient version's costs are lowest at $n_0=5{,}000$ in every simulation, and the theory-aligned costs barely change with $n_0$.

\begin{figure}[p]
\centering
\includegraphics[width=0.95\linewidth]{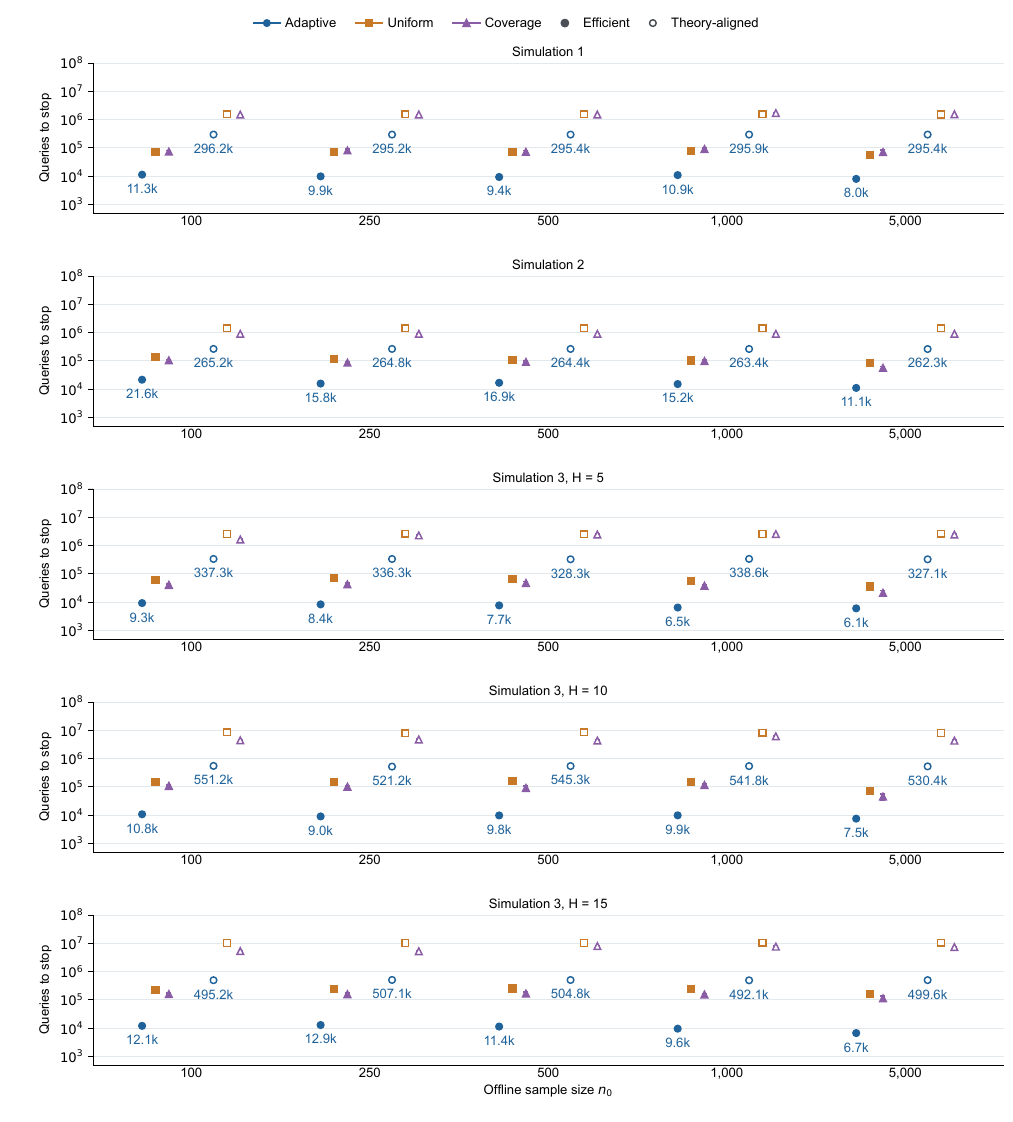}
\caption{Mean collaborator queries until each method's stopping rule returns (log scale; $\pm1.96$ standard errors, mostly smaller than the markers), for every setting and offline sample size, under the efficient (filled) and theory-aligned (open) versions of the algorithm of Section~\ref{sec:algorithm}, both with $C_0=0.5$; numbers below the markers give adaptive's mean cost.  No stopping run returned a policy that is not $\varepsilon$-optimal.}
\label{fig:app-stopping}
\end{figure}

\paragraph{Fixed budgets.}
At the epoch end where adaptive allocation's smaller recovery gain over the two baselines is largest, the efficient version's adaptive allocation recovers the optimal policy more often than uniform allocation in 23 of the 25 settings and more often than coverage in 22, at the 95\% level; all exceptions are at $n_0=5{,}000$ in Simulations~1 and~3, where the offline data alone recover the optimal policy in $90$-$93\%$ of trials. The theory-aligned version does so in all 25 settings against both baselines. These gains are the largest of several epoch-end comparisons, so they overstate the typical gain; the figures show every epoch end. Across all epoch ends, settings and both versions, no paired difference in recovery is significantly below zero.

\paragraph{Efficient and theory-aligned versions.}
At $n_0=500$ the efficient version recovers the optimal policy more often than the theory-aligned one at equal budgets under all three allocations, because it decides on all data collected so far. For example, in Simulation~1 adaptive, uniform and coverage allocation recover $98.0\%$, $85.7\%$ and $81.7\%$ after $3.4\times10^{3}$ queries under the efficient version, and $94.3\%$, $79.3\%$ and $77.3\%$ after $5.0\times10^{3}$ under the theory-aligned version; in Simulation~3 with $H=15$ they recover $96.0\%$, $77.0\%$ and $83.3\%$ after $4.1\times10^{3}$ queries under the efficient version, and $79.0\%$, $73.3\%$ and $74.3\%$ after $4.3\times10^{3}$ under the theory-aligned version. The theory-aligned version's largest gains are mostly as large or larger, because its baselines, which decide on each epoch's fresh data, recover less, but at $n_0=500$ they occur at budgets $1.5$ to $5.6$ times larger.

The theory-aligned version needs $12$ to $74$ times as many queries as the efficient one to certify, because only the localization shrinks its candidate set. A policy with regret $\Delta$ leaves the set once $5\varepsilon_k/2<\Delta$, up to estimation error, and with $\varepsilon_k=H2^{1-k}$ this first happens at epoch~8 in Simulation~1 ($5\varepsilon_8/2\approx0.0195$ against $\Delta=0.02$) and at epoch~7 or~8 in Simulation~3. In Simulation~2 the run stops at epoch~9, when $\varepsilon_9\le\varepsilon$, before $5\varepsilon_k/2$ falls below $0.038$. The theory-aligned stopping epoch is therefore set by the schedule and barely depends on $n_0$; the allocation determines how many queries those epochs need. Elimination removes a candidate as soon as its estimated gap exceeds the confidence radius, so the efficient version usually stops at epoch~5, 6 or~7; at $n_0=5{,}000$ up to $17\%$ of its trials stop at epoch~1 or~2, when the offline data and the first stabilization batch already separate the policies.

Adding the three changes one at a time to the theory-aligned version at $n_0=500$ separates their effects. Pooling alone lowers the cost of certification $1.6$- to $2.2$-fold and still stops at epochs~7 to~9; adding elimination lowers it a further $7.6$- to $20$-fold, and topping up the stabilization batches a further $1.1$- to $1.7$-fold. Adaptive allocation certifies first at every step. With pooling alone, adaptive's largest fixed-budget gains over uniform allocation are never larger than at any other step, and the top-up raises its largest gains over both baselines in all five settings: after epoch~1 it adds at most seven stabilization queries per state--time pair and epoch (medians), whereas without it every epoch adds $\lceil5\beta_k\rceil$ new ones, so more of each budget goes to the design.

\clearpage
\subsection{Simulation 1: Contextual Bandit}\label{app:sim1}

\paragraph{Environment.}
The bandit has $|\mathcal S|=8$ contexts and $|\mathcal A|=2$ actions with Bernoulli rewards (Figure~\ref{fig:app-environments}a). Contexts~1-7 have probability $1/14$ each and reward means $(0.95,0.50)$; Context~8 has probability $1/2$ and means $(0.54,0.50)$. The behavior policy selects the optimal action with probability $0.01$ in Context~1 and $0.20$ in every other context, and the collaborator selects it with probability $0.6$ in every context. A wrong action in Context~8 costs $0.5\times0.04=0.02$, the smallest nonzero regret, which exceeds $\varepsilon=0.01$.

\paragraph{Fixed budgets.}
The offline data alone recover the optimal policy in $1.3\%$ of trials at $n_0=100$ and $93.0\%$ at $n_0=5{,}000$ (Figure~\ref{fig:app-sim1}). In the efficient version the methods nearly coincide in the first epochs: after the epoch-1 stabilization batch ($888$ queries), the design batch of epoch~2 is a single query at every $n_0$, as is that of epoch~3 in almost every trial for $n_0\ge500$. Adaptive allocation then recovers the optimal policy more often than both baselines for $n_0\le1{,}000$, by $9.7$ to $16.3$ percentage points over uniform and $16.3$ to $21.3$ over coverage at the epoch of largest gain, after $2.5\times10^{3}$ to $3.8\times10^{3}$ queries. At $n_0=5{,}000$ all three methods recover $95.3\%$ through epoch~4, as the pooled data contain the offline sample, and the largest gains ($1.3$ points over uniform and $2.0$ over coverage, at epoch~5) are small; only the latter is significant. Adaptive returns the optimal policy in every trial after $7.7\times10^{3}$ to $1.1\times10^{4}$ queries. In the theory-aligned version the largest gains occur at epoch~3 for $n_0\le1{,}000$, after about $5.0\times10^{3}$ queries: $11.3$ to $17.0$ points over uniform and $16.0$ to $24.0$ over coverage; at $n_0=5{,}000$ they are $10.7$ and $15.7$ points, at epoch~4. At $n_0=5{,}000$ its first three epoch ends fall below the offline-only reference, and coverage stays at $71$-$84\%$ through epoch~4. Adaptive returns the optimal policy in every trial after about $3.2\times10^{4}$ queries ($9.6\times10^{4}$ at $n_0=1{,}000$), and all three methods after about $9.6\times10^{4}$ ($1.6\times10^{5}$ at $n_0=500$).

\paragraph{Certification cost.}
In the efficient version adaptive allocation certified after $8.0\times10^{3}$ to $1.1\times10^{4}$ queries on average. Every trial stopped with a single candidate left, most often at epoch~6 ($52$-$76\%$ of trials); at $n_0=5{,}000$, $17\%$ stopped at epoch~1 or~2. Uniform allocation needed $6.4$ to $7.4$ times as many queries and coverage $6.5$ to $8.9$ times. The cost distributions overlap: when the last adaptive trial certified, $23$-$37\%$ of uniform and $18$-$46\%$ of coverage trials had also certified. In the theory-aligned version every trial stopped at epoch~8, with a single candidate left ($58$-$66\%$ of trials) or at $\varepsilon_8\le\varepsilon$, after $2.95\times10^{5}$ to $2.96\times10^{5}$ queries on average; uniform allocation needed $5.1$ to $5.2$ times as many and coverage $5.1$ to $5.7$ times. Every adaptive trial certified within $1.23$ times the adaptive mean cost, and no baseline trial before $5.08$ times it.

\paragraph{Allocation.}
In the efficient version, adaptive allocation had spent $68$-$75\%$ of its queries on Context~8 after epoch~5 for $n_0\le1{,}000$ ($3.3\times10^{3}$ to $4.9\times10^{3}$ queries), compared with $12.5\%$ for uniform and $5.9$-$11.4\%$ for coverage, and $3.6$-$4.6\%$ on Context~1 (Figure~\ref{fig:app-sim1}d). At $n_0=5{,}000$ the epoch-1 stabilization batch makes up most of the $1.1\times10^{3}$ queries by then, and the share of Context~8 is $26\%$. From epoch~2 on, $76$-$100\%$ of its design queries went to Context~8; the top-up adds no stabilization queries after epoch~1 (medians), because elimination shrinks the candidate set and with it $\beta_k$. At $n_0=500$ the median number of candidates after epochs~1 to~4 is $8$, $4$, $2$ and~$2$. In the theory-aligned version, adaptive allocation had spent $59\%$ of its queries on Context~8 and $5.8\%$ on Context~1 after epoch~7 (about $1.6\times10^{5}$ queries). Its design batches are spread more widely while many candidates remain: the share assigned to Context~8 is $45$-$65\%$ in epochs~3 to~6 and rises to $72\%$ in epoch~7 and $86\%$ in epoch~8, as the median number of candidates falls from $256$ in epochs~1 to~4 to $198$, $37$, $9$ and~$1$.

\begin{figure}[p]
\centering
\includegraphics[width=0.95\linewidth]{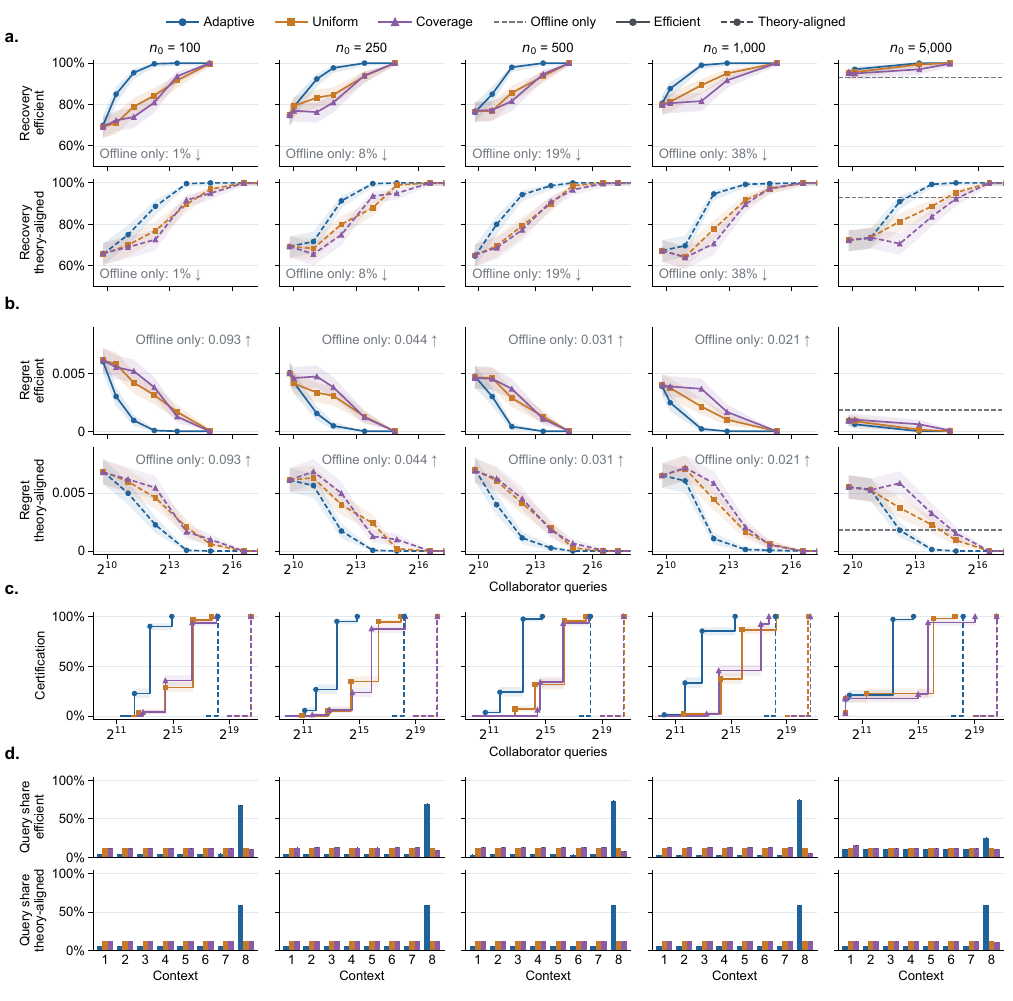}
\caption{Simulation~1 at every offline sample size $n_0$ (columns; 300 paired trials each), for the efficient (solid) and theory-aligned (dashed) versions of the algorithm of Section~\ref{sec:algorithm}, both with $C_0=0.5$. \textbf{a.}~Fraction of trials returning the optimal policy and \textbf{b.}~mean regret $V^\star_1-V^{\widehat\pi}_1$ of the returned policy at each epoch end, against mean collaborator queries (top: efficient; bottom: theory-aligned; the four rows of a column share the $x$-axis); offline-only levels outside the axis are printed in the panel. \textbf{c.}~Fraction of trials whose stopping rule has returned, each method under its own allocation; a curve starts at the last epoch with no certified trial and ends once all have certified. \textbf{d.}~Each method's share of the efficient (top) and theory-aligned (bottom) version's queries after that version's last epoch before adaptive typically stops (epochs~5 and~7). Bands: pointwise 95\% Wilson intervals (recovery, certification) and mean $\pm1.96$ standard errors (regret).}
\label{fig:app-sim1}
\end{figure}

\clearpage
\subsection{Simulation 2: Multiple Sequential Decisions}\label{app:sim2}

\paragraph{Environment.}
The MDP has horizon $H=5$ and the layered structure of Figure~\ref{fig:app-environments}b. The initial state $s_1$ moves to $x$ or $y$ with probability $1/2$ each. Actions at $x$ and $y$ lead to $u$ or $v$; actions at $u$, and the single action at $v$, lead to $p$ or $q$; actions at $p$, and the single action at $q$, lead to $g$ or $b$. Reward is $1$ only on reaching $g$. Action~$1$ leads to the first of the two next states ($u$, $p$ or $g$) with probability $0.98$ at $x$ and $u$, $0.95$ at $y$ and $0.54$ at $p$; action~$0$ does so with probability $0.02$, $0.80$ and $0.50$, respectively. The state $v$ moves to $p$ with probability $0.02$, and $q$ never reaches $g$. The optimal policy takes action~$1$ at every decision state and has value $0.511$. A single wrong choice costs $0.249$ at $x$, $0.039$ at $y$, $0.500$ at $u$ and $0.038$ at $p$, and every suboptimal policy has regret at least $0.038>\varepsilon=0.02$. The behavior policy takes action~$1$ with probability $0.9$ at $x$ and $u$ and $0.005$ at $y$ and $p$; the collaborator takes it with probability $0.6$ at every decision state. All nine state--time pairs can be queried, and $n_0$ counts offline trajectories.

\paragraph{Fixed budgets.}
Offline-only recovery rises from $4.3\%$ at $n_0=100$ to $58.0\%$ at $n_0=5{,}000$ (Figure~\ref{fig:app-sim2}). In the efficient version the methods coincide in epoch~1 and nearly so through epoch~3: the pooled data already meet the early design targets, so the design batches of epochs~2 and~3 are a single query in almost every trial at $n_0\ge250$. Adaptive's largest gains are then $10.7$ to $14.7$ percentage points over uniform and $6.0$ to $10.0$ over coverage, after $1.0\times10^{3}$ to $3.5\times10^{3}$ queries, all significant. In the theory-aligned version the largest gains are $8.7$ to $14.0$ points over uniform and $7.7$ to $12.3$ over coverage, after $6.2\times10^{3}$ to $1.5\times10^{4}$ queries. Adaptive returns the optimal policy in every trial after $3.4\times10^{3}$ to $3.7\times10^{4}$ queries in the efficient version and $3.2\times10^{4}$ to $6.6\times10^{4}$ in the theory-aligned version.

\paragraph{Certification cost.}
In the efficient version every trial stopped with a single candidate left, at epochs~5 to~9 (mostly~6 to~8), after $1.1\times10^{4}$ to $2.2\times10^{4}$ queries on average. Uniform allocation needed $6.4$ to $7.7$ times as many queries and coverage $4.8$ to $6.5$ times. When the last adaptive trial certified, $5$-$20\%$ of uniform and $6$-$15\%$ of coverage trials had also certified, except at $n_0=500$, where the last adaptive trial needed epoch~9 and $5.4$ times the adaptive mean cost, and $81\%$ of uniform and $75\%$ of coverage trials had certified by then. In the theory-aligned version every trial stopped at epoch~9, when $\varepsilon_9=5\cdot2^{-8}\le\varepsilon$, after $2.6\times10^{5}$ queries on average at every $n_0$; uniform allocation needed $5.4$ to $5.5$ times as many and coverage $3.4$ to $3.5$ times, and no baseline trial certified before $3.4$ times the adaptive mean cost.

\paragraph{Allocation.}
In the efficient version, adaptive allocation had spent $58$-$63\%$ of its queries on $p$ and $21$-$23\%$ on $y$ after epoch~6 ($3.4\times10^{3}$ to $1.0\times10^{4}$ queries), compared with $11.1\%$ each for uniform and $12$-$30\%$ for coverage (Figure~\ref{fig:app-sim2}d); from epoch~5 on, $98$-$100\%$ of its design queries went to $p$ and $y$. In the theory-aligned version it had spent $52\%$ on $p$ and $18\%$ on $y$ after epoch~8 (about $1.3\times10^{5}$ queries), compared with $14\%$ each for coverage. The theory-aligned design batches assign $44$-$54\%$ of their queries to $p$ and $y$ in epochs~4 to~7, while the median number of candidates falls from $16$ to~$4$, and $90$-$94\%$ from epoch~8 on. Earlier batches also cover $x$, $u$ and the states without a choice, whose transition estimates enter the estimated value differences.

\begin{figure}[p]
\centering
\includegraphics[width=0.95\linewidth]{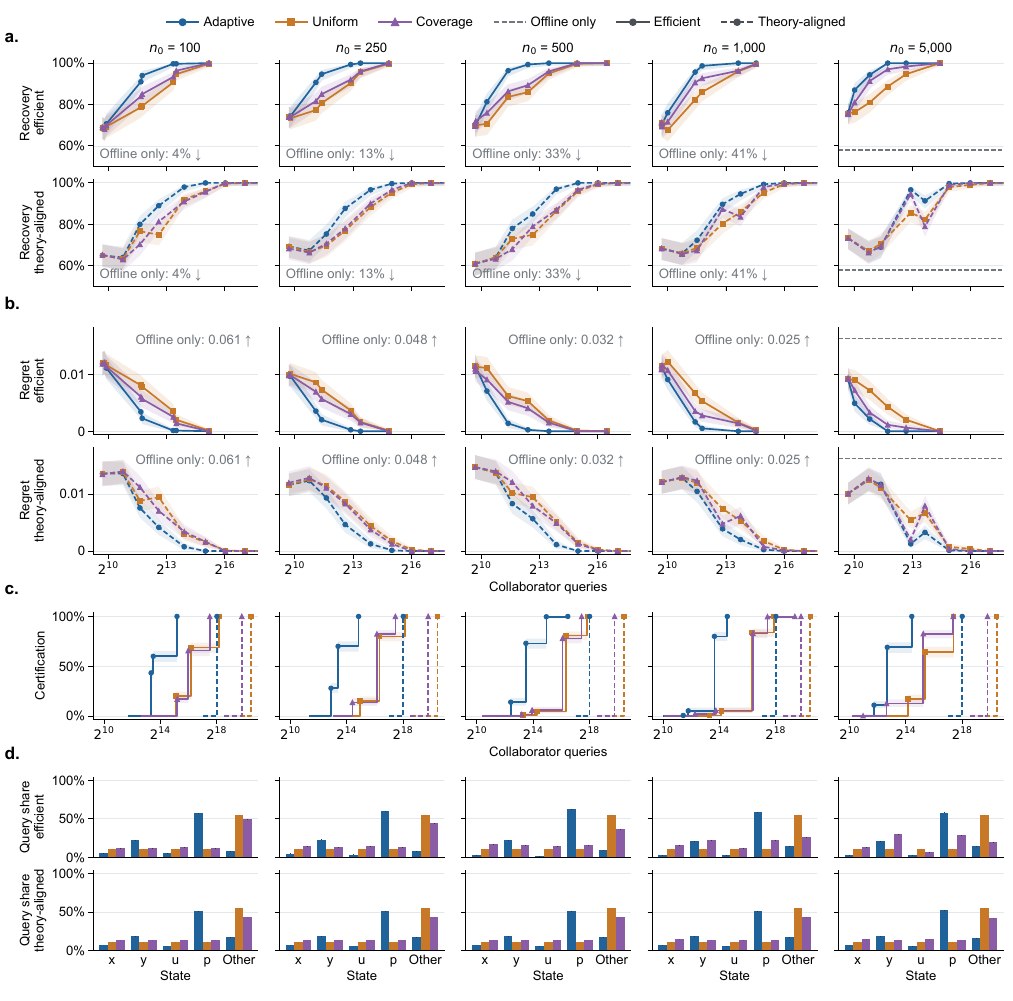}
\caption{Simulation~2 at every offline sample size; layout and conventions as in Figure~\ref{fig:app-sim1}. All methods can query the nine state--time pairs. Query shares (\textbf{d.}) after epochs~6 (efficient) and~8 (theory-aligned).}
\label{fig:app-sim2}
\end{figure}

\clearpage
\subsection{Simulation 3: Decisions with Long-Term Consequences}\label{app:sim3}

\paragraph{Environment.}
At the root, action~$0$ yields a Bernoulli reward with mean $0.02$ and reaches the absorbing good state with probability $0.5+\theta$; action~$1$ yields no reward and reaches it with probability $0.5$. The good and bad states pay $1$ and $0$ at every later step (Figure~\ref{fig:app-environments}c). With $\theta=0.04$, the value gap $0.02+(H-1)\theta$ is $0.18$, $0.38$ and $0.58$ for $H=5$, $10$ and $15$, far above $\varepsilon=0.01$. Only the root carries a choice, so there are two policies. The behavior policy selects the optimal action~$0$ with probability $0.05$, and the collaborator with probability $0.6$. All $2H-1$ state--time pairs ($9$, $19$ and $29$) can be queried; querying an absorbing pair returns its reward and transition.

\paragraph{Fixed budgets.}
Offline-only recovery ranges from $57.3$-$61.3\%$ at $n_0=100$ to $89.7$-$92.7\%$ at $n_0=5{,}000$ across horizons (Figures~\ref{fig:app-sim3-h5}-\ref{fig:app-sim3-h15}). In the efficient version adaptive's largest gains over uniform are $8.3$ to $23.0$ percentage points for $n_0\le1{,}000$ and $0.7$ to $6.3$ at $n_0=5{,}000$ (significant for $H=5$ and~$15$), after $1.2\times10^{3}$ to $8.3\times10^{3}$ queries; over coverage they are $3.0$ to $16.7$ points for $n_0\le1{,}000$ and not significant at $n_0=5{,}000$. In the theory-aligned version they are $14.7$ to $29.0$ points over uniform and $6.3$ to $25.0$ over coverage. Adaptive returns the optimal policy in every trial after $3.8\times10^{3}$ to $2.4\times10^{4}$ queries in the efficient version and $1.4\times10^{4}$ to $2.9\times10^{4}$ in the theory-aligned version. With only two policies, mean regret is the value gap times the error rate.

\paragraph{Certification cost.}
In the efficient version every trial stopped with a single candidate left, most often at epoch~5 ($44$-$76\%$ of trials), after $6.1\times10^{3}$ to $1.3\times10^{4}$ queries on average. Uniform allocation needed $5.9$-$8.8$, $9.7$-$17.1$ and $18.7$-$24.6$ times as many queries for $H=5$, $10$ and $15$, and coverage $3.6$-$6.2$, $6.1$-$11.8$ and $12.3$-$17.0$ times. As in Simulation~1 the cost distributions overlap, mainly because a few adaptive trials need an extra epoch: when the last adaptive trial certified, up to $34\%$ of uniform and $35\%$ of coverage trials had also certified for $n_0\le1{,}000$. At $n_0=5{,}000$, where the offline data already cover the root well and coverage allocation sends $20$-$41\%$ of its queries there, $81$-$88\%$ of coverage trials had. In the theory-aligned version every trial stopped with a single candidate left at epoch~7 or~8, after $3.3\times10^{5}$ to $3.4\times10^{5}$ ($H=5$), $5.2\times10^{5}$ to $5.5\times10^{5}$ ($H=10$) and $4.9\times10^{5}$ to $5.1\times10^{5}$ ($H=15$) queries on average. The share stopping at epoch~7 grows with the horizon, from $15$-$19\%$ to $31$-$37\%$ and $43$-$46\%$, because localization removes the suboptimal policy at epoch~7 only when its estimated gap exceeds $5\varepsilon_7/2$, which is $0.195$, $0.391$ and $0.586$, closer and closer to the gap. Uniform allocation needed $7.5$-$7.9$, $15.1$-$15.6$ and $20.2$-$21.2$ times as many queries and coverage $4.9$-$7.6$, $7.9$-$11.3$ and $10.3$-$15.8$ times, and no baseline trial certified before the last adaptive trial.

\paragraph{Allocation.}
In the efficient version, adaptive allocation had spent $47$-$74\%$, $38$-$64\%$ and $31$-$53\%$ of its queries on the root after epoch~4 for $H=5$, $10$ and $15$ ($1.2\times10^{3}$ to $5.0\times10^{3}$ queries). Uniform spends about $1/(2H-1)$ of its queries there ($11.1\%$, $5.3\%$ and $3.4\%$), and coverage $16$-$41\%$, $8$-$29\%$ and $5.5$-$20\%$. From epoch~2 on, $94$-$100\%$ of adaptive's design queries went to the root. The rest of its queries are mostly the epoch-1 stabilization batch at all $2H-1$ pairs ($61$ to $67$ queries per pair), which the top-up raises by at most seven queries per pair and epoch; their share grows with $H$ and with $n_0$, as the design batches shrink when the offline data are informative. In the theory-aligned version adaptive had spent $92\%$, $86\%$ and $80\%$ of its queries on the root after epoch~7, against $16$-$17\%$, $8\%$ and $5$-$6\%$ for coverage. The baselines' disadvantage therefore includes the queries they spend on absorbing pairs, where no choice remains. The number of such pairs grows with $H$ together with the value gap, so the horizon sweep does not isolate a single factor.

\begin{figure}[p]
\centering
\includegraphics[width=0.95\linewidth]{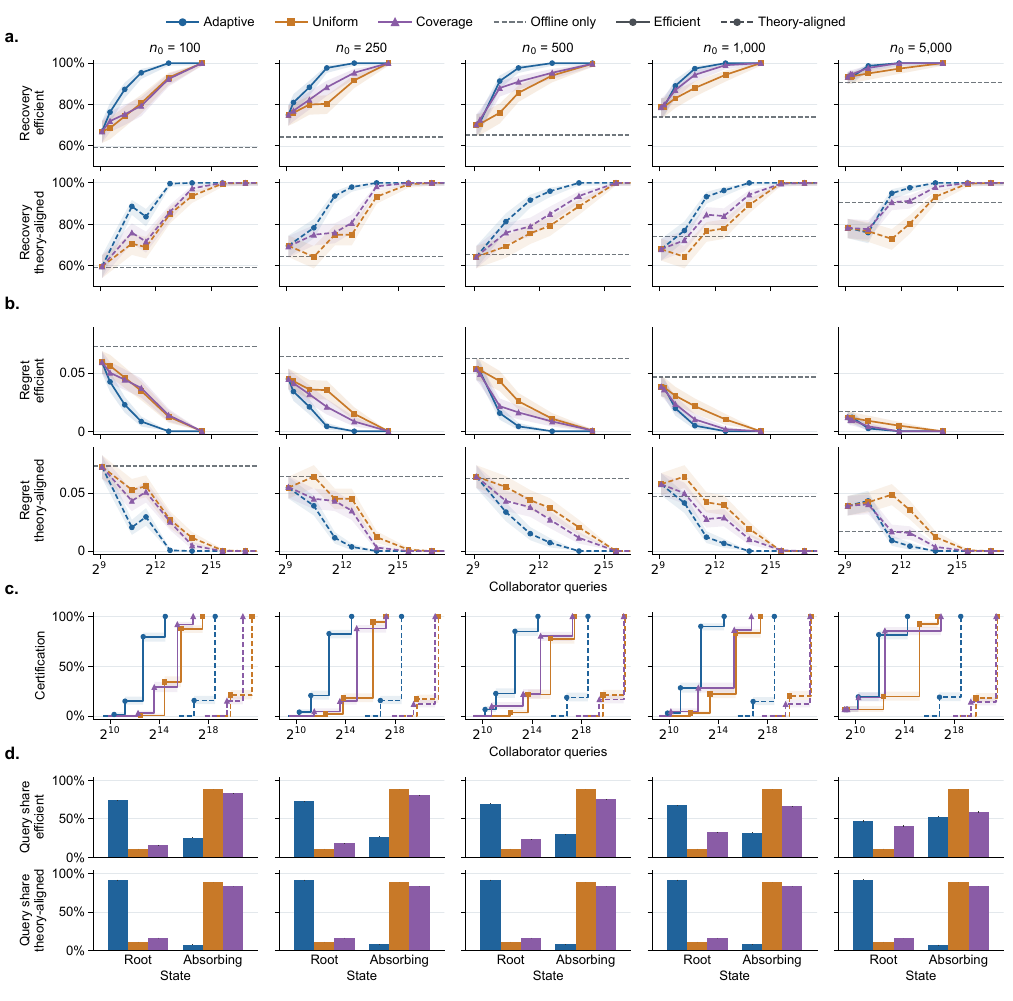}
\caption{Simulation~3 with $H=5$ at every offline sample size; layout and conventions as in Figure~\ref{fig:app-sim1}. All methods can query the nine state--time pairs. Query shares (\textbf{d.}) after epochs~4 (efficient) and~7 (theory-aligned).}
\label{fig:app-sim3-h5}
\end{figure}

\begin{figure}[p]
\centering
\includegraphics[width=0.95\linewidth]{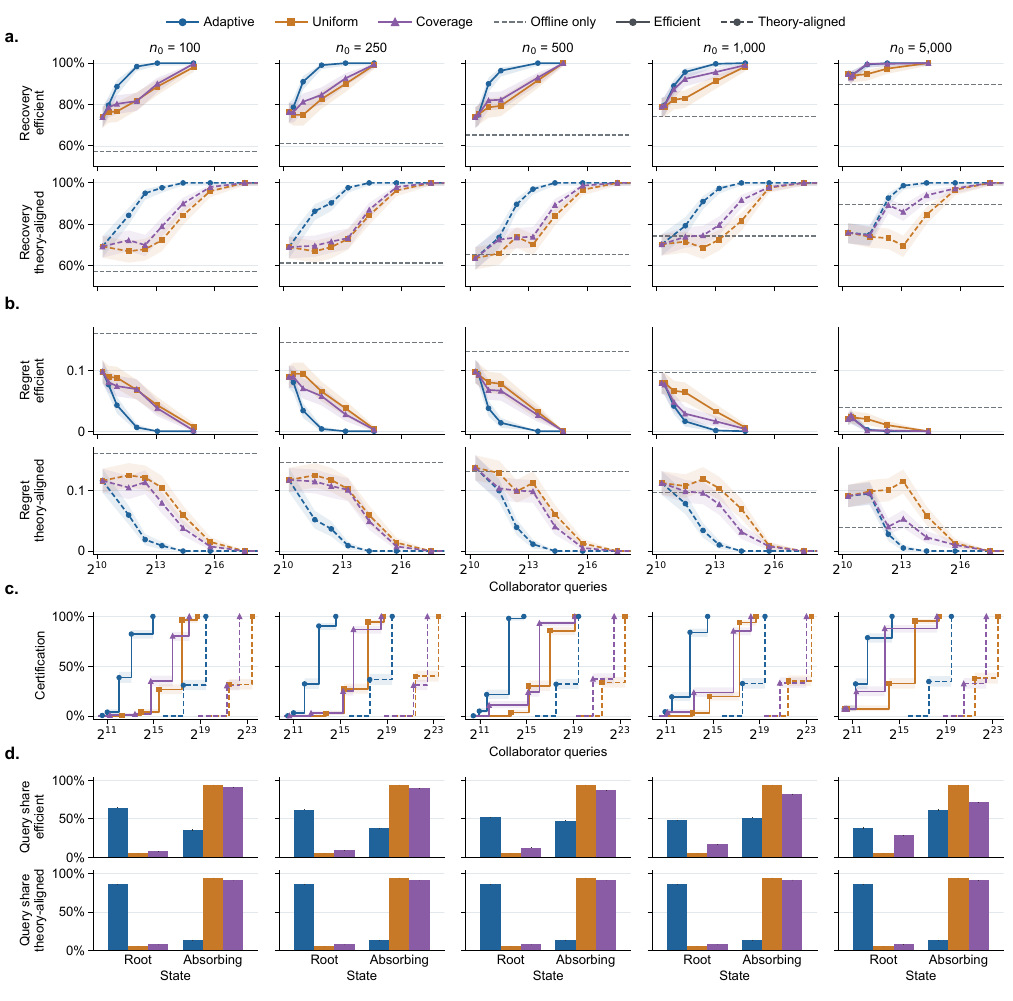}
\caption{Simulation~3 with $H=10$ at every offline sample size; layout and conventions as in Figure~\ref{fig:app-sim1}. All methods can query the 19 state--time pairs. Query shares (\textbf{d.}) after epochs~4 (efficient) and~7 (theory-aligned).}
\label{fig:app-sim3-h10}
\end{figure}

\begin{figure}[p]
\centering
\includegraphics[width=\linewidth]{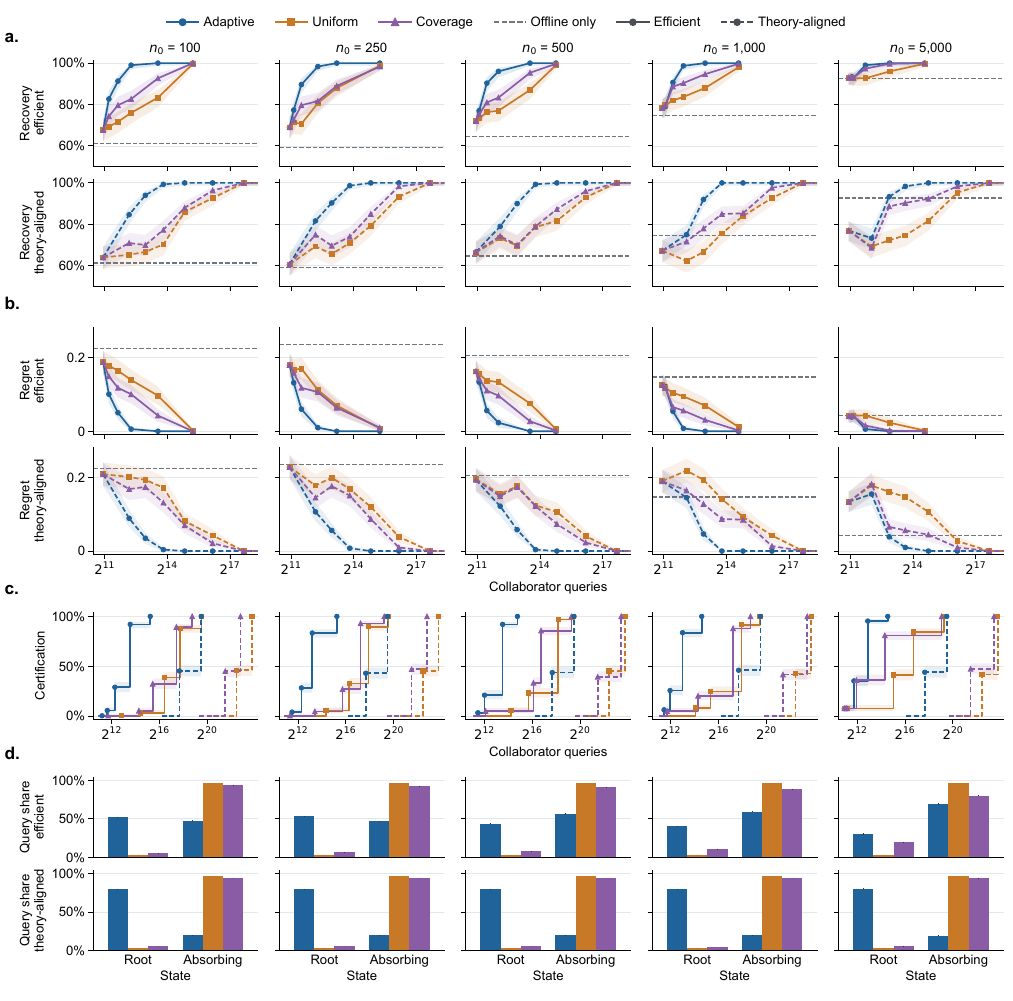}
\caption{Simulation~3 with $H=15$ at every offline sample size; layout and conventions as in Figure~\ref{fig:app-sim1}. All methods can query the 29 state--time pairs. Query shares (\textbf{d.}) after epochs~4 (efficient) and~7 (theory-aligned).}
\label{fig:app-sim3-h15}
\end{figure}

\clearpage
\subsection{Results with \texorpdfstring{$C_0=1$}{C0 = 1}}\label{app:sim-c01}

Figures~\ref{fig:app-c01-stopping}-\ref{fig:app-c01-sim3-h15} repeat every setting with $C_0=1$, which doubles the confidence radius; everything else, including the seed bases, is as above. All methods share the stopping rule, so the larger radius raises the certification cost of every method. Adaptive allocation needs $2.2$ to $3.7$ times as many queries as with $C_0=0.5$ in the efficient version ($1.9\times10^{4}$ to $4.9\times10^{4}$ on average) and $3.2$ to $4.6$ times as many in the theory-aligned version ($8.5\times10^{5}$ to $2.4\times10^{6}$). It remains the cheapest in every setting: uniform and coverage allocation need $4.6$-$28.3$ and $4.7$-$22.8$ times as many queries in the efficient version and $4.1$-$18.0$ and $4.0$-$21.0$ times in the theory-aligned version, where the last adaptive trial again certified before the first baseline trial in every setting. In the efficient version the cost distributions overlap less than with $C_0=0.5$ in Simulation~2, where at most $3\%$ of baseline trials had certified when the last adaptive trial did, and in Simulation~3 with $H=15$, where no uniform trial had.

At fixed budgets, the efficient version's adaptive allocation recovers the optimal policy more often than uniform allocation in 23 of the 25 settings and more often than coverage in 21, at the epoch end of largest gain and the 95\% level; all exceptions are at $n_0=5{,}000$. For $n_0\le1{,}000$ its largest gains are $8.0$ to $21.7$ percentage points over uniform and $3.7$ to $20.0$ over coverage. The theory-aligned version's largest gains are significant in all 25 settings against uniform and in 24 against coverage, the exception being Simulation~3 with $H=5$ at $n_0=5{,}000$. As with $C_0=0.5$, no paired difference in recovery at any epoch end is significantly below zero, and no stopping run of any method returned a policy that is not $\varepsilon$-optimal.

\begin{figure}[H]
\centering
\includegraphics[
        width=\linewidth,
        height=0.57\textheight,
        keepaspectratio
    ]{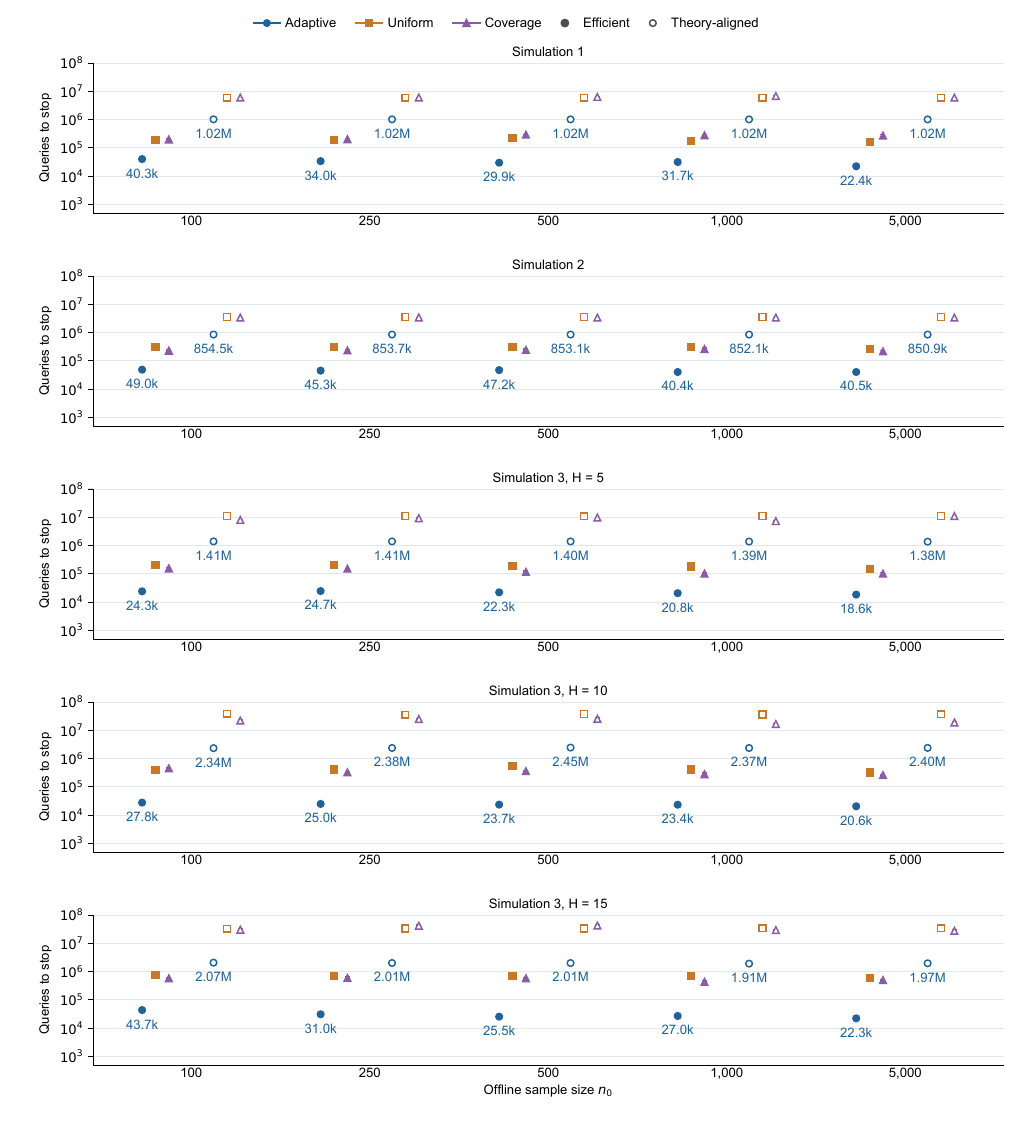}
\caption{Mean collaborator queries until each method's stopping rule returns with $C_0=1$; layout and conventions as in Figure~\ref{fig:app-stopping}. No stopping run returned a policy that is not $\varepsilon$-optimal.}
\label{fig:app-c01-stopping}
\end{figure}

\begin{figure}[H]
\centering
\includegraphics[width=0.95\linewidth]{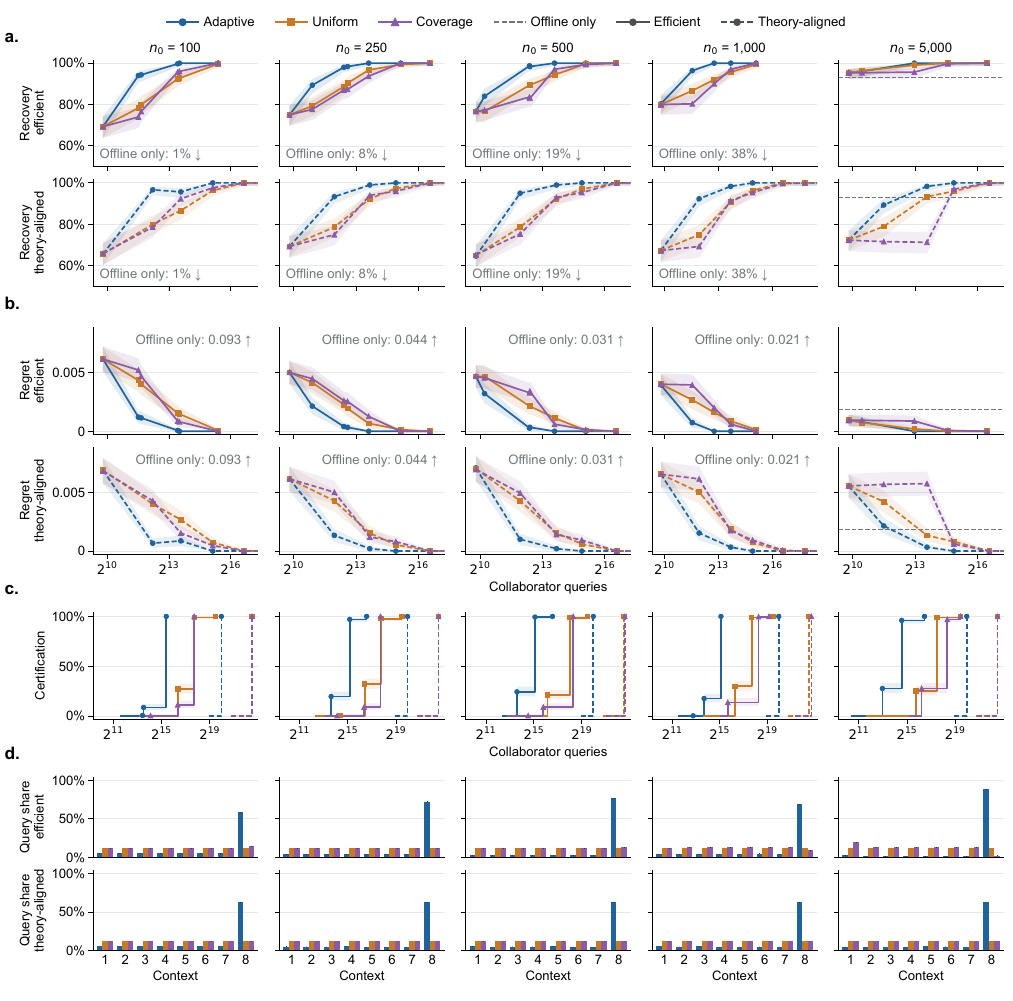}
\caption{Simulation~1 with $C_0=1$ at every offline sample size; layout and conventions as in Figure~\ref{fig:app-sim1}. Query shares (\textbf{d.}) after epochs~5 (efficient) and~7 (theory-aligned).}
\label{fig:app-c01-sim1}
\end{figure}

\begin{figure}[p]
\centering
\includegraphics[width=0.95\linewidth]{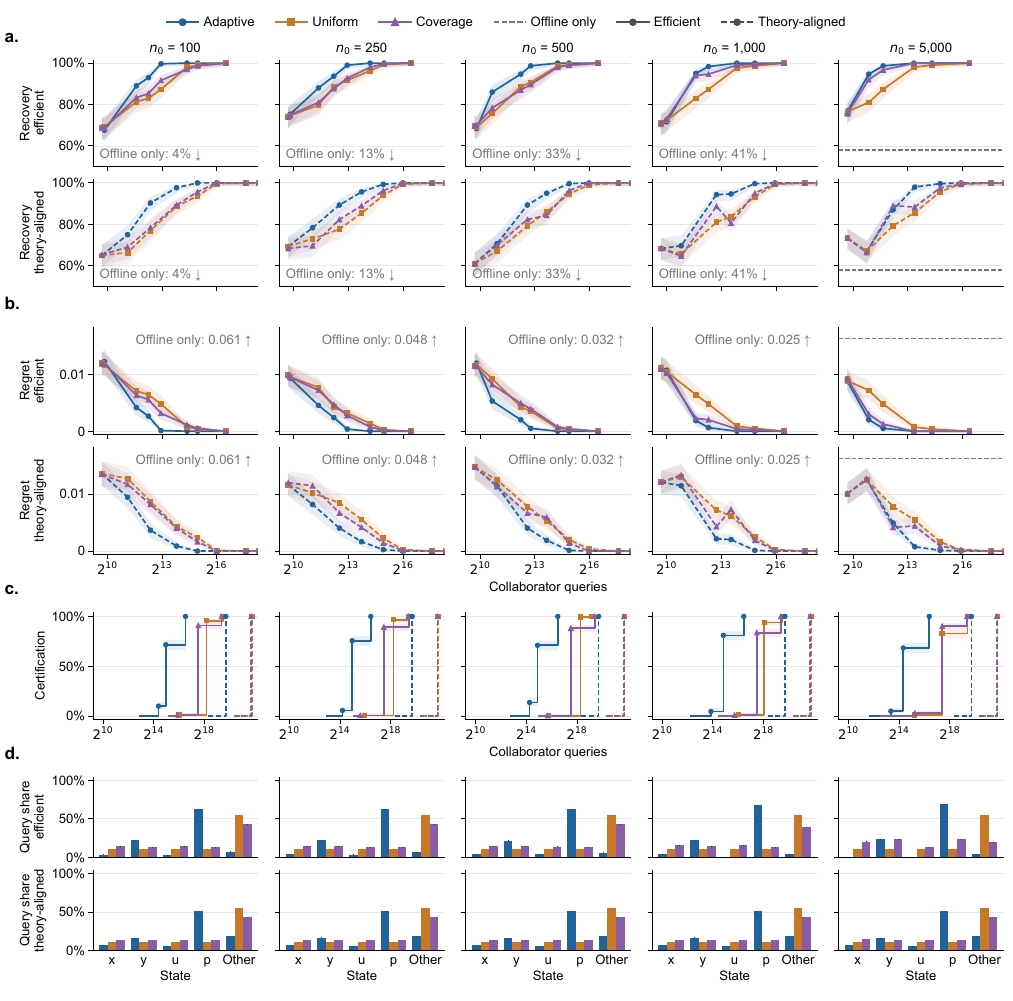}
\caption{Simulation~2 with $C_0=1$ at every offline sample size; layout and conventions as in Figure~\ref{fig:app-sim1}. Query shares (\textbf{d.}) after epochs~6 (efficient) and~8 (theory-aligned).}
\label{fig:app-c01-sim2}
\end{figure}

\begin{figure}[p]
\centering
\includegraphics[width=0.95\linewidth]{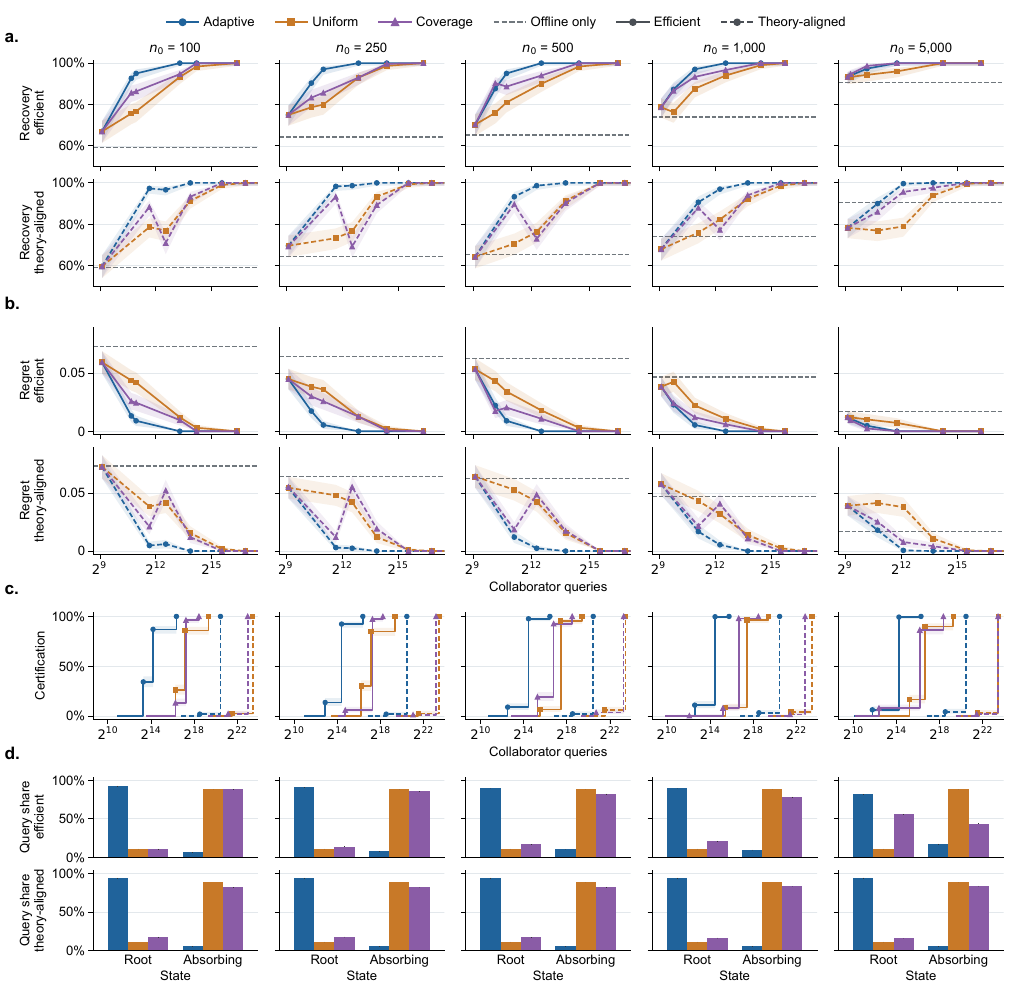}
\caption{Simulation~3 with $H=5$ and $C_0=1$ at every offline sample size; layout and conventions as in Figure~\ref{fig:app-sim1}. Query shares (\textbf{d.}) after epochs~4 (efficient) and~7 (theory-aligned).}
\label{fig:app-c01-sim3-h5}
\end{figure}

\begin{figure}[p]
\centering
\includegraphics[width=0.95\linewidth]{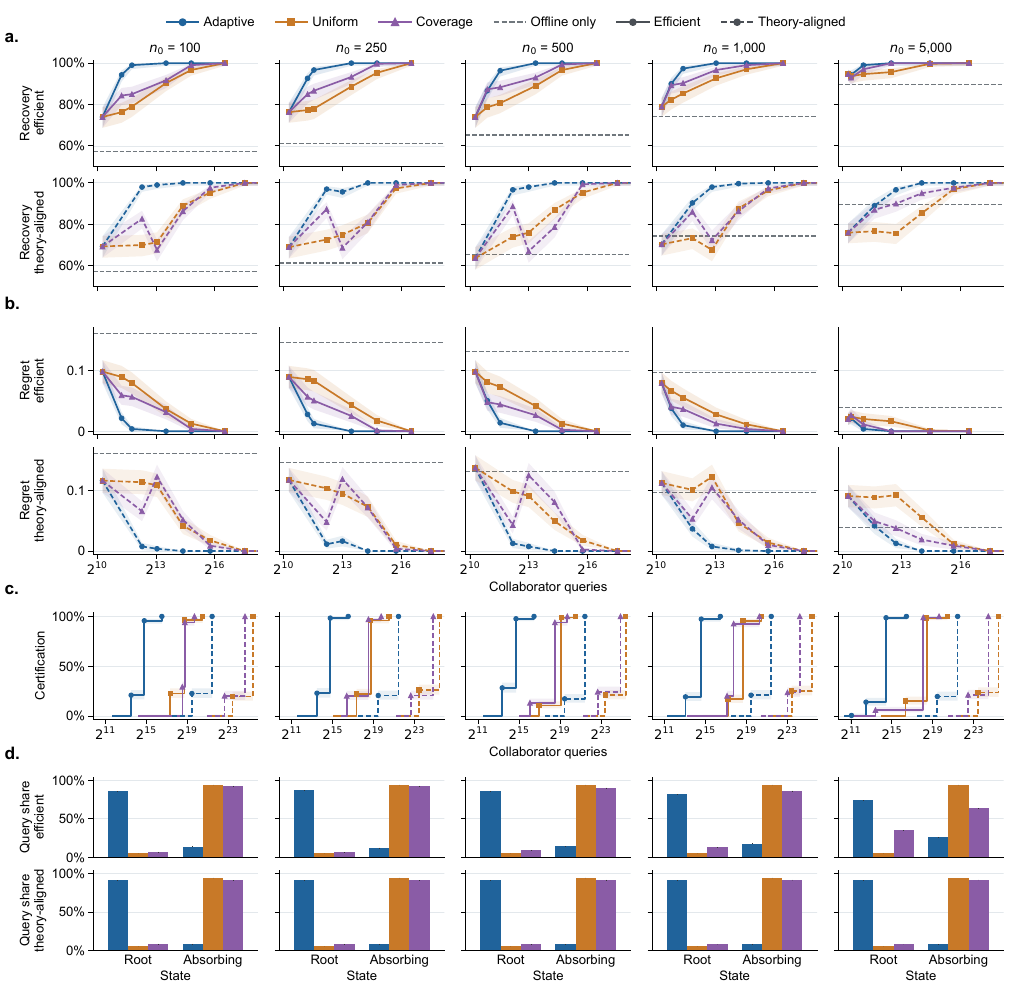}
\caption{Simulation~3 with $H=10$ and $C_0=1$ at every offline sample size; layout and conventions as in Figure~\ref{fig:app-sim1}. Query shares (\textbf{d.}) after epochs~4 (efficient) and~7 (theory-aligned).}
\label{fig:app-c01-sim3-h10}
\end{figure}

\begin{figure}[p]
\centering
\includegraphics[width=0.95\linewidth]{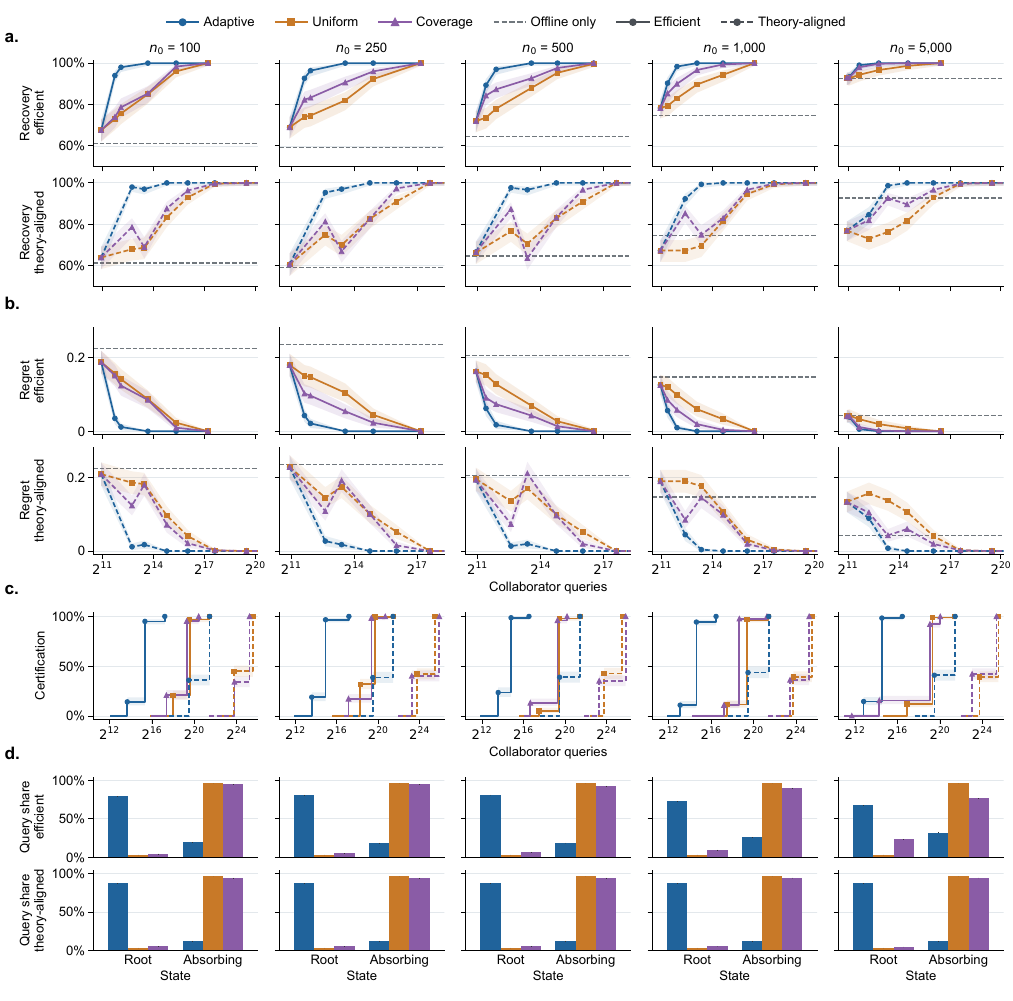}
\caption{Simulation~3 with $H=15$ and $C_0=1$ at every offline sample size; layout and conventions as in Figure~\ref{fig:app-sim1}. Query shares (\textbf{d.}) after epochs~4 (efficient) and~7 (theory-aligned).}
\label{fig:app-c01-sim3-h15}
\end{figure}

\clearpage

\section{PRISM Application Details}\label{app:prism}

\paragraph{Data.} We use the primary screen of the PRISM drug-repurposing resource \citep{corsello2020discovering} (19Q4 release), which reports for each cell line and compound the $\log_2$ fold change in viability relative to the DMSO control, ComBat-corrected and median-collapsed across replicates. Of the 578 cell lines in the screen, 10 fail STR identity profiling and 27 lack a response to at least one compound of a six-drug panel at $2.5\,\mu$M; of the remaining 541, we keep every lineage with at least 30 lines, which gives the six lineages of \Cref{tab:prism-benchmark} (284 lines). Within each lineage, lines are ordered by a hash of their DepMap identifier; the first $60\%$ (rounded down) form a development set of 168 lines, used only to choose the drug pair, and the remaining 116 lines form the benchmark used in every experiment. The reward of a line under a drug is 1 if its $\log_2$ fold change is at most $-1$ (viability at most half of control) and 0 otherwise, and the response rate $r_1(s,a)$ of drug $a$ in lineage $s$ is the fraction of benchmark lines in $s$ with reward 1.

\paragraph{Decision problem.} Contexts are the six lineages ($H=1$) and actions are dasatinib and selumetinib, so there are $2^6=64$ deterministic policies. A policy's value is $V^\pi_1=\sum_s\rho_s\,r_1(s,\pi_1(s))$ for lineage weights $\rho$, its regret is $V^\star_1-V^\pi_1$, and $\varepsilon=0.01$ is one percentage point of response rate. We use incidence weights, each lineage's share of new cases of the six cancers worldwide in 2022 \citep{bray2024global} (melanoma for skin), and equal weights $\rho_s=1/6$. A wrong choice in lineage $s$ costs $\rho_s\,|r_1(s,\mathrm{D})-r_1(s,\mathrm{S})|$. Dasatinib is better in lung, CNS, ovary and pancreas, and selumetinib in skin and colorectal by narrow margins. Which close calls cost more than $\varepsilon$ depends on the weights (\Cref{tab:prism-benchmark}): colorectal under incidence weights and both under equal weights, with pancreas above $\varepsilon$ under both.

\begin{table}[ht]
\centering
\caption{The PRISM benchmark. Responders are benchmark lines with $\log_2$ fold change at most $-1$ at $2.5\,\mu$M; D: dasatinib, S: selumetinib. Weights are in percent. The cost of a wrong choice is the lineage's weight times its gap,
reported in percentage points of response rate
($\epsilon=0.01$ corresponds to one percentage point).}
\label{tab:prism-benchmark}
\small
\setlength{\tabcolsep}{4pt}
\begin{tabular}{lrrrrcrrrrr}
\toprule
 & \multicolumn{2}{c}{Lines} & \multicolumn{2}{c}{Responders} & & & \multicolumn{2}{c}{Weight (\%)} & \multicolumn{2}{c}{Cost of a wrong choice} \\
\cmidrule(lr){2-3}\cmidrule(lr){4-5}\cmidrule(lr){8-9}\cmidrule(lr){10-11}
Lineage & Eligible & Benchmark & D & S & Better & Gap & Incidence & Equal & Incidence & Equal \\
\midrule
Lung       & 98 & 40 & 29 & 7  & D & 0.550 & 42.1 & 16.7 & 23.1 & 9.2  \\
Skin       & 43 & 18 & 10 & 12 & S & 0.111 & 5.6  & 16.7 & 0.6  & 1.9  \\
CNS        & 40 & 16 & 8  & 0  & D & 0.500 & 5.5  & 16.7 & 2.7  & 8.3  \\
Ovary      & 35 & 14 & 14 & 2  & D & 0.857 & 5.5  & 16.7 & 4.7  & 14.3 \\
Colorectal & 34 & 14 & 7  & 8  & S & 0.071 & 32.7 & 16.7 & 2.3  & 1.2  \\
Pancreas   & 34 & 14 & 8  & 5  & D & 0.214 & 8.7  & 16.7 & 1.9  & 3.6  \\
\midrule
Total      & 284 & 116 & & & & & 100 & 100 & & \\
\bottomrule
\end{tabular}
\end{table}

\paragraph{Collaborator, queries and initial log.} A query names a lineage, then the collaborator picks each drug with probability $1/2$ and returns the stored reward of a benchmark line drawn uniformly at random, with replacement, from that lineage. Each observation is therefore Bernoulli$(r_1(s,a))$, and repeated queries resample measured responses rather than run new assays. The initial log has as many records in each lineage as the lineage has benchmark lines (116 in total, drawn with replacement) and reveals a single drug: dasatinib in half of the trials and selumetinib in the other half. It cannot compare the drugs, so the offline-only policy, the empirical maximizer on the log, plays the logged drug in every lineage and is never $\varepsilon$-optimal (regret about $17$ percentage points).

\paragraph{Methods and constants.} Adaptive, uniform and coverage allocation are as in the simulations. All methods share the initial log and the stabilization queries and differ only in their design batches. For the recovery comparisons every method receives adaptive's batch size at each epoch; for certification each method uses its own batch sizes and stopping rule. Both versions use the constants of the simulations (Appendix~\ref{app:simulations}): $C_0=0.5$ (and $C_0=1$ in \Cref{fig:prism-appendix}) and we directly set the stabilization target to $\lceil4\beta_k\rceil$ queries per lineage. Each configuration has 400 trials, 200 per logged drug; within a weighting, both versions and both values of $C_0$ use the same seeds. A configuration takes 7-40 seconds for the efficient version and 1-2 minutes for the theory-aligned version with eight workers.

\begin{figure}[ht!]
    \centering
    \includegraphics[width=\linewidth]{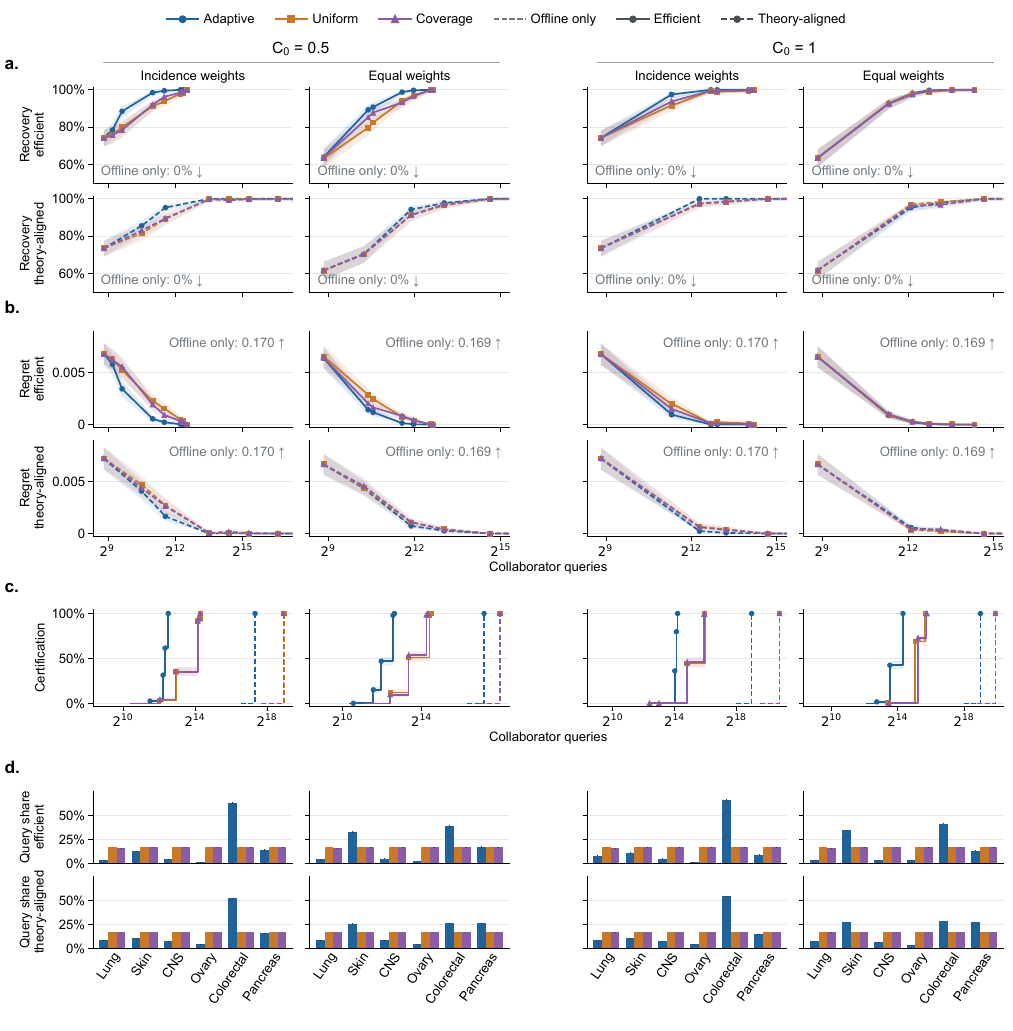}
    \caption{PRISM application at $C_0=0.5$ (left pair of columns) and $C_0=1$ (right pair), each under incidence and equal lineage weights (400 paired trials per column), for the efficient implementation (solid) and the theory-aligned algorithm of \Cref{sec:algorithm} (dashed). \textbf{a.}~Fraction of trials returning an $\varepsilon$-optimal policy and \textbf{b.}~mean regret of the returned policy, in response-rate units, at each epoch end, against mean collaborator queries (top: efficient; bottom: theory-aligned; the four rows of a column share the $x$-axis); offline-only levels outside the axis are printed in the panel. \textbf{c.}~Fraction of trials whose stopping rule has returned, each method under its own allocation. \textbf{d.}~Each method's share of the efficient (top) and theory-aligned (bottom) version's queries after that version's last epoch before adaptive typically stops. Bands: pointwise 95\% Wilson intervals (recovery, certification) and mean $\pm1.96$ standard errors (regret).}
    \label{fig:prism-appendix}
\end{figure}

\paragraph{Results.} No stopping run of any method, version, weighting or $C_0$ returned a policy that is not $\varepsilon$-optimal (9{,}600 runs). With $C_0=0.5$, the efficient version certifies after $5.8$ and $6.3\times10^{3}$ queries under incidence and equal weights, $3.2$-$3.6$ times fewer than uniform and coverage allocation. It also leads every baseline in recovery well before stopping: after $789$ queries under incidence weights it returns an $\varepsilon$-optimal policy in $88.5\%$ of trials, against $80.0\%$ and $78.5\%$ for uniform and coverage allocation, and after $3{,}010$ queries under equal weights in $98.8\%$, against $94.0\%$ and $93.2\%$, every difference being significant (paired 95\% intervals). It concentrates its queries on the difficult contexts: $63\%$ go to colorectal under incidence weights and $72\%$ to skin and colorectal under equal weights, while lung, an easy decision despite its large incidence weight, receives about $4\%$ (\Cref{fig:prism-appendix}d). The theory-aligned version stops at epoch~8 in every trial, the first epoch with $\varepsilon_k\le\varepsilon$, after $1.5$-$1.7\times10^{5}$ queries, and adaptive allocation needs $1.7$-$3.1$ times fewer queries than uniform and coverage allocation. Its recovery gain is significant only under incidence weights ($95.2\%$ against $89.2\%$ and $89.5\%$ after $3{,}007$ queries); under equal weights the methods are within noise. Its first epoch consists only of the shared stabilization queries, and these still make up about half or more of its second, so all methods see largely the same data until the design batches grow. With $C_0=1$, every cost rises about threefold, to about $2\times10^{4}$ queries for the efficient version and $5.2\times10^{5}$ for the theory-aligned version. Adaptive allocation keeps its certification advantage ($2.5$-$3.4$ times fewer queries than uniform and coverage for the efficient version and $1.9$-$3.5$ times for the theory-aligned version), while its recovery lead appears only under incidence weights.
\section{Code Availability \& Computing}

    All simulations and experiments were run on CPU only (no GPU), on a single MacBook Pro laptop with an Apple M5 chip (10 CPU cores) and 24~GB of memory. The code is written in Python~3.13.5 with NumPy~2.5.3 and SciPy~1.18.1, whose SLSQP solver solves the allocation design problems, and the independent trials of each setting run in parallel on eight or nine worker processes. For each value of $C_0$, Simulation~1 took under 7 minutes for the efficient version and under 1.5 hours for the theory-aligned version, Simulations~2 and~3 under 3 minutes each, and each PRISM configuration at most about two minutes; in total, all runs behind the reported results took under 3 hours of wall-clock time. The link to the repository containing full reproducible experiments and manuscript figures can be found at: 
    \begin{itemize}
        \item \url{https://github.com/hong-niu/ask-to-learn/blob/main/README.md}
        \end{itemize}


\newpage

\end{document}